\definecolor{light-gray}{gray}{0.85}
\newtheorem{theorem}{Theorem}[section]
\newtheorem{lemma}[theorem]{Lemma}
\newtheorem{corollary}[theorem]{Corollary}
\newtheorem{remark}[theorem]{Remark}
\theoremstyle{definition}
\newtheorem{definition}[theorem]{Definition}
\newtheorem{assumption}{Assumption}
\def\R{\mathbb{R}}
\def\E{\mathbb{E}}
\newcommand{\cA}{\mathcal{A}}
\newcommand{\cD}{\mathcal{D}}
\newcommand{\cE}{\mathcal{E}}
\newcommand{\cG}{\mathcal{G}}
\newcommand{\cH}{\mathcal{H}}
\newcommand{\cM}{\mathcal{M}}
\newcommand{\cN}{\mathcal{N}}
\newcommand{\cP}{\mathcal{P}}
\newcommand{\cR}{\mathcal{R}}
\newcommand{\cS}{\mathcal{S}}
\newcommand{\cT}{\mathcal{T}}
\newcommand{\cU}{\mathcal{U}}
\newcommand{\bP}{\mathbb{P}}
\newcommand{\bI}{\mathbf{I}}
\newcommand{\x}{\mathbf{x}}
\newcommand{\z}{\mathbf{z}}
\newcommand{\vk}{\mathbf{k}}
\newcommand{\ca}{\mathbf{a}}
\newcommand{\y}{\mathbf{y}}
\newcommand{\X}{\mathbf{X}}
\newcommand{\Z}{\mathbf{Z}}
\newcommand{\A}{\mathbf{A}}
\newcommand{\K}{\mathbf{K}}
\newcommand{\W}{\mathbf{W}}
\newcommand{\I}{\mathbf{I}}
\newcommand{\High}[1]{{\color{black}{#1}}}
\newcommand{\Hi}[1]{{\color{black}{#1}}}
\newcommand{\Hig}[1]{{\color{black}{#1}}}
\newcommand{\deemph}[1]{{\color{black!40}#1}}
\newcommand{\argmax}{\mathop{\mathrm{argmax}}}
\newcommand{\beq}{\begin{equation}}
\newcommand{\eeq}{\end{equation}}
\newcommand{\beqn}{\begin{equation*}}
\newcommand{\eeqn}{\end{equation*}}
\newcommand{\beqa}{\begin{eqnarray}}
\newcommand{\eeqa}{\end{eqnarray}}
\newcommand{\beqan}{\begin{eqnarray*}}
\newcommand{\eeqan}{\end{eqnarray*}}
\renewcommand{\epsilon}{\varepsilon}
\renewcommand{\leq}{\leqslant}
\renewcommand{\geq}{\geqslant}
\renewcommand{\hat}{\widehat}
\begin{document}

\title{(Private) Kernelized Bandits with Distributed Biased Feedback}

\author{%
Fengjiao Li\footnotemark[1]\footnote{{Fengjiao Li (\texttt{fengjiaoli@vt.edu}), Bo Ji (\texttt{boji@vt.edu}), Department of Computer Science, Virginia Tech.}} \quad Xingyu Zhou\footnote{Xingyu Zhou (\texttt{xingyu.zhou@wayne.edu}), Department of Electrical and Computer Engineering, Wayne State University. This work has been accepted by ACM SIGMETRICS'23.} %This work is supported in part by NSF grants under CNS-2112694 and CNS-2153220.} 
\quad Bo Ji\footnotemark[1]}

\date{}

\maketitle

\begin{abstract}
In this paper, we study kernelized bandits with distributed biased feedback. This problem is motivated by several real-world applications (such as dynamic pricing, cellular network configuration, and policy making), where users from a large population contribute to the reward of the action chosen by a central entity,
but it is difficult to collect feedback from all users. Instead, only biased feedback (due to user heterogeneity) from a subset of users may be available. In addition to such partial biased feedback, we are also faced with two practical challenges due to communication cost and computation complexity.  
To tackle these challenges, we carefully design a new \emph{distributed phase-then-batch-based elimination (\texttt{DPBE})} algorithm, which samples users in phases for collecting feedback to reduce the bias and employs \emph{maximum variance reduction} to select actions in batches within each phase. 
By properly choosing the phase length, the batch size, and the confidence width used for eliminating suboptimal actions, we show that \texttt{DPBE} achieves a sublinear regret of $\tilde{O}(T^{1-\alpha/2}+\sqrt{\gamma_T T})$, where $\alpha\in (0,1)$ is the user-sampling parameter one can tune.  
Moreover, \texttt{DPBE} can significantly reduce both communication cost and computation complexity in distributed kernelized bandits, compared to some variants of the state-of-the-art algorithms (originally developed for standard kernelized bandits).
Furthermore, by incorporating various \emph{differential privacy} models (including the central, local, and shuffle models), we generalize \texttt{DPBE} to provide privacy guarantees for users participating in the distributed learning process.
Finally, we conduct extensive simulations to validate our theoretical results and evaluate the empirical performance.
\end{abstract}

\section{Introduction}
%\bo{We are allowed to have 20 pages + an unlimited number of pages for references. So try to think about what we want to include in the 20 pages.}\fengjiao{Noted.}
% \bo{There are 54 warnings, including undefined references. Check to make sure there are no serious issues.}
Bandit optimization is a popular  online learning paradigm for sequential decision making and has been widely used in a wide variety of real-world applications, including hyperparameter tuning \cite{li2017hyperband}, recommendation systems \cite{li2010contextual}, and dynamic pricing \cite{misra2019dynamic}. In such problems, each decision point (called an arm or action), if chosen, yields an  unknown reward. The goal of the agent is to maximize the cumulative reward by making proper decisions sequentially.
%Given a set of (noisy) feedback about the bandit, the learner can estimate the model and  decides the action in the next round, with the goal of maximizing the cumulative reward over a finite time horizon by balancing the tradeoff between \emph{exploration} and \emph{exploitation}. 
% While the stochastic multi-armed bandit (MAB) model has proven to be useful for these problems~\cite{lai1985asymptotically}, it is based on the assumption that actions in the decision set are assumed to be independent and that the decision set is assumed to be small and finite. However, this is usually not the case in practice, e.g., pricing. 
% Modern bandit approaches try to model the structure of correlations in reward of ``similar arm” as a black-box function so that observations of its values at some chosen actions reveals some information at other actions. 
An important way to capture general (e.g., \emph{non-linear} and even \emph{non-convex}) unknown objective functions is to consider a smoothness condition specified by a small norm of a Reproducing Kernel Hilbert Space (RKHS) associated with a kernel function. This setup is often referred to as \emph{kernelized bandits}. 
% %to model correlation between actions/smoothness of a black-box function, the literature use a kernel function to capture this correlation, formulated as kernelized bandits. (black-box optimization is a common fundamental problem with )

Thanks to the strong link between RKHS functions and Gaussian processes (GP)~\cite{kanagawa2018gaussian,chowdhury2017kernelized,srinivas2009gaussian}, an extensive line of work has exploited GP models to estimate an unknown function $f$ given a set of (noisy) evaluations of its values $f(\x)$ at chosen actions $\x$.
However, in many applications, the value $f(\x)$ could represent an overall effect of action $\x$ on a large population of users where 
it is difficult for the learning agent to make direct observations; yet, the agent could collect some partial feedback from the distributed users in the population. In addition, feedback from these users could be biased due to user heterogeneity (e.g., different preferences). Therefore, we assume that each user $u$ in the population is associated with a local function $f_u$, which is a function sampled from a GP with mean $f$. %To that end, we study the problem of kernelized bandits with distributed biased feedback in this work. 、
% \High{Consider the dynamic pricing \cite{misra2019dynamic} problem (see Figure~\ref{fig:dynamic_pricing}) of chain companies, e.g., McDonald's Corporation. 
% When a pricing mechanism $\x$ is set by a company, it influences all its chain stores, and customers at each chain store, based on their individual demand and preference, make a choice (purchase or not), which contributes to the total profits $f(\x)$. Without knowing products' demand curves, the company makes a pricing decision with the goal of \emph{maximizing earning while learning}. That is, the company tries to infer the expected demand and thus the expected profit $f$ by collecting feedback of the chain stores towards each pricing in each decision epoch. Note that it might be difficult for the company to
% collect the feedback from \emph{all} chain stores due to a high cost. For example, it is impractical for the Mcdonald's company headquarter to collect the sales from every chain store in the US. Instead, the headquarter might be able to get some feedback (sales) from a subset of their chain stores. However, purchases at each chain store depend on their customers' preference towards the products, which is often \emph{biased} as a feedback for the overall profit. }
\High{Consider the dynamic pricing problem~\cite{misra2019dynamic} as an example (see Figure~\ref{fig:dynamic_pricing}).
When a company sets a pricing mechanism $\x$, this decision influences all the customers, and every customer, based on her individual demand and preference, makes a choice (purchase or not), which contributes to the total profits $f(\x)$. Without knowing products' demand curves in advance, the company makes a sequence of pricing decisions with the goal of \emph{maximizing profits while learning}. That is, the company aims to infer the expected demand and thus the expected profits $f$ by collecting feedback from customers in each decision epoch.  
Note that it might be difficult for the company to
collect feedback from \emph{all} the customers - since purchases may take place at many local stores at different locations. For example, it is impractical for McDonald's headquarters to collect sales information from all of the nationwide customers within each decision epoch. Instead, the headquarter might be able to get feedback (i.e., sales information) from a subset of the customers. However, each customer's choice depends not only on her own preference towards the products and their prices but also on several other factors (location, competitors, promotion events, etc.), which is often \emph{biased} feedback for the overall profits.}
\begin{figure}[!t]
\centering
\includegraphics[width=0.5\columnwidth]{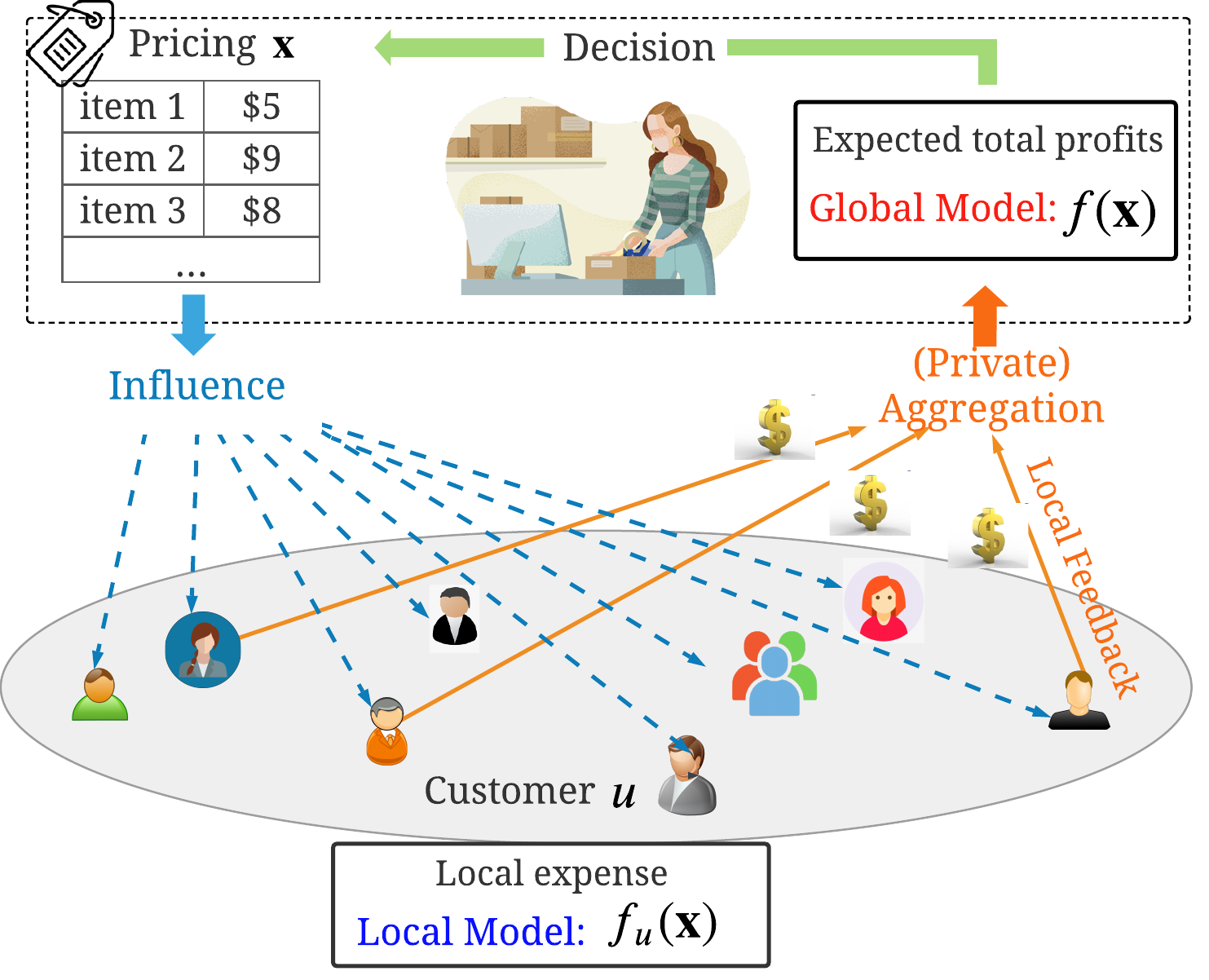}
\caption{Dynamic pricing: a motivating application of our problem.%\high{I will update the figure correspondingly}\bo{What changes do you want to make to the figure?}\fengjiao{Now I will not change this figure. The note was for the previous description about changing customers to chain stores. }
%\bo{Change ``Collect'' to ``(Private) Aggregation''? Add ``Local feedback'' to one of the orange arrows? Basically, similar to the figure in our WiOpt paper.}
%\bo{The font of $x$ should be the same as that in $f(x)$, i.e., boldface. Change ``Followed by'' to ``Influence''? No space between \$ and the numbers.}
%\bo{Shall we use ``(Private) Aggregation'' instead of ``Aggregate''? If so, we can probably remove the word ``Privatized'' in ``Privatized Local Feedback''.}\fengjiao{Yes. I think using ``private" here is better. btw, whether we need to change the word ``influence"?}\bo{I think we are now good.}
} 
\label{fig:dynamic_pricing}
\end{figure}

To that end, we study a new kernelized bandit setting where the agent could not get direct evaluations of the unknown reward function but only distributed biased feedback. We refer to this setting as \emph{kernelized bandits with distributed biased feedback}. 
This bandit problem is shared by several other practical applications, including cellular network configuration \cite{mahimkar2021auric}  and public policy making~\cite{bouneffouf2020survey}. % \bo{I think here we should brief talk about why GP-UCP does not work or its limitations, to motivate our proposed solution.} \fengjiao{updated.} 
However, existing learning algorithms developed for standard kernelized/GP bandits (e.g., \texttt{GP-UCB}~\cite{srinivas2009gaussian,chowdhury2017kernelized}) rarely consider such partial biased feedback in a distributed setting. To solve this new problem, a learning algorithm needs to be able to learn the unknown function from such biased feedback in a \emph{sample-efficient} manner. Moreover, two practical challenges naturally arise in our problem: \emph{communication cost} due to distributed learning~\cite{chen2021communication} and \emph{computation complexity} due to GP update~\cite{calandriello2022scaling}. 
Therefore, not only need the learning algorithms be sample-efficient, but they must also be scalable in terms of both communication efficiency and computation complexity.

%\bo{What do you mean by ``experimentally scalable''? Simply use ``scalable''?}\fengjiao{Here, ``experimentally scalable" algorithms mean the learning process is not difficult and practical to conduct, e.g., collecting feedback in batches instead of sequentially. } \bo{I still do not understand. Why not simply use ``scalable''?}\fengjiao{updated.}
%and both-communication-and-computation-efficient \bo{This looks awkward. Need to be rewritten.}\fengjiao{Noted.} learning algorithm for our problem setting. 

To that end, we propose the \emph{learning with communication} framework where the biased feedback is communicated in phases, and design a new \emph{distributed phase-then-batch-based elimination} algorithm that aggregates the distributed biased feedback in a communication-efficient manner and eliminates suboptimal actions in a computation-efficient manner while achieving a sublinear regret. %and which action to chose in each time by aggregating feedback properly. As a consequence, we analyze its performance in terms of regret, communication cost, and computation complexity. 
Our main contributions are summarized as follows. 
%\bo{We need to briefly mention simulations.}

\begin{itemize}
    \item %\high{Problem setting novelty.} 
    To the best of our knowledge, this is the first work that studies a new kernelized bandit setting with distributed biased feedback, where three key challenges (user heterogeneity, communication efficiency, and computation complexity) inherently arise in the design of sample-efficient, scalable learning algorithms.
    While it is natural to consider phased elimination type of algorithms in such settings, the standard phased elimination algorithm relies on the so-called (near-)optimal experimental design~\cite{lattimore2020learning}, which cannot be directly applied to kernelized bandits due to the possible infinite feature dimension of RKHS functions. 

    \item %\high{Algorithm design novelty} 
    To that end, we design a new phased elimination algorithm, called \emph{distributed phase-then-batch-based elimination (\texttt{DPBE})}, which is carefully crafted to address all the aforementioned challenges. In particular, \texttt{DPBE} adds a \emph{user-sampling} process to reduce the impact of bias from each individual user and selects actions according to \emph{maximum variance reduction} within each phase. Moreover, a \emph{batching} strategy is employed to improve both communication efficiency and computation complexity. %\xingyu{to be more specific? both computation and communication?}\fengjiao{updated}. 
    That is, instead of selecting a new action at each round, \texttt{DPBE} plays the same action for a batch of rounds before switching to the next one. %\High{Meanwhile, inspired by this ``rare-switching", we \emph{merge feedback} for each action and apply GP posterior reformulation  }%, as shown in Figure~\ref{fig:schedule}. \bo{I do not think it is a good idea to present Fig. 2 here. The discussion in the introduction should be at a high level. Move this figure to the place where you introduce the algorithmic framework.}\fengjiao{updated.} \bo{I do not think we should even refer to Fig. 2 here.}
    % This not only improve the computation complexity by reducing the matrix inverse 
    Not only does it help reduce the number of times one needs to %collect the feedback and 
    compute the next action via GP update, but it also allows for reducing the dimensions of the vectors and matrices involved in both communication and computation. %via proper design. 
    
    \item  We show that \texttt{DPBE} achieves a sublinear regret of $\Tilde{O}(T^{1-\alpha/2}+\sqrt{\gamma_T T})$\footnote{The notation $\Tilde{O}(\cdot)$ ignores polylog terms. Bounds on $\gamma_T$ of different kernel functions can be found in Appendix~\ref{app:maximum_info_gain}.} while incurring a communication cost of $O(\gamma_T T^{\alpha})$ and a computation complexity of $O((|\cD|\gamma_T^3+\gamma_T^4)\log T+\gamma_T T^{\alpha})$, where $\gamma_T$ is the \emph{maximum information gain} associated with the kernel of the unknown function $f$, %\bo{Shouldn't the maximum information gain be associated with kernels?}\fengjiao{Yes. Essentially, it is related to the kernels. However, we have not mentioned kernels here but only the unknown function $f$. That is why I used $f$.}
    $\cD$ is the decision set, and $\alpha >0$ is a user-sampling parameter that we can tune. It is worth noting that \texttt{DPBE} with $\alpha \in (0,1)$ has a better computation complexity than some variants of the state-of-the-art algorithms (originally developed for standard kernelized bandits without biased feedback).
    %; see Table~\ref{tab:complexity}. 
    Specifically, \texttt{DPBE} achieves three significant improvements compared to \High{the state-of-the-art algorithms}: %\texttt{GP-UCB}:
    (i) user-sampling efficiency ($O(T^{\alpha})$ vs. $T$), 
    (ii) communication cost ($O(\gamma_T  T^{\alpha})$ vs. $T$), 
    and (iii) computation complexity ($O(\gamma_T T^{\alpha})$ vs. $O(T^3)$). Furthermore, we conduct extensive simulations to validate our theoretical results and evaluate the empirical performance in terms of regret, communication cost, and running time.

    \item    Finally, we generalize our phase-then-batch framework to incorporate various \emph{differential privacy} (DP) models (including the central, local, and shuffle models) into \texttt{DPBE}, which ensures privacy guarantees for users participating in the distributed learning process.

\end{itemize} 
\section{Related Work} \label{sec:related_work}
% \textbf{Related work.}
%\bo{Finish this section ASAP. Keep in within 1-1.5 pages if possible.}
\noindent\textbf{Kernelized bandits.} 
Since \cite{srinivas2009gaussian} studied GP in the bandit setting, kernelized bandits (also called GP bandits) have been widely adopted to address black-box function optimization over a large or infinite domain \cite{chowdhury2017kernelized}. Considering different application scenarios, kernelized bandits under different settings have recently been studied, including heavy-tailed payoffs~\cite{ray2019bayesian}, model misspecification~\cite{bogunovic2021misspecified}, and corrupted rewards~\cite{bogunovic2022robust}. 
% \bo{Fengjiao, I believe I pointed it out several times - do not use ``such as'' or ``e.g.'' with ``etc.''. Use one of them only.}\fengjiao{I only use ``etc" here? }\bo{Same applies to ``including''. I fixed another one in an earlier paragraph.}\fengjiao{Sure. Thanks.} 
As typically considered in the literature, these works also assume that direct (noisy) feedback of the unknown function at a chosen action is available to the agent. In sharp contrast, we study a new, practical setting where only distributed biased feedback can be obtained. Under this setting, not only does one need to use biased feedback in a sample-efficient manner, but one also has to consider communication efficiency, which is a common issue in distributed bandit-learning settings. 
% \high{Talk about  the experimental design for kernelized bandits.}
% In \cite{zhu2021pure}, they propose to adaptively embed the feature representation of each action into a lower-dimensional space in order to apply the (near) optimal experimental design for finite-dimensional actions. However,  the intermediate regret due to this approximation along $T$ is not considered at all because they aim at identifying the best arm at the end of $T$. 

% While \cite{camilleri2021high} : robust estimator vs standard ones (GP posterior mean or variance

% closed-form

% simplest

% only rely on a basic GP-based confidence bound

% extend to infinite decision set: regret $O(\sqrt{d\gamma_T T})$

% \noindent\textbf{Distributed/collaborative kernelized bandits.} \cite{du2021collaborative,dai2020federated,sim2021collaborative}
%\bo{inconsistent with other paragraphs in this section - I think we can remove the references here and discuss them within the paragraph.}.\fengjiao{how about now?}
\noindent\textbf{Distributed/collaborative kernelized bandits.}
While distributed or collaborative kernelized bandits have been studied recently \cite{du2021collaborative, dai2020federated, sim2021collaborative}, we highlight the key difference between our model and theirs as follows: motivated by real-world applications, we aim to learn one (global) bandit while most of them also aim to learn every local model, which results in quite different regret definitions (their group regret vs. our standard regret defined in Section~\ref{sec:problem}). Moreover, they assume that every party (corresponding to a user in our problem) shares the same objective function. While \cite{dai2020federated} also studies similar bandit optimization with biased feedback, they assume a fixed number of local agents and bound the regret in terms of the distance between the target function and local functions, which could be very large. In addition, \cite{dai2020federated} does not consider communication efficiency, which is a key challenge in distributed learning.
%, is not considered in \cite{dai2020federated} when designing their learning algorithms. 

Recently, the work of \cite{li2022differentially} studies a similar global reward maximization problem without direct feedback and also employs a phase-based elimination algorithm. However, the main difference is that they only consider linear bandits by assuming a linear reward function  while we study kernelized bandits that can capture general \emph{non-linear} and even \emph{non-convex} functions and recover linear bandits as a special case when choosing a linear kernel. This strict generalization introduces three unique challenges: (i) different from the linearly parameterized bandits where the bias in the feedback can be quantified with a same-dimension random vector (i.e., $\xi_u = \theta_u - \theta^* \in R^d$ at each user $u$), it is unclear how to make an assumption of the bias in the non-parametric kernelized bandits setting in order to learn the unknown global reward function;
(ii) due to the possible infinite feature dimension of functions in an RKHS, the (near-)optimal experimental design approach used in the phased-elimination algorithm for linear bandits cannot be directly adapted to kernelized bandits. Despite some recent efforts towards extending this experimental design based approach to kernelized bandits~\cite{zhu2021pure, camilleri2021high}, there still remain some key limitations (see our discussion below); (iii) since computation complexity is a critical bottleneck in kernelized bandits, a proper computation-efficient learning algorithm is desired when addressing our problem.  

\noindent \textbf{Experimental design for kernelized bandits.} In \cite{zhu2021pure}, the authors propose to adaptively embed the feature representation of each action into a lower-dimensional space in order to apply the (near-)optimal experimental design for finite-dimensional actions. However, the intermediate regret due to the approximation error over $T$ rounds is not considered at all because their goal is to 
find an $\epsilon$-optimal arm at the end of $T$ (i.e., a pure exploration problem) rather than minimizing the cumulative regret. While \cite{camilleri2021high} aims at minimizing the cumulative regret, their algorithm and analysis are more complex than ours: it requires a non-standard robust estimator, obtaining an optimal distribution on the simplex, drawing samples from this distribution, and solving a second optimization problem. In contrast, we simply use the standard GP posterior mean and variance estimators, which can be computed in closed-form. Moreover, our algorithm can also be easily extended to handle infinite action sets (see Remark~\ref{rk:infinite_set}) rather than a finite set considered in~\cite{camilleri2021high}.

% i.e., optimizing to obtain distribution on the simplex, estimating the parameter by solving a second optimization problem.
% More importantly, instead of requiring a non-standard robust estimator as theirs, we use the standard GP posterior mean estimator and 
% decide the actions  based on maximum variance reduction~\cite{vakili2021optimal}, which only uses the standard  GP posterior variance and can be computed in the closed-form. 

% In addition, 
% they only focus on finite decision set because 
% their evaluation in the possibly infinite-dimension space highly relies on the finite set. In our work, although assuming a finite decision set, our algorithm can be easily extended to a more general compact decision set (see Remark~\ref{rk:infinite_set}). 

\section{Preliminaries}\label{sec:problem}
\textbf{Notation.} Throughout this paper, we use lower-case letters (e.g., $x$) for scalars, lower-case bold letters (e.g., $\x$) for vectors, and upper-case bold letters (e.g., $\X$)  for matrices. Let $[n]\triangleq \{1,\dots, n\}$ denote any positive integer up to $n$, let $|\cU|$ denote the cardinality of set $\cU$, and let $\Vert \x\Vert_2$ denote the $\ell_2$-norm of vector $\x$. %\High{the inner product is denoted by $\langle \cdot, \cdot \rangle$}. For a positive definite matrix $\A\in\R^{d\times d}$, the weighted $\ell_2$-norm of vector $\x\in\R^d$ is defined as $\Vert \x\Vert_{\A} := \sqrt{\x^{\top}\A\x}$. \high{This paragraph can be deleted.}

\subsection{Problem Setting}

% \bo{As we discussed, we need to clearly state our assumptions (assuming an infinite number of users, assumptions on the noise, etc.).}\fengjiao{All assumptions are listed in Assumption~\ref{as:problem_as}} 
%\textbf{Problem statement.}	
We introduce a new kernelized bandit problem where the unknown function represents the overall reward over a large population containing an infinite number of users.
The unknown reward function $f: \cD\to \R$ is assumed to be fixed over a finite set of decisions $\cD\subseteq \R^d$.
% with $|\cD|=N$.
% \bo{You can still use the ``Assumption'' format. What I meant is that it is better to state them closer to where they are discussed.}
At round $t$, the agent chooses an action $\x_t\in \cD $, leading to a reward with mean $f(\x_t)$. This reward is unknown to the agent but captures the overall effectiveness of action $\x_t$ over the entire population $\cU$, thus called \emph{global reward}. Meanwhile, each user $u$ in the population observes a (noisy) \emph{local reward}: $y_{u,t}= f_u(\x_t) + \eta_{u,t}$ with mean $f_u(\x_t)$, where $\eta_{u,t}$ is the noise, and $f_u: \cD\to \R$ is the local reward function, assumed to be an (unknown) realization of a random function (specified soon) with mean $f$. In this setting, the exact global reward corresponding to the entire population cannot be observed; only biased local reward feedback is available to the agent.
We make the following assumptions about the unknown function $f$, the local function $f_u$, and the noise in the reward observations.

\begin{assumption}
\label{ass:global_func}
We assume that function $f$ is in the Reproducing Kernel Hilbert Spaces (RKHS), denoted by $\cH_k$. Note that RKHS $\cH_k$ is completely specified by its kernel function $k(\cdot, \cdot)$ (and vice-versa), with an inner product $\langle \cdot, \cdot \rangle_k$ obeying the reproducing property: $f(\x) = \langle f(\cdot), k(\x, \cdot)\rangle_k$ for all $f \in \cH_k$~\cite{chowdhury2017kernelized}. We list the most commonly used kernel functions (such as Squared Exponential (SE) and Mat\'{e}rn kernels) in Appendix~\ref{app:kb_auxiliary_results}. %\bo{I'd suggest that you move the text in the footnote to the main body} with positive semi-definite kernel function $k: \cD\times \cD \to \R$. 
Moreover, we assume that function $f$ has a bounded norm: $\Vert f \Vert_k \triangleq \sqrt{\langle f, f \rangle_k} \leq B$, and that the kernel function is also bounded: $k(\x,\x)\leq \kappa^2$ for every $\x \in \cD$, where both $B$ and $\kappa$ are positive constants.   %\bo{You need to explain what $B$ and $\kappa$ are.}
\end{assumption}

\begin{assumption}
\label{ass:local_func}
When the agent samples a user $u$ to collect feedback, the local reward function $f_u$ at $u$ is assumed to be a function sampled from the GP with mean $f$ and covariance\footnote{\High{Our theoretical framework is applicable to a more general setting where the covariance of the local reward function is $v^2k(\cdot, \cdot)$, i.e., $f_u \sim \cG\cP(f(\cdot), v^2k(\cdot, \cdot))$. This scaling parameter $v^2$ captures the variance of the bias in the local reward function $f_u$ with its mean being the global reward function $f$. For this more general setting, our theoretical results still hold with only a slight adjustment to the posterior variance in the confidence width function \eqref{eq:confidence_width}.}} $k(\cdot, \cdot)$, i.e., $f_u \sim \cG\cP(f(\cdot), k(\cdot, \cdot))$.  In addition, we assume that each user is sampled independently for collecting feedback.% \High{and that each sampled user only participates in one communication iteration (which could consist of multiple rounds) throughout the learning process.} 
%\fengjiao{We have not mentioned phase here. That's why I used communication iteration.}
%\bo{don't you need i.i.d. assumption here?}\fengjiao{We may only need to say the user-sampling process is independent?}
\end{assumption}

\begin{assumption}
\label{ass:noise}
We assume that the observation noise $\eta_{u,t}{\sim} \cN(0,\sigma^2)$ is Gaussian with variance $\sigma>0$ and that it is independent and identically distributed (\emph{i.i.d.}) over time and across users. 
\end{assumption}

%\noindent \textbf{Regret.} 
The goal of the agent is to maximize the cumulative global reward, %: $\sum_{t=1}^T f(\x_t)$ in $T$ rounds.
or equivalently, to minimize the regret defined as follows:
\begin{equation}
\label{eq:regret}
R(T) \triangleq \sum_{t=1}^T \left(\max_{\x\in \cD} f(\x) - f(\x_{t})\right).
\end{equation}

\subsection{Learning with Communication}
For black-box function optimization based on noisy bandit feedback, kernelized bandit algorithms have shown strong empirical and theoretical performance. However, the agent in our problem setting does not have access to unbiased feedback of the object function $f$ but has to collect biased feedback from distributed users from a large population. This scenario %brings two more concerns that we have to take into account when designing a learning algorithm: i) the cost of collecting distributed feedback and ii) the willingness of users to share their feedback.
leads to the following framework of \emph{learning with communication}. %without need to wait for the feedback at each step $t$ before deciding the next action $\x_{t+1}$. 

%learner may not be able to observe this exact reward, since collecting such global information from the entire population incurs a prohibitively high cost and could often be impossible to implement in practice. 
%\bo{This phase structure is your solution, not part of the problem, right? I think you are describing the problem setup in this section.}
%\textbf{Learning with communication.}  
Communication happens when some users are selected 
to report their feedback to the agent based on their biased local reward observations. By aggregating such biased feedback from the users, the agent improves her confidence in estimating function $f$ and adjusts her decisions in the following rounds accordingly. To account for scalability, the agent collects distributed feedback from users periodically instead of immediately after making each decision.  We call the time duration between two communications as a \emph{phase}. Consider a particular phase $l$. Let $\cT_l$ be the set of round indices in the $l$-th phase and $U_l$ be the set of selected users, called \emph{participants}, that will report their feedback. With the actions $\{\x_t: t\in \cT_l\}$ chosen by the agent in this phase, each user $u$ in $U_l$ sends the feedback $g(\{y_{u,t}\}_{t\in\cT_l})$ to the agent at the end of the phase, where $g(\cdot)$ is a function (e.g., the average) %\bo{(e.g., the average?)} 
of the local reward observations and is assumed to be the same 
for all users. Then, by aggregating all feedback $\{g(\{y_{u,t}\}_{t\in\cT_l})\}_{u\in U_l}$, the agent estimates $f$ and decides $\x_t$ for round $t$ in the next phase $\cT_{l+1}$. This learning with communication process is repeated until the end of $T$, with the goal of maximizing the cumulative (global) reward. 
%\bo{Where did you talk about the batch? It might be helpful to include a figure to illustrate the structure of phase, batch, and round.}\fengjiao{Batch is introduced in our main ideas in order to address the challenges in Section~\ref{sec:alg}. I added a figure in Section~\ref{sec:alg} as well. }

In this framework, we assume that the agent can employ some existing incentive mechanisms~\cite{lim2020federated} in order to collect enough feedback for learning, but the cost has to be considered, e.g., the communication resources consumed for collecting feedback data.  
In addition, communication cost is also a critical factor in a general distributed learning system. %Hence, in addition to regret, this work uses communication cost as another metric to evaluate the efficiency of a learning algorithm for our problem setting. 
In this work, we use the total quantity of communicated numbers (between the agent and all users) as another metric, in addition to the regret metric, %\xingyu{in addition to the regret metric?}\fengjiao{yes.} \bo{One reviewer had confusion about this. I think it is indeed confusing because you never mentioned which one is the first metric. Or do not use ``second metric'' at all. The first few sentences of this paragraph needs to be revised.}\fengjiao{updated} 
to evaluate the communication efficiency of learning algorithms for our problem. Let $L$ be the total number of phases in $T$ rounds and $N_{u,l}\triangleq  \dim{(g(\{y_{u,t}\}_{t\in\cT_l}))}$ be the dimension of user $u$'s feedback (which is the number of scalars in user $u$'s feedback). 
Then, the total communication cost, denoted by $C(T)$, is as follows:
\begin{equation}
	C(T) \triangleq \sum_{l=1}^L \sum_{u\in U_l} N_{u,l}. \label{eq:comm_cost}
\end{equation}

In the following, we explain the learning with GP framework for standard kernelized bandits.
%, including 1) the highly related approach: Gaussian process and 2) an important quantity: maximum information gain.

\subsection{Learning with Gaussian Process} \label{sec:preliminary}

%\bo{This is the only subsection of this section. Does not look good.}\fengjiao{I added a subsection of problem setting. }
%\textbf{Learning with Gaussian processes.} 
A Gaussian process (GP) over input domain $\cD$, denoted by $\cG\cP(\mu(\cdot), k(\cdot, \cdot))$, is a collection of random
variables $\{f(\x)\}_{\x\in\cD}$ where every finite number of them $\{f(\x_i)\}_{i=1}^n, n\in \mathbb{N}$, is jointly Gaussian with
mean $\E[f(\x_i)] = \mu(\x_i)$ and covariance $\E[(f(\x_i)-\mu(\x_i))(f(\x_j)-\mu(\x_j))]=k(\x_i,\x_j)$ for every $1 \leq i, j\leq n$.
Hence, $\cG\cP(\mu(\cdot),k(\cdot, \cdot))$ is specified by its mean function $\mu$ and a (bounded) covariance function $k: \cD\times \cD \to [0,\kappa^2]$.
%\bo{Why is the footnote here? You already mentioned Gaussian process earlier.} \bo{BTW, I think some of the footnotes can be brought back to the main text.}\fengjiao{updated.} 
Assume that choosing action $\x_t$ at round $t$ reveals a noisy observation: 
\begin{equation}
 y_t=f(\x_t)+\eta_t, \label{eq:GP_observation}  
\end{equation}
%$y_t=f(\x_t)+\eta_t$ 
where $\eta_t\sim \cN(0,\lambda)$ is a zero-mean Gaussian noise with variance $\lambda>0$.
Standard GP algorithms implicitly use $\cG\cP(0,k(\cdot, \cdot))$ as the prior distribution over $f$. %Assume that the noise variables are drawn independently across $t$ from $\cN(0, \lambda)$ with $\lambda>0$.
Then, given the observations $\y_t = [y_1, \dots, y_t]^{\top}$ corresponding to a sequence of actions $\X_t = [\x_1^{\top}, \dots, \x_t^{\top}]^{\top}$, the posterior distribution % under previous assumptions 
is also Gaussian with the mean and variance in the following closed-form:
\begin{align}
\mu_t(\x) &\triangleq \vk(\x, \X_t)^{\top} (\K_{\X_t\X_t} +\lambda \bI)^{-1}\y_t, \label{eq:standard_mu_update} \\
\sigma_t^{2}(\x) &\triangleq \vk(\x, \x) - \vk(\x, \X_t)^{\top}(\K_{\X_t\X_t}+\lambda \mathbf{I})^{-1}\vk(\x,\X_t), \label{eq:standard_sigma_update}
\end{align}
where $\vk(\x,\X_t) = [k(\x,\x_s)]^{\top}_{s=1,\cdots, t}\in \R^{t\times 1}$ and $\K_{\X_t\X_t} = [k(\x,\x')]_{\x,\x'\in \X_t}\in \R^{t\times t}$ is the corresponding kernel matrix. 
%
%In the realizable case (i.e., when the true unknown $f\in \cH_k$ and the learner knows the true hypothesis space $\cH_k$ and $\Vert f\Vert_k\leq  B$) and under the noise model (noisy reward observations), uncertainty modeling and learning in standard GP-bandit algorithms can be viewed through the lens of Gaussian Process models. 

Next, we introduce an important kernel-dependent quantity called \emph{maximum information gain}~\cite{srinivas2009gaussian}:
\begin{equation}
    \gamma_t(k, \cD) \triangleq  \max_{\X \subseteq \cD: |\X|=t} \frac{1}{2}\log \det \left( \mathbf{I} + \lambda^{-1}\K_{\X\X}\right),
\end{equation}
which is often used to derive regret bounds. In addition, we have that $\gamma_t(k,\cD)$ scales sublinearly with $t$ for most commonly used kernels (see Appendix~\ref{app:kb_auxiliary_results}). For ease of notation, we often simply use $\gamma_t$ to denote $\gamma_t(k,\cD)$ when the kernel function $k$ and the dataset $\cD$ are clear from the context.

Thanks to the strong connection between RKHS functions and GP \cite{kanagawa2018gaussian} with the same kernel function $k$, one can use the above GP model to approximate unknown function $f \in \cH_k$ within a reliable confidence interval with high probability. 

% we also introduce  an important kernel-dependent quantity known as \emph{maximum information gain} $\gamma_t(k, \cD)$ \citep{srinivas2009gaussian} that is frequently used to characterized the regret bounds\footnote{We often use notation $\gamma_t$ in the text when the kernel function $k(\cdot, \cdot)$ and the dataset $\cD$ are clear from context.}. It quantifies the maximal amount of information about the unknown function $f$ from a zero-mean Gaussian process with kernel $k$ by revealing $t$ noisy  observations. 
% For a set of sampling points $S\subseteq \cD$, we use $f_S$ to denote a random vector $[f_S(\x)]_{\x\in S}$, and $Y_S$ to denote the corresponding noisy observation $Y_S = f_S + \eta_S$, where $\eta_S \sim \cN(0, \lambda I)$. 

% The maximum information gain (about $f$) after observing $t$ noisy samples is defined as (see \cite{srinivas2009gaussian}. 
% \begin{equation}
%     \gamma_t(k,\cD) \triangleq \max_{S\subseteq \cD: |S|=t} \mathrm{I}(f_S, Y_S) = \max_{S\subseteq \cD: |S|=t} \frac{1}{2}\det \left( I_t + \lambda^{-1}\K_{\X_t\X_t}\right),
% \end{equation}
% where $\mathrm{I}(\cdot, \cdot)$ denotes the mutual information between random variables, $\det(\cdot)$ is used to denote a matrix
% determinant. Under the previous setup (GP prior and Gaussian likelihood), the maximum information gain can be
% expressed in terms of predictive GP variances:
% \begin{equation}
%     \gamma_t(k,\cD) = \max_{ \{\x_1,\dots, \x_t\}\subseteq \cD}  \frac{1}{2}\sum_{s=1}^t \ln (1+ \lambda^{-1}\sigma_{s-1}^2(\x_s)). \label{eq:info_gain}
% \end{equation}

\section{Algorithm Design}
%In this section, we first present the key challenges associated with our \emph{learning with communication} framework and then develop a distributed phase-then-batch-based elimination algorithm (\texttt{DPBE}) to resolve the challenges. 
\subsection{New Challenges and Main Ideas}
In Section~\ref{sec:problem}, we describe the learning with communication framework, which requires the distributed biased feedback to be communicated in phases and exhibits experimental scalability.  This framework naturally leads us to consider a phased elimination algorithm that gradually eliminates suboptimal actions by periodically aggregating and analyzing the local feedback from the participants. However, several new challenges arise in our setting compared to the standard phase elimination algorithm in linear bandits \cite{lattimore2020learning, lattimore2020bandit}.

%\xingyu{Here, I just want to highlight some key contributions we made in this paper, a.k.a exciting staff. We can use this to polish other parts. Let us first fix this, and then we know how to fix other parts. Feel free to add}

\textbf{(i) How to select actions for each phase?} The standard phase elimination algorithm often relies on the so-called near-optimal experimental design (i.e., a probability distribution over the currently active set) that minimizes the worst-case variance~\cite{lattimore2020bandit}. However, due to the possible infinite feature dimension of RKHS functions, adapting this approach to kernelized bandits setting is nontrivial  even with the strong assumptions, requirements, and complicated algorithm design (e.g., \cite{zhu2021pure} and \cite{camilleri2021high}, see discussion in Section~\ref{sec:related_work}). %adapting this technique to our kernelized setting is nontrivial due to the possible infinite feature dimension of RKHS functions. While existing work either approximates the high-dimension with a lower-dimension feature representation, or 
We are wondering if there is a simple and efficient method of selecting actions in each phase for our kernelized bandits setting. (Challenge \textcircled{a}).

\textbf{(ii) How to use biased feedback?} In contrast to the standard phase elimination algorithm where feedback is unbiased, in our setting the local feedback from a particular user is biased. In order to reduce the impact of bias, an efficient user-sampling scheme is needed. However, how to incorporate this idea into the phase elimination algorithm is unclear (Challenge \textcircled{b}). 

\textbf{(iii) How to deal with scalability?} In our setting, scalability refers to both computation complexity and communication cost. On the one hand, it is well-known that standard GP bandits suffer a poor computation complexity (e.g., $O(T^3)$) due to the matrix inverse at each step for GP posterior update. On the other hand, due to the communication between the agent and the users, it is imperative to ensure a low communication cost (Challenge \textcircled{c}).  

\textbf{Our approach.} We propose a novel phase elimination algorithm that is able to simultaneously address all the above challenges. We highlight the main ideas as follows. (i) \emph{User-sampling} for distributed biased feedback. In each phase, a well-tuned subset of users is sampled to reduce the impact of bias from each individual user. (ii) \emph{Maximum variance reduction} for action selection. Upon selecting the next action within each phase, it simply selects the one that has the largest posterior variance. (iii) \emph{Batching strategy} for scalability. Instead of selecting a new action at each round within a phase, it consistently plays the same action for a batch of rounds before selecting the next one, i.e., \emph{rare-switching}. % as shown in Figure~\ref{fig:schedule}. 
% \High{Meanwhile, inspired by this idea of ``rare-switching", we merge the (local) feedback for each action in each phase and then}
By reducing the number of times selecting a new action (which could be much smaller than the phase length), it also reduces the number of unique actions chosen within each phase, which can be utilized to improve the scalability in terms of both computation and communication through a proper design. Specifically, (a) \emph{Computation}: via a \emph{posterior reformulation} (specified in Section~\ref{sec:alg}), we convert the dimension of the matrix in the inverse operation from the total rounds to the number of batches in each phase; (b) \emph{Communication}: we let each participant  \emph{merge the local reward observations} in each batch before sending her feedback at the end of each phase. That is, the feedback $g(\{y_{u,t}\}_{t\in\cT_l})$  from each  participating user $u$  in phase $l$ is a vector, where each element corresponds to the average local reward of a batch. Then, the dimension of the feedback $g(\{y_{u,t}\}_{t\in\cT_l}$ becomes the number of batches. For example, consider a particular phase with a total of $10$ rounds. Without batching strategy, one requires to select an action for each round, i.e., $10$ actions for this phase. However, the batching strategy selects an action for each batch. If each batch has size two, there are $5$ batches in this phase,
and the dimension of the matrix in the inverse operation is shrunk from $10$ to $5$, which will reduce the computation complexity %(\xingyu{why communication??}\fengjiao{sorry, typo.}) 
about $10^3/5^3=8$ times for matrix inverse operations! In addition, by merging local observations of each unique action, only  $5$, instead of $10$, (averaged) local rewards are communicated  at each user.

\smallskip 
	
% \textbf{Main ideas.} To address the aforementioned challenges \textcircled{a}-\textcircled{d}, we present our main ideas as follows. Considering the communication scheme, we are interested in a phased elimination algorithm that gradually eliminates suboptimal actions by periodically aggregating and analyzing the local feedback from the sampled clients in a privacy-preserving manner. 
% To address the multiple types of uncertainty when estimating the global reward (\textcircled{a}\textcircled{c}), we carefully construct a confidence width to incorporate all three types of uncertainty.
% To achieve a sublinear regret while minimizing communication cost (\textcircled{b}), we increase both the phase length and the number of participating clients exponentially. To ensure privacy guarantees (\textcircled{d}), we introduce a notion, called \textsc{Privatizer}, which can be easily tailored under different DP models. Note that the \textsc{Privatizer} is a process consisting of tasks to be collaboratively completed by the clients, the server, and/or even a trusted third party. To keep this notion general, we use $\cP = (\cR, \cS, \cA)$ to denote a \textsc{Privatizer}, where $\cR$ is the procedure at each client (usually a local randomizer), $\cS$ is a trusted third party that helps privatize data (e.g., a shuffler that permutes received messages), and $\cA$ is an analyzer operated by the central server. In the following, we will show how to integrate all these ideas into a unified framework.

 \begin{figure}[!t]
    \centering
    \includegraphics[width=0.8\linewidth]{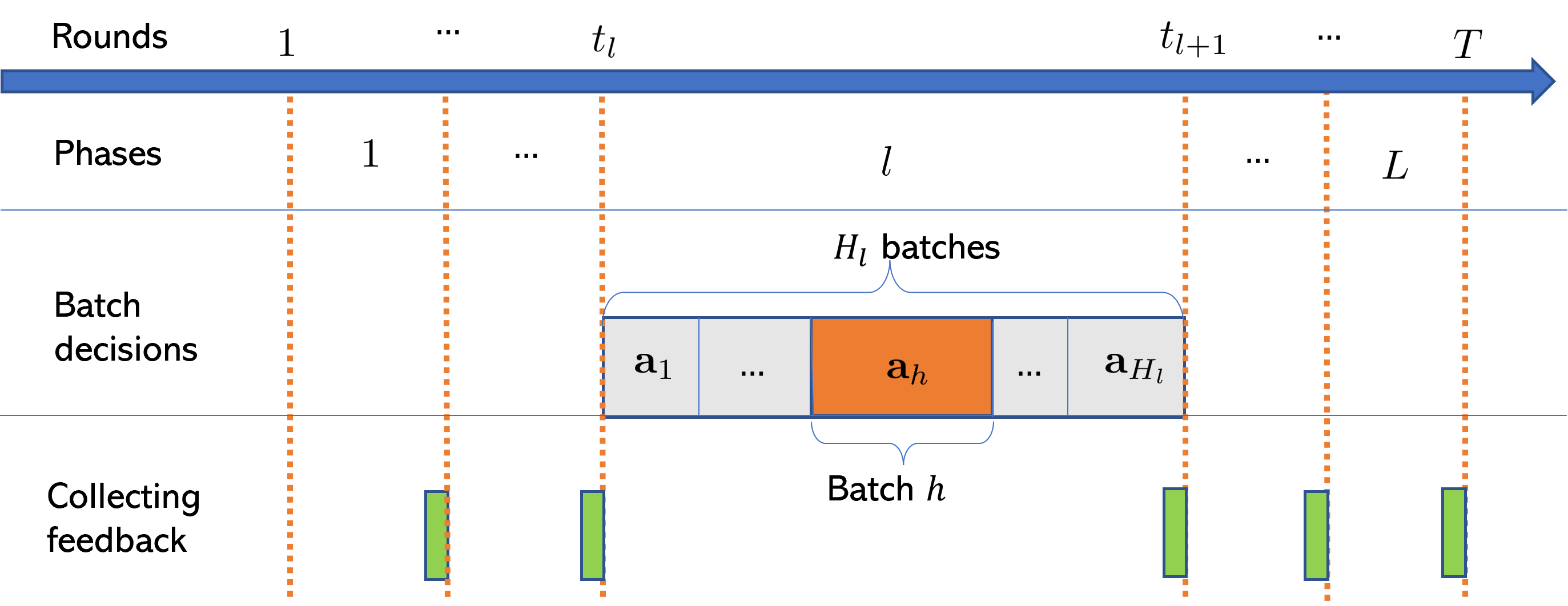}
    \caption{The phase-then-batch strategy: $T$ rounds are divided into $L$ phases; at the end of each phase, participants report their feedback, which is used for deciding actions in the next phase; within each phase $l$, decisions are made in a batched fashion, e.g., playing $\ca_h$ at all the rounds in the $h$-th batch.
    %\bo{Change ``Schedule'' to ``Batched decisions''? Change ``$H_l$ Batches'' to ``$H_l$ batches''.}
    %\bo{This figure looks okay but not nice. Try to think about a better version.}\fengjiao{Does this figure look good? It is more brief. I will think about it.} \bo{What do you think? Does this figure visually look nice?}
    }
    \label{fig:schedule}
\end{figure}

\subsection{Distributed Phase-then-Batch-based Elimination (\texttt{DPBE})} \label{sec:alg}
Following the main ideas stated in the above section, we propose the phase-then-batch schedule strategy, shown in Figure~\ref{fig:schedule} and design the distributed phase-then-batch-based elimination (\texttt{DPBE}) algorithm  in Algorithm~\ref{alg:DPBE}.

The \texttt{DPBE} algorithm is a phased elimination algorithm, which maintains a set $\cD_l$ of active actions that are possible to be optimal and updates the active set after aggregating the distributed feedback. 
%Under DPBE, actions are chosen in batches within each phase while the distributed feedback corresponding to the chosen actions is collected at the end of each phase. 

Consider a particular phase $l$, \texttt{DPBE} has three main steps: 1) action selection (Lines~\ref{alg_action_selection}-\ref{alg_action_slc_end}); 2) distributed feedback collection (Lines~\ref{alg_client_sample}-\ref{alg_collect_data}); %\bo{Lines 12-16?}\fengjiao{updated.} 
and 3) action elimination (Lines~\ref{alg_aggregation}-\ref{alg_update}). 

Before describing the details of \texttt{DPBE}, we explain some additional notations used in the algorithm. Throughout this paper, we use another notation ``$\ca$'' to denote the specific chosen action under our algorithm to avoid too many subscripts or superscripts for all the batch, phase, or round indices. Consider the $l$-th phase. Let $t_l$ and $T_l$ be the time index right before the $l$-th phase and the length of the $l$-th phase, respectively.  Then, the round indices in the $l$-th phase can be represented as $\cT_l = \{t\in [T]: t_l+1\leq t \leq t_l+T_l\}$. In addition, $\cT_l(\ca)\triangleq \{t\in \cT_l: \x_t = \ca\}$ denotes the time indices when action $\ca$ %\bo{you earlier used $x$ to denote action. Why do you change it to $a$?}\fengjiao{I use $\x_t$ to denote the action selected at $t$ and use $\ca_h$ to denote the action in the $h$-th batch to avoid too many subscripts or superscripts on $\x$}
% \bo{but this could be confusing too}\fengjiao{added a sentence explaining the notation of $\ca$}
is selected in this phase, and $H_l$ represents the number of batches in the $l$-th phase.

\textbf{1) Action selection (Lines~\ref{alg_action_selection}-\ref{alg_action_slc_end}):} 
In the $l$-th phase, actions are selected from the active set $\cD_l$. 
As mentioned before, each selection is based on \emph{maximum variance reduction} \cite{vakili2021optimal}, and we employ batch schedule for scalability.
Specifically, in the $h$-th batch, we find the action $\ca_h$ %\bo{you earlier used $x$ to denote action} 
that maximizes a reformulated posterior variance $\Sigma_{h-1}(\cdot)$ defined in Eq.~\eqref{eq:sigma_update}  after $h-1$ batches (Eq.~\eqref{eq:decision}). This is possible because the posterior variance can be computed without knowing any reward observations (see Eq.~\eqref{eq:standard_sigma_update}). %Intuitively, choosing an action according to Eq.~\eqref{eq:decision} leads to maximal uncertainty reduction.
Then, play this action for $T_l(\ca_h)\triangleq \lfloor 	(C^2-1)/\Sigma_{h-1}^2(\ca_h)\rfloor$ rounds, which forms the $h$-th batch. Here,  the batch size schedule is inspired  by the \emph{rare-switching} idea in \cite{abbasi2011improved,calandriello2022scaling}. This batch schedule strategy enables us to merge rounds and thus shrink the dimensions of the matrix and vectors used for computing the variance in Eq.~\eqref{eq:standard_sigma_update}.  By the end of each batch, we update the variance function by incorporating the action in the current batch. Let $\A_h = [\ca_1^{\top}, \dots, \ca_h^{\top}]^{\top}\in \R^{h\times d}$ be the $h\times d$ matrix that contains the $h$ chosen actions so far. We reformulate the standard posterior variance in Eq.~\eqref{eq:standard_sigma_update} and update the posterior variance as follows:
\begin{equation}
\Sigma_h^{2}(\x) \triangleq k(\x, \x) - \vk(\x,\A_h)^{\top}(\K_{\A_h\A_h}+\lambda \W_{h}^{-1})^{-1}\vk(\x,\A_h)
\label{eq:sigma_update},
\end{equation}
where $\W_h\in \R^{h\times h}$ is a diagonal matrix with $[W_h]_{ii} = T_l(\ca_i)$ for any $i\in [h]$\High{, and $\lambda$ is set to be the noise variance of local observations, i.e., $\lambda = \sigma^2$}.   Here, we reformulate the standard posterior variance in Eq.~\eqref{eq:standard_sigma_update} with Eq.~\eqref{eq:sigma_update} in order to save computation complexity (especially for computing matrix inverse) while maintaining the same order of regret (sacrificing only a constant multiplier). %Furthermore, we show this batch schedule strategy also helps save communication cost. 

\begin{algorithm}[!t]
\caption{Distributed Phase-then-Batch-based Elimination (\texttt{DPBE})}
\label{alg:DPBE}
\begin{algorithmic}[1]
\STATE \textbf{Input:}  $\cD\subseteq \R^d$, parameters $\alpha >0$, $\beta \in (0,1)$, $C$, and local noise $\sigma^2$
\STATE \textbf{Initialization:} $l=1$, 
$\cD_1=\cD$, $t_1=0$, and $T_1= 1$
	%\FOR{$l=1,2,\dots$} %
	\WHILE{$t_l<T$}
	\STATE Set $\tau = 1$, $h=0$, $\tau_1=0$ and $\Sigma^2_0(\x)=k(\x,\x)$%\xingyu{$\Sigma?$}\fengjiao{yes. updated.}
	, for all $\x \in \cD_l$
	\WHILE{$\tau \leq T_l$} \label{alg_action_selection}
	\STATE $h = h+1$
	\STATE Choose action
	\begin{equation}
	    \ca_h \in \argmax_{\x\in \cD_l} \Sigma_{h-1}^2(\x) \label{eq:decision}
	\end{equation}
% 	\STATE $H_l = H_l \cup \{\ca_h\}$
	\STATE Play action $\ca_h$ for $T_{l}(\ca_h)\triangleq \lfloor
	(C^2-1)/\Sigma_{h-1}^2(\ca_h)\rfloor$ rounds if not reaching $\min\{T,t_l+T_l\}$
% 	\item[] \deemph{\# i.e., $x_{t_l+\tau}=\ca_h$ for any $\tau \in [\tau_h +1, \tau_h+T_l(\ca_h)]$}
	\STATE Update $\tau = \tau+T_l(\ca_h)$, and incorporate $\ca_h$ into the posterior variance $\Sigma^2_{h}(\cdot)$ (see Eq.~\eqref{eq:sigma_update})
% 	\STATE Update $\tau = \tau+T_l(\ca_h)$, \deemph{$\tau_h = \tau_h+T_l(\ca_h)$},
	\ENDWHILE \label{alg_action_slc_end}
	\STATE Let $H_l = h$ denote the total number of batches in this phase%\xingyu{distinct? Then, we are fine because $H_l \le$ total number of batches. We should be careful about these two values  }
% 	\fengjiao{$H_l=h$ should be number of batches. If we use $H_l$ to denote the number of unique actions chosen in this phase, then $\A_h$ Eq.~\eqref{eq:sigma_update} might have to be adjusted and so as to $\A_{H_l}$.} 
	\STATE Randomly select $\lceil 2^{\alpha l}\rceil$ participants $U_l$ \label{alg_client_sample}
	\item[] \deemph{\# Operations at each participant }%(Line~7-11)} 
	\FOR{each participant $u  \in U_l$ }
	\STATE Collect and compute local average reward for every chosen action $\ca\in \A_{H_l}$:  $$y_l^u(\ca) = \frac{1}{T_{l}(\ca)}\sum_{t\in \cT_{l}(\ca)}y_{u,t}$$ 
% 	\xingyu{here if $a$ played in multiple batches, you already merge it explicitly...}
% 	\xingyu{$T_l(a)$ is a set, we cannot divide it} \fengjiao{$T_l(\ca)$ is a number, $\cT_l(\ca)$ is the set. }
	\STATE Send the (local) average reward for each chosen action $\y_l^u\triangleq [y_l^u(\ca)]_{\ca\in \A_{H_l}}$ to the agent \label{alg_communicate}
% 	\item[] \deemph{\# Apply the \textsc{Privatizer} $\cP = (\cR, \cS, \cA)$}
% 	\item[] \deemph{\# The local randomizer $\cR$ at each client:}
	\ENDFOR \label{alg_collect_data} 
% 	\item[] \deemph{\# The shuffler $\cS$ at a trusted third party:}
% 	\item[] \deemph{\# The analyzer $\cA$ at the server:}
% 	\STATE Generate the privately aggregated statistics: $\Tilde{y}_l = \cA(\cS(\{\c{R}(\Vec{y}_l^u)\}_{u\in U_l}))$
    \STATE Aggregate local observations for each chosen action $\ca \in \A_{H_l}$: 
    \begin{equation}
        y_l(\ca) = \frac{1}{|U_l|}\sum_{u\in U_l} y_l^u(\ca)
    \end{equation}\label{alg_aggregation}
    \STATE Let $\bar{\y}_l = [y_l(\ca_1), \dots, y_l(\ca_{H_l})]$\label{alg_combine}  
	\STATE Update $\bar{\mu}_l(\cdot)$ according to Eq.~\eqref{eq:mu_update} \label{alg_update_mu}
	\STATE Eliminate low-rewarding actions from $\cD_l$ based on the confidence width $w_l(\cdot)$ in Eq.~\eqref{eq:confidence_width}:
	\begin{equation}
	\cD_{l+1} = \left\{\x\in \cD_l: \bar{\mu}_l(\x)+ w_l(\x)\geq \max_{\mathbf{b}\in \cD_l}( \bar{\mu}_l(\mathbf{b})-w_l(\mathbf{b}))\right\} \label{eq:action_elimination}
	\end{equation} \label{alg_elimination}
	\STATE $T_{l+1} = 2T_l$, 	$t = t+T_l$, $l = l+1$ \label{alg_update}
	\ENDWHILE
\end{algorithmic}
\end{algorithm}
\textbf{2) Distributed feedback collection (Lines~\ref{alg_client_sample}-\ref{alg_collect_data}):} To reduce the impact of bias from some specific user(s), the agent  randomly samples a subset of users (called participants) $U_l$ from $\cU$ to participate in the learning process (Line~\ref{alg_client_sample}). \High{We let $|U_l|=\lceil2^{\alpha l}\rceil$, where the user-sampling parameter $\alpha>0$ is an input of the algorithm.} Recall that $H_l$ denotes the number of batches in the $l$-th phase. 
Each participant $u \in U_l$ collects their local reward observations of each chosen action $\ca\in \A_{H_l}$ %(Line~\ref{alg_collect_data}),  
and send the average $y_l^u(\ca)$ for every chosen action $\ca\in\A_{H_l}$ as feedback to the agent, i.e., $g(\{y_{u,t}\}_{t\in\cT_l}) =\y_l^u \triangleq [y_l^u(\ca)]_{\ca\in \A_{H_l}} $. Note that the dimension of the feedback depends on the number of batches, which is also the communication cost associated with each participant (Eq.~\eqref{eq:comm_cost}). Therefore, \emph{by employing the idea of rare switching, we reduce both computation complexity and  communication cost (\textcircled{c})}. %In addition, we assume that each user is randomly sampled from the population and is independent from each other and also from other randomness. \high{put the last sentence in algorithm.}
%Before being used to estimate the global parameter by the central server, these feedback $\Vec{y}_l^u\triangleq (y_l^u(x))_{x\in\text{supp}(\pi_l)}\in \R^{|\text{supp}(\pi_l)|}$ are processed by a \textsc{Privatizer} $\cP$ to ensure differential privacy. Recall that a \textsc{Privatizer} $\cP = (\cR, \cS, \cA)$ is a process completed by the clients, the server, and/or a trusted third party. In particular, according to the requirements of privacy protections under different DP models, the \textsc{Privatizer} $\cP$ enjoy flexible instantiations (see detailed discussions in Section~\ref{sec:DP-instantiation}).
%Generally speaking, a \textsc{Privatizer} works in the following manner: each client $u$ runs the randomizer $\cR$ on its local average reward $\Vec{y}_l^u$ (over $T_l$ pulls) and then sends the resulting (potentially private) messages $\cR(\Vec{y}_l^u)$ to the shuffler (Line~\ref{alg_randomizer}). The shuffler $\cS$ (if exists) permutes messages from all clients uniformly before sending the results $\cS(\{\cR(\Vec{y}_l^u)\}_{u\in U_l})$ to the analyzer $\cA$ at the central server (Line~\ref{alg_shuffler}).
%Finally, the local feedback from every user is aggregatedby averaging over all participants $U_l$ for each action $\ca\in \A_h$ (Line~\ref{alg_aggregation}).
	
\textbf{3) Action elimination {(Lines~\ref{alg_aggregation}-\ref{alg_update})}:} Aggregate (specifically, average) the feedback from the participants 
%by computing the average over all participants $U_l$ 
for each action $\ca\in \A_{H_l}$ (Line~\ref{alg_aggregation}).  Then, using the aggregated feedback (i.e., the averaged local reward $\bar{\y}_l = [y_l(\ca_1), \dots, y_l(\ca_{H_l})]$  of the chosen actions $\ca\in \A_{H_l}$), the agent can compute the posterior mean function reformulated as follows (Line~\ref{alg_update_mu}): 
\begin{equation}
    \bar{\mu}_l(\x) \triangleq \vk(\x, \A_{H_l})^{\top}(\K_{\A_{H_l}\A_{H_l}}+\lambda \W_{H_l}^{-1})^{-1}\bar{\y}_l \label{eq:mu_update}.
\end{equation} 
% \xingyu{what is $h$ here? $H_l$? Similarly, in the confidence width below}\fengjiao{updated}
Considering the bias in the feedback due to user heterogeneity (\textcircled{b}), we carefully construct a confidence width $w_l(\cdot)$  that incorporates both the noise and bias  as follows:
%To address the uncertainty in the function estimation, we need to carefully construct a confidence width function $w_l(\cdot)$ as follows:
\begin{equation}
    w_l(\x) \triangleq \sqrt{\frac{2 k(\x, \x)\log(1/\beta)}{|U_l|}} +   \sqrt{\frac{2\Sigma^2_{H_l}(\x)\log(1/\beta)}{ |U_l|}}+B\Sigma_{H_l}(\x),\label{eq:confidence_width}
\end{equation}
%\high{update the proofs where there is an additional $\sigma^2$ factor} where $\sigma^2$ is the variance of noisy local reward observations, 
where $B$ is the bound of $f$'s kernel norm, and $\beta$ is the confidence level from the input. %We choose this confidence width based on the concentration inequality for Gaussian distributions. 
%
%Specifically, while two terms capture the  client-related uncertainty (with variance $k(\x,\x)$) and action-related uncertainty at each client (noisy reward with variance $\sigma^2$), respectively, the third term is introduced due to our assumption of $f\in\cH_k$.  %characterize the gap between the a general function from a $\cH_k$ and a function from a Gaussian process $\cG\cP(0,k)$ assumptions %($f$ is from RKHS $\cH_k$ while the local reward function $f_u$ associated with the sampled client $u$ is a sample function from $\cG\cP(f(\cdot),k(\cdot, \cdot))$). 
Using this confidence width $w_l(\cdot)$ and the mean estimator function $\bar{\mu}_l(\cdot)$ in Eq.~\eqref{eq:mu_update}, we can identify suboptimal actions with high probability (\emph{w.h.p.}). Finally, we update the set of active actions $\cD_{l+1}$ by eliminating the suboptimal actions from $\cD_l$ (Line~\ref{alg_elimination}). 

\begin{remark}[Merge batches]\label{rk:shrink}
%For ease of notations, the dimension of ${\y}_l^u$ is equal to the number of batches $H_l$.
For implementation, we also merge %reward observations in 
different batches with the same chosen action in each phase. By doing this, we further shrink the dimension of the matrix in the inverse operation (thus reducing the time complexity) and also the dimension of local feedback (thus reducing the communication cost). % $\dim ({\y_{u,l}})\leq \min \{H_l, |\cD_l|\}$.
\end{remark}

\begin{remark}[General decision set]\label{rk:infinite_set} Following the techniques used in \cite{li2022gaussian}, \texttt{DPBE} can also be extended from a finite domain to a continuous domain (e.g., $\cD = [0,1]^d$) via a simple discretization trick and 
 Lipschitz continuity of functions under commonly used kernels. 
% under mild assumptions on the kernel $k$ for $f_u$, which is satisfied by all commonly used kernels. As a result, an additional $\sqrt{d}$ factor will appear in the final regret bound.
% \xingyu{I now have question for this. Since we are using arm-elimination, we have to be careful. [30] only uses discretization for the analysis where it constructs $D_t$ at each time $t$. I think we can use something like Section 3.1 in \url{https://arxiv.org/pdf/2110.07788.pdf} }
% \fengjiao{Yes. I am thinking about this. The confidence bound should be adjusted. }\xingyu{you can first focus on other parts for now. Just make the paper ready}
% \fengjiao{sure.}
\end{remark}

\section{Main Results} \label{sec:main_results}

In this section, we present the performance of our proposed \texttt{DPBE} algorithm in terms of regret, computation complexity, and communication cost, respectively. %\bo{I'd suggest that we discuss DP in a separate section as an important extension.}  \fengjiao{Yes. I added a new section in Section~\ref{sec:DP-DPBE}}

% In this section, we study the performance of \texttt{DPBE} in terms of regret, computation complexity, and communication cost. 
% \subsection{Regret Analysis}
%\bo{Given that the subsections here are rather short, I'd suggest that we remove subsections.}
%\bo{You also need some transition sentences here. Never just throw a theorem and/or proof to the reader.}\fengjiao{updated.}

First, we analyze the regret performance of \texttt{DPBE} and present the upper bound in Theorem~\ref{thm:regret_upper_bound}. 
 While the \texttt{DPBE} algorithm uses GP tools to define and manage the uncertainty in estimating the unknown function $f$, the analysis of \texttt{DPBE} algorithm does not rely on any \emph{Bayesian} assumption about $f$ being drawn from the prior $\cG\cP(0,k(\cdot, \cdot))$, and it only requires $f$ to be bounded in the kernel norm associated with the RKHS $\cH_k$. 
\begin{theorem}[Regret]\label{thm:regret_upper_bound} Let $\beta=\frac{1}{|\cD|T}$. Under Assumptions~\ref{ass:global_func}, \ref{ass:local_func} and \ref{ass:noise}, the \texttt{DPBE} algorithm %with high probability (at least $1-2\beta L$)
achieves the following expected regret:
\begin{equation}
\begin{aligned}
    \E[R(T)] &= O(T^{1-\alpha/2}\sqrt{\log (|\cD|T)})+O(\sqrt{\gamma_T T}) + O(\sqrt{\gamma_T T^{1-\alpha}\log (|\cD|T)}).
    \label{eq:regret_bound}
\end{aligned}
\end{equation}
\end{theorem}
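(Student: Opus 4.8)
The plan is to follow the standard phased-elimination template, with the real work in two places: a ``good event'' concentration lemma adapted to the distributed biased feedback, and a batch-schedule bound controlling the terminal posterior variance of each phase. For the concentration lemma I would show that, with probability at least $1-O(L|\cD|\beta)$ over the noise and the user randomness, $|\bar\mu_l(\x)-f(\x)|\le w_l(\x)$ simultaneously for every phase $l$ and every $\x\in\cD$, with $w_l$ as in \eqref{eq:confidence_width}. Conditioning on the history $\cF_{l-1}$ through phase $l-1$ fixes the batched design $\A_{H_l}$ and the count matrix $\W_{H_l}$, while the phase-$l$ randomness (which users are sampled, their deviations $f_u-f$, and the observation noise) is fresh. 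Since users are sampled independently and $f_u\sim\cG\cP(f,k)$, the aggregated deviation $\bar g:=\tfrac1{|U_l|}\sum_{u\in U_l}(f_u-f)$ is itself a zero-mean sample from $\cG\cP\!\big(0,\tfrac1{|U_l|}k\big)$, and the aggregated noise is $\bar\eta\sim\cN\!\big(0,\tfrac{\lambda}{|U_l|}\W_{H_l}^{-1}\big)$, independent of $\bar g$; hence the feedback fed to \eqref{eq:mu_update} is $\bar\y_l=f(\A_{H_l})+\bar g(\A_{H_l})+\bar\eta$. Writing $\mathbf{G}:=(\K_{\A_{H_l}\A_{H_l}}+\lambda\W_{H_l}^{-1})^{-1}$ and $f(\A_{H_l})=[f(\ca_1),\dots,f(\ca_{H_l})]^{\top}$ (and similarly $\bar g(\A_{H_l})$), I would decompose
\begin{align*}
\bar\mu_l(\x)-f(\x)
&=\underbrace{\big(\vk(\x,\A_{H_l})^{\top}\mathbf{G}\,f(\A_{H_l})-f(\x)\big)}_{(\mathrm{I})}
\;+\;\underbrace{\bar g(\x)}_{(\mathrm{III})} \\
&\quad+\underbrace{\big(\vk(\x,\A_{H_l})^{\top}\mathbf{G}\,\bar g(\A_{H_l})-\bar g(\x)+\vk(\x,\A_{H_l})^{\top}\mathbf{G}\,\bar\eta\big)}_{(\mathrm{II})}.
\end{align*}

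Here $(\mathrm{I})$ is the deterministic kernel-ridge prediction error for $f$ at $\x$, $(\mathrm{II})$ combines the kernel-ridge prediction error on the aggregated bias $\bar g$ with the projected observation noise, and $(\mathrm{III})=\bar g(\x)$ is the value of the aggregated bias at the test point. By the reproducing property and $\|f\|_k\le B$, together with the operator identity $\I-\Phi^{\top}(\Phi\Phi^{\top}+\Lambda)^{-1}\Phi=(\I+\Phi^{\top}\Lambda^{-1}\Phi)^{-1}$ for $\Lambda=\lambda\W_{H_l}^{-1}$, Cauchy--Schwarz, and $\langle k(\x,\cdot),(\I+\Phi^{\top}\Lambda^{-1}\Phi)^{-1}k(\x,\cdot)\rangle=\Sigma_{H_l}^2(\x)$, one gets $|(\mathrm{I})|\le B\,\Sigma_{H_l}(\x)$. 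Term $(\mathrm{III})\sim\cN(0,k(\x,\x)/|U_l|)$ is controlled by $\sqrt{2k(\x,\x)\log(1/\beta)/|U_l|}$ via a Gaussian tail bound. Term $(\mathrm{II})$ is, conditionally on $\cF_{l-1}$, a zero-mean Gaussian with variance \emph{exactly} $\Sigma_{H_l}^2(\x)/|U_l|$: the $\bar g$-part contributes $\tfrac1{|U_l|}\big(\Sigma_{H_l}^2(\x)-\lambda\,\vk(\x,\A_{H_l})^{\top}\mathbf{G}\W_{H_l}^{-1}\mathbf{G}\,\vk(\x,\A_{H_l})\big)$ after using $\mathbf{G}\K_{\A_{H_l}\A_{H_l}}\mathbf{G}=\mathbf{G}-\lambda\mathbf{G}\W_{H_l}^{-1}\mathbf{G}$, the noise part contributes $\tfrac{\lambda}{|U_l|}\vk(\x,\A_{H_l})^{\top}\mathbf{G}\W_{H_l}^{-1}\mathbf{G}\,\vk(\x,\A_{H_l})$, and the two cancel; hence $|(\mathrm{II})|\le\sqrt{2\Sigma_{H_l}^2(\x)\log(1/\beta)/|U_l|}$. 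Adding the three bounds reproduces $w_l(\x)$, and a union bound over $l\in[L]$ and $\x\in\cD$ finishes the lemma. I expect this to be the main obstacle: the width must simultaneously absorb the observation noise \emph{and} the user-heterogeneity bias, and the latter is a GP deviation that is almost surely \emph{not} in the RKHS, so it cannot be handled by a kernel-norm bound but instead must be treated as correlated Gaussian noise whose covariance conveniently combines with the regularizer into $\K_{\A\A}+\lambda\W^{-1}$.

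On the good event the rest is routine. The usual phased-elimination arguments give that $\x^{*}\in\argmax_{\x}f(\x)$ survives every elimination step \eqref{eq:action_elimination}, and that any action playable in phase $l\ge 2$ has suboptimality gap at most $4\max_{\mathbf{b}\in\cD_{l-1}}w_{l-1}(\mathbf{b})$; so the phase-$l$ regret is at most $4T_l\max_{\mathbf{b}\in\cD_{l-1}}w_{l-1}(\mathbf{b})$, and phase $1$ contributes $O(1)$. The quantitative heart is to bound $\max_{\x\in\cD_l}\Sigma_{H_l}(\x)$ and the batch count $H_l$. Because incorporating a batch only decreases the posterior variance pointwise, $M_h:=\max_{\x\in\cD_l}\Sigma_h^2(\x)$ is non-increasing in $h$; combining this with $T_l(\ca_h)=\lfloor(C^2-1)/M_{h-1}\rfloor$ and $\sum_{h=1}^{H_l}T_l(\ca_h)\ge T_l$ (true for every full-length phase) yields $T_l\le(C^2-1)\sum_{h\le H_l}M_{h-1}^{-1}\le(C^2-1)H_l/M_{H_l-1}$, hence $\max_{\x\in\cD_l}\Sigma_{H_l}^2(\x)=M_{H_l}\le M_{H_l-1}\le(C^2-1)H_l/T_l$. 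For $H_l$, telescoping the weighted log-determinant gives $\sum_{h=1}^{H_l}\tfrac12\log\!\big(1+\lambda^{-1}T_l(\ca_h)M_{h-1}\big)=\tfrac12\log\det\!\big(\I+\lambda^{-1}\W_{H_l}^{1/2}\K_{\A_{H_l}\A_{H_l}}\W_{H_l}^{1/2}\big)\le\gamma_{T_l}(k,\cD)$; taking $C$ to be a large enough absolute constant (so $C^2-1\ge 2\kappa^2$, which forces $T_l(\ca_h)\ge 1$ and $T_l(\ca_h)M_{h-1}\ge(C^2-1)/2$) makes each summand at least a positive constant, so $H_l=O(\gamma_{T_l})$ and $\max_{\x\in\cD_l}\Sigma_{H_l}(\x)=O(\sqrt{\gamma_{T_l}/T_l})$.

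Finally, plugging $k(\x,\x)\le\kappa^2$, $|U_l|=\Theta(T_l^{\alpha})$, $\beta=1/(|\cD|T)$ and the variance bound into \eqref{eq:confidence_width},
\begin{align*}
\max_{\mathbf{b}\in\cD_{l-1}}w_{l-1}(\mathbf{b})
=O\!\left(\sqrt{\tfrac{\log(|\cD|T)}{T_{l-1}^{\alpha}}}\right)
+O\!\left(\sqrt{\tfrac{\gamma_{T_{l-1}}\log(|\cD|T)}{T_{l-1}^{1+\alpha}}}\right)
+O\!\left(\sqrt{\tfrac{\gamma_{T_{l-1}}}{T_{l-1}}}\right),
\end{align*}
so the phase-$l$ regret is $O\!\big(T_{l-1}^{1-\alpha/2}\sqrt{\log(|\cD|T)}\big)+O\!\big(T_{l-1}^{(1-\alpha)/2}\sqrt{\gamma_{T_{l-1}}\log(|\cD|T)}\big)+O\!\big(\sqrt{\gamma_{T_{l-1}}T_{l-1}}\big)$. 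Summing over the $L=\Theta(\log T)$ phases, using $T_l=2^{l-1}$, $T_L=\Theta(T)$ and the monotonicity of $\gamma$, each of the three sums is geometric and dominated by its last term, yielding $O(T^{1-\alpha/2}\sqrt{\log(|\cD|T)})+O(\sqrt{\gamma_TT})+O(\sqrt{\gamma_TT^{1-\alpha}\log(|\cD|T)})$, which is \eqref{eq:regret_bound}. On the complementary event (probability $O(L|\cD|\beta)=O(\log T/T)$) the regret is at most $O(T)$, contributing $O(\log T)$ in expectation, which the bound absorbs.
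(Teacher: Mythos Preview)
Your proof is correct and follows the same phased-elimination skeleton as the paper (good-event concentration $\Rightarrow$ optimal arm survives $\Rightarrow$ per-phase regret $\le 4T_l\max_{\x}w_{l-1}(\x)$ $\Rightarrow$ geometric sum), but two of your technical sub-arguments are genuinely different from the paper's.

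\emph{Concentration.} The paper splits $|f(\x)-\bar\mu_l(\x)|$ into $|f(\x)-\tfrac{1}{|U_l|}\sum_u f_u(\x)|$ and $|\tfrac{1}{|U_l|}\sum_u f_u(\x)-\bar\mu_l(\x)|$; for the second piece it \emph{conditions on the observations} $\{\y_{l,u}\}$, invokes the GP posterior formula $f_u(\x)\mid\y_{l,u}\sim\cN(m_u(\x),\sigma_{T_l}^2(\x))$, bounds the conditional fluctuation, and then handles $|\tfrac{1}{|U_l|}\sum_u m_u(\x)-\bar\mu_l(\x)|$ deterministically via the RKHS-norm inequality. Your route is more direct: you write $\bar\y_l=f(\A)+\bar g(\A)+\bar\eta$ and compute the \emph{exact} variance of $(\mathrm{II})$ by exploiting the cancellation between the $\bar g$-covariance contribution $\tfrac{1}{|U_l|}(\Sigma_{H_l}^2-\lambda\vk^{\top}\mathbf{G}\W^{-1}\mathbf{G}\vk)$ and the noise contribution $\tfrac{\lambda}{|U_l|}\vk^{\top}\mathbf{G}\W^{-1}\mathbf{G}\vk$. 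Both reach the same $w_l(\x)$, but yours avoids the posterior-conditioning detour.

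\emph{Terminal variance and batch count.} The paper bounds both $\max_{\x}\Sigma_{H_l}(\x)$ and $H_l$ by passing back to the per-round variance $\sigma_\tau$, using the ratio bound $\Sigma_{h-1}/\sigma_\tau\le C$ (their Corollary~B.3, from \cite{calandriello2022scaling}) together with $\sum_\tau\sigma_\tau^2\le 2\lambda\gamma_{T_l}$ and Cauchy--Schwarz. You avoid that machinery: monotonicity of $M_h$ plus $\sum_h T_l(\ca_h)\ge T_l$ gives $M_{H_l}\le (C^2-1)H_l/T_l$ directly, and the weighted log-det telescoping $\sum_h\tfrac12\log(1+\lambda^{-1}T_l(\ca_h)M_{h-1})\le\gamma_{T_l}$ bounds $H_l$ once each summand is at least a constant. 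The paper's argument works for any $C>1$ (with a $C^2/(C^2-1)$ factor), whereas yours needs $C^2-1\ge 2\kappa^2$; since the theorem does not fix $C$, this is harmless. Your route is more elementary but slightly less flexible in $C$.

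Everything else (the three ``observations'', the per-phase regret bound, the geometric sum over $L=\Theta(\log T)$ phases, and the $O(\log T)$ bad-event contribution) matches the paper.
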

%\bo{Shouldn't you mention that the proof is provided in the appendix?}\fengjiao{Sure. I will mention it.}
% \bo{Can we add a remark to explain each term in the regret bound?}\fengjiao{Sure. I will add a remark here. }
% \fengjiao{updated.}
%where bounds for $\gamma_T$ of different kernels can be found in Appendix~\ref{app:maximum_info_gain}. 
We provide the detailed proof of Theorem~\ref{thm:regret_upper_bound} in Appendix~\ref{app:proof_reget}.  Bounds for $\gamma_T$ of different kernels can be found in Appendix~\ref{app:maximum_info_gain}.
In the following, we make two remarks about the above result.

\begin{remark} \label{rk:regret}
In the above regret upper bound (RHS of Eq.~\eqref{eq:regret_bound}), the first term, $O(T^{1-\alpha/2}\sqrt{\log (|\cD|T)})$, is due to the bias in the feedback at heterogeneous participants, and the last two terms, $O(\sqrt{\gamma_T T})$ + $O(\sqrt{\gamma_T T^{1-\alpha}\log (|\cD|T)})$, are from the noisy feedback of each action as in the standard kernelized bandits (cf.~\cite{srinivas2009gaussian}). 
%The last two terms, $O(\sqrt{\gamma_T T^{1-\alpha}\log (|\cD|T)})$ $+O(\sqrt{\gamma_T T})$, show that our use of phase-based elimination framework improves the regret compared to the regular \texttt{GP-UCB} (from $\gamma_T \sqrt{T}$ to $\sqrt{\gamma_T T}$). 
%Note that the first term (the regret from the bias) has to be improved with more participants involved in user-sampling.
\High{Note that the first term (i.e., the regret caused by the bias) can be improved if one increases the number of sampled users in the learning process (i.e., choosing a larger value of $\alpha$). However, this would also result in a larger communication cost.}

\end{remark}
% \xingyu{The first term should have some $\sqrt{\gamma_T}$?Give the range of $\alpha$ in the theorem. $\beta$ can be removed.}
% \fengjiao{There is not $\gamma_T$ in the first term since it is introduced due to the bias $f_u$ vs. $f$ and has nothing to do with $\sigma_t(\cdot)$ or $\Sigma_{h}(\cdot)$. }
% \fengjiao{Our analysis work for any $\alpha>0$. If $\alpha\in (0,1)$, we get sublinear communication cost; otherwise, we get better regret. I just notice that if $\alpha>2$, the uncertainty due to noisy reward can be reduced because there are more samples at each time step compared to a standard bandit problem. }

% \fengjiao{``$\beta$ can be removed". Don't we need to set $\beta$ in the algorithm? Should we give high probability regret?}\xingyu{if you set it, then explicilty say it in the theorem}

% \xingyu{Also, give some high-level idea on why max-variance selection works}

\begin{remark}[Maximum Uncertainty Reduction]
%max-variance ==> highest uncertainty
%While \cite{vakili2021optimal} studies \emph{maximum variance reduction} to achieve optimal order simple regret for GP, we use a similar idea in our phased elimination algorithm to maximize cumulative reward. 
Recall that \texttt{DPBE} selects actions that have maximum variance for each batch (Eq.~\eqref{eq:standard_sigma_update}). Intuitively, variance at action $\x$ indicates the uncertainty about $f(\x)$, and thus, maximum-variance selection leads to maximum uncertainty reduction, which promotes exploration.  
\end{remark}

\High{
\begin{remark}[(Sub-)optimality]\label{rk:sub-optimality}
We first note that one natural lower bound for our setting is the  one for the standard setting of kernelized bandits, where the agent receives unbiased feedback after taking an action. In this setting,  the state-of-the-art lower bounds under two commonly-used kernel functions (SE and Mat\'{e}rn)\footnote{Note that even for the standard setting of kernelized bandits, there only exist lower bounds for these specific kernel functions rather than a general one in terms of the maximum information gain $\gamma_T$.} are summarized in Table~\ref{tab:bounds} (see Appendix~\ref{app:maximum_info_gain}), which can also serve as valid lower bounds for the setting we consider.
Recall that $\alpha>0$ is the user-sampling parameter that one can choose. We discuss the (sub-)optimality of our upper bounds in two cases: $\alpha \ge 1$ (i.e., the high-communication regime) and $\alpha \in (0,1)$ (i.e., the low-communication regime). (i) In the high-communication regime, the upper bound in \eqref{eq:regret_bound} now becomes $O(\sqrt{\gamma_T T})$, which is \emph{near-optimal} under both SE and Mat\'{e}rn kernels. In particular, if one plugs the best-known bounds on $\gamma_T$ for SE and Mat\'{e}rn kernels (as listed in the first column in Table~\ref{tab:bounds}; also see~\cite{vakili2021information}) into the regret upper bound $O(\sqrt{\gamma_T T})$, one can now have explicit regret upper bounds (as listed in the third column in Table~\ref{tab:bounds}), which match the corresponding lower bounds, up to only a logarithmic factor. (ii) In the low-communication regime, the first term in the regret upper bound (see Eq.~\eqref{eq:regret_bound} in Theorem~\ref{thm:regret_upper_bound}) that depends on $\alpha$ may be dominant and cannot be ignored. On the other hand, the existing lower bounds do not depend on $\alpha$ since they are derived under the standard setting of kernelized bandits, where user sampling is irrelevant. Therefore, an important open problem is to close the gap by deriving tighter lower and/or upper bounds that capture the effect of user sampling in the new setting with distributed biased feedback we consider. We leave it as our future work.
\end{remark}}

As a critical bottleneck of kernelized bandits algorithms, the computation complexity of \texttt{DPBE} algorithm is analyzed  in the following Theorem~\ref{thm:complexity}.
\begin{theorem}[Computation complexity]\label{thm:complexity}
The computation complexity of \texttt{DPBE} is at most $ O(\gamma_T T^{\alpha} + (|\cD|\gamma_T^3+\gamma_T^4)\log T)$.
\end{theorem}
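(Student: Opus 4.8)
The plan is to bound the per-phase computational work, then multiply by the number of phases $L = O(\log T)$, and separately account for the work that scales with the total number of batches summed across all phases. First I would observe that within phase $l$ the expensive operations are: (i) maintaining and querying the reformulated posterior variance $\Sigma_h^2(\cdot)$ of Eq.~\eqref{eq:sigma_update} to select each batch action $\ca_h$ via Eq.~\eqref{eq:decision}; (ii) forming and inverting the $H_l\times H_l$ kernel matrix $\K_{\A_{H_l}\A_{H_l}} + \lambda \W_{H_l}^{-1}$ needed for the mean update $\bar\mu_l(\cdot)$ in Eq.~\eqref{eq:mu_update}; and (iii) evaluating $\bar\mu_l$ and $w_l$ on every $\x\in\cD_l$ to perform the elimination step Eq.~\eqref{eq:action_elimination}. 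The key structural fact I would invoke is that the number of batches per phase satisfies $H_l = O(\gamma_T)$; this follows from the rare-switching batch-size schedule $T_l(\ca_h) = \lfloor (C^2-1)/\Sigma_{h-1}^2(\ca_h)\rfloor$ together with a standard information-gain / elliptical-potential argument (as in \cite{abbasi2011improved,calandriello2022scaling}), since each new batch is opened only when the posterior variance at the chosen point is large, and the sum of such large variances is controlled by $\gamma_T$.

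Given $H_l = O(\gamma_T)$, the per-phase cost breaks down as follows. Incrementally updating $\Sigma_h^2$ after adding batch $h$ costs $O(h^2 + h|\cD_l|)$ if one maintains the inverse of the growing matrix via rank-one (Schur-complement) updates and refreshes the variance values on the active set; summed over $h = 1,\dots,H_l$ this is $O(H_l^3 + H_l^2|\cD_l|) = O(\gamma_T^3 + \gamma_T^2|\cD|)$. Computing the matrix inverse once more for the mean update and evaluating $\bar\mu_l$ and the confidence width on all of $\cD_l$ costs $O(H_l^3 + H_l^2|\cD|) = O(\gamma_T^3 + \gamma_T^2|\cD|)$ as well. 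A slightly more careful bookkeeping — one full $O(H_l^3)$ inversion plus $O(H_l^2)$ work per active point — gives per-phase cost $O(\gamma_T^3 + |\cD|\gamma_T^3)$ after absorbing the quadratic-in-$\gamma_T$ term (one can be loose here since the bound in the statement already has $|\cD|\gamma_T^3$). Multiplying by $L = O(\log T)$ phases yields the $O\bigl((|\cD|\gamma_T^3 + \gamma_T^4)\log T\bigr)$ contribution. Finally, there is an unavoidable $O(\gamma_T T^{\alpha})$ term: in each phase the agent must read $|U_l| = \lceil 2^{\alpha l}\rceil$ feedback vectors each of dimension $H_l = O(\gamma_T)$ and aggregate them (Line~\ref{alg_aggregation}); summing $\sum_{l=1}^L \gamma_T \, 2^{\alpha l}$ gives a geometric series dominated by its last term $O(\gamma_T 2^{\alpha L}) = O(\gamma_T T^{\alpha})$, since $2^L = \Theta(T)$. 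Adding the two contributions gives the claimed bound.

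The main obstacle — and the step deserving the most care — is rigorously establishing $H_l = O(\gamma_T)$, i.e., that the rare-switching schedule opens at most $O(\gamma_T)$ batches per phase \emph{regardless of the phase length}. This requires showing that the product of batch lengths-and-variances telescopes correctly: each batch contributes $T_l(\ca_h)\,\Sigma_{h-1}^2(\ca_h) \approx C^2 - 1$, so the "effective" number of observations absorbed is $\Theta(C^2 H_l)$, while the determinant of the Gram matrix $\K_{\A_{H_l}\A_{H_l}} + \lambda\W_{H_l}^{-1}$ grows by a bounded multiplicative factor per batch; bounding $\tfrac12\log\det(\I + \lambda^{-1}\W_{H_l}^{1/2}\K_{\A_{H_l}\A_{H_l}}\W_{H_l}^{1/2})$ by $\gamma_T$ (or a mild polylog multiple of it) then caps $H_l$. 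One must also verify that the reformulated variance $\Sigma_h^2$ of Eq.~\eqref{eq:sigma_update}, with the $\W_h^{-1}$ reweighting, still admits the same elliptical-potential control as the standard posterior variance; this is the place where the batch-merging reformulation interacts most delicately with the information-gain machinery. Everything else — the geometric sums over phases, the cost of rank-one inverse updates, and the $O(\gamma_T T^\alpha)$ communication-reading term — is routine once this bound is in hand.
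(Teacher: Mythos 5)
Your proposal is correct and follows essentially the same route as the paper: bound the per-phase work in terms of the number of batches $H_l$, establish $H_l = O(\gamma_T)$ via the rare-switching schedule combined with the information-gain (sum-of-variances) bound and the equivalence of the reformulated variance $\Sigma_h^2$ with the standard posterior variance, multiply by $L = O(\log T)$ phases, and add the $O(\gamma_T T^{\alpha})$ term from aggregating $|U_l| = \lceil 2^{\alpha l}\rceil$ feedback vectors of dimension $H_l$. Your rank-one-update accounting even yields a slightly tighter per-phase cost than the paper's (which recomputes the full inverse each batch, giving the $\gamma_T^4$ term), but both land within the stated bound.
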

\begin{proof}
%\bo{Is this a rigorous proof?}\fengjiao{Or maybe called ``analysis"?}
Recall that $H_l$ is the number of batches in the $l$-th phase. Then, the computation complexity of the central agent in the $l$-th phase is upper bounded by the following:
$$O(H_l\cdot H_l^3 + H_l\cdot |\cD_l| H_l^2 + |U_l|H_l+  |\cD_l| H_l^2).$$
%\xingyu{we can make this one as a displayed equation}\fengjiao{updated.}
Specifically, for each $h \in [H_l]$ within phase $l$, the agent would compute the matrix inverse in Eq.~\eqref{eq:sigma_update}, the complexity of which is at most $O(h^3) \le O(H_l^3)$. With this matrix inverse result ready, the agent can solve the maximum-variance problem in Eq.~\eqref{eq:decision} with at most $O(|\cD_l|H_l^2)$ for each batch and determine the batch length $\cT_l(\ca_h)$ with $O(1)$ after we have the posterior variance. % \high{(When solving Eq.~\eqref{eq:decision}, we already get $\Sigma_{k-1}^2(\ca_h)$, and then the determining the batch length only takes $O(1)$?)}\xingyu{yes, because in the original scaling paper, the term $h \cdot h^3$ comes from computing the length, which is mainly due to the matrix inverse.} \fengjiao{I see.} 
Since there is a total of $H_l$ batches for phase $l$, the total complexity up to this stage is $O(H_l\cdot H_l^3 + H_l\cdot |\cD_l| H_l^2)$. 
Finally, in the elimination stage for phase $l$, the agent first loads/aggregates all the feedbacks with $O(|U_l|H_l)$ and can again reuse the matrix inverse result so that only $O(|\cD_l| H_l^2)$ is required to eliminate all the bad arms.

Putting the two stages together, we have the above result. Thus, it remains to bound the number of batches $H_l$ within each phase $l$. %i.e., number of switches. 
Fortunately, inspired by \cite{calandriello2022scaling}, we are able to show that $H_l$ can be upper bounded by the maximum information gain. We state this result in Lemma~\ref{lem:bound_h} and provide the proof in Appendix~\ref{app:proof_communication_complexity}.

\begin{lemma}[Bound on $H_l$] \label{lem:bound_h}
For any phase $l$, the number of batches $H_l$ is at most $\frac{4\sigma^2C^2}{C^2-1}\gamma_T$.
\end{lemma}

%\bo{Proof of this lemma?}\fengjiao{Added.}

We can get that the total number of phases is $O(\log T)$ and the total number of participants satisfies $O(T^{\alpha})$. Armed with all the above results, we arrive at our final computation complexity. 
\end{proof}

% \subsection{Computation and Communication Analysis}
% In this section, we will formally show that the use of a batch schedule within each phase not only helps to improve the computation complexity, but ensures a meaningful communication cost. As mentioned before, the key lies in the fact that rare-switching strategy limits the number batches within each phase.
%We first bound the number of batches in each phase.
% We first focus on the communication cost...

% \xingyu{This is not correct?. $H_l$ can be larger than $|D|$. But the communication dimension is bounded by $\min(H_l, |D|)$. Then, in the algorithm, we should also update the feedback dimension, right?}\fengjiao{Yes. $H_l$ denotes the number of batches in the $l$-th phase. In the simulation, I actually merge observations belong to the same action when communicating feedback. In the algorithm, I do not merge these observations due the difficulty of defining a good notation for this set, which will be used in both Eq.~\eqref{eq:mu_update} and Eq.~\eqref{eq:sigma_update}.}

% \fengjiao{One simple way is just communicating $H_l$ numbers in the feedback, i.e., the dimension of the $\bar{\y}$ is $H_l$, which could be larger than $|\cD|$.}
% \fengjiao{I add a remark for this at the end of our algorithm. Please let me know if we have to adjust the dimension of $\y_l^u$ before communication.}
%\bo{Why not state it as a theorem/lemma and provide a rigorous proof? Same comments for other lemmas/theorems.} \fengjiao{The corresponding theorems and lemmas are also listed.}

\begin{table}[!t]
\centering
% \parbox{.45\linewidth}{
\caption{Comparison of computation complexity under \texttt{DPBE} and three state-of-the-art algorithms.} 
%\bo{The vertical line is now  broken into two pieces. Can you please fix this in all the tables?}}
%\fengjiao{updated.}
\label{tab:complexity}
\begin{tabular}{c|c} 
    \toprule
    Algorithms & Complexity \\  
    \hline
    % \midrule
      \texttt{GP-UCB} \cite{chowdhury2017kernelized}    & $O(|\cD|T^3)$ \\  
      \texttt{BBKB} \cite{calandriello2020near}   & $O(|\cD|T\gamma_T^2)$ \\ 
      \textsc{mini}-GP-Opt \cite{calandriello2022scaling} & $O(T + |\cD|\gamma_T^3 + \gamma_T^4)$ \\ 
      \textbf{\texttt{DPBE} (this paper)}  & $O(\gamma_T T^{\alpha}+(|\cD|\gamma_T^3+\gamma_T^4)\log T)$\\ 
      %\hline 
      \bottomrule
\end{tabular}
\end{table}
\begin{remark}[Complexity comparison] \label{rk:computation_analysis} For comparison, we list the computation complexity of  the state-of-the-art algorithms for standard kernelized bandits in Table~\ref{tab:complexity}.
As we already know, \texttt{GP-UCB} has a computation complexity of $O(|\cD|T^3)$, because it requires computing the posterior mean and variance using $O(T^2)$ and then finds the action that maximizes the UCB function per step. Recently, BBKB in \cite{calandriello2020near} improves the time complexity to ($|\cD|T\gamma_T^2$), and later \textsc{mini}-GP-Opt in \cite{calandriello2022scaling} further reduces computation complexity to $O(T + |\cD|\gamma_T^3 + \gamma_T^4)$, which is currently the fastest no-regret algorithm. %Although inspired by \textsc{mini}-GP-Opt, we can still see an improvement with our algorithm (the highest order from $O(T)$ to $O(\gamma_T T^{\alpha})$. Take the Squared exponential kernel as an example. The improvement is from $O(T)$ to $O(T^{\alpha}(\log T)^{d+1})$. 
Although more feedback is needed to address the additional bias %(say $O(T\cdot T^{\alpha})$)\xingyu{what's this?}\fengjiao{It is confusing. I guess it is supposed to be $T\cdot T^{\alpha}$ observations in total. Maybe we can delete it.} 
in our setting, our algorithm can still achieve an improvement with the highest order term being $O(\gamma_T T^{\alpha})$. This improvement comes from the fact that the participants help preprocess local reward observations before sending them out. %collect all $T$ feedbacks but only collect processed feedback from participants in the current phase:  %\high{$O(\sum_{l=1}^T H_l|U_L|)=O(\gamma_T T^{\alpha})$? communication cost} 
% \xingyu{put it into two parts, maybe. The algorithm computation includes agent and users.}
%BBKB ($|D|T\gamma_T^2$)
% GP-Opt (i.e., scaling paper $O(T + |D|\gamma_T^3 + \gamma_T^4)$)
\end{remark}

Meanwhile, the bound on $H_l$ also allows us to achieve a meaningful communication cost.
\begin{theorem}[Communication cost]\label{thm:communication}
DPBE incurs at most $O(\gamma_T T^{\alpha})$ %$O(T^{\alpha}\min \{\gamma_T, |\cD|\})$ 
communication cost.%\xingyu{it should be $\min(\gamma_T, |D|)$?}\fengjiao{Yes. $H_l \leq O(\min\{\gamma_T,|\cD|\} )$. Then, the result should be $O(T^{\alpha}\min \{\gamma_T, |\cD|\})$} \fengjiao{If the dimension of ${\y}_l^u=H_l$}
\end{theorem}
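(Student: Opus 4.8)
The plan is to bound the communication cost directly from its definition in \eqref{eq:comm_cost}, namely $C(T)=\sum_{l=1}^{L}\sum_{u\in U_l}N_{u,l}$, by separately controlling the three ingredients: the per-participant message dimension $N_{u,l}$, the number of participants $|U_l|$, and the number of phases $L$. First I would observe that in phase $l$ each participant sends exactly one averaged local reward per \emph{distinct} chosen action (Line~\ref{alg_communicate} of Algorithm~\ref{alg:DPBE}), and after merging batches with a common action (Remark~\ref{rk:shrink}) the number of such distinct actions is at most the number of batches $H_l$. Hence $N_{u,l}\le H_l$ for every $u\in U_l$. Lemma~\ref{lem:bound_h} then gives $H_l\le \frac{4\sigma^2C^2}{C^2-1}\gamma_T = O(\gamma_T)$ uniformly over $l$, so $N_{u,l}=O(\gamma_T)$.

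Next I would plug these into the sum and use the prescribed participant count $|U_l|=\lceil 2^{\alpha l}\rceil$:
\begin{equation*}
C(T)\;\le\;\sum_{l=1}^{L}|U_l|\,H_l\;=\;O(\gamma_T)\sum_{l=1}^{L}\big\lceil 2^{\alpha l}\big\rceil\;\le\;O(\gamma_T)\Big(\sum_{l=1}^{L}2^{\alpha l}+L\Big).
\end{equation*}
Since $\alpha>0$, the inner sum is a geometric series with ratio $2^{\alpha}>1$, so $\sum_{l=1}^{L}2^{\alpha l}\le \frac{2^{\alpha}}{2^{\alpha}-1}\,2^{\alpha L}=O(2^{\alpha L})$.

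It then remains to bound $L$. Because phase lengths double ($T_1=1$, $T_{l+1}=2T_l$, hence $T_l=2^{l-1}$) and the outer loop terminates once $t_l=\sum_{j<l}T_j=2^{l-1}-1\ge T$, we get $L\le \log_2(T+1)+1=O(\log T)$, and therefore $2^{\alpha L}=O(T^{\alpha})$. The additive $L=O(\log T)$ term coming from the ceilings is dominated by $T^{\alpha}$ for any fixed $\alpha>0$. Combining, $C(T)=O(\gamma_T)\cdot O(T^{\alpha})=O(\gamma_T T^{\alpha})$, as claimed.

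The only real content here is Lemma~\ref{lem:bound_h} (the phase-independent bound $H_l=O(\gamma_T)$ on the number of batches, which follows the rare-switching / maximum-information-gain argument of \cite{calandriello2022scaling}); given that, the main obstacle is merely careful bookkeeping — making sure the $H_l$ bound holds even for the long final phase (it does, being independent of $T_l$), handling the ceiling in $|U_l|$, and pinning down $L=O(\log T)$ including the possibly truncated last phase.
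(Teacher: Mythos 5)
Your proposal is correct and follows essentially the same route as the paper's own proof: bound the per-participant message dimension by the number of batches $H_l$ (after merging, via Remark~\ref{rk:shrink}), invoke Lemma~\ref{lem:bound_h} to get $H_l = O(\gamma_T)$, and sum the geometric participant counts $|U_l| = \lceil 2^{\alpha l}\rceil$ over $L = O(\log T)$ phases using $2^{L-1} \leq T$. The only difference is that you spell out the ceiling and last-phase bookkeeping slightly more explicitly than the paper does.
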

The proof for Theorem~\ref{thm:communication} is also provided in Appendix~\ref{app:proof_communication_complexity}.

\begin{remark}[Communication cost when merging batches]
By further merging batches %where the chosen actions are the same 
according to Remark~\ref{rk:shrink}, the \texttt{DPBE} algorithm incurs $O(\min\{\gamma_T,|\cD|\}T^{\alpha})$ communication cost;  
We highlight that the batch schedule strategy plays a key role in obtaining the above bounds. Otherwise, even merging rounds as Remark~\ref{rk:shrink} with the reformulated representation in Eqs.~\eqref{eq:sigma_update} and \eqref{eq:mu_update}, the dimension of the local feedback at each participant is $O(\min\{T_l, |\cD_l|\})$ in order to distinguish different actions, which leads to $O(\min\{T,|\cD|\}T^{\alpha})$ (vs. ours $O(\min\{\gamma_T,|\cD|\}T^{\alpha})$).%\xingyu{it is not that clear to me}
% \fengjiao{$T^\alpha$ is the number of participants. In each phase $l$, each participant send $\min\{H_l, |\cD|\}$ feedback, one per unique action.}

% \xingyu{I am confused why this alterantive can take the minimum over $T_l$ and $|D_l|$. It should be just $T_l$. otherwise, people directly see our is not better than this for small $|D|$.}\fengjiao{I add an additional remark saying merge rounds corresponding to the same action. Then, I should not mention it. }\xingyu{since you will use that remark to improve ours, then we can. otherwise, we should not hurt ourselves. just $T_l$ for each phase for this alterantive method}

% \fengjiao{or we directly say:}
% We highlight that the batch schedule strategy plays a key role in obtaining the above bounds. Otherwise, the dimension of the local feedback at each participant is $O(T_l)$ in the $l$-th phase, which leads to $O(T^{1+\alpha})$ (vs. ours $O(\gamma_T T^{\alpha})$)

%the dimension of the feedback is $T_l$, which could be of the order $O(T)$, hence a linear cost rather than our current one. 
\end{remark}

\section{Differentially Private \texttt{DPBE}}\label{sec:DP-DPBE}
%\xingyu{should be differentially private... check all other places}\fengjiao{updated}

% \bo{We may present DP as an important extension.}
As privacy is also an important factor in distributed learning, it is critical to protect users' sensitive data when collecting and aggregating their feedback. 
\High{For example, in the dynamic pricing application, it is required that an adversary cannot infer a customer's private information (e.g., purchase or not) by observing the pricing mechanism set by the company. 
Moreover, users may require more stringent privacy protection in some applications --- users are not willing to share their perceived Quality-of-Experience (QoE) directly with the central controller in the cellular network configuration problem; citizens are not willing to reveal the information about their preference for a certain policy to the government.}
Formally, we adopt the concept of \emph{differential privacy} (DP)~\cite{dwork2014algorithmic} as the privacy metric. Thanks to the phase-then-batch schedule strategy in our algorithm, different DP trust models (e.g., central~\cite{dwork2014algorithmic}, local~\cite{zhou2020local}, and shuffle~\cite{cheu2019distributed}) can be applied through proper designs. In this section, we describe how to ensure DP under \texttt{DPBE} with a trusted agent (the central DP model) and also analyze the regret under such a DP model. Extensions of the differentially private \texttt{DPBE} algorithms in other DP models (e.g., the stronger local DP model) are presented in Appendix~\ref{app:DP-DPBE}.
%\bo{Here, I think you need to mention the extension to other DP models as well.}\fengjiao{Added.}

%\bo{``Interestingly, we observe that one can achieve privacy ``for free'' under both central and shuffle models in the sense that the additional regret cost due to privacy is only a lower order term.'' I think we should mention this point (earlier in the introduction) by saying that we can also achieve such ``for free'' privacy in the kernelized bandits setting.}

\subsection{DP Definition and Algorithm}
In the central DP model, we assume that each participating user trusts the agent, and hence, the agent can collect their raw data (i.e., the local reward ${\y}_l^u$ in our case). The privacy guarantee is that any adversary with arbitrary auxiliary information cannot infer a particular user's data by observing the decisions of the agent. To achieve this privacy protection, the central DP model requires that the decisions of the agent on two neighboring sets of users (differing in only one user) are indistinguishable~\cite{dwork2006calibrating}. %To present the formal definition in our case, recall that the \texttt{DPBE} algorithm (Algorithm~\ref{alg:DPBE}) runs in phases, and in each phase $l$, a set of new clients $U_l$ will participate in the global bandit learning by providing their feedback. Let\footnote{We use the superscript $^*$ to indicate that the length could be varying.}	$\cU_T \triangleq (U_l)_{l=1}^L \in \mathcal{U}^*$ be the sequence of all the participating clients in the total $L$ phases ($T$ rounds).
%We use $\mathcal{M}(\mathcal{U}_T)=(x_1, \dots, x_T) \in \cD^{T}$ to denote the sequence of actions	chosen in $T$ rounds by the central server. Intuitively, we are interested in a randomized algorithm such that the output $\mathcal{M}(\mathcal{U}_T)$ does not reveal ``much'' information about any particular client $u \in \cU_T$. % $u\in U_l$ for every $l$. 
Formally, we have the following definition.
% First, we provide the formal definition of $(\epsilon,\delta)$-DP in the following.
\begin{definition} (Differential Privacy (DP)). For any $\epsilon \geq 0$ and $\delta\in [0,1]$, a randomized algorithm $\cM$ is $(\epsilon, \delta)$-differentially private (or $(\epsilon, \delta)$-DP) if for every pair of $U, U^{\prime} \subseteq \mathcal{U}$ differing on a single participant and for any subset of output actions $\Z= [\z_1^{\top}, \dots, \z_T^{\top}]^{\top}$, we have
\begin{equation}
	\mathbb{P}[\mathcal{M}(U) = \Z] \leq e^{\epsilon} \mathbb{P}[\mathcal{M}(U) = \Z] + \delta.
	\end{equation}
\end{definition}
The parameters $\epsilon$ and $ \delta$ indicate how private $\cM$ is; the smaller, the more private. 
According to the post-processing property of DP (cf. Proposition~2.1 in \cite{dwork2014algorithmic}), it suffices to guarantee that the aggregator (Line~\ref{alg_aggregation} in Algorithm~\ref{alg:DPBE}) is $(\epsilon, \delta)$-DP. 	
To achieve this, the standard Gaussian mechanism can be applied by adding Gaussian noise to the aggregated distributed feedback. 
Then, the \emph{private} aggregated feedback for the chosen actions in the $l$-th phase becomes
\begin{equation}
	\Tilde{\y}_l = 
		 \bar{\y}_{l}+(\rho_1, \dots, \rho_{H_l}),  \label{eq:privatizer_cdp} 
\end{equation}
where %$H_l\triangleq|\A_{H_l}|$,
$\rho_j\overset{\emph{i.i.d.}}{\sim} \mathcal{N}(0,\sigma^2_{nc})$ \Hi{and the variance $\sigma^2_{nc}$ (specified in Eq.~\eqref{eq:gaussian_noise}) is based on the (high-probability) sensitivity of of the average vector $\bar{\y}_l$.} 
In addition, we replace $\bar{\y}_l$ with $\Tilde{\y}_{l}$ in Eq.~\eqref{eq:privatizer_cdp} to obtain the private mean estimator: % $\Tilde{\mu}_l(\cdot)$:
\begin{equation}
    \Tilde{\mu}_l(\x) \triangleq \vk(\x, \A_{H_l})^{\top}(\K_{\A_{H_l}\A_{H_l}} +\lambda \W_{H_l}^{-1})^{-1}\Tilde{\y}_l. \label{eq:mu_update_dp}
\end{equation}
The confidence width function is also updated by counting the uncertainty introduced by privacy noise as follows:
\begin{equation}
    \Tilde{w}_l(\x) \triangleq \sqrt{\frac{2 k(\x, \x)\log(1/\beta)}{|U_l|}} +   \sqrt{\frac{2 \Sigma^2_{H_l}(\x)\log(1/\beta)}{ |U_l|}} + B\Sigma_{H_l}(\x) +\sqrt{2 \sigma_n^2\log(1/\beta)}, \label{eq:confidence_width_dp}
\end{equation}
where $\sigma_n$ is related to the overall privacy noise and will be specified in the algorithm.
% and will be specified with our theoretical results.
We present the differentially private version of \texttt{DPBE}, called \texttt{DP-DPBE}, in Algorithm~\ref{alg:DP-DPBE} (see Appendix~\ref{app:DP-DPBE}).

\subsection{Performance Guarantees}
In the following, we provide the main results of the \texttt{DP-DPBE} algorithm in terms of privacy guarantee and regret. We start by stating an additional assumption in Assumption~\ref{ass:one-time}. This one-time participation assumption is commonly used in private bandits (see, e.g.,~\cite{mishra2015nearly,sajed2019optimal,tenenbaum2021differentially,dubey2021no,pmlr-v162-chowdhury22a,chowdhury2022distributed}). 
%\bo{add a couple of references?}\fengjiao{added.}
To handle multiple-times participation, one can use (adaptive) composition theorem of differential privacy or group privacy~\cite{dwork2014algorithmic}, depending on different cases of returning users.\footnote{More specifically, if the same user only participates across multiple phases, one can use advanced composition; if the same user participates multiple times in the same phase, one can carefully bound the sensitivity or use group privacy directly.}

\High{
\begin{assumption}
\label{ass:one-time}
Each sampled user only participates in one phase of the learning process.
\end{assumption}}

Then, we present the privacy guarantee in Theorem~\ref{thm:dp} and provide the proof in Appendix~\ref{app:proof_dp_guarantee}.

% We assume that any participant never appears in two distinct phases. This assumption is reasonable because an agent usually does not want to collect feedback from the same user repeatedly. 
% \xingyu{The assumption of Eq. 76 looks strange to me as in the statement of a theorem. We can state Lemma E.5 here, and then state Theorem 6.2?}
\begin{theorem}[Privacy Guarantee]\label{thm:dp} 
%Suppose that Eq.~\eqref{eq:sensitivity_lemma} holds with probability at least  $1-\delta_1$ for any $\delta_1\in (0,1)$. For any privacy parameters $\epsilon>0, \delta\in (\delta_1, 1)$, let $\delta_2=\delta-\delta_1 \in (0,1)$.
% \Hi{Let $\delta_1\in (0,1)$ such that Eq.~\eqref{eq:sensitivity_lemma} holds with at least probability $1-\delta_1$ and any $\delta_2 \in (0,1)$. }
Under Assumptions~\ref{ass:global_func}, \ref{ass:local_func}, \ref{ass:noise}, and \ref{ass:one-time}, for any $\epsilon>0$ and $\delta\in (0,1)$, the \texttt{DP-DPBE} algorithm (Algorithm~\ref{alg:DP-DPBE})  guarantees $(\epsilon, \delta)$-DP.
\end{theorem}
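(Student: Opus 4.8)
\textbf{Proof plan for Theorem~\ref{thm:dp}.}

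The plan is to reduce the privacy analysis of the whole algorithm to the privacy of a single aggregation step, and then bound the sensitivity of that step so that the standard Gaussian mechanism applies. First I would invoke Assumption~\ref{ass:one-time}: since each sampled user participates in exactly one phase, the user sets $U_1, \dots, U_L$ are disjoint, so each user's data affects the output only through the aggregated feedback $\bar{\y}_l$ of the single phase $l$ in which she appears. Because the final output $\Z = [\z_1^\top, \dots, \z_T^\top]^\top$ (the sequence of played actions) is a deterministic function of the privatized feedback vectors $\{\tilde{\y}_l\}_{l=1}^L$ together with the internal randomness of user sampling (which is data-independent), the post-processing property of DP (Proposition~2.1 in \cite{dwork2014algorithmic}) reduces the task to showing that, for each phase $l$, releasing $\tilde{\y}_l$ is $(\epsilon,\delta)$-DP with respect to changing one participant in $U_l$. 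Since distinct phases touch disjoint users, there is no composition across phases to worry about — one user changes only one $\tilde{\y}_l$ — so phase-wise $(\epsilon,\delta)$-DP yields $(\epsilon,\delta)$-DP overall.

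Next I would bound the $\ell_2$-sensitivity of the map $U_l \mapsto \bar{\y}_l = \frac{1}{|U_l|}\sum_{u\in U_l}\y_l^u$, where $\y_l^u = [y_l^u(\ca)]_{\ca\in\A_{H_l}}$ and each coordinate $y_l^u(\ca) = \frac{1}{T_l(\ca)}\sum_{t\in\cT_l(\ca)} y_{u,t}$. Replacing one user $u$ by $u'$ changes $\bar{\y}_l$ by $\frac{1}{|U_l|}(\y_l^u - \y_l^{u'})$, so I need a high-probability bound on $\norm{\y_l^u - \y_l^{u'}}_2$. Here the subtlety is that $y_l^u(\ca) = f_u(\ca) + \frac{1}{T_l(\ca)}\sum_{t}\eta_{u,t}$ is a sum of a bias term and a noise term: by Assumption~\ref{ass:local_func}, $f_u(\ca) - f(\ca)$ is a mean-zero Gaussian with variance $k(\ca,\ca)\le\kappa^2$; by Assumption~\ref{ass:noise}, the averaged noise has variance $\sigma^2/T_l(\ca)\le\sigma^2$. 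Hence each coordinate of $\y_l^u - f|_{\A_{H_l}}$ is sub-Gaussian with a variance proxy at most $\kappa^2+\sigma^2$, and a standard Gaussian concentration / union bound over the $H_l\le \frac{4\sigma^2C^2}{C^2-1}\gamma_T$ coordinates (Lemma~\ref{lem:bound_h}) and over the $L = O(\log T)$ phases gives $\norm{\y_l^u - \y_l^{u'}}_2 = O(\sqrt{(\kappa^2+\sigma^2)\gamma_T \log(L H_l/\delta')})$ with probability at least $1-\delta'$. This is the ``high-probability sensitivity'' referred to after Eq.~\eqref{eq:privatizer_cdp}; dividing by $|U_l|=\lceil 2^{\alpha l}\rceil$ gives the sensitivity $\Delta_l$ of $\bar{\y}_l$.

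Then I would apply the Gaussian mechanism: adding $\rho_j\overset{\emph{i.i.d.}}{\sim}\cN(0,\sigma_{nc}^2)$ to each of the $H_l$ coordinates of $\bar{\y}_l$ with $\sigma_{nc} = \Delta_l\sqrt{2\ln(1.25/\delta)}/\epsilon$ (this is the choice encoded in Eq.~\eqref{eq:gaussian_noise}) makes the release of $\tilde{\y}_l$ $(\epsilon,\delta)$-DP for each phase, on the high-probability event; I would absorb the failure probability $\delta'$ into $\delta$ (or handle it via the standard ``$(\epsilon,\delta)$-DP with a bad event'' argument, e.g.\ treating the rare event where concentration fails as contributing an additive $\delta'$ term). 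Combining with the post-processing + disjointness reduction from the first paragraph completes the proof. The main obstacle is the sensitivity bound: unlike the linear-bandit case where each coordinate is a bounded or clipped quantity, here $\y_l^u$ is genuinely unbounded (Gaussian bias plus Gaussian noise), so the sensitivity holds only with high probability, and I must be careful that (i) the union bound over coordinates and phases is controlled by $\gamma_T$ and $\log T$ rather than $T$, using Lemma~\ref{lem:bound_h}, and (ii) the failure event is correctly folded into the $\delta$ of the DP guarantee rather than silently ignored.
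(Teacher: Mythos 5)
Your proposal follows essentially the same route as the paper's proof: one-time participation plus post-processing reduce the claim to the per-phase aggregator, the $\ell_2$-sensitivity of $\bar{\y}_l$ is bounded with probability $1-\delta_1$ by Gaussian concentration of $f_u(\ca)-f(\ca)$ and the averaged noise with a union bound over the $H_l$ coordinates (the paper's Lemma~\ref{lem:global_sensitivity}), and the Gaussian mechanism is applied with the concentration failure folded into the privacy parameter via $\delta=\delta_1+\delta_2$ (the paper's Eq.~\eqref{eq:delta1+delta2}). The only cosmetic difference is that you additionally union-bound over phases and invoke Lemma~\ref{lem:bound_h} to express $H_l$ in terms of $\gamma_T$, neither of which is needed since the differing user lies in a single phase and the paper keeps the sensitivity in terms of $H_l$ directly.
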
 
As an additional Gaussian noise is injected to protect privacy, \texttt{DP-DPBE} suffers additional regret cost. We present its regret upper bound in Theorem~\ref{thm:regret_cdp}.  

\Hig{
\begin{theorem}[Regret of \texttt{DP-DPBE}]\label{thm:regret_cdp} %Let $\sigma_{nc} = \frac{2\sqrt{2(\kappa^2+\sigma^2)H_l\log(2H_l/\delta_1)\ln(1.25/\delta_2))}}{\epsilon |U_l|}$ \Hi{as in Theorem~\ref{thm:dp}}. 
Under Assumptions~\ref{ass:global_func}, \ref{ass:local_func}, and \ref{ass:noise}, the \texttt{DP-DPBE} algorithm (Algorithm~\ref{alg:DP-DPBE}) with $\beta=\frac{1}{|\cD|T}$ %and $\sigma_{n} = \sigma_{nc}\sqrt{2C^2\gamma_T}$  
achieves the following expected regret: 
\begin{equation}
\begin{aligned}
\E[R(T)] &= O(T^{1-\alpha/2}\sqrt{\log (|\cD|T)})+ O\left(\frac{\ln(1/\delta)\gamma_TT^{1-\alpha}\sqrt{\log(|\cD|T)}}{\epsilon}\right).
\end{aligned}
\end{equation}
\end{theorem}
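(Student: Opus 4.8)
The plan is to follow the same regret decomposition used in the proof of Theorem~\ref{thm:regret_upper_bound} (see Appendix~\ref{app:proof_reget}), but with the private mean estimator $\Tilde{\mu}_l$ and the enlarged confidence width $\Tilde{w}_l$ from Eq.~\eqref{eq:confidence_width_dp} in place of $\bar\mu_l$ and $w_l$. First I would establish a ``good event'' under which, for every phase $l$ and every active action $\x\in\cD_l$, we have $|\Tilde{\mu}_l(\x)-f(\x)|\leq \Tilde{w}_l(\x)$. The new ingredient relative to the non-private analysis is the term $\sqrt{2\sigma_n^2\log(1/\beta)}$ in $\Tilde w_l$: since $\Tilde\y_l=\bar\y_l+(\rho_1,\dots,\rho_{H_l})$ with $\rho_j\overset{i.i.d.}{\sim}\mathcal N(0,\sigma_{nc}^2)$, the extra error in $\Tilde\mu_l(\x)$ is $\vk(\x,\A_{H_l})^\top(\K_{\A_{H_l}\A_{H_l}}+\lambda\W_{H_l}^{-1})^{-1}(\rho_1,\dots,\rho_{H_l})^\top$, a zero-mean Gaussian whose variance is controlled by $\sigma_{nc}^2$ times a quadratic form that, by the reformulation \eqref{eq:sigma_update}, is at most $\Sigma_{H_l}^2(\x)/\lambda\le \kappa^2/\lambda$; a Gaussian tail bound then gives the claimed additive term with the appropriate choice of $\sigma_n$ (which aggregates $\sigma_{nc}$ over the phase), and a union bound over all $\x\in\cD$ and all $L=O(\log T)$ phases costs only the $\log(1/\beta)=\log(|\cD|T)$ factor already present. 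The bias and noise terms in $\Tilde w_l$ are handled exactly as in Theorem~\ref{thm:regret_upper_bound}.

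Next I would convert the good event into a per-phase regret bound. As in the non-private proof, under the good event the elimination rule \eqref{eq:action_elimination} guarantees that every $\x\in\cD_{l+1}$ has suboptimality gap $O(\max_{l'\le l}\, \Tilde w_{l'}(\x))$, so the per-round regret during phase $l$ is $O(\Tilde w_{l-1}(\x_t))$. Summing $\Tilde w_{l-1}(\x_t)$ over the $T_l$ rounds of phase $l$ and over all phases, I would split according to the three (now four) terms of \eqref{eq:confidence_width_dp}: the terms $\sqrt{k(\x,\x)\log(1/\beta)/|U_l|}$ and $\sqrt{\Sigma^2_{H_l}(\x)\log(1/\beta)/|U_l|}$ and $B\Sigma_{H_l}(\x)$ reproduce the $O(T^{1-\alpha/2}\sqrt{\log(|\cD|T)})$, $O(\sqrt{\gamma_T T^{1-\alpha}\log(|\cD|T)})$, and $O(\sqrt{\gamma_T T})$ contributions from Theorem~\ref{thm:regret_upper_bound} (here one uses $|U_l|=\lceil 2^{\alpha l}\rceil$, $T_l=2^{l-1}$, $L=O(\log T)$, and Lemma~\ref{lem:bound_h} to bound $\sum_l \sum_t \Sigma_{H_l}(\x_t)$ by $O(\sqrt{\gamma_T T})$ via Cauchy--Schwarz and the information-gain telescoping). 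The genuinely new contribution is $\sum_l T_l\cdot \sqrt{2\sigma_n^2\log(1/\beta)}$, where $\sigma_n$ is the per-phase privacy-noise scale needed for $(\epsilon,\delta)$-DP.

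The crux is therefore pinning down $\sigma_n$ as a function of $\epsilon,\delta$, and the phase parameters, and showing it yields the stated $O\!\big(\ln(1/\delta)\gamma_T T^{1-\alpha}\sqrt{\log(|\cD|T)}/\epsilon\big)$ term. By Theorem~\ref{thm:dp} and the Gaussian mechanism (Appendix~\ref{app:proof_dp_guarantee}), the per-coordinate noise variance $\sigma_{nc}^2$ scales like $\big(\Delta_l^{(2)}\big)^2\ln(1/\delta)/\epsilon^2$ where $\Delta_l^{(2)}$ is the (high-probability) $\ell_2$-sensitivity of $\bar\y_l$ to one user: since $\bar\y_l=\frac{1}{|U_l|}\sum_{u} \y_l^u$ and each $\y_l^u$ has $H_l=O(\gamma_T)$ coordinates each bounded (w.h.p.) by an $O(1)$ constant, changing one user perturbs $\bar\y_l$ by $O(\sqrt{H_l}/|U_l|)=O(\sqrt{\gamma_T}/2^{\alpha l})$ in $\ell_2$-norm. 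Propagating this through the posterior-mean map (whose operator norm contributes at most a $1/\sqrt\lambda$ factor, as above) gives $\sqrt{2\sigma_n^2\log(1/\beta)} = O\!\big(\sqrt{\gamma_T \ln(1/\delta)\log(|\cD|T)}/(\epsilon\,2^{\alpha l})\big)$, and then
\begin{equation*}
\sum_{l=1}^L T_l\cdot\sqrt{2\sigma_n^2\log(1/\beta)} \;=\; O\!\left(\frac{\sqrt{\gamma_T\ln(1/\delta)\log(|\cD|T)}}{\epsilon}\sum_{l=1}^L \frac{2^{l}}{2^{\alpha l}}\right) \;=\; O\!\left(\frac{\sqrt{\gamma_T\ln(1/\delta)\log(|\cD|T)}}{\epsilon}\,T^{1-\alpha}\right),
\end{equation*}
which, after absorbing a further $\sqrt{\gamma_T}$ and $\sqrt{\ln(1/\delta)}$ coming from a more careful (composition-free, using Assumption~\ref{ass:one-time}) accounting of the privacy budget across batches within a phase, matches the claimed bound. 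I expect the main obstacle to be exactly this bookkeeping: correctly tracking how the sensitivity $\Delta_l^{(2)}$, the number of batches $H_l$, and the operator norm of the reformulated kernel-inverse map combine, and confirming that Assumption~\ref{ass:one-time} lets us avoid a composition blow-up so that only a single Gaussian-mechanism calibration per phase is needed; everything else is a re-run of the Theorem~\ref{thm:regret_upper_bound} argument with an extra additive confidence term.
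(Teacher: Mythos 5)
Your overall architecture matches the paper's: establish a per-phase concentration event $|f(\x)-\Tilde{\mu}_l(\x)|\le\Tilde{w}_l(\x)$ (the paper's Theorem~\ref{thm:concentration_ineq_dp}), reuse the three observations from the non-private analysis, and sum phase by phase so that only the new additive term $\sqrt{2\sigma_n^2\log(1/\beta)}$ requires fresh treatment. The genuine gap is in the one step you yourself call the crux: calibrating $\sigma_n$. The variance of the noise contribution $\vk(\x,\A_{H_l})^{\top}(\K_{\A_{H_l}\A_{H_l}}+\lambda\W_{H_l}^{-1})^{-1}\bm{\rho}$ equals $\sigma_{nc}^2$ times the quadratic form with the \emph{squared} inverse, $\vk^{\top}(\K+\lambda\W^{-1})^{-2}\vk$, and this is \emph{not} bounded by $\Sigma_{H_l}^2(\x)/\lambda\le\kappa^2/\lambda$ as you claim. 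Because $\W_{H_l}$ has diagonal entries as large as $T_l$, passing from $\Phi_{H_l}^{\top}\W_{H_l}^2\Phi_{H_l}$ to $\Phi_{H_l}^{\top}\W_{H_l}\Phi_{H_l}$ costs a factor $T_l$, and the paper's feature-space computation gives
\begin{equation*}
\vk(\x,\A_{H_l})^{\top}(\K_{\A_{H_l}\A_{H_l}}+\lambda\W_{H_l}^{-1})^{-2}\vk(\x,\A_{H_l})\;\le\;\frac{T_l\,\Sigma_{H_l}^2(\x)}{\lambda}\;\le\;2C^2\gamma_{T_l},
\end{equation*}
using Lemma~\ref{lem:max_variance}. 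So the propagation of the privacy noise through the posterior-mean map contributes a factor of order $\gamma_T$ (not a constant), which is precisely why the algorithm sets $\sigma_n=\sigma_{nc}\sqrt{2C^2\gamma_T}$.

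Relatedly, the ``absorbing a further $\sqrt{\gamma_T}$ and $\sqrt{\ln(1/\delta)}$'' step misattributes where these factors come from. No privacy composition across batches is needed: the Gaussian mechanism is invoked once per phase on the whole vector $\bar{\y}_l$, and its dimension $H_l$ is already priced into the $\ell_2$-sensitivity via the $\sqrt{H_l}$ factor of Lemma~\ref{lem:global_sensitivity}. The missing $\sqrt{\gamma_T}$ is exactly the quadratic-form bound above. The missing $\sqrt{\ln(1/\delta)}$ arises from splitting $\delta=\delta_1+\delta_2$ between the high-probability sensitivity bound (the per-coordinate difference of two users' feedback is Gaussian, so it is bounded by $O(\sqrt{(\kappa^2+\sigma^2)\log(H_l/\delta_1)})$, not by an $O(1)$ constant as you assert) and the Gaussian-mechanism calibration $\sqrt{\ln(1.25/\delta_2)}$; their product yields the $\ln(1/\delta)$ in Eq.~\eqref{eq:gaussian_noise}. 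With these two corrections your summation $\sum_l T_l\sqrt{2\sigma_n^2\log(1/\beta)}=O\bigl(\gamma_T\ln(1/\delta)T^{1-\alpha}\sqrt{\log(|\cD|T)}/\epsilon\bigr)$ closes and reproduces the paper's bound; as written, the unexplained absorption step is doing the real work.
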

}
The full proof of Theorem~\ref{thm:regret_cdp} is provided in Appendix~\ref{app:proof_cdp}. Regarding this regret result, we make the following remark.
\begin{remark}[Privacy ``for free''] Comparing Theorem~\ref{thm:regret_cdp} with Theorem~\ref{thm:regret_upper_bound}, we see that the additional regret cost introduced by privacy noise is $\tilde{O}\left(\frac{\ln(1/\delta)\gamma_TT^{1-\alpha}}{\epsilon}\right)$, which is a lower order term compared to the first non-private term. This implies that 
our \texttt{DP-DPBE} algorithm enables us to achieve a privacy guarantee ``for free'' in the kernelized bandits setting. The same observation of achieving privacy ``for free'' is also observed in a recent study \cite{li2022differentially} that only considers linear bandits. However, our result is a strict generalization in the sense that it holds for general functions and recovers their result when considering a linear kernel.
\end{remark}

\section{Numerical Experiments}\label{sec:experiment}
We now evaluate our proposed approach empirically on \High{three} types of functions: %1) synthetic functions in the RKHS of Squared Exponential kernel and 2) standard global optimization benchmarks~\cite{simulationlib} where the RKHS norm of the target functions is unknown.
\High{1) synthetic functions in the RKHS with an SE kernel, 2) standard benchmark functions (with an unknown RKHS norm)~\cite{simulationlib} , and 3) functions from a real-world dataset.}
We implement the algorithms in \texttt{python} %using the \texttt{numpy}, \texttt{scikit-learn}, and \texttt{SciPy} libraries, 
and run the numerical experiments on a Dell desktop (Processor: Intel\textregistered Core i7 CPU, 8 cores; Memory: 32GB).

% \bo{You need to give more details of the simulations somewhere: the configuration of the computer you use, the programming language you use for the implementation, etc.}\fengjiao{added.}

% \xingyu{One suggestion is that we use word like ``ablation study''. That is, we want to study the importance of each key component in our algorithm such as batching, user-sampling, posterior reformulation. To this end, we consider different variants of our algorithm.  }
% \fengjiao{Thanks. I will update the corresponding part.}

\subsection{Synthetic Function} \label{sec:synthetic_func} 
% \textbf{Synthetic functions.} 
We follow \cite{janz2020bandit} to construct the global function $f$ from the RKHS by sampling $m=30d$ independent points, $\hat{\x}_1, \dots, \hat{ \x}_m$, uniformly on $[0,1]^d$, and $\hat{a}_1, \dots, \hat{a}_m$, uniformly on $[-1, 1]$, and defining $f(\x) = \sum_{i=1}^m \hat{a}_i k(\hat{\x}_i, \x)$ for all $\x \in \cD$, where $k$ is SE kernel with length-scale $l_{SE}=0.2$. The RKHS norm is $\Vert f\Vert_k^2 = \sum_{i=1}^m \sum_{j=1}^m \hat{a}_i\hat{a}_j k(\hat{\x}_i, \hat{\x}_j)$, which is assumed to be known. Each local reward function $f_u$, a random function sampled from a given Gaussian process, is generated by following Algorithm~1 in~\cite{kanagawa2018gaussian}. 
\High{In the simulations, we evaluate the algorithms in a more general setting with $f_u\sim \cG\cP(f(\cdot), v^2k(\cdot, \cdot))$, where $v^2$ is a scaling parameter that can be used to set a reasonable level of local bias (see Footnote~2).}
% \High{In the simulations, we consider the general case, i.e., $f_u \sim \cG\cP(f(\cdot), v^2k(\cdot, \cdot))$, where $v^2$ indicates the variance of bias in the expected local reward $f_u$ from the global reward $f$.}
% \High{In this simulations, we use a scaled SE kernel, specifically $v^2k_{SE}$, to generate local functions, where $v^2$ captures the variance of bias across users and is one of the setting parameters. Following Section~3 in \cite{chowdhury2017kernelized}, we can still achieve the same result for regret with\footnote{\high{explain something here?}} $\lambda=\sigma^2/v^2$.
% }

% In the simulations, we scale down the variance of the biased local feedback to speedup the learning process, i.e., $f_u \sim \cG\cP(f(\cdot), v^2k(\cdot, \cdot))$, where $v\in (0,1)$. 
%synthetic functions which satisfies the assumptions described in Section~\ref{sec:DKB}. First, we construct function $f$ by sampling $m=30d$ points, $\hat{\x}_1, \dots, \hat{ \x}_m$, uniformly on $[0,1]^d$, and $\hat{a}_1, \dots, \hat{a}_m$ each independent uniform on $[-1, 1]$ and defining $f(\x) = \sum_{i=1}^m \hat{a}_i k(\hat{\x}_i, \x)$ for all $\x \in \cD$, where $k$ is Squared Exponential kernel with length-scale $l=1$. All the algorithms are given access to the exact RKHS norm of this function, computed as $\Vert f\Vert_k^2 = \sum_{i=1}^m \sum_{j=1}^m \hat{a}_i\hat{a}_j k(\hat{\x}_i, \hat{\x}_i)$.

% $\high{v^2}k(\x,\x)$)

\subsubsection{Ablation Studies and Analysis}%\fengjiao{If only two components are there, we may not need to use ablation studies.}
First, we show that the \texttt{DPBE} algorithm that selects actions according to maximum variance reduction achieves sublinear regret, as shown in Figure~\ref{fig:regret_compare}. Then,  we perform
numerous ablation studies to confirm the efficacy of other two key components in our algorithm: user-sampling and batching strategy. To this end, we consider the corresponding variants of our algorithm. %{For all the algorithms, feedback from each user is combined for each unique action to save communication cost. } 
In this simulation, we perform $20$ runs for each algorithm by setting %$|\cD|=200, d=3, C=1.6$, $\sigma=0.1$, $v=0.1$, $T=40000$, $\alpha=0.7$, and $\beta=1/(|\cD|T)$,
$|\cD|=100, d=3, C=1.6$, $\sigma=0.01$, $v=0.1$, $T=40000$, $\alpha=0.7$, $\beta=1/(|\cD|T)$ and $\lambda=\sigma^2/v^2$ and present the regret performance in Figure~\ref{fig:regret_compare}
and communication cost and runtime in Table~\ref{tab:runtime}.

\begin{figure*}[!t]
\centering
	\begin{minipage}[t]{0.4\textwidth}
		\centering 
    \includegraphics[width=1\textwidth]{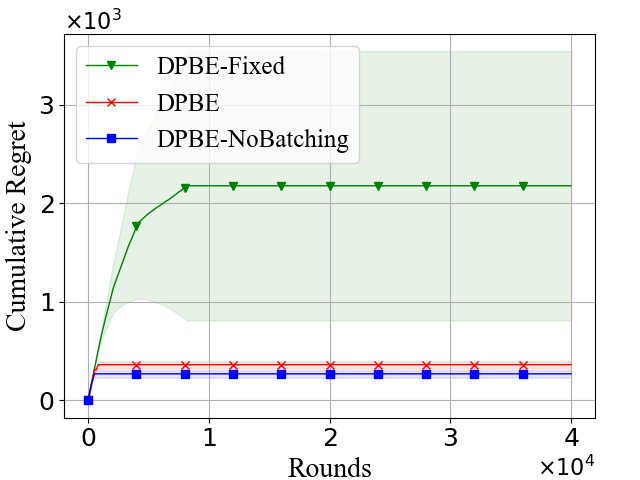} 
\caption{Comparison of regret performance %under \texttt{DPBE}, \texttt{DPBE-Fixed}, and \texttt{DPBE-NoBatching} 
on a synthetic function. The shaded area represents the standard deviation}
\label{fig:regret_compare}  
	\end{minipage}
	\qquad
	\begin{minipage}[t]{0.4\textwidth}
		\centering                  
	\includegraphics[width=1\textwidth]{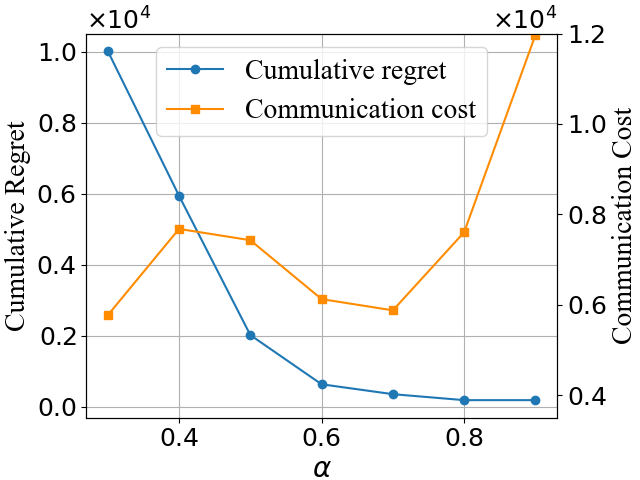} 
	\caption{The regret and communication cost under \texttt{DPBE} with different values of $\alpha$. %\bo{This plot looks awkward. no $y$ axis for communication cost, no title, no grid line, the bad position of the legend, etc.}\fengjiao{Updated.}
	}
	 \label{fig:tradeoff}  
	\end{minipage}
\end{figure*}

% \begin{figure*}[!t] 
% 	\subfigure[]{
% 	\label{fig:regret_compare}
% 	\includegraphics[width=0.45\textwidth]{figures/regret_3algorithm_new.png}}
% 	\quad
% 	\subfigure[]{
% 	\label{fig:regret_time}
% 	\includegraphics[width=0.46\textwidth]{figures/tradeoff.png}}
% 	\caption{Evaluation on the synthetic function. %and use $\alpha=0.7$ as the user-sampling parameter for all the algorithms.  
% 	(a) Comparison of regret performance under \texttt{DPBE}, \texttt{DPBE-Fixed}, and \texttt{DPBE-NoBatching}. The shaded area represents the standard deviation; (b)The regret and communication cost under \texttt{DPBE} with different values of $\alpha$.
% 	}
% 	\label{fig:tradeoff}  
% \end{figure*}

% \begin{figure}[!t] 
% % \begin{minipage}[h]{0.48\textwidth} 
%     \centering 
%     \includegraphics[width=0.5\textwidth]{figures/regret_3algorithm_new.png} 
% \caption{Comparison of regret performance under \texttt{DPBE}, \texttt{DPBE-Fixed}, and \texttt{DPBE-NoBatching} using a synthetic function. The shaded area represents the standard deviation.}
% \label{fig:regret_compare}  
% %  \end{minipage}% 
% %  \quad
% % \begin{minipage}[h]{0.48\textwidth} 
% %     \centering 
% %     \includegraphics[width=0.9\textwidth]{figures/communication.png} 
% % 	\caption{Communication cost comparison}
% % 	    \label{fig:communication}  
% % \end{minipage}% 
% \end{figure}

\begin{table}[!t]
\centering
% \parbox{.45\linewidth}{
\caption{Comparisons of communication cost and running time under \texttt{DPBE}, \texttt{DPBE-Fixed}, and \texttt{DPBE-NoBatching} on a synthetic function.}
\label{tab:runtime}
\scalebox{0.99}{
\begin{tabular}{c|c|c} 
    \toprule
    Algorithms & Communication cost & Running time (seconds) \\  
    \hline
    % \midrule
    %   \texttt{DPBE}   & $2.33 \times 10^4$ & $0.81$ \\  
    %   \texttt{DPBE-Fixed} &  $2.33\times 10^4$   & $1.18$ \\ 
    %   \texttt{DPBE-NoBatching} & $1.37\times 10^5$ $2.53\times 10^4$ 
    %   &       $18.82$ \\ old-data
% \texttt{DPBE}   & $3.14 \times 10^4$ & $1.83$ \\  
%       \texttt{DPBE-Fixed} &  $3.14\times 10^4$   & $2.03$ \\ 
%       \texttt{DPBE-NoBatching} & $1.76\times 10^5$ $2.53\times 10^4$ 
%       &       $53.80$ \\ 200arm_data
\texttt{DPBE}   & $5.87 \times 10^3$ & $0.12$ \\  
      \texttt{DPBE-Fixed} &  $5.87\times 10^3$   & $0.19$ \\ 
      \texttt{DPBE-NoBatching} & $1.81\times 10^4$ 
      &       $0.61$ \\ 
    %\hline
      \bottomrule
      \end{tabular} 
     }
\end{table}

\textbf{1) Importance of (exponentially-increasing) user-sampling}. To this end, we consider the first variation of \texttt{DPBE} with a fixed number of participants, called  \texttt{DPBE-Fixed}, where the number of participants in each phase is fixed at $|U| = \lfloor \frac{\sum_{l=1}^L |U_l|* N_{u,l} }{\sum_{l=1}^L N_{u,l}}\rfloor$ so as to have the same communication cost as \texttt{DPBE}.
From Figure~\ref{fig:regret_compare}, we observe that \texttt{DPBE} with exponentially-increasing user-sampling over phases performs much better than \texttt{DPBE-Fixed} with the same communication cost. It demonstrates that the exponentially-increasing user-sampling mechanism in \texttt{DPBE} is critical to striking a balance between regret and communication cost. From Table~\ref{tab:runtime}, we observe that \texttt{DPBE-Fixed} takes a little longer time than \texttt{DPBE}. This is mainly because \texttt{DPBE-Fixed} needs more phases to find the optimal action (i.e., $L$ is larger when $|\cD_L|=1$).

% \textbf{2) Effect of posterior reformulation.} To demonstrate the benefits of our reformulation for posterior mean and variance functions, we modify \texttt{DPBE} by using the original standard posterior updates: the posterior  mean  and variance  according to Eq.~\eqref{eq:standard_mu_update} and  Eq.~\eqref{eq:standard_sigma_update} respectively, called \texttt{DPBE-OC}. While \texttt{DPBE-OC} has almost the same performance as \texttt{DPBE}, it incurs a much higher wall-clock time ($\sim 40\times$) than \texttt{DPBE} does, which demonstrates the significance of the posterior reformulation. 

\textbf{2) Benefits of batching strategy. } 
To illustrate the impact of batching schedule strategy, we consider another variant of \texttt{DPBE} that does not employ batching strategy, called \texttt{DPBE-NoBatching}. In particular, it selects an action according to Eq.~\eqref{eq:decision} for each round in any phase. Without batching strategy, \texttt{DPBE-NoBatching} communicates local observations  directly without merging, and computes the posterior mean and variance according to standard update formula: Eq.~\eqref{eq:standard_mu_update} and  Eq.~\eqref{eq:standard_sigma_update} respectively; 
%in the original dimension (because the number of batches is the phase length)\xingyu{this is very confusing, the algorithm is called no batching, then why there is the concept of batches anymore. Just avoid any batch words for this part. I know it is equivalent to batch size being $1$. but it just causes more confusion when you use word like batch. Also, it is not clear what it means by ``orginial dimension'' (I know it, but readers cannot.)}.\fengjiao{Thanks. I will rewrite this part.} %, i.e., no GP posterior reformulation. \fengjiao{Can we just say all algorithms use reformulation?  DPBE-NoBatching could not shrink the dimension because the number of batches is the same as the phase length. In this case, it may make more sense to say batching strategy is in three folds: rare-switching, merge observations for communication cost, shrink the dimension of the posterior reformulation. }\xingyu{I am fine, I just want to add more explanation. choose any words you like:)} \fengjiao{Thanks.}
From Figure~\ref{fig:regret_compare}, we observe that \texttt{DPBE}, similar to other \emph{rare-switching} algorithms \cite{abbasi2011improved}, achieves a slightly worse regret performance than \texttt{DPBE-NoBatching}. However, as shown in Table~\ref{tab:runtime}, it significantly  saves  communication cost ($\sim3 \times$) by merging local observations in batches and reduces computation time  ($\sim5\times$) by shrinking the dimension of posterior reformulations. %\high{Worse when $k$ is larger.}

\subsubsection{Regret-communication Tradeoff}
We now turn to investigate the regret-communication tradeoff captured by the user-sampling parameter $\alpha$, as shown in Theorem~\ref{thm:regret_upper_bound}. 

Consider $\alpha \in \{0.3, 0.4, 0.5, 0.6, 0.7, 0.8, 0.9\}$. The cumulative regret and total communication cost of \texttt{DPBE} with different values of $\alpha$ are presented in Figure~\ref{fig:tradeoff}. As expected, while a larger value of $\alpha$ yields a lower regret, it generally results in a higher communication cost. Notice that \texttt{DPBE} incurs slightly higher communication cost when $\alpha = \{0.4, 0.5, 0.6\}$ compared to $\alpha=0.7$, this is mainly because \texttt{DPBE} with a smaller value of $\alpha$ needs more phases to find the optimal action (i.e., $L$ is larger when $|\cD_L|=1$). One can tune the user-sampling parameter $\alpha$ to achieve a better regret-communication cost accordingly, e.g., $\alpha=0.7$ for this synthetic function setting. % in The simulation results suggest that \High{setting $\alpha$ around $0.7$} %the value of $\alpha$ in some range (e.g., $[0.55, 0.7]$) 
% could be a good choice for balancing the regret and the communication cost, which is valid according to our theoretical results. 

%\xingyu{which is valid according to our theoretical results, right? I think the best $\alpha$  is $2/3$?}\fengjiao{When $\alpha=2/3$, both communication cost and regret are $O(T^{2/3})$. In other case, communication cost is $O(T^{\alpha})$, and regret is $O(T^{1-\alpha/2})$. Can we directly say $\alpha=2/3$ is the best? Or we just say $\alpha \in [0.6,0.7]$ is a good choice?}

% \begin{figure}[!t]
%     \centering
%     \includegraphics[width=0.5\textwidth]{}
%     \caption{Comparison of per-round regret: \texttt{DPBE} vs. \texttt{DP-DPBE} with parameters $\epsilon=10$ and $\delta=0.25$.}
%     \label{fig:dp_dpbe}
% \end{figure}

\subsubsection{Regret-privacy Tradeoff} Finally, we evaluate the performance of the differentially private \texttt{DPBE}, i.e.,  \texttt{DP-DPBE}, %run the algorithm with the (central) DP model, called \texttt{DP-DPBE}, 
and present the result in Figure~\ref{fig:dp_dpbe}. \Hi{Figure~\ref{fig:regret_epsilon} shows how the cumulative regret at the end of $T=10^6$ rounds varies with different values of privacy parameters $\epsilon \in \{5,10,15,20,25,30\}$ and $\delta=o(1/T)=10^{-6}$, which reveals a tradeoff between regret and privacy. Figures~\ref{fig:regret_time} %\Hi{and \ref{fig:regret_time_sigma_106} 
shows the regret performance of \texttt{DPBE} and \texttt{DP-DPBE} with privacy parameters $\epsilon = 15$ and $\delta = 10^{-6}$.} %$\epsilon=5, \delta=0.1$. 
We observe that although \texttt{DP-DPBE} adds extra noise to protect privacy, it can still achieve no-regret (i.e., $\lim_{T\to \infty}  \frac{R(T)}{T}\to 0$). Indeed, to protect privacy, \texttt{DP-DPBE} requires much more time to find the optimal action, which is the typical regret-privacy tradeoff. However, for a large $T$, the gap compared to the non-private one is small, which also validates the privacy ``for-free'' result.

\begin{figure*}[!t] 
\centering
\subfigure[%\high{whether to add result of non-private DPBE?} \bo{No need to do that if you did not mention it in the revision plan}
	]{
	\label{fig:regret_epsilon}
	\includegraphics[width=0.4\textwidth]{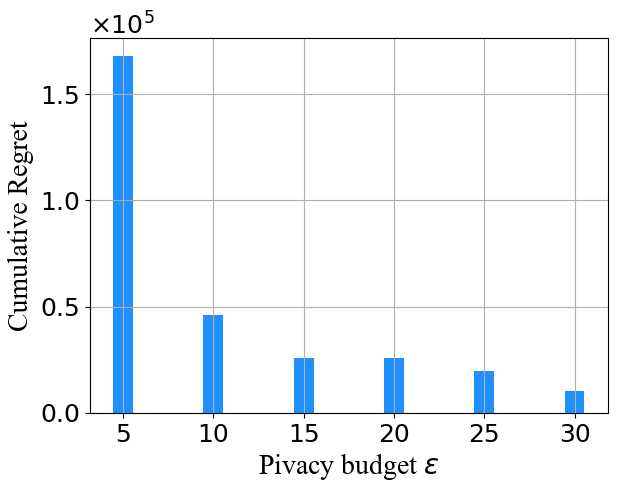}}
	\quad
	% \subfigure[]{
	% \label{fig:regret_time}
	% \includegraphics[width=0.4\textwidth]{figures/dp_dpbe_epsilon_15_hyper_0.8.png}}
	\subfigure[]{
	\label{fig:regret_time}
	\includegraphics[width=0.4\textwidth]{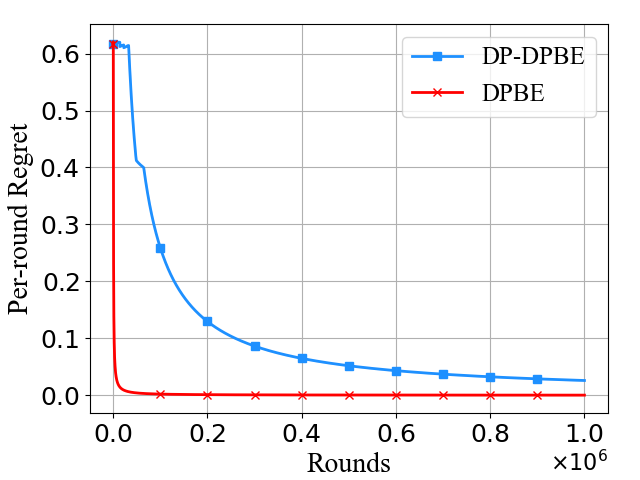}}
	\caption{Performance of \texttt{DP-DPBE}.   
	\Hi{(a) Final cumulative regret vs. the privacy budget $\epsilon$ with $\delta=10^{-6}$; (b) Per-round regret vs. time with parameters $\epsilon=15$ and $\delta=10^{-6}$}.%\Hi{; (c) Per-round regret vs. time with  $\epsilon=10$ and $\delta=10^{-6}$}.% %\bo{What do you mean by this hyperparameter? Also, there is an empty legend box in 5(a). Please remove it}
}
	\label{fig:dp_dpbe} 
	
\end{figure*}

% \begin{figure*}[!t] 
% 	\subfigure[]{
% 	\label{fig:regret_epsilon}
% 	\includegraphics[width=0.45\textwidth]{figures/regret_epsilon.png}}
% 	\quad
% 	\subfigure[]{
% 	\label{fig:regret_time}
% 	\includegraphics[width=0.45\textwidth]{}}
% 	\caption{Performance of \texttt{DP-DPBE}. %and use $\alpha=0.7$ as the user-sampling parameter for all the algorithms.  
% 	(a) Final cumulative regret vs. the privacy budget $\epsilon$ with $\delta=0.1$; (b) Per-round regret vs. time with parameters $\epsilon=5$ and $\delta=0.1$.\high{hyperparam = 0.1}
% 	}
% 	\label{fig:dp_dpbe}  
% \end{figure*}

% \begin{figure*}[!t] 
% 	\subfigure[]{
% 	\label{fig:regret_epsilon}
% 	\includegraphics[width=0.45\textwidth]{}}
% 	\quad
% 	\subfigure[]{
% 	\label{fig:regret_time}
% 	\includegraphics[width=0.45\textwidth]{}}
% 	\caption{Performance of \texttt{DP-DPBE}. %and use $\alpha=0.7$ as the user-sampling parameter for all the algorithms.  
% 	(a) Final cumulative regret vs. the privacy budget $\epsilon$ with $\delta=0.1$; (b) Per-round regret vs. time with parameters $\epsilon=5$ and $\delta=0.1$.\high{hyperparam = 1}
% 	}
% 	\label{fig:dp_dpbe}  
% \end{figure*}

\subsection{Standard Benchmark Functions} \label{sec:benchmark_function}
In addition, we study the performance of \texttt{DPBE} %\bo{make sure to be consistent and use \texttt{DPBE} instead of \texttt{DPBE} throughout the paper; same for \texttt{GP-UCB}?}\fengjiao{updated.} 
on standard optimization benchmark functions. 
This corresponds to a more realistic setting where the RKHS norm of the target function is unknown in advance. In particular, we use three common functions in global optimization problems~\cite{simulationlib}% \bo{use a reference instead}
: (a) Sphere function, (b) Six-hump Camel function, and (c) Michalewicz function, and provide the performance comparison of \texttt{DPBE-Fixed}, \texttt{DPBE}, and \texttt{DPBE-NoBatching} in Figure~\ref{fig:regret_optimization_baselines} and Table~\ref{tab:communication_time_baseline}. In the simulations, we scale the range of the function values to $[-1,1]$ and use RKHS norm $B=1$ in the algorithms as in \cite{janz2020bandit}. Without knowing the exact kernel of the target function, each local reward function $f_u$ is constructed by sampling a function from the GP $\cG\cP(f(\cdot), v^2k(\cdot, \cdot))$, where we choose $v^2=0.001$ and use the SE kernel with $l_{SE}=0.2$. %In the simulations, we set $T=4\times10^4$, $|\cD|=100$, $k=k_{SE}$ with $l_{SE}=0.2$. Other parameter settings  for each function are as follows: (a) Sphere function. Settings: $d=3,  C=1.6, \sigma=0.01,  v^2=0.01, \lambda=\sigma^2/v^2, \alpha=0.7$; (b) Six-Hump Camel function. Settings: $d=2, C=1.6, \sigma=0.01, v^2=0.01, \lambda=\sigma^2/v^2, \alpha=0.7$; (c) Michalewicz function. Settings: $d=2, {C=1.6}, \sigma=0.1,  v^2=0.01, \lambda=\sigma^2/v^2, \alpha=0.6$; (d) Functions from real-world data. Settings: $d=2, C=1.42, \sigma=0.01, v^2=0.01, \lambda=\sigma^2/v^2, \alpha=0.7$. %and use $\alpha=0.7$ as the user-sampling parameter for all the algorithms.  
%For all the algorithms, feedback from each user is combined for each unique action to save communication cost\xingyu{nobatching algorithm also uses this?}. 
In addition, we set $T=4\times10^4$ and $|\cD|=100$ and run each algorithm on each function for $20$ times.% by setting $|\cD|=100$, $T=40000$, $\alpha=0.7$, and $\beta=1/(|\cD|T)$. Other parameter setting is presented in Figure~\ref{fig:regret_optimization_baselines}
\begin{figure*}[!t] 
\centering
	\subfigure[Sphere function]{
	\label{fig:sphere}
	\includegraphics[width=0.4\textwidth]{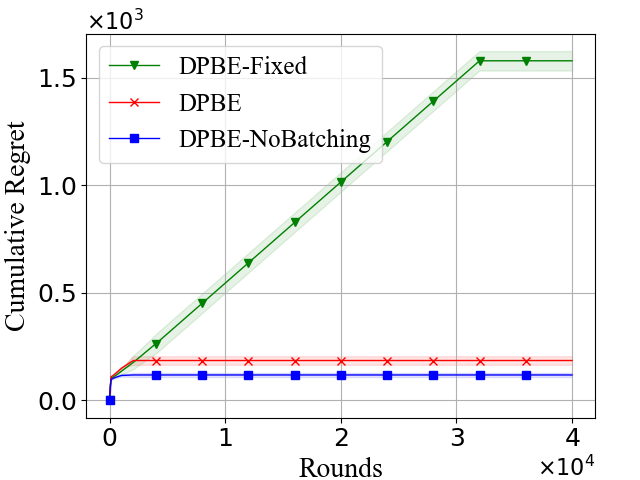}}
	\quad
	\subfigure[Six-Hump Camel function]{
	\label{fig:six-hump}
	\includegraphics[width=0.4\textwidth]{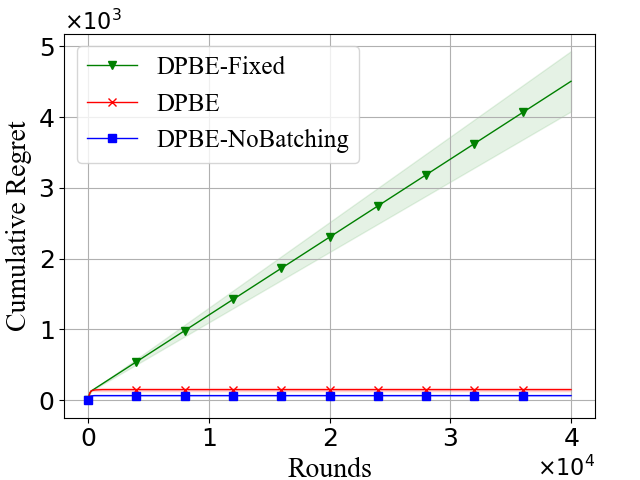}}
	%\quad
	\subfigure[Michalewicz function]{
	\label{fig:michalewicz}
	\includegraphics[width=0.4\textwidth]{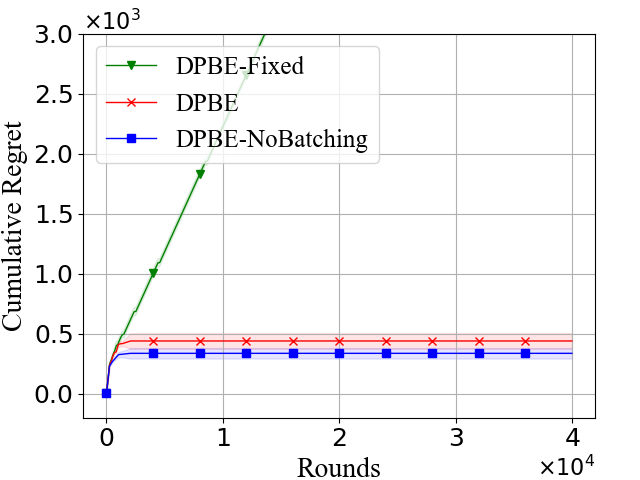}}
 \quad
	\subfigure[Light sensor data]{
	\label{fig:light}
	\includegraphics[width=0.4\textwidth]{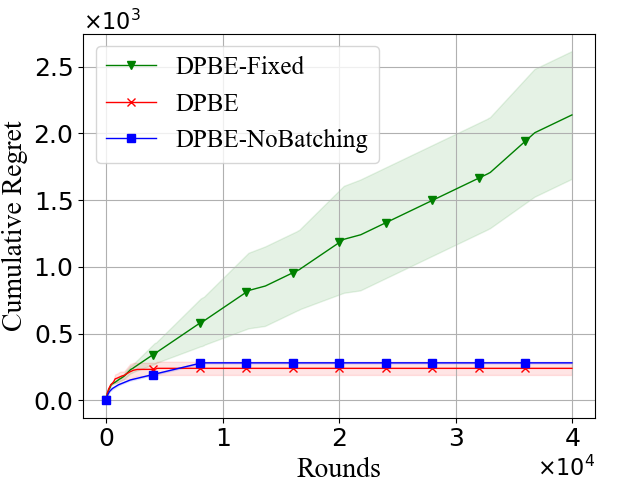}}
	\caption{Comparison of regret performance under \texttt{DPBE}, \texttt{DPBE-Fixed}, and \texttt{DPBE-NoBatching} on four functions. %using three standard benchmark functions. %The shaded area represents the standard deviation. 
	%We set $T=4\times10^4$, $|\cD|=100$, $l_{SE}=0.2$, and $v^2=0.001$.	
	(a) Sphere function. Settings: $d=3,  C=1.6, \sigma=0.01, \lambda=\sigma^2/v^2, \alpha=0.7$; (b) Six-Hump Camel function. Settings: $d=2, C=1.6, \sigma=0.01, \lambda=\sigma^2/v^2, \alpha=0.7$; (c) Michalewicz function. Settings: $d=2, {C=1.6}, \sigma=0.1,   \lambda=\sigma^2/v^2, \alpha=0.6$; (d) Function from light sensor data. Settings: $d=2, C=1.42, \sigma=0.01, \lambda=\sigma^2/v^2, \alpha=0.8$. %\high{Legends in the figures need to be fixed. DPBE-OC/DPE needs to be re-run.} %\bo{D-PB-E should be \texttt{DPBE}; DPBE-OC should be DPE? add a legend to (b) and (c).}\fengjiao{yes. I need to rerun the simulations. BTW, in this figure, DPBE-OC uses batch schedule without reformulation. That is why it has the same performance as \texttt{DPBE}.\fengjiao{The figures have been updated.}}
	}
	\label{fig:regret_optimization_baselines}  
	\vspace{-5mm}
\end{figure*}

From Figure~\ref{fig:regret_optimization_baselines} and Table~\ref{tab:communication_time_baseline}, we observe similar results to those of the synthetic function with the same kernel. First, compared to \texttt{DPBE-Fixed} that incurs the same communication cost, \texttt{DPBE} might perform slightly worse at the very beginning (e.g., Figure~\ref{fig:sphere}) but eventually achieves a much smaller regret. Note that \texttt{DPBE-Fixed} may not be able to find the optimal action by the end of $T$ (e.g., Figure~\ref{fig:six-hump}). This phenomenon strengthens our argument on the exponentially-increasing user-sampling mechanism in \texttt{DPBE}. While \texttt{DPBE-NoBatching} has slightly better regret performance than \texttt{DPBE}, it incurs much higher communication cost ($5\sim13\times$) and requires a much longer time ($6\sim 23\times$, see running time column in Table~\ref{tab:communication_time_baseline}), which demonstrates the key benefits of the batching strategy in improving communication efficiency and computation complexity. %Meanwhile, \texttt{DPBE-OC} thatemploys the batch schedule strategy but updates posteriors in the traditional way always has almost the same regret performance as \texttt{DPBE}, which validates the equivalence between the original representations of posterior mean and variance (Eq.~\eqref{eq:standard_mu_update} and Eq.~\eqref{eq:standard_sigma_update}) and new ones (Eq.~\eqref{eq:mu_update} and Eq.~\eqref{eq:sigma_update}) used in our algorithm.

In addition, we also evaluate the regret-privacy tradeoff under \texttt{DP-DPBE}. Due to space limitations, we present the numerical results in Appendix~\ref{app:experiments} \Hi{(see Figures~\ref{fig:dp_functions_epsilon}~and~\ref{fig:dp_functions_delta_106}).}
\begin{table}[!t]
\centering
% \parbox{.45\linewidth}{
\caption{Communication cost and running time under \texttt{DPBE}, \texttt{DPBE-Fixed}, and \texttt{DPBE-NoBatching}} %on four different functions.}
\label{tab:communication_time_baseline}
\scalebox{0.99}{
\begin{tabular}{c|c|c|c} 
    \toprule
    Function & Algorithm & Communication cost & Running time (seconds) \\  
    \hline
     \multirow{4}*{Sphere} &
      \texttt{DPBE}   & $1.49 \times 10^3$ & $0.07$ \\  
      ~&\texttt{DPBE-Fixed} &  $1.49\times 10^3$   & $0.12$ \\ 
      ~&\texttt{DPBE-NoBatching} & $6.16\times 10^3$ & $0.69$ \\ 
    %   ~&\texttt{DPBE-OC} & $1.51\times 10^3$  & $4.48$ \\ 
    %   \texttt{GP-UCB} & $4\times 10^4$  & $24.509278190135955$\\ 
    \hline
    \multirow{4}*{Six-Hump Camel} &
      \texttt{DPBE}   & $1.26\times 10^3$ & $0.03$ \\  
      ~&\texttt{DPBE-Fixed} &  $1.26\times 10^3$   & $0.12$ \\ 
      ~&\texttt{DPBE-NoBatching} & $1.45\times 10^4$ & $0.17$ \\ 
    %   ~&\texttt{DPBE-OC} & $3.36\times 10^3$  & $57.95$ \\ 
    \hline
    \multirow{4}*{Michalewicz} &
      \texttt{DPBE}   & $2.06 \times 10^3$ & $0.06$ \\  
      ~&\texttt{DPBE-Fixed} &  $2.06\times 10^3$   & $0.14$ \\ 
      ~&\texttt{DPBE-NoBatching} & $2.73\times 10^4$ & $0.49$ \\ 
    \hline
    \multirow{4}*{Light Sensor Data} &
      \texttt{DPBE}   & $5.17 \times 10^3$ & $0.22$ \\  
      ~&\texttt{DPBE-Fixed} &  $5.17\times 10^3$   & $0.28$ \\ 
      ~&\texttt{DPBE-NoBatching} & $2.73\times 10^4$ & $5.20$\\
      \bottomrule
      \end{tabular} 
    }
\end{table}
%cumulative regret performance of \texttt{DP-DPBE} that offers a  privacy guarantee. Due to space limitations, we present the numerical results in Appendix~\ref{app:experiments}. The results are consistent with our discussion for the synthetic function. } %For these simulations, we run $T=10^6$ rounds as the simulations for the synthetic function, present how the cumulative regret at the end of $T$ varies with privacy budget $\epsilon$ in Figure~\ref{fig:dp_functions_epsilon} and compare \texttt{DP-DPBE} with \texttt{DPBE} in Figure~\ref{fig:dp_functions}. We  perform $20$ runs for each function. From the figures, we can derive the same results as on the synthetic function.}
\High{
% \begin{figure*}[!t] 
% 	\subfigure[]{
% 	\label{fig:lightsensor}
% 	\includegraphics[width=0.32\textwidth]{}}
% 	%\quad
% 	\subfigure[]{
% 	\label{fig:temperature}
% 	\includegraphics[width=0.32\textwidth]{figures/lightsensor_normalized.png}}
% 	%\quad
% 	\subfigure[]{
% 	\label{fig:stock}
% 	\includegraphics[width=0.32\textwidth]{figures/lightsensor_normalized.png}}
% 	\caption{Comparison of regret performance under \texttt{DPBE}, \texttt{DPBE-Fixed}, and \texttt{DPBE-NoBatching} using real-world data. The shaded area represents the standard deviation. We set $T=10^4$,  and $v=0.1$. %and use $\alpha=0.7$ as the user-sampling parameter for all the algorithms.  
% 	(a) Light sensor data. Settings: $d=2,  C=1.42, \sigma=0.01, \alpha=0.7$; (b) Six-Hump Camel function. Settings: $d=2, C=1.6, \sigma=0.01, \alpha=0.7$; (c) Michalewicz function. Settings: $d=2, {C=1.6}, \sigma=0.1, \alpha=0.6$.%\high{Legends in the figures need to be fixed. DPBE-OC/DPE needs to be re-run.} %\bo{D-PB-E should be \texttt{DPBE}; DPBE-OC should be DPE? add a legend to (b) and (c).}\fengjiao{yes. I need to rerun the simulations. BTW, in this figure, DPBE-OC uses batch schedule without reformulation. That is why it has the same performance as \texttt{DPBE}.\fengjiao{The figures have been updated.}}
% 	}
% 	\label{fig:regret_optimization_baselines}  
% \end{figure*}
\subsection{Functions from Real-World Data}\label{sec:real_world_data}
We also evaluate the performance of \texttt{DPBE} on a function from a real-world dataset, where there is no explicit closed-form expression. %Specifically, we use the following three datasets: light sensor data, temperature sensor data, and stock market data.  

\textbf{Light Sensor Data.} We use the light sensor data collected from the CMU Intelligent Workplace in November 2005, which is available online~\cite{lightsensor}. %\footnote{\url{http://www.cs.cmu.edu/~guestrin/Class/10708-F08/projects/lightsensor.zip}.}. 
It contains locations of $41$ sensors, $601$ training samples, and $192$ testing samples. Following \cite{srinivas2009gaussian, chowdhury2017kernelized, zhou2022kernelized}, we compute the empirical covariance matrix of the training samples and use it as the kernel matrix in the algorithm. Here, for each location $\x$, we let $f(\x)$ be the average of the normalized sample readings at $\x$ and set $B=\max_{\x}f(\x)$ in the algorithm. For this function (from real data), we construct each local function $f_u$ by sampling a function from a Gaussian process with mean $f$ and the kernel constructed above, and set the noise in the local feedback as $\sigma=0.01$ and the bias in each local feedback as $v=0.1$. We run \texttt{DPBE} with input parameters $\alpha=0.7, \beta=1/(|\cD|T)$, and $\lambda=\sigma^2/v^2$, and present the regret performance in Figure~\ref{fig:light} and communication cost and running time in Table~\ref{tab:communication_time_baseline}. The observations are qualitatively similar to those made in simulations on other functions: \texttt{DPBE} outperforms \texttt{DPBE-Fixed} in regret  given the same communication cost and achieves a regret close to \texttt{DPBE-NoBatching}, which has much longer running time. Besides, we also run \texttt{DP-DPBE} on this real-world dataset and present the results in Appendix~\ref{app:experiments} \Hi{(see Figures~\ref{fig:light_dp_epsilon}~and~\ref{fig:light_dp_delta})}, which validates the regret-privacy tradeoff. 
}

\High{\section{Comparison with the State-of-the-Arts} \label{sec:compare_benchmarks}
}
\subsection{Discussion}\label{sec:discussion}
We now consider an alternative way of addressing kernelized bandits with distributed biased feedback.
\High{One may incorporate the local bias as another level of noise added to the noise in the rewards as a new noisy measurement of the global function $f$ with a larger variance.}
%One may incorporate the bias introduced by any particular user with their noisy local reward observation as a noisy measurement of the global function $f$ with a larger variance. 
In this case, the state-of-the-art algorithms for the traditional kernelized bandits \cite{srinivas2009gaussian, chowdhury2017kernelized} %\bo{did you ever give the full name of GP-UCP?} 
may be adapted to our setting. However, they have some key limitations. 

\High{Consider two representative state-of-the-art algorithms: \texttt{GP-UCB}~\cite{chowdhury2017kernelized} and \texttt{BPE}~\cite{li2022gaussian}.}
\High{\texttt{GP-UCB} is one of the most commonly used algorithms for standard kernelized bandits, It was proposed in \cite{srinivas2009gaussian} and improved in \cite{chowdhury2017kernelized}. By resorting to the Gaussian process surrogate model (see Section~\ref{sec:preliminary}), \texttt{GP-UCB} adaptively selects the action with the maximal \emph{upper confidence bound} in each round based on historical observations up to the current round. %Regarding our setting where the objective function is from RKHS, called \emph{frequentist} setting, \texttt{GP-UCB} achieves a regret upper bound $O(\gamma_T\sqrt{T})$ \cite{chowdhury2017kernelized}; 
\texttt{BPE} is a batch-based algorithm that eliminates suboptimal actions batch by batch,
and within each batch, actions are chosen independently from reward observations. %As claimed in \cite{li2022gaussian}, \texttt{BPE} achieves an near-optimal regret upper bound $\tilde{O}(\sqrt{\gamma_T T})$\footnote{$\tilde{O}(\cdot)$ ignores all the logarithmic terms with respect to $T$.} for the traditional kernelized bandits in the frequentist setting.  } 
In the following, we compare our proposed \texttt{DPBE} algorithm with  \texttt{GP-UCB} and \texttt{BPE} (adapted to our setting) and show their limitations in user-sampling, communication cost, and computation complexity.} %discuss the benefits of \texttt{DPBE} to address our problem setting via comparison with \texttt{GP-UCB} and \texttt{BPE} in terms of user-sampling, communication cost, and computation complexity, respectively.  }

First, both \texttt{GP-UCB} and \texttt{BPE} require to collect feedback from one user per step, which results in $T$ users involved in the learning process. 
In practice, even though there is a large population, not all users are willing to send their feedback. Hence, it may not be feasible to collect feedback from too many users. In our algorithm, instead of sampling more users to reduce the overall uncertainty, we ask each sampled user (who is more willing to participate) to participate in more rounds and send their feedback. In this way, we alleviate the user-sampling burden by letting the participating users collect more reward samples of the chosen actions. However, due to the bias in the feedback of each user, we could not just sample one user and then let her report the feedback during the entire horizon. We need to balance the tradeoff between sampling more users and letting the users participate in more rounds. 

Second, by collecting feedback in each round, \High{both \texttt{GP-UCB} and \texttt{BPE} incur a very high communication cost of $T$.} %Note that \texttt{BPE} still collects feedback for each decision in each round, although  feedback within a batch can be collected in parallel.} 
Instead, we employ a phase-based communication protocol where feedback corresponding to any particular action at each participant is averaged and only communicated at the end of each phase. Then, the total communication cost depends on the number of phases, the number of distinct actions in each phase, and the number of sampled users. The smaller each of these three factors, the smaller the communication cost.  By carefully designing the algorithm, we can reduce the communication cost to $O(\min\{\gamma_T, |\cD|\}T^{\alpha})$, where $\alpha\in (0,1)$ is the user-sampling parameter one can choose.
% 
% % \high{What we do:} In addition to letting each participant work more rounds, we employ a phase-based communication protocal where feedback corresponding to any particular action at each participant is averaged and only communicated by the end of phase. Then, the total communication cost depends on the number of phase, the number of distinct actions in each phase, and the number of sampled clients. The smaller each of these three factors, the less communication cost.  By carefully design the algorithm, we make these factors much smaller while maintaining (or even better) regret performance as \texttt{GP-UCB}. Finally, we save communication cost to $O(T^{\alpha})$ where $\alpha\in (0,1)$ is a hyperparameter in our algorithm. 

Finally, at each round $t$, \texttt{GP-UCB} finds the decision action $\x_t$ that maximizes an acquisition function (specifically, the UCB index, which is the sum of the posterior mean and variance). Note that obtaining the posterior mean and variance requires computing matrix inverse (see Eqs.~\eqref{eq:standard_mu_update} and \eqref{eq:standard_sigma_update}), which still has a computation complexity of $O(t^2)$ even using rank-one recursive updates \cite[Appendix~7]{chowdhury2017kernelized}. %\bo{Add a reference: Appendix F of \url{https://arxiv.org/pdf/1704.00445.pdf}}\fengjiao{Thanks, Dr. Ji.} 
Hence, the overall computation complexity of \texttt{GP-UCB} is $O(|\cD|T^3)$. \High{Similarly, \texttt{BPE} may also compute the posterior variance using the rank-one recursive update within each batch, and then the total computation complexity depends on the batch size and the number of batches. As in \cite{li2022gaussian}, the batch size is updated as $N_i = \sqrt{T\sqrt{N_{i-1}}}$, initialized with $N_0=1$, which results in  $\lceil \log\log (T) \rceil $ batches in total. Therefore, the computation complexity of \texttt{BPE} is $O(|\cD|T^3)$.}
In our design, we employ the batch schedule strategy and reformulate the posterior mean and variance as Eqs.~\eqref{eq:mu_update} and \eqref{eq:sigma_update}, where the dimension of the matrix becomes much smaller. This leads to a much smaller overall computation complexity of $O(\gamma_T T^{\alpha}+(|\cD|\gamma_T^3+\gamma_T^4)\log T)$.
\subsection{Empirical Performance}
In this subsection, we evaluate the empirical performance of \texttt{DPBE} with different values of user-sampling parameter $\alpha$ compared to \texttt{GP-UCB} and \texttt{BPE}. 

The simulations are run on the same three types of functions as in the preceding section: the synthetic function in Section~\ref{sec:synthetic_func}, the standard benchmark functions in Section~\ref{sec:benchmark_function}, and the function from light sensor data in Section~\ref{sec:real_world_data}. %The problem setting for each objective function is also exactly the same as before. 
Due to space limitation, we only present the results of the synthetic function here and put the results of the latter two types of functions in Appendix~\ref{app:experiments}. 

% by setting $|\cD|=100$, $\sigma=0.1$, $v=0.1$, $d=3$, $C=2$, $T=40000$, and $\beta=1/(|\cD|T)$.

Consider\footnote{Note that the smaller the value of $\alpha$, the larger the cumulative regret. In Figure~\ref{fig:vs_gpucb_syn}, we omit the regret performance when $\alpha <0.4$ since they are much larger than others.} 
$\alpha \in \{0.4, 0.5, 0.6, 0.7,$ $ 0.8, 0.9\}$ for \texttt{DPBE}. We show the empirical regret performance of all algorithms in Figure~\ref{fig:vs_gpucb_syn} and the running time in Table~\ref{tab:wallclock}. From Figure~\ref{fig:vs_gpucb_syn}, we observe that the empirical regret performance of \texttt{DPBE} can be fairly close to or even better than that of \texttt{GP-UCB} and \texttt{BPE} via properly choosing parameter $\alpha$. However, it consumes much less time for \texttt{DPBE} with each $\alpha\in\{0.4, 0.5, 0.6, 0.7, 0.8, 0.9\}$ than both \texttt{GP-UCB} and \texttt{BPE}. For example, while \texttt{DPBE} takes about $0.15$  second in most scenarios, \texttt{GP-UCB} takes more than $5$ seconds, which is more than $30$ times slower. \texttt{BPE} takes around $27$ seconds, which is even slower.

Recall the empirical communication cost of \texttt{DPBE} with different values of $\alpha$ shown in Figure~\ref{fig:tradeoff}. While the communication cost of \texttt{GP-UCB} and \texttt{BPE} is $4\times 10^4$ (specifically, one feedback per round), \texttt{DPBE} incurs a much smaller communication cost even when $\alpha=0.9$ ($4\times 10^4$ vs. $1.19\times 10^4$). %More specifically, the communication cost of the state-of-the-arts is $5\sim 8$ times that of \texttt{DPBE} with $\alpha\in [0.55, 0.7]$.

%Finally, we investigate the running time of these two alternatives and \texttt{DPBE} with different values of $\alpha$. The simulation results are presented in Table~\ref{tab:wallclock}. While \texttt{DPBE} takes about $0.5$ second in most scenarios, \texttt{GP-UCB} takes more than $16$ seconds, which is more than $30$ times slower. Although \texttt{BPE} could employ the recursive update when computing posterior variance, it still needs to compute the posterior mean at the end of each batch. 

In summary, the comparison of empirical performance under \texttt{DPBE} with \texttt{GP-UCB} and \texttt{BPE} demonstrates the significant improvements of \texttt{DPBE} in terms of communication cost and computation complexity, although little regret performance is sacrificed when $\alpha$ is not big enough.  

\begin{table}[!t]
\centering
\caption{Comparison of running time (seconds) under \texttt{GP-UCB}, \texttt{BPE}, and \texttt{DPBE} with different values of $\alpha$.}
\label{tab:wallclock}
	\scalebox{0.99}{
\begin{tabular}{c|c|c|c|c|c|c|c|c} 
    \toprule
    \multirow{2}*{Algorithms}  & 
    \multicolumn{6}{c|}{ \texttt{DPBE}}
    % \texttt{DPBE} & \texttt{DPBE} & \texttt{DPBE} & \texttt{DPBE} & \texttt{DPBE} & \texttt{DPBE} & 
    &\multirow{2}*{\texttt{GP-UCB}} & \multirow{2}*{\texttt{BPE}}\\ 
    \cline{2-7}
    ~ & $\alpha=0.4$ & $\alpha=0.5$& $\alpha=0.6$& $\alpha=0.7$& $\alpha=0.8$ & $\alpha=0.9$&~&~ \\
    % \midrule
    \hline
     \multirow{2}*{Running time}  &
     \multirow{2}*{$0.24$}   & \multirow{2}*{$0.19$} & \multirow{2}*{$0.14$} & \multirow{2}*{$0.12$} & \multirow{2}*{$0.13$}  &
     \multirow{2}*{$0.17$}&
     \multirow{2}*{$5.32$} & \multirow{2}*{$27.49$}\\
   & ~ & ~   & ~& ~ & ~& ~ & ~ & ~\\
      %\hline
      \bottomrule
\end{tabular}
}
\end{table}

\begin{figure}[!t]
\centering
\includegraphics[width=0.4\linewidth]{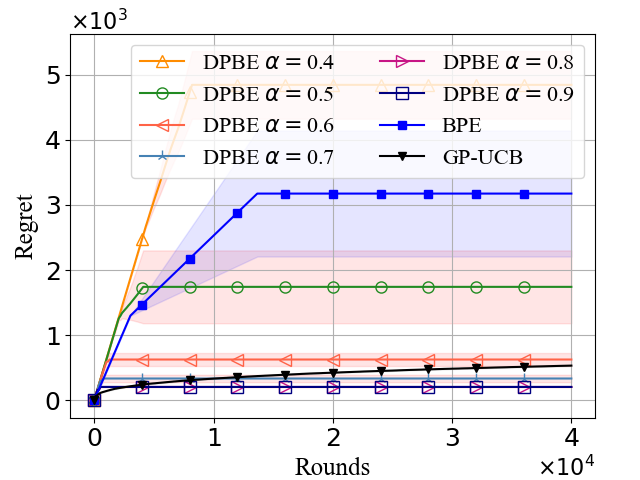}
\caption{Regret performance comparison of  \texttt{GP-UCB}, \texttt{BPE}, and \texttt{DPBE} with different values of $\alpha$. %\bo{Is it possible to use different markers as well? It would be hard to read on black-white prints.}\fengjiao{Sure. I will update it.}
}
\label{fig:vs_gpucb_syn}
\vspace{-3mm}
\end{figure}

\section{Conclusion}\label{sec:conclusion}
In this paper, we studied a new kernelized bandit problem with distributed biased feedback, where the feedback of the unknown objective function is biased due to user heterogeneity. To learn and optimize the unknown function using distributed biased feedback, we proposed the learning with communication framework. Considering the communication cost for collecting feedback and the computational bottleneck of kernelized bandits, we carefully designed the distributed phase-then-batch-based elimination (\texttt{DPBE}) algorithm to address all the new challenges. Specifically, \texttt{DPBE} selects actions according to maximum variance reduction, reduces bias via user-sampling, and improves communication efficiency and computation complexity via the batching strategy. Furthermore, we showed that \texttt{DPBE} achieves a sublinear regret while being scalable in terms of communication efficiency and computation complexity. Finally, we generalized \texttt{DPBE} to incorporate various differential privacy models to ensure privacy guarantees for participating users.

\textbf{Future work.} While we proposed a new \texttt{DPBE} algorithm to address the new challenges that arise in our problem setup, it would be worthwhile to explore other batch-based algorithms and investigate whether one can further improve the tradeoff among regret, communication efficiency, and computation complexity. %In addition, it would also be interesting to understand whether the phase-then-batch strategy can be applied to other distributed learning problems.
In addition, as discussed in Remark~\ref{rk:sub-optimality}, 
the lower bound derived for the standard kernelized bandits is also a valid lower bound for our problem. We show that our algorithm, if sampling a sufficient number of users, can achieve this lower bound. In general, however, it is 
an important open problem to close the gap by deriving tighter lower and/or upper bounds that capture the effect of user sampling in our new setting. We leave it as our future work.
%\clearpage

\section{ACKNOWLEDGMENTS}
We thank our shepherd, Giulia Fanti, and the anonymous paper reviewers for their insightful feedback. We also thank Duo Cheng for fruitful discussions. This work is supported in part by the NSF grants under CNS-2112694 and
CNS-2153220.
% \end{acks}

% \bibliographystyle{ACM-Reference-Format}
% \bibliography{refer}

% \received{August 2022}
% \received[revised]{October 2022}
% \received[accepted]{January 2023}
% \clearpage

\appendix

% \bibliographystyle{unsrtnat}
% \bibliography{refer}
\printbibliography 
\newpage
\begin{appendix}
\section{Kernelized Bandits: Useful Definitions and Useful Results} \label{app:kb_auxiliary_results}
\subsection{Example Kernel Functions} 
In the following, we list some commonly used kernel functions $k: \cD \times \cD \to \R$:
\begin{itemize}
    \item Linear kernel: $k_{\text{lin}}(\x, \x^{\prime}) = \x^{\top}\x^{\prime}$,
    \item Squared exponential kernel: $k_{\text{SE}}(\x, \x^{\prime}) = \exp{\left(-\frac{\Vert \x -\x^{\prime}\Vert}{2l^2}\right)}$,
    \item Mat\'{e}rn kernel: $k_{\text{Mat}}(\x, \x^{\prime}) = \frac{2^{1-\nu}}{\Gamma(\nu)}
    \left(\frac{\sqrt{2\nu}\Vert \x -\x^{\prime}\Vert}{l}\right)J_{\nu}\left(\frac{\sqrt{2\nu}\Vert \x -\x^{\prime}\Vert}{l}\right)$,
\end{itemize}
where $l$ denotes the length-scale hyperparameter, $\nu>0$ is an additional hyperparameter that dictates the smoothness, and $J_{\nu}$ and $\Gamma_{\nu}$ denote the modified Bessel function and the Gamma function, respectively \cite{rasmussen2006gaussian}.

\subsection{Maximum Information Gain for Different Kernels}\label{app:maximum_info_gain}
% \begin{itemize}
%     \item Linear kernel: $\gamma_t(k_{\text{lin}},\cD) = O(d\log t) $
%     \item Squared Exponential kernel: $\gamma_t(k_{\text{SE}},\cD) = O(\log^{d+1} t)$
%     \item Matern kernel: %$\gamma_t(k_{\text{Mat}},\cD) = O(t^{d(d+1)/(2\nu+d(d+1))}\log t)$  
%     $\gamma_t(k_{\text{Mat}},\cD) = O(t^{\frac{d}{2\nu + d}}\log^{\frac{2\nu}{2\nu+d}}t)$  \cite{vakili2021information} 
% \end{itemize}
%\xingyu{remember to update the result for MIG of Matern kernel. }\fengjiao{updated. Is this the most up-to-date result? It is from \cite{vakili2021information}}

We present the bounds on $\gamma_T$ and regret under two common kernels below in Table~\ref{tab:bounds}.
\begin{table}[!t]
\centering
\caption{Bounds on $\gamma_T$ and Regret under Two Common Kernels \cite{vakili2021information} }
\label{tab:bounds}
\scalebox{0.95}{
\begin{tabular}{c|c|c|c} 
    \toprule
    Kernel & \texttt{Upper Bound on $\gamma_T$} & \texttt{Regret Lower Bound} & \texttt{Regret Upper Bound $O(\sqrt{\gamma_T T})$} \\ 
    \hline
    % \midrule
    SE &$O\left(\log^{d+1} (T)\right)$ & $\Omega \left(\sqrt{T\log^{\frac{d}{2}} (T)} \right)$ & ${O} \left(\sqrt{T\log^{d+1} (T)} \right)$ \\  
    Mat\'{e}rn-$\nu$   & $O \left(T^{\frac{d}{2\nu+d}}\log^{\frac{2\nu}{2\nu+d}} (T)\right)$ & $\Omega \left(T^{\frac{\nu + d}{2\nu +d}} \right)$ &  ${O} \left(T^{\frac{\nu + d}{2\nu +d}}\log^{\frac{\nu}{2\nu+d}}(T) \right)$\\ 
    %\hline
    \bottomrule
\end{tabular}
}
\end{table}
\subsection{Useful Results}
\begin{lemma}[Sum of variance. Lemma~6 in \cite{ray2019bayesian}]\label{lem:information_gain}
Let $\X_t=[\x^{\top}_1, \dots,\x^{\top}_t]^{\top}$, and $\sigma^2_t(\x) \triangleq k(\x, \x) - \vk(\x, \X_{t})^{\top}(\K_{\X_{t}\X_{t}}+\lambda \I)^{-1}\vk(\x,\X_{t})$ for any $\x\in\cD$. Then, we have
\begin{equation}
    \sum_{s=1}^t\sigma^2_s(x_s)     \leq
    \lambda \ln |\lambda^{-1}\K_{\X_t\X_t}+\I|\leq 2\lambda\gamma_t. 
\end{equation}

\end{lemma}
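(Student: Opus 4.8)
The plan is to reduce the cumulative-variance sum to a telescoping log-determinant identity and then convert each summand using a single elementary inequality. Throughout, write $\K_s \bydef \K_{\X_s\X_s}$ for the kernel matrix of the first $s$ points, let $\sigma^2_s(\x)$ denote the quantity in the statement (the posterior variance at $\x$ after conditioning on $\X_s$), and set $\sigma^2_0(\x)\bydef k(\x,\x)$. The second inequality $\lambda\ln|\lambda^{-1}\K_{\X_t\X_t}+\I|\le 2\lambda\gamma_t$ is immediate: by definition $\gamma_t\ge\frac12\ln\det(\I+\lambda^{-1}\K_{\X_t\X_t})$ (the maximum is over all size-$t$ subsets, of which $\X_t$ is one), so $\ln|\lambda^{-1}\K_{\X_t\X_t}+\I|\le 2\gamma_t$ after multiplying by $2$. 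Hence the work lies entirely in the first inequality.

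First I would establish two linear-algebra identities. Expanding $\det(\K_s+\lambda\I)$ by Schur complement along its last row/column gives $\det(\K_s+\lambda\I)=\det(\K_{s-1}+\lambda\I)\cdot(\lambda+\sigma^2_{s-1}(\x_s))$, where $\sigma^2_{s-1}(\x_s)=k(\x_s,\x_s)-\vk(\x_s,\X_{s-1})^{\top}(\K_{s-1}+\lambda\I)^{-1}\vk(\x_s,\X_{s-1})$ is the posterior variance at $\x_s$ conditioned on $\X_{s-1}$. Taking logarithms and telescoping over $s=1,\dots,t$ (using $\det(\lambda\I)=\lambda^t$) yields
\begin{equation}
\ln\det(\I+\lambda^{-1}\K_t)=\sum_{s=1}^t\ln\!\left(1+\lambda^{-1}\sigma^2_{s-1}(\x_s)\right). \label{eq:plan-telescope}
\end{equation}
Second, the rank-one (Sherman--Morrison) update of the posterior variance upon adding $\x_s$ gives $\sigma^2_s(\x_s)=\sigma^2_{s-1}(\x_s)-(\sigma^2_{s-1}(\x_s))^2/(\lambda+\sigma^2_{s-1}(\x_s))$, which simplifies to
\begin{equation}
\sigma^2_s(\x_s)=\frac{\lambda\,\sigma^2_{s-1}(\x_s)}{\lambda+\sigma^2_{s-1}(\x_s)}. \label{eq:plan-update}
\end{equation}

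With these in hand, set $u_s\bydef\lambda^{-1}\sigma^2_{s-1}(\x_s)\ge 0$, so that \eqref{eq:plan-update} reads $\sigma^2_s(\x_s)=\lambda u_s/(1+u_s)$. The elementary inequality $\tfrac{u}{1+u}\le\ln(1+u)$ for all $u\ge 0$ (both sides vanish at $u=0$, and the derivative $\tfrac{1}{(1+u)^2}$ of the left side is dominated by $\tfrac{1}{1+u}$ of the right) then gives $\sigma^2_s(\x_s)\le\lambda\ln(1+u_s)=\lambda\ln(1+\lambda^{-1}\sigma^2_{s-1}(\x_s))$. Summing over $s$ and invoking \eqref{eq:plan-telescope} produces $\sum_{s=1}^t\sigma^2_s(\x_s)\le\lambda\ln\det(\I+\lambda^{-1}\K_t)$, which is exactly the first inequality; chaining with the $\gamma_t$ bound above completes the proof.

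The only genuine obstacle is the bookkeeping in the two matrix identities \eqref{eq:plan-telescope} and \eqref{eq:plan-update}: one must apply the block-determinant/Schur-complement formula and the Sherman--Morrison formula to the augmented kernel matrix carefully, tracking the $\lambda\I$ regularization and confirming that the relevant Schur complement equals $\lambda+\sigma^2_{s-1}(\x_s)$. Everything afterward---the scalar inequality and the summation---is routine. As a sanity check, monotonicity of the posterior variance requires $\sigma^2_s(\x_s)\le\sigma^2_{s-1}(\x_s)$, which is consistent with \eqref{eq:plan-update} since $\lambda/(\lambda+\sigma^2_{s-1}(\x_s))\le 1$.
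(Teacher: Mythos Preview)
Your argument is correct. The paper does not actually prove this lemma; it is stated as a borrowed auxiliary result (Lemma~6 in \cite{ray2019bayesian}) and is used without proof. Your derivation---Schur-complement telescoping for \eqref{eq:plan-telescope}, the rank-one update \eqref{eq:plan-update}, and the scalar bound $u/(1+u)\le\ln(1+u)$---is precisely the standard proof of this fact, so there is nothing to compare against and nothing to fix.

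One minor remark worth making explicit: your route through \eqref{eq:plan-update} is what allows the inequality to hold without any boundedness assumption on $k(\x,\x)$. Had the summand been $\sigma_{s-1}^2(\x_s)$ rather than $\sigma_s^2(\x_s)$, one would instead need $u\le c\ln(1+u)$ for bounded $u$, which requires $\lambda^{-1}\sigma_{s-1}^2(\x_s)$ to be uniformly bounded; the ``after-update'' version in the statement sidesteps this, and your proof exploits that cleanly.
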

\begin{lemma}[Proposition A.1 in \cite{calandriello2022scaling}/Lemma~4 in \cite{calandriello2020near}]\label{lem:variance_ratio}
For any kernel $k$, set of points $\X_{\tau}$, $\x\in\cD$, and $\tau^{\prime} < \tau$, we have
\begin{equation}
    1\leq \frac{\sigma^2_{\tau^{\prime}}(\x)}{\sigma^2_{\tau}(\x)} \leq 1+ \sum_{s=\tau^{\prime}+1}^{\tau} \sigma^2_{\tau^{\prime}}(\x_{s}).
\end{equation}
\end{lemma}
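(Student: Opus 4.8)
The plan is to recast both posterior variances in the primal (feature-space) form and reduce the two inequalities to elementary operator inequalities between the regularized design operators. Let $\phi(\cdot)$ be a feature map with $k(\x,\x')=\phi(\x)^\top\phi(\x')$, and for the points $\X_t=[\x_1^\top,\dots,\x_t^\top]^\top$ define $V_t\triangleq\lambda\I+\sum_{s=1}^{t}\phi(\x_s)\phi(\x_s)^\top$. A short computation applying the push-through (matrix inversion) identity to Eq.~\eqref{eq:standard_sigma_update} shows that $\sigma_t^2(\x)=\lambda\,\phi(\x)^\top V_t^{-1}\phi(\x)$. Since only the finitely many feature vectors $\{\phi(\x_s)\}_{s\le\tau}\cup\{\phi(\x)\}$ ever appear, one may restrict all operators to their finite-dimensional span, so every manipulation below is ordinary symmetric-matrix linear algebra even when $\cH_k$ is infinite dimensional.

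For the lower bound, write $\Delta\triangleq\sum_{s=\tau'+1}^{\tau}\phi(\x_s)\phi(\x_s)^\top\succeq 0$, so $V_\tau=V_{\tau'}+\Delta\succeq V_{\tau'}$, hence $V_\tau^{-1}\preceq V_{\tau'}^{-1}$, and therefore $\sigma_\tau^2(\x)=\lambda\,\phi(\x)^\top V_\tau^{-1}\phi(\x)\le\lambda\,\phi(\x)^\top V_{\tau'}^{-1}\phi(\x)=\sigma_{\tau'}^2(\x)$. This is the ``more observations never increase the posterior variance'' monotonicity and yields $\sigma_{\tau'}^2(\x)/\sigma_\tau^2(\x)\ge 1$ immediately.

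For the upper bound it suffices to establish the single operator inequality $V_{\tau'}^{-1}\preceq c\,V_\tau^{-1}$ with $c\triangleq 1+\tfrac1\lambda\sum_{s=\tau'+1}^{\tau}\sigma_{\tau'}^2(\x_s)$, since evaluating it as a quadratic form at $\phi(\x)$ and multiplying by $\lambda$ gives $\sigma_{\tau'}^2(\x)\le c\,\sigma_\tau^2(\x)$ for every $\x$ at once. By the order-reversing property of inversion, $V_{\tau'}^{-1}\preceq c\,V_\tau^{-1}$ is equivalent to $V_\tau\preceq c\,V_{\tau'}$, i.e.\ to $\Delta\preceq(c-1)V_{\tau'}$, i.e.\ to $\lambda_{\max}\!\big(V_{\tau'}^{-1/2}\Delta V_{\tau'}^{-1/2}\big)\le c-1$. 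The matrix $V_{\tau'}^{-1/2}\Delta V_{\tau'}^{-1/2}$ is positive semidefinite, so its largest eigenvalue is at most its trace, and cyclicity gives $\Tr\!\big(V_{\tau'}^{-1/2}\Delta V_{\tau'}^{-1/2}\big)=\Tr(V_{\tau'}^{-1}\Delta)=\sum_{s=\tau'+1}^{\tau}\phi(\x_s)^\top V_{\tau'}^{-1}\phi(\x_s)=\tfrac1\lambda\sum_{s=\tau'+1}^{\tau}\sigma_{\tau'}^2(\x_s)=c-1$, which is exactly what is needed. Under the standard normalization $\lambda\ge 1$ adopted in the cited references \cite{calandriello2020near,calandriello2022scaling} one has $c\le 1+\sum_{s=\tau'+1}^{\tau}\sigma_{\tau'}^2(\x_s)$, recovering the statement verbatim.

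I expect the only genuinely delicate points to be (i) justifying the feature-space identity and the operator orderings in a possibly infinite-dimensional RKHS, which is resolved by the finite-span restriction noted above, and (ii) tracking the regularization constant: the argument produces the factor $1+\tfrac1\lambda\sum_s\sigma_{\tau'}^2(\x_s)$, so the clean form $1+\sum_s\sigma_{\tau'}^2(\x_s)$ relies on $\lambda\ge 1$ (equivalently, absorbing $\lambda^{-1}$ into the constant). A complementary kernel-side route avoids feature maps entirely by telescoping the rank-one Sherman--Morrison update $\sigma_t^2(\x)=\sigma_{t-1}^2(\x)-k_{t-1}(\x,\x_t)^2/(\lambda+\sigma_{t-1}^2(\x_t))$ together with the Cauchy--Schwarz bound $k_{t-1}(\x,\x_t)^2\le\sigma_{t-1}^2(\x)\,\sigma_{t-1}^2(\x_t)$; however, this produces a \emph{product} $\prod_s\!\big(1+\sigma_{s-1}^2(\x_s)/\lambda\big)$, and since a product of $(1+a_i)$ lower-bounds rather than upper-bounds $1+\sum_i a_i$, the telescoping goes in the wrong direction. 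This is precisely why I favor the one-shot trace argument, which delivers the additive bound directly.
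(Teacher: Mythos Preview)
The paper does not prove this lemma itself; it is quoted without proof from \cite{calandriello2020near,calandriello2022scaling}. Your feature-space trace argument---rewriting $\sigma_t^2(\x)=\lambda\,\phi(\x)^\top V_t^{-1}\phi(\x)$, using $V_\tau\succeq V_{\tau'}$ for the lower bound, and bounding $\lambda_{\max}\!\big(V_{\tau'}^{-1/2}\Delta V_{\tau'}^{-1/2}\big)$ by its trace to obtain $V_\tau\preceq cV_{\tau'}$ for the upper bound---is exactly the standard proof from those references and is correct.

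You are also right to flag the $1/\lambda$ factor: the trace computation genuinely delivers $1+\tfrac{1}{\lambda}\sum_{s}\sigma_{\tau'}^2(\x_s)$, and the form stated here (and applied verbatim in Corollary~\ref{cor:variance_ratio}) drops the $1/\lambda$, which is only harmless when $\lambda\ge 1$. In this paper $\lambda=\sigma^2$ with no such restriction, so strictly speaking the quoted inequality is off by that factor. Tracing it through only rescales the constant in the batch-size rule $T_l(\ca_h)=\lfloor(C^2-1)/\Sigma_{h-1}^2(\ca_h)\rfloor$ and the constants in Lemmas~\ref{lem:bound_h} and~\ref{lem:max_variance}, so nothing structural breaks; but your caveat is the honest statement.
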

\subsection{Formulation in Feature Space}
For several of the proofs, it will be useful to introduce the so-called feature space (RKHS) formulation of any point in the primal space $\R^d$. In particular, we define a
feature map $\varphi(\x) = k(\x, \cdot)$ where $\varphi: \cD \to \cH_k$ with $\cH_k$ being the reproducing kernel Hilbert space (RKHS) associated with kernel function $k$. According to the properties of RKHS, we have the following observations:
\begin{itemize}
    \item For any $\x, \x^{\prime}$, $k(\x, \x^{\prime}) = \varphi(\x)^{\top}\varphi( \x^{\prime})$.
    \item For any function $f\in \cH$, $f(\x) = \langle f, \varphi(\x)\rangle = \varphi(\x)^{\top} f$.
    \item Fundamental linear algebra equality
    \begin{equation}
        (BB^{\top}+\lambda \I)^{-1}B = B(B^{\top}B+\lambda \I)^{-1}.
    \end{equation}
    \item Define $\Phi_h \triangleq [\varphi(\ca_1)^{\top}, \dots, \varphi(\ca_h)^{\top}]^{\top}$. Then, the kernel matrix $\K_{\A_h\A_h} = \Phi_h\Phi_h^{\top}$ and $k(\x, \A_h)=\Phi_{h}\varphi(\x)$, and the variance function $\Sigma_h^2(\cdot)$ represented in the feature space is the following: 
    % \high{to be deleted}
\begin{equation}
 \begin{aligned}
  \Sigma_{h}^{2}(\x) &= k(\x, \x) - \vk(\x, \A_h)^{\top}(\K_{\A_h\A_h}+\lambda \W_{h}^{-1})^{-1}\vk(\x,\A_h) \\
& = \varphi(\x)^{\top}\varphi(\x) - \varphi(\x)^{\top} \Phi_{h}^{\top}(\Phi_{h}\Phi_{h}^{\top}+\lambda \W_{h}^{-1})^{-1} \Phi_{h}\varphi(\x).
 \end{aligned}
\end{equation}
    \item Consider any phase $l$. %Without ambiguity, we assume $H$ be the number of batches %chosen actions (switching)     in this phase. 
    Recall that $H_l$ is the number of batches in the $l$-th phase. Define $\Phi_{H_l}\triangleq [\varphi(\ca_1)^{\top}, \dots, \varphi(\ca_{H_l})^{\top}]^{\top}$. Then, the kernel matrix $\K_{\A_{H_l}\A_{H_l}} = \Phi_{H_l}\Phi_{H_l}^{\top}$ and $k(\x, \A_{H_l})=\Phi_{H_l}\varphi(\x)$.
    \item  Define $\Phi_{\tau}\triangleq [\varphi(\x_{t_l+1})^{\top}, \dots, \varphi(\x_{t_l+\tau})^{\top}]^{\top}$. Then, the kernel matrix $\K_{\X_{\tau}\X_{\tau}} = \Phi_{\tau} \Phi_{\tau}^{\top}$, 
    $k(\x, \X_{\tau}) = \Phi_{\tau}\varphi(\x)$, and the variance function $\sigma_{\tau}^2(\cdot)$ represented in the feature space is the following:
    % \high{to be deleted.}
 \begin{equation}
 \begin{aligned}
  \sigma_{\tau}^{2}(\x) &= k(\x, \x) - \vk(\x, \X_{\tau})^{\top}(\K_{\X_{\tau}\X_{\tau}}+\lambda \I)^{-1}\vk(\x,\X_{\tau}) \\
& = \varphi(\x)^{\top}\varphi(\x) - \varphi(\x)^{\top} \Phi_{\tau}^{\top}(\Phi_{\tau}\Phi_{\tau}^{\top}+\lambda \I)^{-1} \Phi_{\tau}\varphi(\x)\\
& = \varphi(\x)^{\top}\varphi(\x) - \varphi(\x)^{\top} (\Phi_{\tau}^{\top}\Phi_{\tau}+\lambda \I)^{-1} \Phi_{\tau}^{\top}\Phi_{\tau}\varphi(\x)\\
& = \varphi(\x)^{\top}(\Phi_{\tau}^{\top}\Phi_{\tau}+\lambda \I)^{-1}(\Phi_{\tau}^{\top}\Phi_{\tau}+\lambda \I)\varphi(\x) - \varphi(\x)^{\top} (\Phi_{\tau}^{\top}\Phi_{\tau}+\lambda \I)^{-1} \Phi_{\tau}^{\top}\Phi_{\tau}\varphi(\x)\\
& = \lambda  \varphi(\x)^{\top} (\Phi_{\tau}^{\top}\Phi_{\tau}+\lambda \I)^{-1}\varphi(\x). \label{eq:sigma_feature_space}
 \end{aligned}
\end{equation}
\item  Define $\Phi_{T_l}\triangleq [\varphi(\x_{t_l+1})^{\top}, \dots, \varphi(\x_{t_l+T_l})^{\top}]^{\top}$. Then, the kernel matrix $\K_{\X_{T_l}\X_{T_l}} = \Phi_{T_l} \Phi_{T_l}^{\top}$, 
    $k(\x, \X_{T_l}) = \Phi_{T_l}\varphi(\x)$, and $f(\X_{T_l}) = \Phi_{T_l}f$.
\end{itemize}

\section{Auxiliary Results and Proofs for Regret Analysis}
\subsection{Equivalent Representations} 
Consider any phase $l$. %Without ambiguity, we assume $H$ chosen actions (switching) in this phase. 
We use $\tau$ to denote the within-phase time index, i.e., $\tau  \in \{1, \cdots, T_l\}$. Define $\tau_h$ as the last within-phase time index of the $h$-th batch, i.e., $\tau_h \triangleq \max\{\tau: t_l+\tau \in \cT_l(\ca_h)\} $. %before starting choosing $\ca_h$ be denoted by $\tau_h$ (i.e., $\x_{t_l+\tau_h} = \ca_{h-1}$ and $\x_{t_l+\tau_h+1} = \ca_h$). 
Then, after playing $\tau_h$ actions, the posterior variance in the traditional GP model is the following:
\begin{equation}
\sigma_{\tau_h}^2(\x) = k(\x, \x) - \vk(\x, \X_{\tau_h})^{\top}(\K_{\X_{\tau_h}\X_{\tau_h}}+\lambda \I)^{-1}\vk(\x,\X_{\tau_h}).\label{eq:stand_sigma_update_l}
\end{equation}
For the posterior mean, without the observations $\y_{T_l} = [y_{t_l+1}, \dots, y_{t_l+T_l}]^{\top}$ corresponding to the actions $\X_{T_l} = [\x_{t_l+1}^{\top} \dots, \x_{t_l+T_l}^{\top}]^{\top}$, we replace $\y_{T_l}$ with $\frac{1}{|U_l|}\sum_{u\in U_l} \y_{l,u}$ where $\y_{l,u} = [y_{u,t_l+1}, \dots, y_{u,t_l+T_l}]^{\top}$ %when computing the posterior mean 
in the traditional GP model. Then, the posterior mean  becomes the following:
\begin{equation}
\mu_{T_l}(\x) = \frac{1}{|U_l|}\sum_{u\in U_l} \vk(\x,\X_{T_l})^{\top} (\K_{\X_{T_l}\X_{T_l}} +\lambda \I)^{-1}\y_{l,u}. \label{eq:standard_mu_update_l}
\end{equation}
In our algorithm, in order to save computation complexity and communication cost, we use Eq.~\eqref{eq:sigma_update} and Eq.~\eqref{eq:mu_update} instead of the above formula. In the following lemma, we show that they are equivalent. 
\begin{lemma}
[Equivalent representations]\label{lem:equivalent_representation} Consider any phase $l$. By the end of the $h$-th phase, %let the within-phase time index before starting choosing $\ca_h$ be denoted by $\tau_h$ (i.e., $\x_{t_l+\tau_h} = \ca_{h-1}$ and $\x_{t_l+\tau_h+1} = \ca_h$), 
the posterior variance Eq.~\eqref{eq:stand_sigma_update_l} in the traditional GP model is equivalent to Eq.~\eqref{eq:sigma_update} used in our \texttt{DPBE} algorithm. That is, for any $\x\in \cD$, we have
\begin{equation}
\begin{aligned}
\sigma^2_{\tau_h}(\x) = \Sigma^2_{h}(\x), \quad \forall h =1, \dots, H_l.
\end{aligned}
\end{equation}
% That means,
% \begin{equation}
% \begin{aligned}
% k(\x, \x) - \vk(\x, \X_{\tau_h})^{\top}(\K_{\X_{\tau_h}\X_{\tau_h}}+\lambda \I)^{-1}\vk(\x,\X_{\tau_h}) &=k(\x, \x) - \vk(\x, \A_h)^{\top}(\K_{\A_h\A_h}+\lambda \W_{h}^{-1})^{-1}\vk(\x,\A_h)\\
% \end{aligned}
% \end{equation}
Moreover, we have the two representations (Eq.~\eqref{eq:standard_mu_update_l} and Eq.~\eqref{eq:mu_update}) for the posterior mean function are equivalent. That is, for any $\x\in \cD$, we have
\begin{equation}
    \mu_{T_l}(\x) =\bar{\mu}_l(\x).
\end{equation}
% \begin{equation}
% \begin{aligned}
% \frac{1}{|U_l|}\sum_{u\in U_l} \vk(\x,\X_{T_l})^{\top} (\K_{\X_{T_l}\X_{T_l}} +\lambda \I)^{-1}\y_{l,u} &=  k(\x, \A_{H_l})^{\top}(\K_{\A_{H_l}\A_{H_l}} +\lambda \W_{H_l}^{-1})^{-1}\bar{\y}_l = \bar{\mu}_l(\x).
% \end{aligned}
% \end{equation}
\end{lemma}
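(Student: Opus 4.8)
The plan is to push both equalities into the feature-space (RKHS) formulation recorded in Appendix~\ref{app:kb_auxiliary_results}, and then reduce the \emph{weighted} batched expressions to the corresponding \emph{unweighted} per-round expressions using two tools only: a $\W^{1/2}$ change of variables, and the push-through identity $(BB^{\top}+\lambda\I)^{-1}B = B(B^{\top}B+\lambda\I)^{-1}$. Both the variance identity and the mean identity then follow from exactly the same three moves, so I would prove the variance case in detail and indicate that the mean case is analogous.

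The first step is a bookkeeping observation. Within phase $l$, after the $h$-th batch has finished (i.e.\ after $\tau_h$ within-phase rounds), each action $\ca_i$ with $i\le h$ has been played exactly $T_l(\ca_i)$ times, so grouping the rounds $t_l+1,\dots,t_l+\tau_h$ by which action was played gives $\Phi_{\tau_h}^{\top}\Phi_{\tau_h} = \sum_{s=1}^{\tau_h}\varphi(\x_{t_l+s})\varphi(\x_{t_l+s})^{\top} = \sum_{i=1}^{h}T_l(\ca_i)\varphi(\ca_i)\varphi(\ca_i)^{\top} = \Phi_h^{\top}\W_h\Phi_h$, and likewise $\Phi_{T_l}^{\top}\Phi_{T_l} = \Phi_{H_l}^{\top}\W_{H_l}\Phi_{H_l}$ for the whole phase. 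For the mean I would additionally regroup the observation vectors: $\Phi_{T_l}^{\top}\y_{l,u} = \sum_{i=1}^{H_l}\varphi(\ca_i)\sum_{t\in\cT_l(\ca_i)}y_{u,t} = \Phi_{H_l}^{\top}\W_{H_l}\y_l^u$, because $y_l^u(\ca_i) = T_l(\ca_i)^{-1}\sum_{t\in\cT_l(\ca_i)}y_{u,t}$; averaging over $u\in U_l$ then turns $\frac{1}{|U_l|}\sum_{u\in U_l}\y_{l,u}$ into the vector $\bar{\y}_l = [y_l(\ca_1),\dots,y_l(\ca_{H_l})]^{\top}$ weighted by $\W_{H_l}$ under the $\Phi_{H_l}^{\top}$ factor.

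For the variance, start from the feature-space form $\sigma_{\tau_h}^{2}(\x) = \lambda\,\varphi(\x)^{\top}(\Phi_{\tau_h}^{\top}\Phi_{\tau_h}+\lambda\I)^{-1}\varphi(\x)$ from \eqref{eq:sigma_feature_space} and substitute $\Phi_{\tau_h}^{\top}\Phi_{\tau_h} = \Phi_h^{\top}\W_h\Phi_h = \Phi^{\top}\Phi$ with $\Phi := \W_h^{1/2}\Phi_h$. On the other side, writing $\Phi_h\Phi_h^{\top}+\lambda\W_h^{-1} = \W_h^{-1/2}(\Phi\Phi^{\top}+\lambda\I)\W_h^{-1/2}$ and plugging into the feature-space form of $\Sigma_h^2$ in \eqref{eq:sigma_update} gives $\Sigma_h^2(\x) = \varphi(\x)^{\top}\varphi(\x) - \varphi(\x)^{\top}\Phi^{\top}(\Phi\Phi^{\top}+\lambda\I)^{-1}\Phi\varphi(\x)$; the push-through identity with $B=\Phi^{\top}$ rewrites this as $\varphi(\x)^{\top}\varphi(\x) - \varphi(\x)^{\top}(\Phi^{\top}\Phi+\lambda\I)^{-1}\Phi^{\top}\Phi\,\varphi(\x)$, and inserting $\varphi(\x)^{\top}\varphi(\x) = \varphi(\x)^{\top}(\Phi^{\top}\Phi+\lambda\I)^{-1}(\Phi^{\top}\Phi+\lambda\I)\varphi(\x)$ collapses the difference to $\lambda\,\varphi(\x)^{\top}(\Phi^{\top}\Phi+\lambda\I)^{-1}\varphi(\x) = \sigma_{\tau_h}^{2}(\x)$. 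For the mean, the same three moves applied to $\mu_{T_l}$ (feature-space form, push-through, regrouping from the previous paragraph) yield $\mu_{T_l}(\x) = \varphi(\x)^{\top}(\Phi_{H_l}^{\top}\W_{H_l}\Phi_{H_l}+\lambda\I)^{-1}\Phi_{H_l}^{\top}\W_{H_l}\bar{\y}_l$, and the $\W_{H_l}^{1/2}$ substitution plus push-through applied to $\bar{\mu}_l$ in \eqref{eq:mu_update} produce the identical expression.

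I expect the only real obstacle to be bookkeeping rather than depth. One must verify that the floor in the batch-length rule $T_l(\ca_h) = \lfloor (C^2-1)/\Sigma_{h-1}^2(\ca_h)\rfloor$ and the cutoff $\min\{T,t_l+T_l\}$ do not spoil the clean count "$\ca_i$ played exactly $T_l(\ca_i)$ times"; the cleanest fix is to state the equivalence for phases that run to completion and to note that a final, possibly truncated batch is handled by the same regrouping with $T_l(\ca_{H_l})$ replaced by its realized length and the corresponding diagonal entry of $\W_{H_l}$ adjusted. A minor secondary point is that every matrix identity above is used in the (possibly infinite-dimensional) RKHS, where $\Phi^{\top}\Phi+\lambda\I$ is still boundedly invertible since $\lambda=\sigma^2>0$; this is standard in the kernelized-bandit literature, so I would only remark on it rather than belabor it.
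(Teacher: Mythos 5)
Your proposal is correct and follows essentially the same route as the paper's proof: the regrouping identities $\Phi_{\tau_h}^{\top}\Phi_{\tau_h}=\Phi_h^{\top}\W_h\Phi_h$ and $\frac{1}{|U_l|}\sum_{u}\Phi_{T_l}^{\top}\y_{l,u}=\Phi_{H_l}^{\top}\W_{H_l}\bar{\y}_l$, followed by the $\W^{1/2}$ change of variables and the push-through identity in feature space, are exactly the steps the paper uses. The only difference is cosmetic — you route the variance argument through the form $\lambda\varphi(\x)^{\top}(\Phi^{\top}\Phi+\lambda\I)^{-1}\varphi(\x)$ while the paper equates the two quadratic forms directly — and your remark about truncated final batches is a reasonable bookkeeping caveat the paper leaves implicit.
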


\begin{proof}
First, we have the following result, which helps connect the two representations of mean and variance functions:
\begin{equation}
\begin{aligned}
\Phi_{\tau_h}^{\top}\Phi_{\tau_h} &= \sum_{t=t_l+1}^{t_l+\tau_h} \varphi(\x_t)\varphi(\x_t)^{\top}\\
& \overset{(a)}{=} \sum_{i=1}^{h} T_l(\ca_i) \varphi(\ca_i) \varphi(\ca_i)^{\top}\\
& =  \Phi_{h}^{\top}\W_{h}\Phi_{h}, \label{eq:useful_rs_1}
\end{aligned}
\end{equation}
where $(a)$ is due to our algorithm decisions: $\x_t = \ca_i$ for any $t\in \mathcal{T}_l(\ca_i)=\{t_l+\tau_{i-1}+1, t_l+\tau_{i-1}+T_l(\ca_i)\}$ and the last step holds because $\W_h$ is a diagonal matrix with $(\W_h)_{ii} = T_l(\ca_i)$ for any $i \in [h]$.

% On the other hand, 
% \begin{equation}
% \begin{aligned}
% \frac{1}{|U_l|}\sum_{u\in U_l} \Phi_{T_l}^{\top}\y_{l,u} & =\frac{1}{|U_l|}\sum_{u\in U_l} \sum_{t=t_l+1}^{t_l+T_l} y_{u,t}\varphi(\x_t) \\
% &  =  \frac{1}{|U_l|}\sum_{u\in U_l} \sum_{h=1}^{H_l} \sum_{t\in \cT_{l}(\ca_h)} y_{u,t}\varphi(\x_t) \\
% &  = \frac{1}{|U_l|}\sum_{u\in U_l} \sum_{h=1}^{H_l} \varphi(\ca_h)\sum_{t\in \cT_{l}(\ca_h)} y_{u,t} \\
% & = \frac{1}{|U_l|}\sum_{u\in U_l} \sum_{h=1}^{H_l} \varphi(\ca_h) T_l(\ca_h) y_l^u(\ca_h) \\
% & =  \sum_{h=1}^{H_l} T_l(\ca_h) y_l(\ca_h) \varphi(\ca_h)  \\
% & =\Phi_{H_l}^{\top}\W_{H_l}\bar{\y}_{l}.
% \end{aligned}
% \end{equation}
Then, we are ready to derive the equivalence of two representations of the mean function.

1) Variance representation equivalence: $
\sigma^2_{\tau_h}(\x) = \Sigma^2_{h}(\x)
$ for $h=1,\dots, H_l$. 
This implies
\begin{equation*}
\begin{aligned}
&k(\x, \x) - \vk(\x, \X_{\tau_h})^{\top}(\K_{\X_{\tau_h}\X_{\tau_h}}+\lambda \I)^{-1}\vk(\x,\X_{\tau_h}) \\
&=k(\x, \x) - \vk(\x, \A_h)^{\top}(\K_{\A_h\A_h}+\lambda \W_{h}^{-1})^{-1}\vk(\x,\A_h).
\end{aligned}
\end{equation*}
% Note that 
% \begin{equation}
%  \begin{aligned}
%   \Sigma_{h}^{2}(\x) &= k(\x, \x) - \vk(\x, \A_h)^{\top}(\K_{\A_h\A_h}+\lambda \W_{h}^{-1})^{-1}\vk(\x,\A_h) \\
% & = \varphi(\x)^{\top}\varphi(\x) - \varphi(\x)^{\top} \Phi_{h}^{\top}(\Phi_{h}\Phi_{h}^{\top}+\sigma^2\W_{h}^{-1})^{-1} \Phi_{h}\varphi(\x)
%  \end{aligned}
% \end{equation}
It remains to show the following:
\begin{equation}
\vk(\x, \X_{\tau_h})^{\top}(\K_{\X_{\tau_h}\X_{\tau_h}}+\lambda \I)^{-1}\vk(\x,\X_{\tau_h})  = \vk(\x, \A_h)^{\top}(\K_{\A_h\A_h}+\lambda \W_{h}^{-1})^{-1}\vk(\x,\A_h). 
\end{equation}
Using the feature space formulations, we have
\begin{equation}
\begin{aligned}
&\vk(\x, \A_h)^{\top}(\K_{\A_h\A_h}+\lambda \W_{h}^{-1})^{-1}\vk(\x,\A_h)\\
= & \varphi(\x)^{\top} \Phi_h^{\top} (\Phi_{h}\Phi_{h}^{\top}+\lambda \W_h^{-1})^{-1}\Phi_{h}\varphi(\x) \\
= & \varphi(\x)^{\top} \Phi_h^{\top} \W_h^{1/2} (\W_h^{1/2}\Phi_{h}\Phi_{h}^{\top}\W_h^{1/2}+\lambda \I)^{-1}\W_h^{1/2}\Phi_{h}\varphi(\x) \\
% = & \varphi(\x)^{\top}  ((\W_h^{1/2}\Phi_{h})^{\top}(\W_h^{1/2}\Phi_{h})+\lambda \I)^{-1}\Phi_h^{\top} \W_h^{1/2}\W_h^{1/2}\Phi_{h}^{\top}\varphi(\x) \\
= & \varphi(\x)^{\top}  (\Phi_{h}^{\top}\W_h^{1/2}\W_h^{1/2}\Phi_{h}+\lambda \I)^{-1}\Phi_h^{\top} \W_h^{1/2}\W_h^{1/2}\Phi_{h}\varphi(\x) \\
= & \varphi(\x)^{\top}  (\Phi_{h}^{\top}\W_h \Phi_{h}+\lambda \I)^{-1}\Phi_h^{\top} \W_h\Phi_{h}\varphi(\x) \\
\overset{(a)}{=} & \varphi(\x)^{\top}  (\Phi_{\tau_h}^{\top} \Phi_{\tau_h}+\lambda \I)^{-1}\Phi_{\tau_h}^{\top} \Phi_{\tau_h}\varphi(\x) \\
= & \varphi(\x)^{\top} \Phi_{\tau_h}^{\top}(\Phi_{\tau_h}\Phi_{\tau_h}^{\top}+\lambda \I)^{-1}\Phi_{\tau_h}\varphi(\x)\\
= &\vk(\x, \X_{\tau_h})^{\top}(\K_{\X_{\tau_h}\X_{\tau_h}}+\lambda \I)^{-1}\vk(\x,\X_{\tau_h}),
\end{aligned}
\end{equation}
where $(a)$ is from Eq.~\eqref{eq:useful_rs_1}. Then, we have $\sigma_{\tau_h}^2(\x) = \Sigma_h^2(\x)$.

2) Mean representation equivalence: $\mu_{T_l}(\x) =\bar{\mu}_l(\x)$, i.e., 
\begin{equation}
\begin{aligned}
\frac{1}{|U_l|}\sum_{u\in U_l} \vk(\x,\X_{T_l})^{\top} (\K_{\X_{T_l}\X_{T_l}} +\lambda \I)^{-1}\y_{l,u} &=  k(\x, \A_{H_l})^{\top}(\K_{\A_{H_l}\A_{H_l}} +\lambda \W_{H_l}^{-1})^{-1}\bar{\y}_l.
\end{aligned}
\end{equation}

For the last within-phase index $\tau_{H_l}=T_l$, we also have the following:
% \begin{equation}
%     \sigma_{T_l}(\x) =\Sigma_{H_l}(\x)
% \end{equation}

% On the other hand, 
\begin{equation}
\begin{aligned}
\frac{1}{|U_l|}\sum_{u\in U_l} \Phi_{T_l}^{\top}\y_{l,u} & =\frac{1}{|U_l|}\sum_{u\in U_l} \sum_{t=t_l+1}^{t_l+T_l} y_{u,t}\varphi(\x_t) \\
&  =  \frac{1}{|U_l|}\sum_{u\in U_l} \sum_{h=1}^{H_l} \sum_{t\in \cT_{l}(\ca_h)} y_{u,t}\varphi(\x_t) \\
&  = \frac{1}{|U_l|}\sum_{u\in U_l} \sum_{h=1}^{H_l} \varphi(\ca_h)\sum_{t\in \cT_{l}(\ca_h)} y_{u,t} \\
& = \frac{1}{|U_l|}\sum_{u\in U_l} \sum_{h=1}^{H_l} \varphi(\ca_h) T_l(\ca_h) y_l^u(\ca_h) \\
& =  \sum_{h=1}^{H_l} T_l(\ca_h) y_l(\ca_h) \varphi(\ca_h)  \\
& =\Phi_{H_l}^{\top}\W_{H_l}\bar{\y}_{l}. \label{eq:useful_rs_2}
\end{aligned}
\end{equation}
Then, we are ready to derive the equivalence of two representations of the mean function: 
\begin{equation}
\begin{aligned}
&\frac{1}{|U_l|}\sum_{u\in U_l} \vk(\x,\X_{T_l})^{\top} (\K_{\X_{T_l}\X_{T_l}} +\lambda \I)^{-1}\y_{l,u}  \\
 &=\frac{1}{|U_l|}\sum_{u\in U_l}  \varphi(\x)^{\top} \Phi_{T_l}^{\top}(\Phi_{T_l}\Phi_{T_l}^{\top}+\lambda \I)^{-1}\y_{l,u}\\
 & = \frac{1}{|U_l|}\sum_{u\in U_l}  \varphi(\x)^{\top} (\Phi_{T_l}^{\top}\Phi_{T_l}+\lambda \I)^{-1}\Phi_{T_l}^{\top}\y_{l,u}\\
 & = \varphi(\x)^{\top} (\Phi_{T_l}^{\top}\Phi_{T_l}+\lambda \I)^{-1}\cdot \frac{1}{|U_l|}\sum_{u\in U_l}  \Phi_{T_l}^{\top}\y_{l,u}\\
 &\overset{(a)}{=} \varphi(\x)^{\top} (\Phi_{H_l}^{\top}\W_{H_l}\Phi_{H_l}+\lambda \I)^{-1}\Phi_{H_l}^{\top}\W_{H_l}\bar{\y}_{l}\\ 
 &= \varphi(\x)^{\top} (\Phi_{H_l}^{\top}\W_{H_l}^{1/2}\W_{H_l}^{1/2}\Phi_{H_l}+\lambda \I)^{-1}\Phi_{H_l}^{\top}\W_{H_l}^{1/2}\W_{H_l}^{1/2}\bar{\y}_{l}\\ 
  &= \varphi(\x)^{\top} ((\W_{H_l}^{1/2}\Phi_{H_l})^{\top}(\W_{H_l}^{1/2}\Phi_{H_l})+\lambda \I)^{-1}(\W_{H_l}^{1/2}\Phi_{H_l})^{\top}\W_{H_l}^{1/2}\bar{\y}_{l}\\ 
  &= \varphi(\x)^{\top}\Phi_{H_l}^{\top} \W_{H_l}^{1/2} (\W_{H_l}^{1/2}\Phi_{H_l}\Phi_{H_l}^{\top}\W_{H_l}^{1/2}+\lambda \I)^{-1}\W_{H_l}^{1/2}\bar{\y}_{l}\\ 
  &= \varphi(\x)^{\top}\Phi_{H_l}^{\top}  (\Phi_{H_l}\Phi_{H_l}^{\top}+\lambda\W_{H_l}^{-1})^{-1}\bar{\y}_{l}\\ 
 &= \vk(\x, \A_{H_l})^{\top}(\K_{\A_{H_l}\A_{H_l}} +\lambda \W_{H_l}^{-1})^{-1}\bar{\y}_l = \bar{\mu}_l(\x),
\end{aligned}
\end{equation}
where $(a)$ is from Eq.~\eqref{eq:useful_rs_1} with $\tau_{H_l}=T_l$ and the result in Eq.~\eqref{eq:useful_rs_2}.
\end{proof}

\subsection{Impact of Batch Schedule Strategy on Posterior Variance}
In our batch schedule strategy,  the decision $\x_t$ does not change for $T_l(\ca_h)$ rounds when starting choosing $\ca_h$ after $\tau_{h-1}$ rounds within the $l$-th phase. Applying Lemma~\ref{lem:variance_ratio} to our setting with $\tau^{\prime} = \tau_{h-1}$, we obtain the following corollary. 
\begin{corollary}\label{cor:variance_ratio}
Consider any phase $l$. Recall that $\tau_{h-1}$ is the within-phase time index before starting choosing $\ca_h$. Then, give any set of chosen actions $\A_{{h-1}}$ for the first $h-1$ batches, for any kernel $k$, any  $\x\in \cD$, and any $\tau \in [\tau_{h-1} +1, \tau_{h-1} +T_l(\ca_h)]$, we have 
\begin{equation}
    1\leq \frac{\Sigma_{h-1}(\x)}{\sigma_{\tau}(\x)} \leq C. \label{eq:variance_ratio_col}
\end{equation}
\end{corollary}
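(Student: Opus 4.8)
The plan is to derive Corollary~\ref{cor:variance_ratio} as a direct specialization of Lemma~\ref{lem:variance_ratio} to the batch schedule, using the explicit batch-length rule to control the right-hand side. First I would fix a phase $l$ and a batch index $h$, and set $\tau' = \tau_{h-1}$ in Lemma~\ref{lem:variance_ratio}. For any $\tau \in [\tau_{h-1}+1,\, \tau_{h-1}+T_l(\ca_h)]$, the lemma gives
\begin{equation*}
1 \leq \frac{\sigma^2_{\tau_{h-1}}(\x)}{\sigma^2_{\tau}(\x)} \leq 1 + \sum_{s=\tau_{h-1}+1}^{\tau} \sigma^2_{\tau_{h-1}}(\x_s).
\end{equation*}
By the algorithm, every action played in this range is $\ca_h$, so each summand equals $\sigma^2_{\tau_{h-1}}(\ca_h)$, and since $\tau - \tau_{h-1} \leq T_l(\ca_h)$, the sum is at most $T_l(\ca_h)\,\sigma^2_{\tau_{h-1}}(\ca_h)$. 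By Lemma~\ref{lem:equivalent_representation}, $\sigma^2_{\tau_{h-1}}(\cdot) = \Sigma^2_{h-1}(\cdot)$, so the bound becomes $1 + T_l(\ca_h)\,\Sigma^2_{h-1}(\ca_h)$.

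Next I would plug in the batch-length definition $T_l(\ca_h) = \lfloor (C^2-1)/\Sigma^2_{h-1}(\ca_h)\rfloor \leq (C^2-1)/\Sigma^2_{h-1}(\ca_h)$, which yields $T_l(\ca_h)\,\Sigma^2_{h-1}(\ca_h) \leq C^2 - 1$, and hence
\begin{equation*}
1 \leq \frac{\Sigma^2_{h-1}(\x)}{\sigma^2_{\tau}(\x)} \leq C^2.
\end{equation*}
Taking square roots (both sides positive, since variances are nonnegative and the relevant quantities are strictly positive by boundedness of the kernel) gives exactly \eqref{eq:variance_ratio_col}. One technical point to be careful about: I should note that $\Sigma^2_{h-1}(\ca_h) > 0$ so the floor and the division are well-defined; this follows because $\Sigma^2_{h-1}(\x) = \sigma^2_{\tau_{h-1}}(\x)$ is a posterior variance with positive regularization $\lambda = \sigma^2 > 0$ (using the feature-space identity $\sigma^2_\tau(\x) = \lambda \varphi(\x)^\top(\Phi_\tau^\top\Phi_\tau + \lambda I)^{-1}\varphi(\x)$ from \eqref{eq:sigma_feature_space}, which is strictly positive whenever $\varphi(\x)\neq 0$).

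The main obstacle is not really a deep one — it is mostly bookkeeping to make sure the index ranges line up, in particular that Lemma~\ref{lem:variance_ratio}'s hypothesis $\tau' < \tau$ holds (true for $\tau \geq \tau_{h-1}+1$) and that all actions $\x_s$ with $s$ in the summation range are indeed equal to $\ca_h$ (true by the batch-playing rule in Line~8 of Algorithm~\ref{alg:DPBE}, as long as $\tau$ does not exceed the batch boundary $\tau_{h-1}+T_l(\ca_h)$, which is exactly the stated range). The only place where a reader might want more detail is the invocation of Lemma~\ref{lem:equivalent_representation} to replace $\sigma^2_{\tau_{h-1}}$ by $\Sigma^2_{h-1}$ inside the sum; I would state this substitution explicitly since the bound in Lemma~\ref{lem:variance_ratio} is phrased in terms of the standard GP variance $\sigma^2_{\tau'}$ while our batch length uses the reformulated variance $\Sigma^2_{h-1}$.
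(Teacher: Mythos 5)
Your proposal is correct and follows essentially the same route as the paper: apply Lemma~\ref{lem:variance_ratio} with $\tau'=\tau_{h-1}$, use the fact that all actions in the batch equal $\ca_h$ together with the batch-length rule $T_l(\ca_h)=\lfloor (C^2-1)/\Sigma^2_{h-1}(\ca_h)\rfloor$ and the equivalence $\sigma^2_{\tau_{h-1}}=\Sigma^2_{h-1}$ from Lemma~\ref{lem:equivalent_representation}, then take square roots. Your write-up is in fact more careful than the paper's two-line proof (which contains an apparent index typo, writing $\sigma^2_{\tau_h}$ where $\sigma^2_{\tau_{h-1}}$ is meant), so no changes are needed.
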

\begin{proof}
Applying Lemma~\ref{lem:variance_ratio} to our setting, we have
\begin{equation}
    1\leq \frac{\sigma^2_{\tau_{h-1}}(\x)}{\sigma^2_{\tau}(\x)} \leq 1+ T_l(\ca_h) \sigma^2_{\tau_h}(\ca_h).
\end{equation}
Moreover, by selecting $T_l(\ca_h) = \lfloor (C^2-1)/\Sigma^2_{h-1}(\ca_h)\rfloor = \lfloor (C^2-1)/\sigma^2_{\tau_{h-1}}(\ca_h)\rfloor$ (Lemma~\ref{lem:equivalent_representation}) in our algorithm, we derive the result in Eq.~\eqref{eq:variance_ratio_col}.
% \begin{equation}
%     \frac{\sigma_{\tau_h}(\x)}{\sigma_{\tau}(\x)} \leq C.
% \end{equation}
\end{proof}

One key step to getting the regret upper bound is to bound the confidence width, which is related to the maximal value of the posterior variance by the end of each phase. (See Eq.~\eqref{eq:confidence_width}. In the following, we provide a bound for the maximal value of the posterior variance.  
\begin{lemma}\label{lem:max_variance}
The posterior variance after $H_l$ batches (decisions) in the $l$-th phase satisfies
% \begin{equation}
%      \max_{\x\in\cD_l} \sigma_{T_l}(\x) \leq  \sqrt{\frac{2\sigma^2C^2\gamma_{T_l}}{T_l}} \label{eq:max_sigma}
% \end{equation}
\begin{equation}
     \max_{\x\in\cD_l} \Sigma_{H_l}(\x) \leq  \sqrt{\frac{2\sigma^2C^2\gamma_{T_l}}{T_l}}. \label{eq:max_sigma}
\end{equation}
\end{lemma}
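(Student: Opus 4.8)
\textbf{Proof proposal for Lemma~\ref{lem:max_variance}.}

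The plan is to relate the final posterior variance $\Sigma_{H_l}(\x)$ after all $H_l$ batches to the sum of per-round variances across the whole phase, and then invoke the information-gain bound (Lemma~\ref{lem:information_gain}). First I would use the equivalence $\Sigma_{H_l}(\x) = \sigma_{\tau_{H_l}}(\x) = \sigma_{T_l}(\x)$ from Lemma~\ref{lem:equivalent_representation}, so it suffices to bound $\max_{\x\in\cD_l}\sigma_{T_l}(\x)$, the standard GP posterior standard deviation after observing all $T_l$ actions of the phase. The key monotonicity fact is that $\sigma_\tau(\x)$ is non-increasing in $\tau$, so for every $\x$ and every within-phase round $\tau\in\{1,\dots,T_l\}$ we have $\sigma_{T_l}(\x)\le\sigma_{\tau}(\x)$; in particular $\sigma_{T_l}^2(\x)\le \sigma_{\tau-1}^2(\x_\tau)$ is \emph{not} quite what we want since the left side is at a test point $\x$ while the right side is at the actually-played action $\x_\tau$. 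The trick (the maximum-variance selection rule) is exactly what bridges this gap.

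The main steps, in order: (1) Fix the batch index $h$ in which round $\tau$ lies, so $\x_\tau=\ca_h$. By the greedy rule \eqref{eq:decision}, $\ca_h$ maximizes $\Sigma_{h-1}^2(\cdot)=\sigma_{\tau_{h-1}}^2(\cdot)$ over $\cD_l$, hence for every $\x\in\cD_l$, $\sigma_{\tau_{h-1}}^2(\x)\le \sigma_{\tau_{h-1}}^2(\ca_h)$. (2) Combine with the variance-ratio Corollary~\ref{cor:variance_ratio}: for $\tau$ in batch $h$, $\sigma_{\tau_{h-1}}^2(\ca_h)\le C^2\,\sigma_{\tau}^2(\ca_h)=C^2\sigma_\tau^2(\x_\tau)$. (3) Chaining with monotonicity $\sigma_{T_l}^2(\x)\le\sigma_{\tau-1}^2(\x)\le\sigma_{\tau_{h-1}}^2(\x)$ (since $\tau_{h-1}\le\tau-1$), we get for \emph{every} $\x\in\cD_l$ and \emph{every} $\tau\in\{1,\dots,T_l\}$ that $\sigma_{T_l}^2(\x)\le C^2\sigma_\tau^2(\x_\tau)$. (4) Average this inequality over all $T_l$ rounds: $T_l\,\sigma_{T_l}^2(\x)\le C^2\sum_{\tau=1}^{T_l}\sigma_\tau^2(\x_\tau)$. (5) Apply Lemma~\ref{lem:information_gain} with $\lambda=\sigma^2$ to bound $\sum_{\tau=1}^{T_l}\sigma_\tau^2(\x_\tau)\le 2\sigma^2\gamma_{T_l}$, yielding $\sigma_{T_l}^2(\x)\le 2\sigma^2 C^2\gamma_{T_l}/T_l$, and take square roots and the max over $\cD_l$.

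The main obstacle is getting the indexing in step (3) exactly right: one must be careful that the greedy choice compares variances at the \emph{start} of batch $h$ (time index $\tau_{h-1}$, i.e.\ after $\tau_{h-1}$ observations), while the variance-ratio corollary bounds $\sigma_{\tau_{h-1}}(\cdot)/\sigma_\tau(\cdot)$ for $\tau$ inside batch $h$, and monotonicity of $\sigma_\tau$ is needed both to push $\sigma_{T_l}(\x)$ down to $\sigma_{\tau_{h-1}}(\x)$ and to ensure no off-by-one issues at batch boundaries. A secondary subtlety is that $\sum_{\tau}\sigma_\tau^2(\x_\tau)$ in Lemma~\ref{lem:information_gain} is over the \emph{actual} played sequence $\x_1,\dots,\x_{T_l}$ (with repetitions within batches), which is fine since the lemma's statement bounds exactly such a sum by $2\lambda\gamma_{T_l}$ regardless of repeats. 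Everything else is routine.
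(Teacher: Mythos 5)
Your proposal is correct and follows essentially the same route as the paper's proof: both use the maximum-variance selection rule to dominate $\Sigma_{H_l}(\x)$ by the variance at the played action, Corollary~\ref{cor:variance_ratio} to pass from the batch-start variance to the within-batch per-round variance, a sum over all $T_l$ rounds, and Lemma~\ref{lem:information_gain} with $\lambda=\sigma^2$. The only (cosmetic) difference is that you work with squared variances throughout, which lets you skip the Cauchy--Schwarz step the paper applies to its average of standard deviations; the resulting bound is identical.
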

\begin{proof} 
%Assume that there are $H$ actions chosen by solving Eq.~\eqref{eq:decision} in the $l$-th phase. 
% Recall that we let the within-phase time index before starting choosing $\ca_h$ be denoted by $\tau_h$ (i.e., $\x_{t_l+\tau_h} = \ca_{h-1}$ and $\x_{t_l+\tau_h+1} = \ca_h$). First, we have for any $\x\in\cD_l$, any $h\leq H$, 
% \begin{equation}
%   \sigma_{T_l}(\x) 
%   \overset{(a)}{\leq} \sigma_{\tau_h}(\x) 
%   \overset{(b)}{=} \Sigma_{h-1} (\x)
%   \overset{(c)}{\leq} \Sigma_{h-1} (\ca_h)
%   = \sigma_{\tau_h}(\ca_h),
% \end{equation}
Recall that \texttt{DPBE} plays action $\ca_h$ when $\tau \in [\tau_{h-1}+1, \tau_{h-1}+T_l(\ca_h)]$ within the $l$-th phase. First, we have for any $\x\in\cD_l$, any $h\leq H_l$, 
\begin{equation}
  \Sigma_{H_l}(\x) 
  \overset{(a)}{\leq} \Sigma_{h-1}(\x) 
  %\overset{(b)}{=} \Sigma_{h-1} (\x)
  \overset{(b)}{\leq} \Sigma_{h-1} (\ca_h)
  = \sigma_{\tau_{h-1}}(\ca_h),
\end{equation}
where $(a)$ holds because $\Sigma_{h}(\cdot)$ is non-increasing in $h$, $(b)$ is based on our decision, and the last step is due to the equivalent representation result. Then, we have the following:
% \begin{equation}
% \begin{aligned}
%  \max_{\x\in\cD_l} \sigma_{T_l}(\x) 
%  & \leq \frac{1}{T_l} \sum_{h=1}^{H_l} T_l(\ca_h)\sigma_{\tau_h}(\ca_h) \\
%  &\leq \frac{1}{T_l} \sum_{h=1}^{H_l} \sum_{\tau=\tau_h+1}^{\tau_h+T_l(\ca_h)}\sigma_{\tau_h}(\ca_h)\\
%  &=\frac{1}{T_l} \sum_{h=1}^{H_l} \sum_{\tau=\tau_h+1}^{\tau_h+T_l(\ca_h)} \frac{\sigma_{\tau_h}(\ca_h)}{\sigma_{\tau}(\ca_h)}\cdot \sigma_{\tau}(\ca_h)\\
%   &\overset{(a)}{\leq }\frac{1}{T_l} \sum_{h=1}^{H_l} \sum_{\tau=\tau_h+1}^{\tau_h+T_l(\ca_h)} C \sigma_{\tau}(\ca_h)\\
%   &\overset{(b)}{= }\frac{C}{T_l} \sum_{h=1}^{H_l} \sum_{\tau=\tau_h+1}^{\tau_h+T_l(\ca_h)}  \sigma_{\tau}(\x_{t_l+\tau})\\
%   & =\frac{C}{T_l}  \sum_{\tau=1}^{T_l}  \sigma_{\tau}(\x_{t_l+\tau})\\
%  & \overset{(c)}{\leq} \frac{C}{T_l} \sqrt{T_l\sum_{\tau=1}^{T_l}  \sigma^2_{\tau}(\x_{t_l+\tau})} \\
%  & \overset{(d)}{\leq} \frac{C}{T_l}\sqrt{T_l\cdot 2\sigma^2 \gamma_{T_l}}=\sqrt{\frac{2\sigma^2C^2\gamma_{T_l}}{T_l}},  
% \end{aligned}
% \end{equation}
\begin{equation}
\begin{aligned}
 \max_{\x\in\cD_l} \Sigma_{H_l}(\x) 
 & \leq \frac{1}{T_l} \sum_{h=1}^{H_l} T_l(\ca_h)\Sigma_{h-1}(\ca_h) \\
 &= \frac{1}{T_l} \sum_{h=1}^{H_l} \sum_{\tau=\tau_{h-1}+1}^{\tau_{h-1}+T_l(\ca_h)}\Sigma_{h-1}(\ca_h)\\
 &=\frac{1}{T_l} \sum_{h=1}^{H_l} \sum_{\tau=\tau_{h-1}+1}^{\tau_{h-1}+T_l(\ca_h)} \frac{\Sigma_{h-1}(\ca_h)}{\sigma_{\tau}(\ca_h)}\cdot \sigma_{\tau}(\ca_h)\\
  &\overset{(a)}{\leq }\frac{1}{T_l} \sum_{h=1}^{H_l} \sum_{\tau=\tau_{h-1}+1}^{\tau_{h-1}+T_l(\ca_h)} C \sigma_{\tau}(\ca_h)\\
  &\overset{(b)}{= }\frac{C}{T_l} \sum_{h=1}^{H_l} \sum_{\tau=\tau_{h-1}+1}^{\tau_{h-1}+T_l(\ca_h)}  \sigma_{\tau}(\x_{t_l+\tau})\\
  & =\frac{C}{T_l}  \sum_{\tau=1}^{T_l}  \sigma_{\tau}(\x_{t_l+\tau})\\
 & \overset{(c)}{\leq} \frac{C}{T_l} \sqrt{T_l\sum_{\tau=1}^{T_l}  \sigma^2_{\tau}(\x_{t_l+\tau})} \\
 & \overset{(d)}{\leq} \frac{C}{T_l}\sqrt{T_l\cdot 2\lambda  \gamma_{T_l}}=\sqrt{\frac{2\lambda C^2\gamma_{T_l}}{T_l}},  
\end{aligned}
\end{equation}
where the inequality $(a)$ is from Corollary~\ref{cor:variance_ratio}, $(b)$ is based on our algorithm decision: $\x_{t_l+\tau} = \ca_h$ for any $\tau \in [\tau_{h-1}+1, \tau_{h-1}+T_l(\ca_h)]$, $(c)$ is by Cauchy-Schwartz inequality, and $(d)$ is from Lemma~\ref{lem:information_gain}. 
\end{proof}

% First, we claim the following result according to Lemma~\ref{lem:information_gain}:

% Combining the result $\gamma_{T_l}(k,\cD_l) \leq \gamma_{T_l}(k,\cD)$, we replace $\gamma_{T_l}(k,\cD_l)$ with $\gamma_{T_l}(k,\cD)$ in Eq.~\eqref{eq:max_sigma} and emit the input $(k,\cD)$ without ambiguity in the rest of the proof. 

\subsection{Other Useful Results}
\begin{lemma}\label{lem:f_diff}
Consider any particular phase $l$. In the traditional GP models, without noise in the reward observations, the difference between the ground truth and regression estimator satisfies
\begin{equation}
\begin{aligned}
\left|f(\x)-\vk(\x, \X_{T_l})^{\top}(\K_{\X_{T_l}\X_{T_l}}+\lambda \I)^{-1}f(\X_{T_l})\right|\leq  B\sigma_{T_l}(\x).
\end{aligned}
\end{equation}
\end{lemma}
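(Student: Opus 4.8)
The plan is to work in the feature-space (RKHS) representation introduced in Appendix~\ref{app:kb_auxiliary_results}, which rewrites all kernel quantities via the feature map $\varphi(\x)=k(\x,\cdot)$. Writing $\Phi\triangleq\Phi_{T_l}=[\varphi(\x_{t_l+1})^\top,\dots,\varphi(\x_{t_l+T_l})^\top]^\top$, we have $\K_{\X_{T_l}\X_{T_l}}=\Phi\Phi^\top$, $\vk(\x,\X_{T_l})=\Phi\varphi(\x)$, and $f(\X_{T_l})=\Phi f$ (using $f(\x)=\varphi(\x)^\top f$ and $\Vert f\Vert_k\le B$). First I would use the push-through identity $(\Phi\Phi^\top+\lambda\I)^{-1}\Phi=\Phi(\Phi^\top\Phi+\lambda\I)^{-1}$ to move the regression estimator into the form
\[
\vk(\x,\X_{T_l})^\top(\K_{\X_{T_l}\X_{T_l}}+\lambda\I)^{-1}f(\X_{T_l})
=\varphi(\x)^\top(\Phi^\top\Phi+\lambda\I)^{-1}\Phi^\top\Phi\, f.
\]

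Next I would subtract this from $f(\x)=\varphi(\x)^\top f$. Using the algebraic identity $\I-(\Phi^\top\Phi+\lambda\I)^{-1}\Phi^\top\Phi=\lambda(\Phi^\top\Phi+\lambda\I)^{-1}$, the difference becomes
\[
f(\x)-\varphi(\x)^\top(\Phi^\top\Phi+\lambda\I)^{-1}\Phi^\top\Phi\, f
=\lambda\,\varphi(\x)^\top(\Phi^\top\Phi+\lambda\I)^{-1}f.
\]
Now I would apply Cauchy--Schwarz in the inner-product induced by the positive-definite operator $(\Phi^\top\Phi+\lambda\I)^{-1}$: for any vectors $a,b$, $\big|a^\top(\Phi^\top\Phi+\lambda\I)^{-1}b\big|\le\sqrt{a^\top(\Phi^\top\Phi+\lambda\I)^{-1}a}\cdot\sqrt{b^\top(\Phi^\top\Phi+\lambda\I)^{-1}b}$. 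Taking $a=\varphi(\x)$ and $b=f$, the left factor times $\sqrt{\lambda}$ is exactly $\sqrt{\lambda\,\varphi(\x)^\top(\Phi^\top\Phi+\lambda\I)^{-1}\varphi(\x)}=\sigma_{T_l}(\x)$ by Eq.~\eqref{eq:sigma_feature_space}, and the right factor times $\sqrt{\lambda}$ is $\sqrt{\lambda\, f^\top(\Phi^\top\Phi+\lambda\I)^{-1}f}\le\sqrt{\lambda\cdot\lambda^{-1}\Vert f\Vert_k^2}=\Vert f\Vert_k\le B$, since $(\Phi^\top\Phi+\lambda\I)^{-1}\preceq\lambda^{-1}\I$. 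Splitting the $\lambda=\sqrt{\lambda}\cdot\sqrt{\lambda}$ across the two factors yields the bound $B\,\sigma_{T_l}(\x)$.

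The only mild subtlety — and the step I expect to need the most care — is making the feature-space manipulations rigorous when $\cH_k$ is infinite-dimensional: $\Phi^\top\Phi$ is then an operator on $\cH_k$ rather than a finite matrix, so the identities $(\Phi\Phi^\top+\lambda\I)^{-1}\Phi=\Phi(\Phi^\top\Phi+\lambda\I)^{-1}$ and $(\Phi^\top\Phi+\lambda\I)^{-1}\preceq\lambda^{-1}\I$ should be justified at the operator level (they hold because $\Phi\Phi^\top$ has finite rank $T_l$, so everything reduces to the finite-dimensional range of $\Phi^\top$ plus the $\lambda\I$ term, exactly as the paper already does implicitly in Eq.~\eqref{eq:sigma_feature_space}). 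Once that is granted, the whole argument is just the push-through identity plus one Cauchy--Schwarz estimate, so no heavy computation is needed.
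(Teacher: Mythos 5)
Your proposal is correct and follows essentially the same route as the paper's proof: feature-space representation, the push-through identity, reduction of the difference to $\lambda\,\varphi(\x)^{\top}(\Phi_{T_l}^{\top}\Phi_{T_l}+\lambda\I)^{-1}f$, and then Cauchy--Schwarz combined with $\lambda\I\preceq\Phi_{T_l}^{\top}\Phi_{T_l}+\lambda\I$. The only difference is cosmetic --- you apply Cauchy--Schwarz in the $(\Phi_{T_l}^{\top}\Phi_{T_l}+\lambda\I)^{-1}$-weighted inner product while the paper applies it in the RKHS inner product and then bounds the norm of $\lambda(\Phi_{T_l}^{\top}\Phi_{T_l}+\lambda\I)^{-1}\varphi(\x)$; the two splittings are algebraically equivalent.
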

\begin{proof}
%\bo{you need to add some text here...}\fengjiao{updated.}
Representing $f(\x)$ in the feature space, we have 
\begin{equation}
\begin{aligned}
&\left|f(\x)-\vk(\x, \X_{T_l})^{\top}(\K_{\X_{T_l}\X_{T_l}}+\lambda \I)^{-1}f(\X_{T_l})\right| \\
&= \left| \varphi(\x)^{\top} f - \varphi(\x)^{\top} \Phi_{T_l}^{\top}(\Phi_{T_l}\Phi_{T_l}^{\top}+\lambda \I)^{-1} \Phi_{T_l}f\right|\\
&= \left| \varphi(\x)^{\top} f - \varphi(\x)^{\top} (\Phi_{T_l}^{\top}\Phi_{T_l}+\lambda \I)^{-1}\Phi_{T_l}^{\top} \Phi_{T_l}f\right|\\
&= \left| \lambda \varphi(\x)^{\top} (\Phi_{T_l}^{\top}\Phi_{T_l}+\lambda \I)^{-1}f\right|\\
& \leq \Vert f\Vert_k \Vert \lambda  (\Phi_{T_l}^{\top}\Phi_{T_l}+\lambda \I)^{-1}\varphi(\x)\Vert_k \\
& \leq B\sqrt{\lambda \varphi(\x)^{\top} (\Phi_{T_l}^{\top}\Phi_{T_l}+\lambda \I)^{-1}\lambda \I  (\Phi_{T_l}^{\top}\Phi_{T_l}+\lambda \I)^{-1}\varphi(\x) }  \\
& \leq B\sqrt{\lambda \varphi(\x)^{\top} (\Phi_{T_l}^{\top}\Phi_{T_l}+\lambda \I)^{-1}(\Phi_{T_l}^{\top}\Phi_{T_l}+\lambda \I) (\Phi_{T_l}^{\top}\Phi_{T_l}+\lambda \I)^{-1}\varphi(\x) }  \\
& \leq B\sqrt{\lambda \varphi(\x)^{\top}  (\Phi_{T_l}^{\top}\Phi_{T_l}+\lambda \I)^{-1}\varphi(\x) }  \\
& = B\sigma_{T_l}(\x),
\end{aligned}
\end{equation}
where the last step is from Eq.~\eqref{eq:sigma_feature_space}.
\end{proof}

\section{Proofs of Theorem~\ref{thm:regret_upper_bound}}\label{app:proof_reget}
% \begin{remark}
% While the D-PB-E algorithm uses GP tools to define and manage the uncertainty in estimating the unknown function $f$, the analysis of D-PB-E algorithm does not rely on any \emph{Bayesian} assumption about $f$ being actually drawn from the prior $\cG\cP(0,k)$, and it only requires $f$ to be bounded in the kernel norm associated with the RKHS $\cH_k$. 
% \end{remark}

Before proving Theorem~\ref{thm:regret_upper_bound}, we first provide the key concentration inequality under \texttt{DPBE} in Theorem~\ref{thm:concentration_ineq}. 
\begin{theorem}\label{thm:concentration_ineq}
For any particular phase $l$, with probability at least $1-4\beta$, the following holds
\begin{equation}
    |f(\x) - \bar{\mu}_l(\x)| \leq   w_l(\x),\label{eq:concentration_ineq}
\end{equation}
where mean function $\bar{\mu}_l(\x)$ and confidence width function $w_l(\x)$ are defined in Eq.~\eqref{eq:mu_update} and Eq.~\eqref{eq:confidence_width}.
\end{theorem}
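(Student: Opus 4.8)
The plan is to decompose the estimation error $f(\x)-\bar\mu_l(\x)$ into three pieces --- a regularization bias, a user-heterogeneity bias, and an observation-noise term --- bound each separately, and combine them by a union bound and the triangle inequality.

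First I would use the equivalence in Lemma~\ref{lem:equivalent_representation} to write $\bar\mu_l(\x)=\vk(\x,\X_{T_l})^{\top}(\K_{\X_{T_l}\X_{T_l}}+\lambda\I)^{-1}\bar\y_l$ with $\bar\y_l=\frac{1}{|U_l|}\sum_{u\in U_l}\y_{l,u}$ and $\y_{l,u}=[y_{u,t_l+1},\dots,y_{u,t_l+T_l}]^{\top}$, i.e.\ the problem is reduced to the standard GP form but driven by the averaged noisy feedback of the sampled participants. By Assumption~\ref{ass:local_func}, for each participant $u\in U_l$ we have $f_u(\X_{T_l})=f(\X_{T_l})+\xi_u$ with $\xi_u\sim\cN(0,\K_{\X_{T_l}\X_{T_l}})$, independent across the (independently sampled) participants, and by Assumption~\ref{ass:noise} $y_{u,t}=f_u(\x_t)+\eta_{u,t}$ with the $\eta_{u,t}$ i.i.d.\ $\cN(0,\sigma^2)$. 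Writing $a(\x)^{\top}\triangleq\vk(\x,\X_{T_l})^{\top}(\K_{\X_{T_l}\X_{T_l}}+\lambda\I)^{-1}$, this gives
\begin{equation*}
f(\x)-\bar\mu_l(\x)=\Big(f(\x)-a(\x)^{\top}f(\X_{T_l})\Big)-a(\x)^{\top}\frac{1}{|U_l|}\sum_{u\in U_l}\xi_u-a(\x)^{\top}\frac{1}{|U_l|}\sum_{u\in U_l}\boldsymbol{\eta}_{l,u}=:A-B-C.
\end{equation*}
The key structural point is that the phase-$l$ actions $\X_{T_l}$ are produced by the maximum-variance rule in Eq.~\eqref{eq:decision}, which never looks at phase-$l$ feedback; hence, conditioned on $U_l$ and $\X_{T_l}$ --- both of which are independent of the fresh local-function draws $\{\xi_u\}_{u\in U_l}$ and noises $\{\boldsymbol{\eta}_{l,u}\}_{u\in U_l}$ --- the vector $a(\x)$ is fixed while $B$ and $C$ are zero-mean Gaussians.

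For $A$ (the regularization bias), Lemma~\ref{lem:f_diff} gives $|A|\le B\,\sigma_{T_l}(\x)$, and since $\tau_{H_l}=T_l$ the variance equivalence in Lemma~\ref{lem:equivalent_representation} yields $\sigma_{T_l}(\x)=\Sigma_{H_l}(\x)$, so $|A|\le B\,\Sigma_{H_l}(\x)$, matching the last term of $w_l(\x)$. For $B$ (the heterogeneity bias), its conditional variance is $\frac{1}{|U_l|}a(\x)^{\top}\K_{\X_{T_l}\X_{T_l}}a(\x)\le\frac{1}{|U_l|}\vk(\x,\X_{T_l})^{\top}(\K_{\X_{T_l}\X_{T_l}}+\lambda\I)^{-1}\vk(\x,\X_{T_l})=\frac{1}{|U_l|}\big(k(\x,\x)-\sigma_{T_l}^2(\x)\big)\le\frac{k(\x,\x)}{|U_l|}$, using $\K_{\X_{T_l}\X_{T_l}}\preceq\K_{\X_{T_l}\X_{T_l}}+\lambda\I$; a Gaussian tail bound then gives $|B|\le\sqrt{2k(\x,\x)\log(1/\beta)/|U_l|}$ with probability at least $1-2\beta$. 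For $C$ (the noise), I would pass to the RKHS feature-space representation (Eq.~\eqref{eq:sigma_feature_space}): with $M=\Phi_{T_l}^{\top}\Phi_{T_l}$, $C=\varphi(\x)^{\top}(M+\lambda\I)^{-1}\Phi_{T_l}^{\top}\big(\tfrac{1}{|U_l|}\sum_u\boldsymbol{\eta}_{l,u}\big)$ has conditional variance $\frac{\sigma^2}{|U_l|}\varphi(\x)^{\top}(M+\lambda\I)^{-1}M(M+\lambda\I)^{-1}\varphi(\x)\le\frac{\sigma^2}{|U_l|}\varphi(\x)^{\top}(M+\lambda\I)^{-1}\varphi(\x)=\frac{\sigma^2}{\lambda|U_l|}\sigma_{T_l}^2(\x)$; since the algorithm sets $\lambda=\sigma^2$ this equals $\Sigma_{H_l}^2(\x)/|U_l|$, and the Gaussian tail bound gives $|C|\le\sqrt{2\Sigma_{H_l}^2(\x)\log(1/\beta)/|U_l|}$ with probability at least $1-2\beta$.

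Combining by a union bound (total failure probability $\le4\beta$) with $|f(\x)-\bar\mu_l(\x)|\le|A|+|B|+|C|$ reproduces exactly the bound $w_l(\x)$ of Eq.~\eqref{eq:confidence_width}. I expect the main obstacle to be obtaining the correct functional dependence in the two variance estimates --- in particular getting $\Sigma_{H_l}^2(\x)$ (rather than merely $k(\x,\x)$) for the noise term, which forces the feature-space manipulation and the use of $\lambda=\sigma^2$ --- together with the (easy to overlook) verification that the phase-$l$ design is independent of the phase-$l$ randomness, so that Gaussian concentration may be applied conditionally.
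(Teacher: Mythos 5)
Your proposal is correct and yields exactly the claimed bound, but it follows a genuinely different route from the paper. The paper first inserts the average local function $\frac{1}{|U_l|}\sum_{u\in U_l}f_u(\x)$ as an intermediate quantity: the gap $|f(\x)-\frac{1}{|U_l|}\sum_u f_u(\x)|$ is controlled by the \emph{prior} fluctuation $\cN(0,k(\x,\x)/|U_l|)$, and the gap $|\frac{1}{|U_l|}\sum_u f_u(\x)-\bar\mu_l(\x)|$ is controlled by conditioning on the observations $\{\y_{l,u}\}$, invoking the joint Gaussianity of $(f_u(\x),\y_{l,u})$ to get the posterior law $f_u(\x)\mid\y_{l,u}\sim\cN(m_u(\x),\sigma_{T_l}^2(\x))$, and then identifying $\frac{1}{|U_l|}\sum_u m_u(\x)-\bar\mu_l(\x)$ with the regularization residual bounded by Lemma~\ref{lem:f_diff}. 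You instead expand $\bar\y_l$ into signal, heterogeneity bias, and observation noise, and push each through the fixed linear functional $a(\x)^{\top}$, bounding the two random pieces by direct variance calculations ($a^{\top}\K a\le k(\x,\x)-\sigma_{T_l}^2(\x)$ and $\sigma^2\Vert a\Vert_2^2\le\sigma_{T_l}^2(\x)$ with $\lambda=\sigma^2$). Both arguments rely on the same ingredients (Lemma~\ref{lem:equivalent_representation}, Lemma~\ref{lem:f_diff}, Gaussian tails, a union bound over two events giving $4\beta$), and your observation that the phase-$l$ design is measurable with respect to earlier phases --- so that $a(\x)$ is deterministic given $\cD_l$ --- is exactly the independence the paper uses implicitly when it sums over realizations of $\{\y_{l,u}\}$. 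Your decomposition is arguably more elementary, since it avoids the conditional-posterior machinery, and it splits the randomness into genuinely independent pieces; as a minor bonus it shows the first confidence term could be tightened from $k(\x,\x)$ to $k(\x,\x)-\Sigma_{H_l}^2(\x)$, though this does not change the order of the regret bound.
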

\begin{proof}
In this proof, we will show the following concentration inequality holds for any $\x\in \cD$
% With probability  $\geq 1-2\beta $, for any action $x\in \cD_l$
\begin{equation}
    \bP[|f(\x) - \bar{\mu}_l(\x)|\geq  w_l(\x)] \leq 4\beta.
\end{equation}

For any $\x\in \cD$, we let $w_{l}(\x) = w_{l,1}(\x) + w_{l,2}(\x)$, where
\begin{equation*}
    w_{l,1}(\x) \triangleq \sqrt{\frac{2k(\x, \x)\log(1/\beta)}{|U_l|}} \quad \text{and } w_{l,2}(\x) \triangleq \Sigma_{H_l}(\x) \left( \sqrt{\frac{2\log(1/\beta)}{ |U_l|}}+B\right).
\end{equation*}
First, for any $\x\in \cD$, we have the following inequality:
\begin{equation*}
     |f(\x) - \bar{\mu}_l(\x)|\leq 
   \left|f(\x)- \frac{1}{|U_l|}\sum_{u\in U_l} f_u(\x)\right| + \left|\frac{1}{|U_l|}\sum_{u\in U_l} f_u(\x)-\bar{\mu}_l(\x)\right|. 
\end{equation*}
Then, we have
\begin{equation}
    \begin{aligned}
    &\bP\left[|f(\x) - \bar{\mu}_l(\x)|\geq  w_l(\x)\right] \\
    \leq & \bP\left[\left|f(\x)- \frac{1}{|U_l|}\sum_{u\in U_l} f_u(\x)\right| +  \left|\frac{1}{|U_l|}\sum_{u\in U_l} f_u(\x)-\bar{\mu}_l(\x)\right| \geq  w_{l,1}(\x) + w_{l,2}(\x)\right]\\
    \leq & \bP\left[\left|f(\x)- \frac{1}{|U_l|}\sum_{u\in U_l} f_u(\x)\right| \geq  w_{l,1}(\x) \right] + \bP\left[  \left|\frac{1}{|U_l|}\sum_{u\in U_l} f_u(\x)-\bar{\mu}_l(\x)\right| \geq   w_{l,2}(\x)\right],
    \label{eq:gap_bound}
    \end{aligned}
\end{equation}
where the last inequality is from union bound. 

In the following, we try to bound the above two terms, respectively. 

\smallskip

\textbf{i)} Recall that each user $u$ is associated with a local reward function $f_u \sim \cG\cP(f(\cdot),  k(\cdot))$. Hence,
\begin{equation}
f_u(\x) \sim \cN(f(\x),  k(\x, \x)), \quad \forall \x \in \cD. \label{eq:prior_distr}
\end{equation}
Note that $U_l$ is a set of $\lceil 2^{\alpha l} \rceil$ independently sampled random users. Then, we have
\begin{equation*}
\frac{1}{|U_l|}\sum_{u\in U_l} f_u(\x) \sim  \cN\left(f(\x), \frac{k(\x, \x)}{|U_l|}\right), \quad \forall \x \in \cD.
\end{equation*}
Combining the concentration inequality for Gaussian random variables, we have 
\begin{equation}
    \bP\left[\left| \frac{1}{|U_l|}\sum_{u\in U_l} f_u(\x) - f(\x)\right| \geq  w_{l,1}(\x) \right] \leq 2\exp \left(-\frac{|U_l|w_{l,1}^2(\x)}{2k(\x,\x))}\right) = 2\beta.
\end{equation}

%\vskip 0.5cm
\textbf{ii)} Then, we want to bound the second term in Eq.~\eqref{eq:gap_bound}: %\high{Check and update the notation for realizations}
\begin{equation*}
\begin{aligned}
&\bP\left[ \left|\frac{1}{|U_l|}\sum_{u\in U_l} f_u(\x)-\bar{\mu}_l(\x)\right| \geq   w_{l,2}(\x)\right] \\
= & \sum_{\Lambda}\bP[\Lambda=\{\y_{l,u}\}_{u\in U_l}]\cdot \bP\left[  \left|\frac{1}{|U_l|}\sum_{u\in U_l} f_u(\x)-\bar{\mu}_l(\x)\right| \geq   w_{l,2}(\x)\middle| \{\y_{l,u}\}_{u\in U_l}\right],
\end{aligned}    
\end{equation*}
where $\y_{l,u} = [y_{u,t_l+1}, \dots, y_{u,t_l+T_l}]^{\top}$ denotes the realization of the local reward observations at user $u$ in the $l$-th phase. 
According to our assumption, the participant user $u$ is associated with a local reward function $f_u$ sampled from Gaussian Process $\cG\cP(f(\cdot), k(\cdot, \cdot))$. Given the points $\X_{T_l} = [x^{\top}_{t_l+1}, \dots, \x^{\top}_{t_l+T_l}]^{\top}$ in $\cD$, the corresponding vector of local rewards $\y_{l,u} =[y_{u,t_l+1}, \dots, y_{u,t_l+T_l}]^{\top}$ has the multivariate Gaussian distribution $\cN(f(\X_{T_l}), (\K_{\X_{T_l}\X_{T_l}} + \lambda \I))$ where $f(\X_{T_l}) = [f(\x_{t_l+1}), \cdots,$ $f(\x_{t_l+T_l})]^{\top}$ and $\K_{\X_{T_l}\X_{T_l}}=[k(\x,\x^{\prime})]_{\x,\x^{\prime}\in \X_{T_l}}$ is the kernel matrix for the $T_l$ selected actions in the $l$-th phase. Due to the properties of GPs, we have that $\y_{l,u}$ and $f_u(\x)$ are jointly Gaussian given $\X_{T_l}$:
\begin{equation}
   \Bigg[
   \begin{aligned}
   &f_u(\x)\\
   &\y_{l,u}
   \end{aligned}
   \Bigg] \sim \cN 
   \Bigg(
   \Bigg[
   \begin{aligned}
   &f(\x)\\
   &f(\X_{T_l})
   \end{aligned}
   \Bigg], 
   \Bigg[
   \begin{aligned}
   &k(\x,\x)  \quad  &\vk(\x,\X_{T_l})^{\top} \\
   &\vk(\x, \X_{T_l})  &\K_{\X_{T_l}\X_{T_l}}+\lambda \I
   \end{aligned}
   \Bigg]
   \Bigg),
\end{equation}
where $k(\x, \X_{T_l}) = [k(\x, \x_{t_l+1}), \dots, k(\x, \x_{t_l+T_l})]^{\top}$. According to the basic formula for conditional distributions of Gaussian random vectors (see \cite[Appendix A.2]{rasmussen2003gaussian} or \cite[Proposition 3.2]{kanagawa2018gaussian}), we have that conditioned on $\y_{l,u}$ (corresponding to the points $\X_{T_l}$), the following holds: 
%\bo{this sentence does not read well; can you please rephrase it?}\fengjiao{updated.}
\begin{equation*}
    f_u(\x) | \y_{l,u} \sim \cN(m_u(\x), \sigma_{T_l}^2(\x)),
\end{equation*}
where we have
\begin{align}
m_u(\x) &\triangleq f(\x) + \vk(\x,\X_{T_l})^{\top} (\K_{\X_{T_l}\X_{T_l}} +\lambda \I)^{-1}(\y_{l,u}-f(\X_{T_l})), \label{eq:posterior_mean_u}\\
\sigma_{T_l}^{2}(\x) &= k(\x, \x) - \vk(\x, \X_{T_l})^{\top}(\K_{\X_{T_l}\X_{T_l}}+\lambda \I)^{-1}\vk(\x,\X_{T_l}).
\end{align}
% \begin{equation}
% \begin{aligned}
% m_u(\x) = f(\x) + \vk(\x,\X_{T_l})^{\top} (\K_{\X_{T_l}\X_{T_l}} +\lambda \I)^{-1}(\y_{l,u}-f(\X_{T_l})), \label{eq:posterior_mean_u}
% \end{aligned}
% \end{equation}
% and 
% \begin{equation}
% \sigma_{T_l}^{2}(\x) = k(\x, \x) - \vk(\x, \X_{T_l})^{\top}(\K_{\X_{T_l}\X_{T_l}}+\lambda \I)^{-1}\vk(\x,\X_{T_l}).
% \end{equation}
Note that we sample the participants $U_l$ independently and that the local reward noise is also independent across participants. Then, we have the following result: 
\begin{equation*}
\left(\frac{1}{|U_l|}\sum_{u\in U_l} f_u(\x) \right) \Bigm| \{\y_{l,u}\}_{u\in U_l}  = \frac{1}{|U_l|}\sum_{u\in U_l} (f_u(\x)\mid \y_{l,u}) \sim \cN\left(\frac{1}{|U_l|}\sum_{u\in U_l}m_u(\x), \frac{\sigma_{T_l}^2(\x)}{|U_l|}\right). 
\end{equation*}
Combining the Gaussian concentration inequality, we have the following result
\begin{equation}
\begin{aligned}
\bP\left[  \left|\frac{1}{|U_l|}\sum_{u\in U_l} f_u(\x)-\frac{1}{|U_l|}\sum_{u\in U_l}m_u(\x)\right| \geq   \sqrt{\frac{2\sigma_{T_l}^2(\x)\log(1/\beta)}{|U_l|}}\middle| \{\y_{l,u}\}_{u\in U_l}\right]\leq 2\beta. \label{eq:local_posterior_con}
\end{aligned}    
\end{equation}

From Lemma~\ref{lem:equivalent_representation}, we have the following equation:
\begin{equation}
\begin{aligned}
\frac{1}{|U_l|}\sum_{u\in U_l} \vk(\x,\X_{T_l})^{\top} (\K_{\X_{T_l}\X_{T_l}} +\lambda \I)^{-1}\y_{l,u} &= \vk(\x, \A_h)^{\top}(\K_{\A_h\A_h} +\lambda \W_h^{-1})^{-1}\bar{\y}_l = \bar{\mu}_l(\x),
\end{aligned}
\end{equation}
which implies 
\begin{equation}
    \frac{1}{|U_l|}\sum_{u\in U_l}m_u(\x) = \bar{\mu}_l(\x) + f(\x)-\vk(\x, \X_{T_l})^{\top}(\K_{\X_{T_l}\X_{T_l}}+\lambda \I)^{-1}f(\X_{T_l}).
\end{equation}
% Combining \eqref{eq:mu_update} and \eqref{eq:posterior_mean_u}, we obtain 
% \begin{equation}
%     \frac{1}{|U_l|}\sum_{u\in U_l}m_u(\x) = \bar{\mu}_l(\x) + f(\x)-k_{T_l}^{\top}(x_0)(K_{T_l}+\lambda I_{T_l})^{-1}f(\X_{T_l}),
% \end{equation}
Then, the gap between the average local function $\frac{1}{|U_l|}\sum_{u\in U_l}f_u(\cdot)$ and the estimator $\bar{\mu}_l(\cdot)$ satisfies
\begin{equation}
\begin{aligned}
&\left|\frac{1}{|U_l|}\sum_{u\in U_l} f_u(\x)-\bar{\mu}_l(\x)\right|\\
 \leq &\left|\frac{1}{|U_l|}\sum_{u\in U_l} f_u(\x)-\frac{1}{|U_l|}\sum_{u\in U_l}m_u(\x)\right| + \left|f(\x)-\vk(\x, \X_{T_l})^{\top}(\K_{\X_{T_l}\X_{T_l}}+\lambda \I)^{-1}f(\X_{T_l})\right|\\
\overset{(a)}{\leq} &\left|\frac{1}{|U_l|}\sum_{u\in U_l} f_u(\x)-\frac{1}{|U_l|}\sum_{u\in U_l}m_u(\x)\right| + B\sigma_{T_l}(\x),
\end{aligned}
\end{equation}
where $(a)$ is from Lemma~\ref{lem:f_diff}. Combining the result in Eq.~\eqref{eq:local_posterior_con}, we have
\begin{equation}
\begin{aligned}
&\bP\left[  \left|\frac{1}{|U_l|}\sum_{u\in U_l} f_u(\x)-\bar{\mu}_l(\x)\right| \geq   w_{l,2}(\x)\middle|\{\y_{l,u}\}_{u\in U_l}\right]   \\
\leq &\bP\left[ \left|\frac{1}{|U_l|}\sum_{u\in U_l} f_u(\x)-\frac{1}{|U_l|}\sum_{u\in U_l}m_u(\x)\right| + B\sigma_{T_l}(x_0) \geq   w_{l,2}(\x)\middle| \{\y_{l,u}\}_{u\in U_l}\right] \\
\overset{(a)}{=}&\bP\left[  \left|\frac{1}{|U_l|}\sum_{u\in U_l} f_u(\x)-\frac{1}{|U_l|}\sum_{u\in U_l}m_u(\x)\right| \geq   \sqrt{\frac{2\sigma_{T_l}^2(\x)\log(1/\beta)}{|U_l|}}\middle| \{\y_{l,u}\}_{u\in U_l}\right]\leq 2\beta,
\end{aligned}
\end{equation}
where  $(a)$ is from $\sigma_{T_l}^2(\x) = \Sigma_{H_l}^2(\x)$ according to Lemma~\ref{lem:equivalent_representation}.
Therefore, we derive the desired result:
\begin{equation}
\begin{aligned}
&\bP\left[  \left|\frac{1}{|U_l|}\sum_{u\in U_l} f_u(\x)-\bar{\mu}_l(\x)\right| \geq   w_{l,2}(\x)\right] \\
= & \sum_{\Lambda}\bP[\Lambda=\{\y_{l,u}\}_{u\in U_l}]\cdot \bP\left[  \left|\frac{1}{|U_l|}\sum_{u\in U_l} f_u(\x)-\bar{\mu}_l(\x)\right| \geq   w_{l,2}(\x)\middle| \{\y_{l,u}\}_{u\in U_l}\right]\\
\leq & \sum_{\Lambda}\bP[\Lambda=\{\y_{l,u}\}_{u\in U_l}]\cdot 2\beta = 2\beta.
\end{aligned}    
\end{equation}
\end{proof}

% \begin{theorem}[Regret]%\label{thm:regret_upper_bound}
% Let $\beta\in (0,1)$, $\Vert f\Vert_k \leq B$, and $\eta_{u,t}\sim \cN(0,\sigma^2)$. The D-PB-E algorithm (Algorithm~\ref{alg:D-PB-E}) %with high probability (at least $1-2\beta L$)
% achieves the expected regret satisfying
% \begin{equation}
% \begin{aligned}
%     \E[R(T)] &= O(T^{1-\alpha/2}\sqrt{\log T})+O(\sqrt{\gamma_T T^{1-\alpha}\log T}) +O(\sqrt{\gamma_T T}).
% \end{aligned}
% \end{equation}
% \end{theorem}
% \begin{proof}[Proof of Theorem~\ref{thm:regret_upper_bound}]
To prove Theorem~\ref{thm:regret_upper_bound}, we first present three main conclusions when the concentration inequality in Theorem~\ref{thm:concentration_ineq} holds, then get an upper bound for the regret incurred in a particular phase~$l$ with high probability, and finally sum up the regret over all phases. 

Define a ``good'' event when Eq.~\eqref{eq:concentration_ineq} holds in the $l$-th phase as:
\begin{equation*}
    \cE_l \triangleq \left\{\forall \x\in \cD_l,  \left|f(\x)-\bar{\mu}_l(\x)\right|\leq w_l(\x)  \right\}.  
\end{equation*}
We have $\bP[\cE_l] \geq 1-4|\cD|\beta$ via the union bound.
Then, under event $\mathcal{E}_l$ in the $l$-th phase, we have the following three observations:
\begin{itemize}
    \item[\textbf{1.}] For any optimal action $\x^*\in \argmax_{\x\in \cD} f(\x)$, if $\x^*\in \cD_l$, then $\x^* \in \cD_{l+1}$. 
    \item[\textbf{2.}] Let $f^* = \max_{\x\in \cD} f(\x)$. Supposed that $x^*\in \cD_l$. For any $\x \in \cD_{l+1}$, its reward gap from the optimal reward is bounded by $4\max_{\x\in \cD_l}w_l(\x)$, i.e., 
    $$f^* - f(\x)\leq 4\max_{\x\in \cD_l}w_l(\x).$$ 
    \item[\textbf{3.}] The confidence width function %for any action $\x \in \cD_l$ 
    satisfies 
    $$\max_{\x\in\cD_l} w_l(\x)\leq \sqrt{\frac{2\kappa^2\log(1/\beta)}{|U_l|}}+\sqrt{\frac{4\sigma^2C^2\gamma_{T_l}\log(1/\beta)}{T_l|U_l|}}+\sqrt{\frac{2\sigma^2B^2C^2\gamma_{T_l}}{T_l}}.$$
\end{itemize}
% \end{proof}

\begin{proof}
\textbf{Observation 1:}
Let $\mathbf{b}\in \argmax_{\x\in\cD_l}  (\bar{\mu}_l(\x)-w_l(\x))$. Then under event $\mathcal{E}_l$, we have 
\begin{equation}
\begin{aligned}
 \bar{\mu}_l(\x^*)+w_l(\x^*)
  &\geq f(\x^*)  \geq f(\mathbf{b}) \geq \bar{\mu}_l(\mathbf{b})-w_l(\mathbf{b}),
\end{aligned}
\end{equation}
which indicates  $\x^* \in \cD_{l+1}$ according to Eq.~\eqref{eq:action_elimination}.

\textbf{Observation 2: }
For any $\x \in \cD_{l+1}$, we have $\x \in \cD_l$ and 
\begin{equation}
    \bar{\mu}_l(\x) + w_l(\x) \geq\bar{\mu}_l(\mathbf{b})-w_l(\mathbf{b})\geq \bar{\mu}_l(\x^*) -w_l(\x^*). \label{eq:estimation_regret}
\end{equation}
%because $\x^*$ is in $\cD_{l}$ according to the inductive result of Observation~1. 
Then, we have the regret of choosing any action $\x\in\cD_{l+1}$ satisfying
\begin{equation}
\begin{aligned}
 f(\x^*) -f(\x)  &\overset{(a)}{\leq} \bar{\mu}_l(\x^*) + w_l(\x^*) -\bar{\mu}_l(\x)+w_l(\x) \\
  &\overset{(b)}{\leq} 2(w_l(\x) +w_l(\x^*)) \\
  &\leq 4\max_{\x\in\cD_l} w_l(\x),
\end{aligned}
\end{equation}
where $(a)$ holds under event $\cE_l$ and the second inequality $(b)$ is from Eq.~\eqref{eq:estimation_regret}.
Then, we derive  Observation 2. 

\textbf{Observation 3: } 
%Let $H$ be the number of times selecting an action according to Eq.~\eqref{eq:decision} in the $l$-th phase, i.e., $h=H$ by the end of the phase. 
% By the end of the $l$-th phase, we have $\tau_{H_l}=T_l$.
% According to the equivalent representations (Lemma~\ref{lem:equivalent_representation}), we have $\Sigma_{H_l}(\x) = \sigma_{T_l}(\x)$ for any $\x\in \cD$. Then, we derive the following result
Based on the result in Lemma~\ref{lem:max_variance}, we have
\begin{equation}
\begin{aligned}
\max_{\x\in\cD_l} w_l(\x) = &\max_{\x\in\cD_l} \left( \sqrt{\frac{2k(\x, \x)\log(1/\beta)}{|U_l|}} + \Sigma_{H_l}(\x) \left( \sqrt{\frac{2\log(1/\beta)}{ |U_l|}}+B\right)\right)\\
& \leq \sqrt{\frac{2\kappa^2\log(1/\beta)}{|U_l|}} + \max_{\x\in\cD_l}\Sigma_{H_l}(\x) \left( \sqrt{\frac{2\log(1/\beta)}{ |U_l|}}+B\right) \\  
&\leq \sqrt{\frac{2\kappa^2\log(1/\beta)}{|U_l|}}+\sqrt{\frac{4\lambda C^2\gamma_{T_l}\log(1/\beta)}{T_l|U_l|}}+\sqrt{\frac{2\lambda B^2C^2\gamma_{T_l}}{T_l}}\\
&= \sqrt{\frac{2\kappa^2\log(1/\beta)}{|U_l|}}+\sqrt{\frac{4\sigma^2C^2\gamma_{T_l}\log(1/\beta)}{T_l|U_l|}}+\sqrt{\frac{2\sigma^2B^2C^2\gamma_{T_l}}{T_l}}.
\end{aligned}
\end{equation}
%where the last step is from Lemma~\ref{lem:max_variance}.
% where the last step is due to Eq.~\eqref{eq:max_sigma} and \eqref{eq:info_gain}. 
\end{proof}
Then, we are ready to prove Theorem~\ref{thm:regret_upper_bound}.
\begin{proof}[Proof of Theorem~\ref{thm:regret_upper_bound}]
Let the regret in the $l$-th phase be $r_l \triangleq \sum_{t\in \cT_l} (\max_{\x\in \cD} f(\x) - f(\x_t))$. For any $l\geq 2$, we assume event $\cE_{l-1}$ holds. Then, we have the following result
\begin{equation}
\begin{aligned}
r_l &=  \sum_{t \in \cT_l} (\max_{\x\in \cD} f(\x) - f(\x))\\
    &\leq \sum_{t\in\cT_l} 4\max_{\x\in \cD_{l-1}}w_{l-1}(\x)\\
    &\leq  4T_l\max_{\x\in \cD_{l-1}}w_{l-1}(\x)\\
    & \leq 4 T_l \left(\sqrt{\frac{2\kappa^2\log(1/\beta)}{|U_{l-1}|}}+\sqrt{\frac{4\sigma^2C^2\gamma_{T_{l-1}}\log(1/\beta)}{T_{l-1}|U_{l-1}|}}+\sqrt{\frac{2\sigma^2B^2C^2\gamma_{T_{l-1}}}{T_{l-1}}}\right)\\
     & \overset{(a)}{\leq }  4 \cdot 2^{l-1} \left(\sqrt{\frac{2\kappa^2\log(1/\beta)}{2^{\alpha (l-1)}}}+\sqrt{\frac{4\sigma^2C^2\gamma_{T}\log(1/\beta)}{2^{(1+\alpha)(l-1)-1}}}+\sqrt{\frac{2\sigma^2B^2C^2\gamma_T}{2^{l-2}}}\right)\\
     & \leq 4\sqrt{2\kappa^2\log (1/\beta)}\sqrt{2^{(2-\alpha)(l-1)}} + 8\sigma C \sqrt{2\gamma_T\log(1/\beta)}\sqrt{2^{(1-\alpha)(l-1)}} + 8\sigma BC\sqrt{\gamma_T 2^{l-1}}, \label{eq:simple_regret}
    % & \leq \underbrace{8\sqrt{2\kappa^2\log (1/\beta)}\sqrt{2^{(2-\alpha)(l-1)}}}_{\text{\textcircled{1}}} + \underbrace{8R \sqrt{2(1+2\lambda)\gamma_T(\ln(1/\beta) + \gamma_T)}\sqrt{2^{(1-\alpha)(l-1)}}}_{\text{\textcircled{2}}} + \underbrace{8B\sqrt{(1+2\lambda)\gamma_T}\sqrt{2^{l-1}}}_{\textcircled{3}}
\end{aligned}
\end{equation}
where $(a)$ is from $\gamma_{T_{l-1}}\leq \gamma_T$ and $|U_l|\geq 2^{\alpha l}$. 

Define $\cE_g$ as the event where the ``good'' event occurs in every phase, i.e., $\cE_g \triangleq \bigcap_{l=1}^L \cE_l$. It is not difficult to obtain  $\bP[\cE_g] \geq 1- 4|\cD|\beta L$ by applying union bound. At the same time, let $R_g$ be the regret under event $\cE_g$, and $R_b$ be the regret if event $\mathcal{E}_g$ does not hold. Then, the expected total regret in $T$ is $\E[R(T)] = \bP[\cE_g]R_g + (1-\bP[\cE_g]) R_b$.

Under event $\mathcal{E}_g$, the regret in the $l$-th phase $r_l$ satisfies Eq.~\eqref{eq:simple_regret} for any $l\geq 2$. Note that $r_1 \leq 2T_1B \kappa\leq 2B\kappa$  since $T_1=1$ and for any $\x\in\cD$,
$$
    |f(\x)|=|\langle f, k(\x,\cdot)\rangle_k| \leq \Vert f\Vert_k\langle k(\x,\cdot), k(\x,\cdot)\rangle_k^{1/2}\leq Bk(\x,\x)^{1/2}\leq B \kappa.
$$
% \high{remember to bound $k(\x, \x)\leq \kappa$, which should be used to update the result }

Then, we have
\begin{equation}
    \begin{aligned}
    R_g &= \sum_{l=1}^L r_l  \\
    & \leq 2B\kappa + \sum_{l=2}^L 4\sqrt{2\kappa^2\log (1/\beta)}\sqrt{2^{(2-\alpha)(l-1)}} \\
    &+ \sum_{l=2}^L 8\sigma C \sqrt{2\gamma_T\log(1/\beta)}\sqrt{2^{(1-\alpha)(l-1)}} \\
    &+ \sum_{l=2}^L 8\sigma BC\sqrt{\gamma_T2^{l-1}} \\
    & \leq 2B\kappa + 4\sqrt{2\kappa^2\log (1/\beta)} \cdot 4 \sqrt{2^{(L-1)(2-\alpha)}} \\
    &+ 8\sigma C\sqrt{2\gamma_T\log(1/\beta)}\cdot C_1\sqrt{2^{(1-\alpha)(L-1)}}  \quad \quad  \quad \quad \quad \hfill \left(C_1 = \sqrt{2^{1-\alpha}}/(\sqrt{2^{1-\alpha}}-1)\right)\\  
    &+ 8\sigma BC\sqrt{\gamma_T}\cdot 4\sqrt{2^{L-1}} \\
    &\leq 2B\kappa+  16\sqrt{2\kappa^2\log (1/\beta)}T^{1-\alpha/2}+8\sigma C_1C  \sqrt{2\gamma_T\log(1/\beta)T^{1-\alpha}} + 32\sigma BC\sqrt{\gamma_T T},
    \end{aligned}
\end{equation}
where the last step is due to $2^{L-1} \leq T$ and $L\leq \log(2T)$ since $\sum_{l=1}^{L-1} T_l +1 \leq T$. 

On the other hand, $R_b \leq 2B\kappa T$ %$R_b \leq T$ 
since $|\max_{\x\in\cD} f(\x)-f(\x)|\leq 2B\kappa$ for all $\x \in \cD$. 
Choose $\beta = 1/(|\cD|T)$ in Algorithm~\ref{alg:DPBE}. Finally, we have the following results:
\begin{equation}
\begin{aligned}
    &\E[R(T)] \\&= \bP[\cE_g]R_g + (1-\bP[\cE_g]) R_b \\
    & \leq R_g + 4|\cD|\beta L \cdot 2B\kappa T\\
    & \leq 2B\kappa+  16\sqrt{2\kappa^2\log (1/\beta)}T^{1-\alpha/2}+8\sigma C_1C  \sqrt{2\gamma_T\log(1/\beta)T^{1-\alpha}} + 32\sigma BC\sqrt{\gamma_T T} \\
    &+ 8B\kappa|\cD|\beta LT\\
    & = 2B\kappa+  16T^{1-\alpha/2}\sqrt{2\kappa^2\log (|\cD|T)}+8\sigma C_1C  \sqrt{2\gamma_TT^{1-\alpha}\log (|\cD|T)} \\
    &+ 32\sigma BC\sqrt{\gamma_T T} + 8B\kappa\log (2T)\\
    & = O(T^{1-\alpha/2}\sqrt{\log (|\cD|T)})+O(\sqrt{\gamma_T T^{1-\alpha}\log (|\cD)T)} +O(\sqrt{\gamma_T T}).
\end{aligned}
\end{equation}
\end{proof}

\section{Proofs for Communication and Computation Results} \label{app:proof_communication_complexity}
% \begin{lemma}[Bounding $H_l$] \label{lem:bound_h}
% Consider any phase $l$. Then number of times $H_l$ solving optimization problem Eq.~\eqref{eq:decision} satisfies $H_l \leq \frac{4\sigma^2C^2}{C^2-1}\gamma_T$. 
% \end{lemma}
The results regarding computation complexity and communication cost highly depend on the number of batches $H_l$ in each phase $l$. Hence, we first provide the proof for Lemma~\ref{lem:bound_h}. 
\begin{proof}[Proof of Lemma~\ref{lem:bound_h}]
To bound the number of batches in the $l$-th phase, we follow a similar line to the proof of Lemma~4.3 in \cite{calandriello2022scaling}. For any $1\leq h\leq H_l$, we have
\begin{equation}
\begin{aligned}
&T_l(\ca_h) = \left\lfloor \frac{C^2 - 1}{\Sigma_{h-1}^2(\ca_h)}\right\rfloor \geq  \frac{C^2 - 1}{\Sigma_{h-1}^2(\ca_h)} - 1 \\
&\Rightarrow \quad \Sigma_{h-1}^2(\ca_h)(T_l(\ca_h)+1) \geq C^2-1 \\
& \Rightarrow \quad 2\Sigma_{h-1}^2(\ca_h)T_l(\ca_h)\geq C^2-1.
\end{aligned}
\end{equation}
Recall that we use $\tau_h$ to denote the last within-phase time index in the $h$-th batch. % before starting choosing $\ca_h$ be denoted by $\tau_h$ %(i.e., $\x_{t_l+\tau_h} = \ca_{h-1}$ and $\x_{t_l+\tau_h+1} = \ca_h$). 
Then, summing the above inequality across all batches up to $H_l$, we have 
\begin{equation}
\begin{aligned}
H_l(C^2-1) 
&\leq \sum_{h=1}^{H_l}2\Sigma_{h-1}^2(\ca_h)T_l(\ca_h) \\
& \leq 2\sum_{h=1}^{H_l}\sum_{\tau=\tau_{h-1}+1}^{\tau_{h-1}+T_l(\ca_h)}\Sigma_{h-1}^2(\ca_h) \\
% & = 2\sum_{h=1}^{H_l}\sum_{\tau=\tau_{h-1}+1}^{\tau_{h-1}+T_l(\ca_h)}\sigma_{\tau_{h-1}}^2(\ca_h) \\
& = 2\sum_{h=1}^{H_l}\sum_{\tau=\tau_{h-1}+1}^{\tau_{h-1}+T_l(\ca_h)} \frac{\Sigma_{h-1}^2(\ca_h)}{\sigma_{\tau}^2(\ca_h)} \cdot \sigma_{\tau}^2(\ca_h)\\
& \overset{(a)}{\leq} 2\sum_{h=1}^{H_l}\sum_{\tau=\tau_{h-1}+1}^{\tau_{h-1}+T_l(\ca_h)} C^2 \cdot \sigma_{\tau}^2(\ca_h)\\
&\overset{(b)}{= } 2 C^2 \sum_{h=1}^{H_l} \sum_{\tau=\tau_{h-1}+1}^{\tau_{h-1}+T_l(\ca_h)}  \sigma^2_{\tau}(\x_{t_l+\tau})\\
& = 2C^2\sum_{\tau=1}^{T_l}\sigma_\tau^2(\x_{t_l+\tau})\\
& \overset{(c)}{\leq}  4\sigma^2C^2\gamma_{T_l},
\end{aligned}
\end{equation}
where $(a)$ is from Corollary~\ref{cor:variance_ratio}, $(b)$ is based on our algorithm decision: $\x_{t_l+\tau} = \ca_h$ for any $\tau \in [\tau_{h-1}+1, \tau_{h-1}+T_l(\ca_h)]$, $(c)$ is from Lemma~\ref{lem:information_gain} where $\X_{T_l}=[\x_{t_l+1}^{\top}, \dots, \x^{\top}_{t_l+T_l}]^{\top}$ for any phase $l$ and $\lambda=\sigma^2$. 
Hence, we derive 
\begin{equation}
    H_l \leq \frac{4\sigma^2C^2}{C^2-1} \gamma_{T_l}.
\end{equation}
\end{proof}

% \begin{theorem}
% Running D-PB-E algorithm incurs $O(\gamma_T T^{\alpha})$ communication cost.
% \end{theorem}
We already analyze how to derive the computation complexity for \texttt{DPBE} in Remark~\ref{rk:computation_analysis}. In the following, we prove Theorem~\ref{thm:communication}, which tells the result regarding communication cost: $O(\gamma_T T^{\alpha})$.
\begin{proof}[Proof of Theorem~\ref{thm:communication}]
Note that the communicating data in each phase between participants and the agent is the local average performance $y_l^u(\ca)$ for each action $\ca$ chosen in the corresponding batch. %by solving the optimization problem \eqref{eq:decision}. 
That is, $N_{u,l} \leq H_l$ for every participant $u$. (Here, the inequality holds when merging batches as Remark~\ref{rk:shrink}). Combining the bound of $H_l$ in Lemma~\ref{lem:bound_h}, we derive the total communication cost satisfying
\begin{equation}
    \sum_{l=1}^L |U_l| H_l \leq \sum_{l=1}^L  \frac{4\sigma^2C^2}{C^2-1} \gamma_{T_l} \cdot (2^{\alpha l}+1) = O\left(\frac{\sigma^2C^2}{C^2-1} \cdot \gamma_T T^{\alpha}\right),
\end{equation}
where the last step is due to $2^{L-1} \leq T$ and $L\leq \log(2T)$ since $\sum_{l=1}^{L-1} T_l +1 \leq T$.
\end{proof}

\section{Differentially Private \texttt{DPBE} Extensions}\label{app:DP-DPBE}
In this section, we extend the differentially private \texttt{DPBE} in Section~\ref{sec:DP-DPBE} to two other celebrated DP models: the local model and the shuffle model. 
% As privacy is also an important factor in distributed learning, we adopt the concept of \emph{differential privacy} (DP) \cite{dwork2014algorithmic} as the privacy metric. In the following, we describe how to ensure DP under \texttt{DPBE} with a trusted agent, called the central DP model. In addition, thanks to the phase-then-batch schedule strategy in our algorithm, other DP trust models (e.g., local model \cite{zhou2020local} and shuffle model \cite{cheu2019distributed}) can also be applied through proper design.

\Hig{To begin with, we present the details of the \texttt{DP-DPBE} algorithm (see Algorithm~\ref{alg:DP-DPBE}) in the central DP model discussed in Section~\ref{sec:DP-DPBE}. Recall that in the central DP model, with a trusted agent, data privacy is protected by privatizing the aggregated feedback so that the output of the algorithm is indistinguishable between any two users. In a particular phase $l$, the aggregated feedback for each chosen action becomes $\Tilde{\y}_l = \bar{\y}_{l}+(\rho_1, \dots, \rho_{H_l})$ (see Eq.~\eqref{eq:privatizer_cdp}), where $\rho_j\overset{\emph{i.i.d.}}{\sim} \mathcal{N}(0,\sigma^2_{nc})$ is the injected Gaussian noise for ensuring the required $(\epsilon, \delta)$-DP and is chosen according to the (high-probability) sensitivity of $\bar{\y}_l$. Specifically, we set the variance of the injected Gaussian noise to the following: 
\begin{equation}
 \sigma_{nc} = \frac{2\sqrt{2(\kappa^2+\sigma^2)H_l\log(2H_l/\delta_1)\ln(1.25/\delta_2)}}{\epsilon |U_l|},  \label{eq:gaussian_noise}
\end{equation}
where $\delta_1\in (0, \delta)$ is the probability of sensitivity concentration of $\bar{\y}_l$ (i.e., Eq.~\eqref{eq:sensitivity_lemma} holds with probability at least $1-\delta_1$) and $\delta_2=\delta-\delta_1$.
By accounting for privacy noise, we update the confidence width function in Eq.~\eqref{eq:confidence_width_dp} with $\sigma_n = \sigma_{nc}\sqrt{2C^2\gamma_T}$, where $C$ is the rare-switching parameter.}

\begin{algorithm}[!t]
\caption{Differentially Private Distributed Phase-then-Batch-based Elimination (DP-DPBE)}
\label{alg:DP-DPBE}
\begin{algorithmic}[1]
\STATE \textbf{Input:}  $\cD\subseteq \R^d$, $\alpha \in (0,1)$, $\beta \in (0,1)$, rare-switching parameter $C$, local noise $\sigma^2$, privacy parameters $\epsilon$ and $\delta$%and noise $\sigma_n$
% \STATE Same as Algorithm~\ref{alg:DPBE} except for Line~\ref{alg_aggregation} and 
\STATE \textbf{Initialization:} $l=1$, 
$\cD_1=\cD$, $t_1=0$, and $T_1= 1$
	%\FOR{$l=1,2,\dots$} %
	\WHILE{$t_l<T$}
	\STATE Set $\tau = 1$, $h=0$, $\tau_1=0$ and $\Sigma^2_0(\x)=k(\x,\x)$%\xingyu{$\Sigma?$}\fengjiao{yes. updated.}
	, for all $\x \in \cD_l$
	\WHILE{$\tau \leq T_l$} 
	\STATE $h = h+1$
	\STATE Choose 
	\begin{equation}
	    \ca_h \in \argmax_{\x\in \cD_l} \Sigma_{h-1}^2(\x) 
	\end{equation}
% 	\STATE $H_l = H_l \cup \{\ca_h\}$
	\STATE Play action $\ca_h$ for $T_{l}(\ca_h)\triangleq \lfloor
	(C^2-1)/\Sigma_{h-1}^2(\ca_h)\rfloor$ times if not reaching $\min\{T,t_l+T_l\}$
% 	\item[] \deemph{\# i.e., $x_{t_l+\tau}=\ca_h$ for any $\tau \in [\tau_h +1, \tau_h+T_l(\ca_h)]$}
	\STATE Update $\tau = \tau+T_l(\ca_h)$, and the posterior variance $\Sigma^2_{h}(\cdot)$ by including $\ca_h$  according to Eq.~\eqref{eq:sigma_update}.
% 	\STATE Update $\tau = \tau+T_l(\ca_h)$, \deemph{$\tau_h = \tau_h+T_l(\ca_h)$},
	\ENDWHILE 
	\STATE Let $H_l = h$ denote the total number of batches in this phase.%\xingyu{distinct? Then, we are fine because $H_l \le$ total number of batches. We should be careful about these two values  }
% 	\fengjiao{$H_l=h$ should be number of batches. If we use $H_l$ to denote the number of unique actions chosen in this phase, then $\A_h$ Eq.~\eqref{eq:sigma_update} might have to be adjusted and so as to $\A_{H_l}$.}
	\STATE Randomly select $\lceil 2^{\alpha l}\rceil$ participants $U_l$
	\item[] \deemph{\# Operations at each participant }%(Line~7-11)} 
	\FOR{each participant $u  \in U_l$ }
	\STATE Collect and compute local average reward for every chosen action $\ca\in \A_{H_l}$:  $$y_l^u(\ca) = \frac{1}{T_{l}(\ca)}\sum_{t\in \cT_{l}(\ca)}y_{u,t}$$ 
	\STATE Send the local average reward for every chosen action $\y_l^u\triangleq [y_l^u(\ca)]_{\ca\in \A_{H_l}}$ to the agent 
% 	\item[] \deemph{\# Apply the \textsc{Privatizer} $\cP = (\cR, \cS, \cA)$}
% 	\item[] \deemph{\# The local randomizer $\cR$ at each client:}
	\ENDFOR
% 	\item[] \deemph{\# The shuffler $\cS$ at a trusted third party:}
% 	\item[] \deemph{\# The analyzer $\cA$ at the server:}
% 	\STATE Generate the privately aggregated statistics: $\Tilde{y}_l = \cA(\cS(\{\c{R}(\Vec{y}_l^u)\}_{u\in U_l}))$
    \STATE Aggregate local observations for each chosen action $\ca \in \A_{H_l}$: 
    \begin{equation*}
        y_l(\ca) = \frac{1}{|U_l|}\sum_{u\in U_l} y_l^u(\ca)
    \end{equation*}\label{alg_aggregation_dp}
    \STATE Let $\bar{\y}_l = [y_l(\ca_1), \dots, y_l(\ca_{H_l})]$ and \begin{equation*}
	\Tilde{\y}_l = 
		 \bar{\y}_{l}+(\rho_1, \dots, \rho_{H_l}),  
\end{equation*}
where %$H_l\triangleq|\A_{H_l}|$,  
$\rho_j\overset{\emph{i.i.d.}}{\sim} \mathcal{N}(0,\sigma^2_{nc})$ \Hi{and  $\sigma_{nc}$ is specified in Eq.~\eqref{eq:gaussian_noise}}.
%$\sigma_{nc} = \frac{2\sqrt{2(\kappa^2+\sigma^2)H_l\log(2H_l/\delta_1)\ln(1.25/\delta_2)}}{\epsilon |U_l|}$
%}.
\label{alg_combine_dp}
	\STATE Update $\Tilde{\mu}_l(\cdot)$:	\begin{equation}
    \Tilde{\mu}_l(\x) \triangleq \vk(\x, \A_{H_l})^{\top}(\K_{\A_{H_l}\A_{H_l}} +\lambda \W_{H_l}^{-1})^{-1}\Tilde{\y}_l %\label{eq:mu_update_dp}.
\end{equation}
	\STATE Eliminate low-rewarding actions from $\cD_l$ based on the confidence width function $\Tilde{w}_l(\cdot)$ in Eq.~\eqref{eq:confidence_width_dp} \Hig{with $\sigma_n = \sigma_{nc}\sqrt{2C^2\gamma_T}$}:
	\begin{equation} 
	\cD_{l+1} = \left\{\x\in \cD_l: \Tilde{\mu}_l(\x)+ \Tilde{w}_l(\x)\geq \max_{\mathbf{b}\in \cD_l}( \Tilde{\mu}_l(\mathbf{b})-\Tilde{w}_l(\mathbf{b}))\right\}. \label{eq:action_elimination_dp}
	\end{equation} \label{alg_elimination_dp}
	\STATE $T_{l+1} = 2T_l$, 	$t = t+T_l$; $l = l+1$
	\ENDWHILE
\end{algorithmic}
\end{algorithm}

\textbf{Differentially Private \texttt{DPBE} in the Local DP Model.}
In the local model, the users do not trust the agent, and thus, each is equipped with a local randomizer $\cR$ to protect its own local reward. Let $Y$ be the set of all possible values of the local reward. Formally, a local randomizer $\cR$ is $(\epsilon, \delta)$-local differentially private (or $(\epsilon, \delta)$-LDP) if for any two user inputs, the probability that $\cR$ outputs two values in $Y$ that are not different by more than a multiplicative factor of $e^{\epsilon}$ and an additive factor of $\delta$. To guarantee LDP, the local randomizer $\cR$ at each user $u$ injects Gaussian noise before sending the local reward observations out to the central agent. That is, 
\begin{equation}
    \cR(\y_l^u) = \y_l^u + (\rho_{u,1},\dots, \rho_{u, H_l}), \label{eq:randomizer_ldp} 
\end{equation}
\Hig{where $\rho_{u,j}{\sim}  \mathcal{N}(0,\sigma^2_{nl})$ is \emph{i.i.d.} across both users and actions and the variance $\sigma^2_{nl}$ is chosen according to the (high-probability) sensitivity of $\y_l^u$ (see Eq.~\eqref{eq:local_sensitivity}).}
Then, the \emph{private} aggregated feedback for the chosen actions in the $l$-th phase in the local DP model becomes
\begin{equation}
\Tilde{\y}_l = \frac{1}{|U_l|}\sum_{u\in U_l} \cR(\y_l^u) =  \frac{1}{|U_l|}\sum_{u\in U_l} \left(	{\y}_{l}^u+(\rho_{u,1}, \dots, \rho_{u,H_l})\right).\label{eq:privatizer_ldp}
\end{equation}

\Hig{We call the differentially private version of \texttt{DPBE} in the local DP model \texttt{LDP-DPBE}. Specifically, we extend the \texttt{DPBE} algorithm (Algorithm~\ref{alg:DP-DPBE}) to \texttt{LDP-DPBE} by employing a local randomizer $\cR$ as in Eq.~\eqref{eq:randomizer_ldp} at each participant in the $l$-th phase and then using the privately aggregated feedback in Eq.~\eqref{eq:privatizer_ldp} to estimate the mean function $\Tilde{\mu}_l(\cdot)$ in Eq.~\eqref{eq:mu_update_dp}. The injected Gaussian noise at each participant is
$\sigma_{nl} = \frac{2\sqrt{2(\kappa^2+\sigma^2)H_l\log(2H_l/\delta_1)\ln(1.25/\delta_2))}}{\epsilon}$,
where $\delta_1\in (0, \delta)$ is the probability of sensitivity concentration of $\bar{\y}_l$ (i.e., Eq.~\eqref{eq:local_sensitivity} holds with probability at least $1-\delta_1$) and $\delta_2=\delta-\delta_1$.
In \texttt{LDP-DPBE}, we update the confidence width function in Eq.~\eqref{eq:confidence_width_dp} with $\sigma_n = \sqrt{\frac{2C^2\sigma_{nl}^2\gamma_T}{|U_l|}}$, where $C$ is the rare-switching parameter. }

\textbf{Differentially Private \texttt{DPBE} in the Shuffle DP Model.} While local DP  provides a more stringent privacy guarantee, it usually incurs larger regret cost \cite{zhou2020local}. The shuffle model is recently proposed to achieve a better tradeoff between regret and privacy \cite{cheu2019distributed}. In the shuffle model, between the users and the agent, there exists a shuffler that permutes the local feedback from the participants before they are observed by the agent so that the agent cannot distinguish between two users' feedback. Thus, an additional layer of randomness is introduced via shuffling, which can often be easily implemented using Cryptographic primitives (e.g., mixnets) due to its simple operation~\cite{bittau2017prochlo}. Specifically, the shuffle DP model consists of three components: a local randomizer $\cR$ at each user side, a shuffler $\cS$ between the users and the agent, and an analyzer $\cA$ at the agent side. Let $\cU_T\triangleq (U_l, \cdots, U_l)$ be the participants throughout the $T$ rounds. Define the (composite) mechanism $\cM_s(\cU_T)\triangleq ((\cS \circ \cR)(U_1), (\cS \circ \cR)(U_2),\ldots, (\cS \circ \cR)(U_L))$, where $(\cS\circ \cR)(U_l)\triangleq \cS (\{\cR(\y_l^{u}) \}_{u\in U_l})$. Formally,  We say the \texttt{DP-DPBE} algorithm  satisfies the shuffle differential privacy (SDP) if the composite mechanism $\cM_s$ is DP, which leads to the following formal definition. 
\begin{definition} (Shuffle Differential Privacy (SDP)). For any $\epsilon \geq 0$ and $\delta\in [0,1]$, the \texttt{DP-DPBE} is $(\epsilon, \delta)$-shuffle differential privacy (or $(\epsilon, \delta)$-SDP) if for any pair $\mathcal{U}_T$ and $\mathcal{U}^{'}_T$ that differ by one user, and for any $Z\in Range(\cM_s)$\footnote{$Rang(\cM)$ denotes the range of the output of the mechanism $\cM$.}:
\begin{equation}
	\mathbb{P}[\cM_s(\mathcal{U}_T)\in Z] \leq e^{\epsilon} \mathbb{P}[\cM_s(\mathcal{U}^{'}_T)\in Z] + \delta.
	\end{equation}
\end{definition}
In our case, we apply a shuffle model to the feedback from participants of every particular phase. That is,
the \emph{private} aggregated feedback for the chosen actions in the $l$-th phase in the shuffle DP model becomes
\begin{equation}
\Tilde{\y}_l = \cA\left(\cS\left(\{\cR(\y_l^u\}_{u\in U_l})\right)\right), \label{eq:privatizer_sdp} 
\end{equation}
where the local randomizer injects a sub-Gaussian noise with variance $\sigma_{ns}^2$, which is \emph{i.i.d.} across both users and actions. Thanks to our phase-then-batch strategy, %we follow the shuffle protocol in \cite{li2022differentially} and present the shuffle model for \texttt{DP-DPBE} in Algorithm~\ref{alg:shuffler}. The shuffle pro
the recently proposed vector summation protocol \cite{cheu2021shuffle} can be extended to our algorithm as \cite{li2022differentially}. 
We present the concrete pseudocodes of $\cR$, $\cS$, and $\cA$ in Algorithm~\ref{alg:shuffler}.

\Hig{We call the differentially private version of \texttt{DPBE} in the shuffle model \texttt{SDP-DPBE}, which is extended from \texttt{DP-DPBE} by using the privately aggregated feedback in Eq.~\eqref{eq:privatizer_sdp}, where $\cR$, $\cS$, and $\cA$ are specified in Algorithm~\ref{alg:shuffler}. For any $\delta_1\in (0, \delta)$, let $\Delta \triangleq B\kappa\sqrt{H_l} +\sqrt{2(\kappa^2+\sigma^2)H_l\log(2H_l/\delta_1)}$. It is not difficult to show that $\Vert \y_l^u\Vert_2\leq \Delta$ with probability at least  $1-\delta_1$. \texttt{SDP-DPBE} employs Algorithm~\ref{alg:shuffler} 
%\bo{Isn't SDP-DPBE itself just Algorithm 3?}\fengjiao{No. SDP-DPBE uses Algorithm 3 in each phase $l$ when collecting local feedback. Algorithm~3 specifies $\cR$, $\cS$, and $\cA$.} 
in each phase $l$ with input  $\{\y_l^u\}_{u\in U_l}$, $\Delta$,
    and privacy parameters $\epsilon$ and $\delta_2=\delta-\delta_1$. According to \cite{li2022differentially},
the introduced error for privacy is sub-Gaussian with variance $\sigma_{ns}^2 = O\left(\frac{H_l(\kappa^2+\sigma^2)\log(H_l/\delta_1)\ln(H_l/\delta_2)^2}{\epsilon^2|U_l|^2 }\right)$. 
In \texttt{SDP-DPBE}, we update the confidence width function in Eq.~\eqref{eq:confidence_width_dp} with $\sigma_n = \sigma_{ns}\sqrt{2C^2\gamma_T}$, where $C$ is the rare-switching parameter.
}
\begin{algorithm}[!t]
	\caption{ $\cM:$ Shuffle Protocol for a Set of Vectors with  Users $U$ \cite{li2022differentially}} 
	\label{alg:shuffler}
	\begin{algorithmic}[1]
	\STATE \textbf{Input:} $\{\y^u\}_{u\in U}$, where each $\y^u\in \R^s$, $\Vert \y^u\Vert_2\leq \Delta$, privacy parameters $\epsilon$, $\delta_2\in (0,1)$
% 	\STATE \textbf{Parameters:} $g$, $b$, $p$
	\STATE Let 
	\begin{equation}
	\begin{cases}
		\hat{\epsilon}=\frac{\epsilon}{18\sqrt{\log(2/\delta_2)}} \\
% 		\hat{\delta} = \frac{\delta}{2d} \\
		g\triangleq \max\{\hat{\epsilon}\sqrt{|U|}/(6\sqrt{5\ln{((4s)/\delta_2)}}), \sqrt{s}, 10\}\\
		b\triangleq \lceil \frac{180g^2\ln{(4s/\delta_2)}}{\hat{\epsilon}^2|U|}\rceil\\
		p \triangleq \frac{90g^2\ln{(4s/\delta_2)}}{b\hat{\epsilon}^2|U|} \label{eq:gbp_set}
	\end{cases}    
	\end{equation}
	\item[] \deemph{// Local Randomizer}
	\item[] \textbf{function} %\textsc{Randomizer}
	$\cR(\y^u)$ \\
	 \begin{ALC@g}
			\FOR{coordinate $j\in [s]$} 
	\STATE Shift data to enforce non-negativity: $w_{u,j} = (\y^u)_j + \Delta, \forall u\in U$
	\item[] //randomizer for each entry
	\STATE Set $\Bar{w}_{u,j} \gets \lfloor w_{u,j}g/(2\Delta)\rfloor$  \hfill \deemph{//$\max|(\y^u)_j+\Delta|\leq 2\Delta$}
	\STATE Sample rounding value $\gamma_1 \sim \textbf{Ber}(w_{u,j}g/(2\Delta) - \Bar{w}_{u,j})$
	\STATE Sample privacy noise value $\gamma_2 \sim \textbf{Bin}(b,p)$
	\STATE Let $\phi_j^u$ be a multi-set of $(g+b)$ bits associated with the $j$-th coordinate of user $u$, where $\phi_j^u$ consists of $\Bar{w}_{u,j}+\gamma_1+\gamma_2$ copies of 1 and $g+b-(\Bar{w}_{i,j}+\gamma_1+\gamma_2)$ copies of 0
% 	\STATE $\Vec{\phi}_j = (R_{1D}(w_{u,j}))_{u\in [U]}$
	\ENDFOR
	\STATE Report $\{(j,\phi^u_j)\}_{j\in [s]}$ to the shuffler
	\end{ALC@g}
    \item[] \textbf{end function}
    \item[] \deemph{// Shuffler}
    \item[] \textbf{function} %\textsc{Shuffler}
    $\mathcal{S}(\{(j, \bm{\phi}_j)\}_{j\in[s]})$  \quad //$\bm{\phi}_j = (\phi^u_j)_{u\in U}$\\
    \begin{ALC@g}
    \FOR{each coordinate $j\in [s]$}
    \STATE Shuffle and output all  $(g+b)|U|$ bits in $\bm{\phi}_j$ 
    \ENDFOR
    \end{ALC@g}
    \item[] \textbf{end function} \\
    \item[] \deemph{// Analyzer}
    \item[] \textbf{function} %\textsc{Analyzer} 
    $\mathcal{A}(\mathcal{S}(\{(j, \bm{\phi}_j)\}_{j\in[s]})$
    \begin{ALC@g}
	\FOR{coordinate $j\in [s]$}
	\STATE Compute $z_j \gets \frac{2\Delta}{g|U|} ((\sum_{i=1}^{(g+b)|U|} (\bm{\phi}_{j})_i)-b|U|p) $   \quad // $(\bm{\phi}_j)_i$ denotes the $i$-th bit in $\bm{\phi}_j$
% 	\STATE Run analyzer on $j$'s messages $z_j \gets A_{1D}(\mathcal{S}(\Vec{y}_j))$
	\STATE Re-center: $o_j \gets z_j - \Delta$
	\ENDFOR
	\STATE  Output the estimator of vector average ${o}=(o_j)_{j\in [s]}$
	\end{ALC@g}
	\textbf{end function}
	\end{algorithmic}
\end{algorithm}

\subsection{Performance Guarantee}
For the \texttt{DP-DPBE} algorithm incorporated with the above local and shuffle DP models, we provide the DP guarantee and regret in the following. 
\Hig{
\begin{theorem}[DP guarantee]\label{thm:dp_app} Under Assumptions~\ref{ass:global_func}, \ref{ass:local_func}, \ref{ass:noise}, and \ref{ass:one-time}, for any $\epsilon>0$ and     $\delta\in (0,1)$,
\begin{itemize}
    \item[i)]\texttt{LDP-DPBE}   guarantees $(\epsilon, \delta)$-LDP;
    \item[ii)]  \texttt{SDP-DPBE}   guarantees $(\epsilon, \delta)$-SDP. %The \texttt{SDP-DPBE} algorithm employing the shuffle protocol of Algorithm~\ref{alg:shuffler} in each phase $l$ %with input $\{\y_l^u\}_{u\in U_l}$, $\Delta$, and privacy parameters $\epsilon$ and $\delta_2=\delta-\delta_1$, 
    %guarantees $(\epsilon, \delta)$-SDP. \bo{Why not ``\texttt{SDP-DPBE}   guarantees $(\epsilon, \delta)$-SDP.''?}
    %In addition, the introduced error for privacy is sub-Gaussian with variance $\sigma_{ns}^2 =     O\left(\frac{H_l(\kappa^2+\sigma^2)\log(H_l/\delta_1)\ln(H_l/\delta_2)^2}{\epsilon^2|U_l|^2 }\right)$ and independent of the inputs.
\end{itemize}
\end{theorem}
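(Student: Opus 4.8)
The plan is to reduce both claims to a \emph{per-phase} privacy statement and then combine phases, following the same template as the proof of Theorem~\ref{thm:dp}. First I would invoke Assumption~\ref{ass:one-time}: since each sampled user participates in exactly one phase, two neighboring user collections $\mathcal{U}_T$ and $\mathcal{U}'_T$ differ in a single user who appears in exactly one phase $l$, and the (randomized) feedback produced in every other phase is independent of that user. Hence, by the parallel-composition property of differential privacy, it suffices to show that the map producing the privatized aggregate $\Tilde{\y}_l$ for that single phase is $(\epsilon,\delta)$-LDP (resp.\ $(\epsilon,\delta)$-SDP). Once $\Tilde{\y}_l$ is private, the posterior mean $\Tilde{\mu}_l(\cdot)$ in Eq.~\eqref{eq:mu_update_dp}, the confidence width $\Tilde{w}_l(\cdot)$ in Eq.~\eqref{eq:confidence_width_dp}, the elimination step, and the (reward-independent) action selection are all post-processing of $\Tilde{\y}_l$, so the whole trajectory inherits the guarantee by the post-processing property of DP (Proposition~2.1 in \cite{dwork2014algorithmic}). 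Throughout, given the target $\epsilon>0$ and $\delta\in(0,1)$, I would split $\delta=\delta_1+\delta_2$ with $\delta_1\in(0,\delta)$ as in the algorithm description, where $\delta_1$ absorbs a data-concentration failure event and $\delta_2$ is the budget used by the noise mechanism.

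For part (i) (\texttt{LDP-DPBE}), the local randomizer $\cR$ in Eq.~\eqref{eq:randomizer_ldp} perturbs the per-action averages $\y_l^u$ by i.i.d.\ Gaussian noise of variance $\sigma_{nl}^2$. The key step is to control the high-probability $\ell_2$-sensitivity of $\y_l^u$: under Assumptions~\ref{ass:local_func} and \ref{ass:noise}, each coordinate $y_l^u(\ca)=\frac{1}{T_l(\ca)}\sum_{t\in\cT_l(\ca)}y_{u,t}=f_u(\ca)+\frac{1}{T_l(\ca)}\sum_{t}\eta_{u,t}$ is Gaussian with variance at most $k(\ca,\ca)+\sigma^2\le\kappa^2+\sigma^2$, so a Gaussian tail bound together with a union bound over the at most $H_l$ chosen actions shows that, with probability at least $1-\delta_1$, the sensitivity bound Eq.~\eqref{eq:local_sensitivity} holds with $\Delta_2\le 2\sqrt{2(\kappa^2+\sigma^2)H_l\log(2H_l/\delta_1)}$. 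Conditioned on this event, the analytic Gaussian-mechanism guarantee yields $(\epsilon,\delta_2)$-LDP as soon as $\sigma_{nl}\ge\Delta_2\sqrt{2\ln(1.25/\delta_2)}/\epsilon$, which is exactly the algorithm's choice $\sigma_{nl}=\frac{2\sqrt{2(\kappa^2+\sigma^2)H_l\log(2H_l/\delta_1)\ln(1.25/\delta_2)}}{\epsilon}$; folding in the failure probability $\delta_1$ gives $(\epsilon,\delta_1+\delta_2)=(\epsilon,\delta)$-LDP for $\cR$, and hence for \texttt{LDP-DPBE}.

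For part (ii) (\texttt{SDP-DPBE}), I would first bound the per-user message norm: arguing as above (using $|f(\ca)|\le B\kappa$ from the reproducing property and Assumption~\ref{ass:global_func} for the mean part, and the same sub-Gaussian/union-bound argument for the fluctuation part), with probability at least $1-\delta_1$ we have $\|\y_l^u\|_2\le\Delta$ with $\Delta=B\kappa\sqrt{H_l}+\sqrt{2(\kappa^2+\sigma^2)H_l\log(2H_l/\delta_1)}$. Conditioned on this event, I would invoke the privacy guarantee of the shuffle vector-summation protocol $\cM=\cA\circ\cS\circ\cR$ of Algorithm~\ref{alg:shuffler} (the amplification-by-shuffling analysis of \cite{cheu2021shuffle,li2022differentially}): with the parameters $g,b,p$ set as in Eq.~\eqref{eq:gbp_set} for input-norm bound $\Delta$ and target $(\epsilon,\delta_2)$, the composite mechanism $(\cS\circ\cR)(U_l)$ is $(\epsilon,\delta_2)$-DP. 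Accounting for the $\delta_1$ failure event and then applying the parallel-composition argument over phases (each user being present in a single phase) shows that $\cM_s$ is $(\epsilon,\delta_1+\delta_2)=(\epsilon,\delta)$-SDP, i.e.\ \texttt{SDP-DPBE} is $(\epsilon,\delta)$-SDP.

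The main obstacle is the sensitivity/boundedness step: because the local reward vectors are Gaussian and hence unbounded, there is no finite worst-case sensitivity, so one must pass to a high-probability ``effective'' domain and carefully split the budget between $\delta_1$ (data concentration) and $\delta_2$ (mechanism noise), ensuring the concentration is uniform over all $\le H_l$ coordinates; note that the bound on $H_l$ from Lemma~\ref{lem:bound_h} is never needed here (it is used only in the regret/communication analysis). The shuffle part is then mostly a matter of instantiating the existing protocol analysis with the norm bound $\Delta$; the only subtlety is checking that the phase-then-batch structure lets the vector-summation protocol be applied verbatim per phase, which is immediate since all participants of phase $l$ send vectors of the same dimension $H_l$.
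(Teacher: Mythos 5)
Your proposal is correct and follows essentially the same route as the paper: a high-probability $\ell_2$-sensitivity/norm bound for $\y_l^u$ via Gaussian tails and a union bound over the $H_l$ coordinates, the Gaussian mechanism (resp.\ the shuffle vector-summation protocol of \cite{li2022differentially,cheu2021shuffle}) calibrated to that bound, the $\delta=\delta_1+\delta_2$ accounting of Eq.~\eqref{eq:delta1+delta2}, and post-processing plus one-time participation to lift the per-phase guarantee to the whole algorithm. The only quibble is a stray $\sqrt{2}$ in your claimed sensitivity for part (i) — the correct bound is Eq.~\eqref{eq:local_sensitivity}, $2\sqrt{H_l(\kappa^2+\sigma^2)\log(2H_l/\delta_1)}$, which is what makes the algorithm's $\sigma_{nl}$ exactly the Gaussian-mechanism calibration — but this is harmless bookkeeping and does not affect the argument.
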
 
}
We achieve the above LDP guarantee of i) directly by employing the Gaussian mechanism given the (high-probability) sensitivity of $\y_l^u$. 
In the shuffle model, we follow the shuffle protocol for each phase in \cite{li2022differentially} and derive the corresponding SDP guarantee from Theorem~A.2 therein.
% \begin{lemma}[Theorem~A.2 in \cite{li2022differentially}]
% For any $\epsilon \in (0,15), \delta \in (0,1)$, Algorithm~\ref{alg:shuffler}  is $(\epsilon,\delta)$-SDP, unbiased, and has error distribution which is sub-Gaussian with variance $\sigma_{ns}^2 = O\left(\frac{\Delta_2\ln(s/\delta)}{\epsilon^2 |U|^2}\right)$ and independent of the inputs.
% \end{lemma}

From the above results, we derive that compared to the local model  the shuffle model injects much less noise ($\sigma_{ns}^2$ vs. $\sigma_{nl}^2$)  without requiring a trusted agent.  In the following, we present the regret performance of \texttt{DP-DPBE} in these two DP models.

\Hig{
\begin{theorem}[LDP-DPBE]\label{thm:regret_ldp} %Apply the Gaussian mechanism in the local DP model with $\sigma_{nl} = \frac{2\sqrt{2(\kappa^2+\sigma^2)H_l\log(2H_1/\delta_1)\ln(1.25/\delta_2))}}{\epsilon}$\Hi{ as in Theorem~\ref{thm:dp_app}}. With $\beta=\frac{1}{|\cD|T}$  and $\sigma_{n} = \sqrt{\frac{2C^2\sigma_{nl}^2\gamma_T}{|U_l|}}$, the \texttt{DP-DPBE} extension with a local DP model 
Under Assumptions~\ref{ass:global_func}, \ref{ass:local_func}, and \ref{ass:noise}, the \texttt{LDP-DPBE} algorithm with $\beta=\frac{1}{|\cD|T}$ achieves the following expected regret: 
\begin{equation}
\begin{aligned}
\E[R(T)] &= O(T^{1-\alpha/2}\sqrt{\log (|\cD|T)})+ O\left(\frac{\ln(1/\delta)\gamma_T T^{1-\alpha/2}\sqrt{\log(|\cD| T)}}{\epsilon}\right).
\end{aligned}
\end{equation}
\end{theorem}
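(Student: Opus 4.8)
The plan is to mirror the three‑stage argument behind Theorem~\ref{thm:regret_upper_bound} (and its central‑DP variant, Theorem~\ref{thm:regret_cdp}), the only genuinely new ingredient being the extra summand $\sqrt{2\sigma_n^2\log(1/\beta)}$ in the private confidence width $\Tilde{w}_l$ of Eq.~\eqref{eq:confidence_width_dp}, which must be shown to dominate the error caused by the locally‑injected Gaussian noise. First I would establish a private counterpart of Theorem~\ref{thm:concentration_ineq}: for every phase $l$ and every $\x\in\cD$, with probability at least $1-6\beta$, $|f(\x)-\Tilde{\mu}_l(\x)|\le\Tilde{w}_l(\x)$ with $\sigma_n=\sqrt{2C^2\sigma_{nl}^2\gamma_T/|U_l|}$. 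By the triangle inequality $|f(\x)-\Tilde{\mu}_l(\x)|\le|f(\x)-\bar{\mu}_l(\x)|+|\bar{\mu}_l(\x)-\Tilde{\mu}_l(\x)|$; the first term is handled by Theorem~\ref{thm:concentration_ineq} (good with probability $1-4\beta$), so it remains to control $|\bar{\mu}_l(\x)-\Tilde{\mu}_l(\x)|=|\vk(\x,\A_{H_l})^\top(\K_{\A_{H_l}\A_{H_l}}+\lambda\W_{H_l}^{-1})^{-1}\bm{\rho}|$, where $\bm{\rho}=\Tilde{\y}_l-\bar{\y}_l$ has i.i.d.\ $\cN(0,\sigma_{nl}^2/|U_l|)$ coordinates (the average of the $|U_l|$ independent per‑participant noises). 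Since $\bm{\rho}$ is drawn independently of the actions, the sampled users, and the reward noise, this perturbation is a mean‑zero Gaussian whose variance I would bound by $\sigma_n^2$; a Gaussian tail bound then contributes the remaining $2\beta$.

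The hard part will be this variance bound, i.e.\ controlling the amplification of the privacy noise through the posterior. I would work in the feature space: writing $\K_{\A_{H_l}\A_{H_l}}=\Phi_{H_l}\Phi_{H_l}^\top$ and using $(\K_{\A_{H_l}\A_{H_l}}+\lambda\W_{H_l}^{-1})^{-1}=\W_{H_l}^{1/2}(\W_{H_l}^{1/2}\Phi_{H_l}\Phi_{H_l}^\top\W_{H_l}^{1/2}+\lambda\I)^{-1}\W_{H_l}^{1/2}$, together with the push‑through identity $(MM^\top+\lambda\I)^{-1}M=M(M^\top M+\lambda\I)^{-1}$ and Eq.~\eqref{eq:useful_rs_1} (which gives $\Phi_{H_l}^\top\W_{H_l}\Phi_{H_l}=\Phi_{\tau_{H_l}}^\top\Phi_{\tau_{H_l}}$), one obtains $\|(\K_{\A_{H_l}\A_{H_l}}+\lambda\W_{H_l}^{-1})^{-1}\vk(\x,\A_{H_l})\|_2^2\le\frac{\max_i T_l(\ca_i)}{\lambda}\Sigma_{H_l}^2(\x)\le\frac{T_l}{\lambda}\Sigma_{H_l}^2(\x)$, the last step using $\W_{H_l}\preceq T_l\I$. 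Combining with $\Sigma_{H_l}^2(\x)\le 2\lambda C^2\gamma_{T_l}/T_l$ from Lemma~\ref{lem:max_variance} yields variance at most $2C^2\gamma_{T_l}\sigma_{nl}^2/|U_l|\le\sigma_n^2$, as needed. (This is exactly the step where the $\gamma_T$ factor in $\sigma_n$ originates.)

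Second, I would condition on the event that the private concentration inequality holds for all $\x\in\cD$ and all phases — which by a union bound has probability at least $1-6|\cD|\beta L$, so with $\beta=1/(|\cD|T)$ and $L=O(\log T)$ the complementary event contributes only a lower‑order $O(B\kappa\log T)$ to $\E[R(T)]$ after multiplying by the trivial bound $2B\kappa T$ — and reuse verbatim the three observations of Appendix~\ref{app:proof_reget} with $w_l$ replaced by $\Tilde{w}_l$: the optimal action is never eliminated; any surviving $\x\in\cD_{l+1}$ satisfies $f^\ast-f(\x)\le 4\max_{\x\in\cD_l}\Tilde{w}_l(\x)$; and $\max_{\x\in\cD_l}\Tilde{w}_l(\x)\le\max_{\x\in\cD_l}w_l(\x)+\sqrt{2\sigma_n^2\log(1/\beta)}$, with the first term bounded exactly as in Observation~3 there.

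Finally, I would sum $r_l\le 4T_l\max_{\x\in\cD_{l-1}}\Tilde{w}_{l-1}(\x)$ over $l$, using $T_l=2^{l-1}$ and $|U_l|\ge 2^{\alpha l}$. The $w_{l-1}$ part reproduces the non‑private regret analysis, whose leading term is $O(T^{1-\alpha/2}\sqrt{\log(|\cD|T)})$ (the remaining non‑private terms are lower order and are not carried). For the new piece, plugging $\sigma_{nl}^2=O((\kappa^2+\sigma^2)H_l\log(H_l/\delta)\ln(1/\delta)/\epsilon^2)$ and $H_l=O(\gamma_T)$ from Lemma~\ref{lem:bound_h} gives $\sqrt{2\sigma_n^2\log(1/\beta)}=\tilde{O}\!\left(\gamma_T\sqrt{\ln(1/\delta)\log(|\cD|T)}/(\epsilon\,2^{\alpha l/2})\right)$, so phase $l$ contributes $\tilde{O}\!\left(\gamma_T\sqrt{\ln(1/\delta)\log(|\cD|T)}\,2^{(l-1)(1-\alpha/2)}/\epsilon\right)$; since $1-\alpha/2>0$ the geometric sum is dominated by $l=L$ with $2^{L-1}\le T$, yielding $\tilde{O}\!\left(\gamma_T\sqrt{\ln(1/\delta)\log(|\cD|T)}\,T^{1-\alpha/2}/\epsilon\right)$, which is at most $O\!\left(\ln(1/\delta)\gamma_T T^{1-\alpha/2}\sqrt{\log(|\cD|T)}/\epsilon\right)$ using $\sqrt{\ln(1/\delta)}\le\ln(1/\delta)$. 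Adding the two contributions gives the claimed bound. The LDP guarantee itself (Theorem~\ref{thm:dp_app}(i)) is orthogonal: it follows from the high‑probability $\ell_2$‑sensitivity of $\y_l^u$ and the Gaussian mechanism, and is not needed for the regret computation.
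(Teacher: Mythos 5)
Your proposal is correct and follows exactly the route the paper intends: the paper omits a standalone proof of Theorem~\ref{thm:regret_ldp}, stating only that one repeats the proof of Theorem~\ref{thm:regret_cdp} in Appendix~\ref{app:proof_cdp} with $\sigma_n$ replaced by $\sqrt{2C^2\sigma_{nl}^2\gamma_T/|U_l|}$, and your argument fills in precisely those details. Your identification of the averaged privacy noise as i.i.d.\ $\cN(0,\sigma_{nl}^2/|U_l|)$, the feature-space variance-amplification bound via $\W_{H_l}\preceq T_l\I$ and Lemma~\ref{lem:max_variance}, the $1-6\beta$ concentration event, and the phase-wise summation (where the $1/\sqrt{|U_l|}$ decay of the privacy term is what produces the $T^{1-\alpha/2}$ rather than $T^{1-\alpha}$ rate) all match the paper's analysis, with no gaps.
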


\begin{theorem}[SDP-DPBE]\label{thm:regret_sdp} %\Hi{Set $\sigma_{ns}^2 = O\left(\frac{2H_l(\kappa^2+\sigma^2)\log(2H_l/\delta_1)\ln(H_l/\delta_2)^2}{\epsilon^2|U_l|^2 }\right)$ as in Theorem~\ref{thm:dp_app}}. With $\sigma_n = \sigma_{ns}\sqrt{2C^2\gamma_T} = O\left( \frac{\gamma_T\ln(\gamma_T/\delta_2)\sqrt{(\kappa^2+\sigma^2)\log(H_l/\delta_1))}}{\epsilon |U_l|}\right)$ in the $l$-th phase and $\beta=\frac{1}{|\cD|T}$, the \texttt{DP-DPBE} extension with the shuffle model,  in Algorithm~\ref{alg:shuffler}
Under Assumptions~\ref{ass:global_func}, \ref{ass:local_func}, and \ref{ass:noise}, the \texttt{SDP-DPBE} algorithm with $\beta=\frac{1}{|\cD|T}$ achieves the following  expected regret:
\begin{equation}
\begin{aligned}
\E[R(T)] &= O(T^{1-\alpha/2}\sqrt{\log (|\cD|T)})+ O\left(\frac{\ln^{3/2}(\gamma_T/\delta)\gamma_T T^{1-\alpha}\sqrt{\log(|\cD| T)}}{\epsilon}\right).
\end{aligned}
\end{equation}
\end{theorem}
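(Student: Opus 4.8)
The plan is to reuse the three-stage architecture of the proof of Theorem~\ref{thm:regret_upper_bound} (a high-probability concentration inequality for the estimator, the three ``good-event'' observations, and the per-phase regret summation), replacing the estimator $\bar\mu_l$ and confidence width $w_l$ everywhere by their privatized counterparts $\tilde\mu_l$ and $\tilde w_l$ from Eqs.~\eqref{eq:mu_update_dp}--\eqref{eq:confidence_width_dp}. The only genuinely new ingredient is controlling the deviation $\tilde\mu_l(\x)-\bar\mu_l(\x)=\vk(\x,\A_{H_l})^{\top}(\K_{\A_{H_l}\A_{H_l}}+\lambda\W_{H_l}^{-1})^{-1}(\tilde\y_l-\bar\y_l)$ caused by the shuffle protocol, whose output error $\tilde\y_l-\bar\y_l\in\R^{H_l}$ has sub-Gaussian coordinates with variance $\sigma_{ns}^2=O\!\left(\frac{H_l(\kappa^2+\sigma^2)\log(H_l/\delta_1)\ln^2(H_l/\delta_2)}{\epsilon^2|U_l|^2}\right)$, a fact imported from \cite{li2022differentially} together with the $(\epsilon,\delta)$-SDP guarantee of Theorem~\ref{thm:dp_app}(ii).

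First I would establish a DP analogue of Theorem~\ref{thm:concentration_ineq}: with probability at least $1-O(\beta)$, $|f(\x)-\tilde\mu_l(\x)|\le\tilde w_l(\x)$ for all $\x\in\cD$. Write $f(\x)-\tilde\mu_l(\x)=(f(\x)-\bar\mu_l(\x))+(\bar\mu_l(\x)-\tilde\mu_l(\x))$; the first term is bounded by $w_l(\x)$ via Theorem~\ref{thm:concentration_ineq}, so it remains to bound the second by $\sqrt{2\sigma_n^2\log(1/\beta)}$ with $\sigma_n=\sigma_{ns}\sqrt{2C^2\gamma_T}$. Using the feature-space reformulations of Appendix~A one gets $(\K_{\A_{H_l}\A_{H_l}}+\lambda\W_{H_l}^{-1})^{-1}\vk(\x,\A_{H_l})=\W_{H_l}\Phi_{H_l}(\Phi_{H_l}^{\top}\W_{H_l}\Phi_{H_l}+\lambda\I)^{-1}\varphi(\x)$, whose squared norm is at most $\big(\max_h T_l(\ca_h)\big)\,\Sigma_{H_l}^2(\x)/\lambda\le(C^2-1)/\lambda$ uniformly in $\x$, since $\Phi_{H_l}^{\top}\W_{H_l}^2\Phi_{H_l}\preceq(\max_h T_l(\ca_h))\,\Phi_{H_l}^{\top}\W_{H_l}\Phi_{H_l}$, $\varphi(\x)^{\top}(\Phi_{H_l}^{\top}\W_{H_l}\Phi_{H_l}+\lambda\I)^{-1}\varphi(\x)=\Sigma_{H_l}^2(\x)/\lambda$ (Eq.~\eqref{eq:sigma_feature_space}, Eq.~\eqref{eq:useful_rs_1}, Lemma~\ref{lem:equivalent_representation}), and $T_l(\ca_h)\Sigma_{H_l}^2(\x)\le T_l(\ca_h)\Sigma_{h-1}^2(\ca_h)\le C^2-1$. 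Combining this with a high-probability bound $\|\tilde\y_l-\bar\y_l\|_2=O(\sqrt{H_l}\,\sigma_{ns}\sqrt{\log(H_l/\beta)})$ and $H_l\le\frac{4\sigma^2C^2}{C^2-1}\gamma_T$ (Lemma~\ref{lem:bound_h}), Cauchy--Schwarz gives $|\bar\mu_l(\x)-\tilde\mu_l(\x)|=O(C\sigma_{ns}\sqrt{\gamma_T\log(1/\beta)})$, which is exactly the extra term in $\tilde w_l$; a union bound over $\cD$ and over phases completes this step.

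With $\tilde w_l$ in place of $w_l$, the three good-event observations carry over verbatim: the optimal arm survives every elimination, every surviving arm $\x\in\cD_{l+1}$ obeys $f^\ast-f(\x)\le4\max_{\x'\in\cD_l}\tilde w_l(\x')$, and $\max_{\x'\in\cD_l}\tilde w_l(\x')\le\max_{\x'\in\cD_l}w_l(\x')+\sqrt{2\sigma_n^2\log(1/\beta)}$, the first summand being controlled by Lemma~\ref{lem:max_variance}/Observation~3 of Appendix~C. Summing $r_l\le4T_l\max_{\x'\in\cD_{l-1}}\tilde w_{l-1}(\x')$ then splits into the non-private part, which is exactly the bound of Theorem~\ref{thm:regret_upper_bound}, plus a privacy part $\sum_l4T_l\sqrt{2\sigma_n^2\log(1/\beta)}$. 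Substituting $T_l=2^{l-1}$, $|U_l|\ge2^{\alpha l}$, $H_l=O(\gamma_T)$, $\sigma_n=\sigma_{ns}\sqrt{2C^2\gamma_T}$, $\beta=1/(|\cD|T)$, taking $\delta_1=\delta_2=\delta/2$ so that $\sqrt{\log(\gamma_T/\delta_1)}\,\ln(\gamma_T/\delta_2)$ collapses to $\ln^{3/2}(\gamma_T/\delta)$, and evaluating the geometric sum $\sum_l2^{(1-\alpha)l}=O(T^{1-\alpha})$ yields the privacy term $O\!\left(\frac{\ln^{3/2}(\gamma_T/\delta)\,\gamma_T\,T^{1-\alpha}\sqrt{\log(|\cD|T)}}{\epsilon}\right)$; the bad-event contribution is $O(|\cD|\beta L\cdot B\kappa T)=O(B\kappa\log T)$, which is negligible, and the non-private noise terms of Theorem~\ref{thm:regret_upper_bound} are absorbed, giving the stated bound.

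The main obstacle is the second step, that is, pinning down $\sigma_n$: propagating the shuffle protocol's per-coordinate error variance $\sigma_{ns}^2$ (which itself depends on $H_l$, $|U_l|$, and the split $\delta=\delta_1+\delta_2$) through the matrix $(\K_{\A_{H_l}\A_{H_l}}+\lambda\W_{H_l}^{-1})^{-1}$ --- in particular obtaining the $\x$-uniform bound on $\|(\K_{\A_{H_l}\A_{H_l}}+\lambda\W_{H_l}^{-1})^{-1}\vk(\x,\A_{H_l})\|_2$ above, and an $\ell_2$-concentration of the (not jointly Gaussian) shuffle noise vector that loses only a $\sqrt{H_l}=O(\sqrt{\gamma_T})$ factor. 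Everything else is a routine adaptation of the non-private analysis together with the shuffle-protocol guarantees already available from \cite{li2022differentially}; the analogous \texttt{LDP-DPBE} bound (Theorem~\ref{thm:regret_ldp}) follows by the same argument with $\sigma_{ns}^2$ replaced by $\sigma_{nl}^2/|U_l|^2$ (note $\sigma_{nl}$ does not carry the extra $1/|U_l|$), which is why its privacy term scales as $T^{1-\alpha/2}$ rather than $T^{1-\alpha}$.
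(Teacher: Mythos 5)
Your proposal is correct and follows essentially the same route as the paper, which itself proves only the central-DP case (Theorem~\ref{thm:regret_cdp} via Theorem~\ref{thm:concentration_ineq_dp}) and obtains the shuffle bound by substituting $\sigma_n=\sigma_{ns}\sqrt{2C^2\gamma_T}$ into that argument. The only cosmetic difference is that you control the privacy-noise contribution by Cauchy--Schwarz with a uniform bound on $\bigl\lVert(\K_{\A_{H_l}\A_{H_l}}+\lambda\W_{H_l}^{-1})^{-1}\vk(\x,\A_{H_l})\bigr\rVert_2$ together with $\ell_2$-concentration of the noise vector, whereas the paper bounds the variance $\vk(\x,\A_{H_l})^{\top}(\K_{\A_{H_l}\A_{H_l}}+\lambda\W_{H_l}^{-1})^{-2}\vk(\x,\A_{H_l})\,\sigma_{nc}^2\leq 2C^2\gamma_{T_l}\sigma_{nc}^2$ of the linear functional directly; both yield the same order.
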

}

\begin{table}[!t]
\caption{Regret of \texttt{DP-DPBE} in Different DP Models}
\label{tab:private_regret}
\begin{tabular}{c|c} 
    \toprule
    Algorithms & Regret \\  \hline
    % \midrule
      \texttt{DPBE}    & $O(T^{1-\alpha/2}\sqrt{\log (|\cD|T)})$ \\  
      \texttt{CDP-DPBE}   & $O(T^{1-\alpha/2}\sqrt{\log (|\cD|T)}) + O\left(\frac{\ln(1/\delta)\gamma_TT^{1-\alpha}\sqrt{\log(|\cD|T)}}{\epsilon}\right)$ \\ 
      \texttt{LDP-DPBE} & $O(T^{1-\alpha/2}\sqrt{\log (|\cD|T)}) + O\left(\frac{\ln(1/\delta)\gamma_TT^{1-\alpha/2}\sqrt{\log(|\cD|T)}}{\epsilon}\right)$ \\ 
      \texttt{SDP-DPBE}  & $O(T^{1-\alpha/2}\sqrt{\log (|\cD|T)}) + O\left(\frac{\ln^{3/2}(\gamma_T/\delta)\gamma_TT^{1-\alpha}\sqrt{\log(|\cD|T)}}{\epsilon}\right)$\\ %\hline 
      \bottomrule
      \multicolumn{2}{l}{\footnotesize Notes: CDP-DPBE, LDP-DPBE, and SDP-DPBE represent the \texttt{DP-DPBE} algorithm in the central, local,}\\ 
      \multicolumn{2}{l}{\footnotesize  and shuffle models, respectively, which guarantee $(\epsilon, \delta)$-DP, $(\epsilon, \delta)$-LDP, and $(\epsilon, \delta)$-SDP, respectively.} \\
    % \multicolumn{2}{l}{\footnotesize \high{to be checked.}} \\
\end{tabular}
\end{table}
We omit the proofs for the above two theorems because they can be derived by directly replacing $\sigma_n$ of the central model with $\sigma_n = \sqrt{\frac{2C^2\sigma_{nl}^2\gamma_T}{|U_l|}}$  of the local model and $\sigma_n = \sigma_{ns}\sqrt{2C^2\gamma_T} $ of the shuffle model. See Appendix~\ref{app:proof_cdp}.
\subsection{Proofs for DP Guarantees}\label{app:proof_dp_guarantee}
Before providing the DP guarantee of the \texttt{DPBE} algorithm in the three DP models, we first show the $\ell_2$ sensitivity of 
$\bar{\y}_l$, which is a key parameter to decide the Gaussian noise. 

\Hig{
\begin{lemma}
% [Sensitivity]
\label{lem:global_sensitivity} 
Let $\cU_T, \cU_T^{\prime} \subseteq \mathcal{U}$ be two sets of participants in \texttt{DPBE} differing on a single user that is participating in the $l$-th phase, and let $\bar{\y}_l$ and $\bar{\y}_l^{\prime}$ be the corresponding average local reward. For any $\delta_1\in (0,1)$, we have that with probability at least $1-\delta_1$, the maximal $\ell_2$ distance between $\bar{\y}_l$ and $\bar{\y}_l^{\prime}$ is bounded by
\begin{equation}
\begin{aligned}
% \Delta_2 &\triangleq 
\max |\bar{\y}_l^{\prime} - \bar{\y}_l| \leq 2\frac{\sqrt{(\kappa^2+\sigma^2)H_l\log (2H_l/\delta_1)}}{|U_l|},  \label{eq:sensitivity_lemma} 
\end{aligned}
\end{equation}
where $H_l$ denotes the dimension of $\bar{\y}_l$ and $\sigma^2$ is the variance of the noisy observations. 
\end{lemma}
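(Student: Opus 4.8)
## Proof Proposal for Lemma~\ref{lem:global_sensitivity}

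The plan is to compute the sensitivity by reducing the $\ell_2$-distance between $\bar{\y}_l$ and $\bar{\y}_l'$ to a single-coordinate tail bound for one user's feedback vector. Since $\cU_T$ and $\cU_T'$ differ on exactly one participant (say, user $u$ is replaced by user $u'$, both sampled for phase $l$), the averages differ by $\bar{\y}_l' - \bar{\y}_l = \frac{1}{|U_l|}(\y_l^{u'} - \y_l^{u})$. Hence it suffices to bound $\|\y_l^{u'} - \y_l^{u}\|_\infty$ (or the relevant norm matching the claimed bound) with high probability, and then the $\frac{1}{|U_l|}$ prefactor handles the rest.

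The key steps, in order, are as follows. First, I would recall from Assumption~\ref{ass:local_func} that $f_u \sim \cG\cP(f(\cdot), k(\cdot,\cdot))$, so $f_u(\ca) \sim \cN(f(\ca), k(\ca,\ca))$ with $k(\ca,\ca) \le \kappa^2$; combined with the i.i.d.\ observation noise $\eta_{u,t} \sim \cN(0,\sigma^2)$ from Assumption~\ref{ass:noise} and the fact that each coordinate of $\y_l^u$ is a batch-average $y_l^u(\ca) = \frac{1}{T_l(\ca)}\sum_{t\in\cT_l(\ca)} y_{u,t}$, each coordinate $y_l^u(\ca)$ is Gaussian with mean $f(\ca)$ (marginalizing over $f_u$) and variance at most $k(\ca,\ca) + \sigma^2/T_l(\ca) \le \kappa^2 + \sigma^2$ (the noise variance only shrinks under averaging, so the crude bound $\kappa^2+\sigma^2$ holds). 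Second, for each fixed coordinate I would apply a standard Gaussian tail bound: $|y_l^u(\ca) - f(\ca)| \le \sqrt{2(\kappa^2+\sigma^2)\log(2H_l/\delta_1)}$ with probability at least $1 - \delta_1/H_l$. Third, union-bounding over the $H_l$ coordinates of $\y_l^u$ (and similarly for $\y_l^{u'}$, folding the constant into $\delta_1$), we get that every coordinate of both vectors is within that radius of the (common) mean vector $f(\A_{H_l})$, so by the triangle inequality each coordinate of $\y_l^{u'} - \y_l^u$ is at most $2\sqrt{2(\kappa^2+\sigma^2)\log(2H_l/\delta_1)}$ in absolute value. Fourth, converting to the $\ell_2$ norm over $H_l$ coordinates gives $\|\y_l^{u'} - \y_l^u\|_2 \le 2\sqrt{2(\kappa^2+\sigma^2)H_l\log(2H_l/\delta_1)}$; dividing by $|U_l|$ yields the stated bound (up to how one accounts for the factor $\sqrt{2}$ and whether the statement uses coordinate-wise or $\ell_2$ sensitivity — I would align the constant bookkeeping with Eq.~\eqref{eq:gaussian_noise}).

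The main obstacle I anticipate is the careful handling of the marginalization over the random local function $f_u$: one must be precise that, unconditionally, $y_l^u(\ca)$ is Gaussian with variance dominated by $\kappa^2 + \sigma^2$ rather than conditioning on $f_u$ (which would only give the $\sigma^2/T_l(\ca)$ part) — the bias term $f_u(\ca) - f(\ca)$ is itself the dominant $\kappa^2$ contribution and must be included in the sensitivity, since neighboring datasets swap the entire user including her biased local function. A secondary subtlety is that $H_l$ is itself a random quantity (it depends on the variance schedule within the phase), but since Lemma~\ref{lem:bound_h} bounds $H_l$ deterministically by $\frac{4\sigma^2 C^2}{C^2-1}\gamma_T$, the union bound over coordinates is legitimate and $\delta_1$ can be chosen accordingly. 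Everything else is routine Gaussian concentration plus a triangle inequality.
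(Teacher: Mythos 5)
Your proposal matches the paper's proof in all essentials: reduce the neighboring-dataset difference to $\frac{1}{|U_l|}\|\y_l^{u'}-\y_l^u\|_2$ for a single swapped participant, observe that each coordinate is Gaussian with total (bias plus averaged-noise) variance at most $\kappa^2+\sigma^2$, apply a per-coordinate Gaussian tail bound, union bound over the $H_l$ coordinates, and convert to $\ell_2$. The only deviation is that you center each user's vector at $f(\ca)$ and use the triangle inequality, whereas the paper directly treats $y_l^{u'}(\ca)-y_l^u(\ca)$ as a single $\cN(0,\,2(k(\ca,\ca)+\sigma^2))$ variable; your route costs an extra $\sqrt{2}$ in the constant (which you correctly flag), but is otherwise the same argument.
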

}

\begin{proof}
Let $U_l$, $U_l^{\prime}$ be the sets of participating users in $l$-th phase corresponding to $\cU_T$ and $\cU_T^{\prime}$ respectively. We have $|U_l| = |U_l^{\prime}|$ and the maximal $\ell_2$ distance between $\bar{\y}_l$, $\bar{\y}_l^{\prime}$ is the following: %the sensitivity
\begin{equation}
\begin{aligned}
% \Delta_2 \triangleq 
\max |\bar{\y}_l^{\prime} - \bar{\y}_l|
&=  \max_{\cU_T, \cU_T^{\prime}} \left\Vert \frac{1}{| U_l|}\sum_{u\in  U_l^{\prime}}{\y}_{l}^u - \frac{1}{| U_l|}\sum_{u\in U_l}\y_{l}^u \right\Vert_2\\
&= \frac{1}{| U_l|}\max_{u,u' \in \cU}\Vert \y_{l}^{u^{\prime}} - \y_{l}^{u}\Vert_2. \label{eq:sensitivity}
\end{aligned}
\end{equation}
For any chosen action $\ca \in \A_{H_l}$, we have the following result:
\begin{equation*}
\begin{aligned}
 |y_l^{u^{\prime}}(\ca) - y_l^u(\ca)|
 & = \left |\frac{1}{T_l(\ca)}\sum_{t\in \cT_l(\ca)} y_{u^{\prime},t}- \frac{1}{T_l(\ca)}\sum_{t\in \cT_l(\ca)} y_{u,t}\right |\\
 & = \left |\frac{1}{T_l(\ca)}\sum_{t\in \cT_l(\ca)}( y_{u^{\prime},t}-  y_{u,t})\right |\\
 & = \left |\frac{1}{T_l(\ca)}\sum_{t\in \cT_l(\ca)} (f_{u^{\prime}}(\x_t) + \eta_{u^{\prime},t}-  f_{u}(\x_t) - \eta_{u,t})\right |\\
  & \leq \frac{1}{T_l(\ca)}\sum_{t\in \cT_l(\ca)} \left |f_{u^{\prime}}(\x_t) + \eta_{u^{\prime},t}-  f_{u}(\x_t) - \eta_{u,t}\right |.
\end{aligned}
\end{equation*}
Note that $f_u(\x)\sim \cN(f(\x), k(\x,\x))$, $\eta_{u,t} \sim \cN(0, \sigma^2)$, and the participating users are independent from each other. We have $(f_{u^{\prime}}(\x_t) + \eta_{u^{\prime},t}-  f_{u}(\x_t) - \eta_{u,t}) \sim \cN(0, 2(k(\x_t,\x_t)+\sigma^2))$. According to the concentration property of Gaussian distribution, we have with probability at least $1-\delta_1$,
\begin{equation}
 |f_{u^{\prime}}(\x_t) + \eta_{u^{\prime},t}-  f_{u}(\x_t) + \eta_{u,t})|\leq 2\sqrt{(k(\x_t,\x_t)+\sigma^2)\log(2/\delta_1)}\leq 2\sqrt{(\kappa^2+\sigma^2)\log(2/\delta_1)} ,
\end{equation}
which results in $ |y_l^{u^{\prime}}(\ca) - y_l^u(\ca)|\leq 2\sqrt{(\kappa^2+\sigma^2)\log(2/\delta_1)} $ for any particular $\ca \in \A_{H_l}$ with probability at least $1-\delta_1$. By substituting the above result into Eq.~\eqref{eq:sensitivity} and applying union bound, we have that with probability at least $1-\delta_1$, the following is satisfied:
\Hig{
\begin{equation}
    \max_{u,u' \in \cU}\Vert \y_{l}^{u^{\prime}} - \y_{l}^{u}\Vert_2 \leq 
    2\sqrt{H_l(\kappa^2+\sigma^2)\log(2H_l/\delta_1)}, \label{eq:local_sensitivity}
\end{equation}
}
and then with probability at least $1-\delta_1$, the $\ell_2$ distance between $\bar{\y}_l$ and $\bar{\y}_l^{\prime}$ is bounded by
\begin{equation}
    % \Delta_2\leq 
    \max |\bar{\y}_l^{\prime} - \bar{\y}_l| \leq 
    \frac{\max_{u,u' \in \cU}\Vert \y_{l}^{u^{\prime}} - \y_{l}^{u}\Vert_2}{|U_l|} 
    \leq
    \frac{2\sqrt{H_l(\kappa^2+\sigma^2)\log(2H_l/\delta_1)}}{|U_l|},
\end{equation}
where the last step is because $H_l$ is the dimension of $\y_l^u$ and also the number of actions in $\A_{H_l}$.
\end{proof}
% \textbf{Proof of Theorem~\ref{thm:dp}.}
For both the central model and the local model, we employ the Gaussian mechanism in the differential privacy literature, which is described in the following. 
\begin{theorem}\label{thm:gaussian_mech}
({Gaussian Mechanism} \cite{dwork2014algorithmic}).  
Given any vector-valued function\footnote{We use the superscript $^*$ to indicate that the length could be varying.} $f: \cU^{*} \to \R^s$, define $\Delta_2\triangleq \max_{\cU_1, \cU_2^{\prime} \subseteq \cU} \Vert f(\cU_1)-f(\cU_2)\Vert_2$. Let $\sigma = \Delta_2 \sqrt{2\ln (1.25/\delta)}/\epsilon$. The Gaussian mechanism, which adds independently drawn random noise from $\mathcal{N}(0,\sigma^2)$ to each output of $f(\cdot)$, i.e. returning $f(\cU)+(\rho_1, \dots, \rho_s)$ with $\rho_j\overset{\text{i.i.d.}}{\sim} \mathcal{N}(0,\sigma^2)$, ensures $(\epsilon, \delta)$-DP.
\end{theorem}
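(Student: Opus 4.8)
The statement is the classical Gaussian mechanism bound (Appendix~A of~\cite{dwork2014algorithmic}), so the plan is to reproduce that argument. Fix a pair of neighboring inputs $\cU_1,\cU_2\subseteq\cU$ and set $\mathbf v\triangleq f(\cU_1)-f(\cU_2)$, so that $\|\mathbf v\|_2\le\Delta_2$ by definition of the $\ell_2$-sensitivity. The two output laws are $P=\cN(f(\cU_1),\sigma^2 I_s)$ and $Q=\cN(f(\cU_2),\sigma^2 I_s)$, with densities $p,q$. Writing a sample of $P$ as $\mathbf z=f(\cU_1)+\sigma\mathbf g$ with $\mathbf g\sim\cN(0,I_s)$, I would first evaluate the log-density ratio (the privacy-loss random variable) by expanding the squared norms, obtaining
\begin{equation*}
 \ell(\mathbf z)=\ln\frac{p(\mathbf z)}{q(\mathbf z)}=\frac{\langle\mathbf g,\mathbf v\rangle}{\sigma}+\frac{\|\mathbf v\|_2^2}{2\sigma^2}.
\end{equation*}

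The next step is a reduction to one dimension: by rotational invariance of the spherical Gaussian, $\langle\mathbf g,\mathbf v\rangle$ has the same law as $\|\mathbf v\|_2\, g_1$ with $g_1\sim\cN(0,1)$, so $\ell\sim\cN\!\big(\tfrac{\|\mathbf v\|_2^2}{2\sigma^2},\tfrac{\|\mathbf v\|_2^2}{\sigma^2}\big)$. Because $\bP[\ell>\epsilon]$ is monotone increasing in $\|\mathbf v\|_2$, it suffices to treat the worst case $\|\mathbf v\|_2=\Delta_2$, for which $\{\ell>\epsilon\}$ is exactly $\{g_1>\tfrac{\sigma\epsilon}{\Delta_2}-\tfrac{\Delta_2}{2\sigma}\}$. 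I would then bound this probability with the standard Gaussian tail estimate $\bP[g_1>t]\le\tfrac{1}{t\sqrt{2\pi}}e^{-t^2/2}$, substitute the prescribed $\sigma=\Delta_2\sqrt{2\ln(1.25/\delta)}/\epsilon$, and simplify to conclude $\bP_{\mathbf z\sim P}[\ell(\mathbf z)>\epsilon]\le\delta$.

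The final step is the textbook conversion lemma. Let $B\triangleq\{\mathbf z:\ell(\mathbf z)>\epsilon\}$; by the previous paragraph $\bP_P[B]\le\delta$, and off $B$ we have $p(\mathbf z)\le e^{\epsilon}q(\mathbf z)$ pointwise. Hence for any measurable output set $S$,
\begin{equation*}
 \bP[\cM(\cU_1)\in S]=P(S\cap B)+P(S\setminus B)\le\delta+e^{\epsilon}Q(S\setminus B)\le e^{\epsilon}\bP[\cM(\cU_2)\in S]+\delta,
\end{equation*}
which is the definition of $(\epsilon,\delta)$-DP; since the neighboring pair was arbitrary and DP is closed under post-processing, the mechanism $f(\cU)+(\rho_1,\dots,\rho_s)$ is $(\epsilon,\delta)$-DP.

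I expect the only nontrivial part to be the middle step: chasing the exact constant $1.25$ through the Gaussian tail estimate is fiddly and, as in~\cite{dwork2014algorithmic}, uses the regime $\epsilon\in(0,1)$, so the statement is best read with that mild restriction (harmless here, since the privacy budget is taken small). The density computation, the rotational-invariance reduction, and the conversion lemma are all routine, so in the paper it is cleaner to invoke~\cite[Appendix~A]{dwork2014algorithmic} directly rather than redo the calculation.
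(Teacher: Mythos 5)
Your proposal is correct and is exactly the argument the paper implicitly relies on: the paper states this theorem as an imported result with a citation to \cite{dwork2014algorithmic} and gives no proof of its own, and what you reproduce is precisely the Appendix~A argument of that reference (privacy-loss random variable, rotational-invariance reduction to one dimension, Gaussian tail bound with the constant $1.25$, and the conversion lemma). Your caveat that the stated noise calibration requires $\epsilon\in(0,1)$ is also accurate and worth keeping in mind, since the theorem as written in the paper omits that restriction.
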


\begin{proof}[Proof of Theorem~\ref{thm:dp}]
Let $E$ denote the event that %the global sensitivity bound of
Eq.~\eqref{eq:sensitivity_lemma} holds, and thus, $\bP[E]\geq 1-\delta_1$. Let $\Delta_2 \triangleq \max |\bar{\y}_l^{\prime} - \bar{\y}_l| $. If $E$ holds, adding independently drawn noise from $\cN\left(0,\frac{2\Delta_2^2\ln(1.25/\delta_2)}{\epsilon}\right)$ to each element of $\bar{\y}_l$, i.e., returning $\bar{\y}_l + (\rho_1, \cdots, \rho_{H_l})$ with $\rho_j\overset{i.i.d.}{\sim} \cN\left(0,\frac{2\Delta_2^2\ln(1.25/\delta_2)}{\epsilon}\right)$, ensures $(\epsilon, \delta_2)$-DP . Specifically, the following inequality holds
\begin{equation}
    \bP[\cM(\cU_T)\in Z| E] \leq e^{\epsilon}\cP[\cM(\cU_T^{\prime})\in Z| E] + \delta_2.
\end{equation}
Then, we have 
\begin{equation}
\begin{aligned}
 \bP[\cM(\cU_T)\in Z] 
 &\leq  \bP[\cM(\cU_T)\in Z| E]\bP[E] + 1- \bP[E] \\
 & \leq (e^{\epsilon}\cP[\cM(\cU_T^{\prime})\in Z| E] + \delta_2) \bP[E] + \delta_1\\
 &\leq e^{\epsilon}\cP[\cM(\cU_T^{\prime})\in Z| E]\bP[E] + \delta_2 + \delta_1 \\
 &\leq e^{\epsilon}\cP[\cM(\cU_T^{\prime})\in Z| E]\bP[E] + \delta_2 + \delta_1\\
 &\leq e^{\epsilon}\cP[\cM(\cU_T^{\prime})\in Z, E] + \delta_2 + \delta_1\\
  &\leq e^{\epsilon}\cP[\cM(\cU_T^{\prime})\in Z] + \delta, \label{eq:delta1+delta2}
\end{aligned}
\end{equation}
where $\delta=\delta_1+\delta_2$.
\end{proof}
Similarly, we can derive the $(\epsilon, 
\delta)$-LDP. Meanwhile, we can achieve $(\epsilon, \delta)$-SDP by combining the analysis in Eq.~\eqref{eq:delta1+delta2} and the proof for Theorem~A.2 in \cite{li2022differentially}.

\subsection{Proof of Theorem~\ref{thm:regret_cdp}}\label{app:proof_cdp}
Following a similar line to the proof for Theorem~\ref{thm:regret_upper_bound}, we first provide the key concentration inequality under \texttt{DP-DPBE} in Theorem~\ref{thm:concentration_ineq_dp}. 
\begin{theorem}\label{thm:concentration_ineq_dp}
For any particular phase $l$, with probability at least $1-6\beta$, the following holds
\begin{equation}
    |f(\x) - \Tilde{\mu}_l(\x)| \leq   \Tilde{w}_l(\x),\label{eq:concentration_ineq_dp}
\end{equation}
where mean function $\Tilde{\mu}_l(\x)$ and confidence width function $\Tilde{w}_l(\x)$ are defined in Eq.~\eqref{eq:mu_update_dp} and Eq.~\eqref{eq:confidence_width_dp}.
\end{theorem}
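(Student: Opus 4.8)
\textbf{Proof proposal for Theorem~\ref{thm:concentration_ineq_dp}.}

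The plan is to mirror the proof of Theorem~\ref{thm:concentration_ineq} (the non-private concentration bound) and simply account for one extra source of error: the Gaussian privacy noise $(\rho_1,\dots,\rho_{H_l})$ injected in Eq.~\eqref{eq:privatizer_cdp}. First I would decompose the error $|f(\x)-\Tilde{\mu}_l(\x)|$ via the triangle inequality into two pieces: the non-private error $|f(\x)-\bar{\mu}_l(\x)|$ and the perturbation error $|\bar{\mu}_l(\x)-\Tilde{\mu}_l(\x)|$ coming purely from the injected noise. The first piece is already controlled by Theorem~\ref{thm:concentration_ineq}: it is at most $w_l(\x)$ with probability at least $1-4\beta$, and note $w_l(\x)$ consists of exactly the first three terms of $\Tilde{w}_l(\x)$ in Eq.~\eqref{eq:confidence_width_dp}. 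So it remains to show the second piece is at most $\sqrt{2\sigma_n^2\log(1/\beta)}$ with probability at least $1-2\beta$, and then a union bound over the two events gives the claimed $1-6\beta$.

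For the perturbation piece, I would write, using the linearity of the mean estimator in $\bar{\y}_l$ (Eq.~\eqref{eq:mu_update_dp} vs.\ Eq.~\eqref{eq:mu_update}),
\[
\bar{\mu}_l(\x)-\Tilde{\mu}_l(\x) = -\,\vk(\x, \A_{H_l})^{\top}(\K_{\A_{H_l}\A_{H_l}}+\lambda \W_{H_l}^{-1})^{-1}\bm{\rho},
\]
where $\bm{\rho}=(\rho_1,\dots,\rho_{H_l})^{\top}$ with $\rho_j \overset{i.i.d.}{\sim}\cN(0,\sigma_{nc}^2)$. Since a fixed linear functional of a Gaussian vector is Gaussian, this quantity is a zero-mean Gaussian with variance $\sigma_{nc}^2 \cdot \|(\K_{\A_{H_l}\A_{H_l}}+\lambda \W_{H_l}^{-1})^{-1}\vk(\x,\A_{H_l})\|_2^2$. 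The key step is to bound this coefficient norm. I would pass to the feature-space representation (using $\K_{\A_{H_l}\A_{H_l}}=\Phi_{H_l}\Phi_{H_l}^{\top}$, $\vk(\x,\A_{H_l})=\Phi_{H_l}\varphi(\x)$, and the push-through identity) to rewrite the relevant quadratic form in terms of $\W_{H_l}^{1/2}$ and $\Phi_{H_l}$, and then relate it to the posterior variances $\Sigma^2_{h-1}(\ca_h)$. Concretely, one can bound $\|(\K_{\A_{H_l}\A_{H_l}}+\lambda \W_{H_l}^{-1})^{-1}\vk(\x,\A_{H_l})\|_2^2$ by something like $\sum_h T_l(\ca_h)\Sigma^2_{h-1}(\ca_h)/\lambda^2$ or directly by $2C^2\gamma_T/\lambda$ using Corollary~\ref{cor:variance_ratio}, Lemma~\ref{lem:information_gain} and the batch-length choice $T_l(\ca_h)=\lfloor(C^2-1)/\Sigma^2_{h-1}(\ca_h)\rfloor$ (exactly the computation already carried out in Lemma~\ref{lem:bound_h} and Lemma~\ref{lem:max_variance}). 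This is what makes the effective privacy-noise standard deviation equal to $\sigma_n=\sigma_{nc}\sqrt{2C^2\gamma_T}$, matching the definition used in Algorithm~\ref{alg:DP-DPBE}. A standard Gaussian tail bound then gives $|\bar{\mu}_l(\x)-\Tilde{\mu}_l(\x)|\le \sqrt{2\sigma_n^2\log(1/\beta)}$ with probability at least $1-2\beta$.

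The main obstacle I anticipate is the coefficient-norm bound: getting a clean bound on $\|(\K_{\A_{H_l}\A_{H_l}}+\lambda \W_{H_l}^{-1})^{-1}\vk(\x,\A_{H_l})\|_2$ that collapses to $O(\sqrt{\gamma_T/\lambda})$ rather than something scaling with $H_l$ or $T_l$. This requires carefully exploiting the rare-switching batch sizes so that the inflation of variance ratios stays bounded by $C$, and then invoking the information-gain bound on $\sum_\tau \sigma^2_\tau(\x_{t_l+\tau})\le 2\lambda\gamma_{T_l}$. Everything else — the triangle-inequality split, the Gaussianity of the linear functional, and the final union bound — is routine. Once Theorem~\ref{thm:concentration_ineq_dp} is in hand, the regret bound of Theorem~\ref{thm:regret_cdp} follows by repeating the per-phase regret argument of Theorem~\ref{thm:regret_upper_bound} with $\Tilde{w}_l$ in place of $w_l$, noting the extra additive term $\sqrt{2\sigma_n^2\log(1/\beta)}$ contributes the $\tilde O(\ln(1/\delta)\gamma_T T^{1-\alpha}/\epsilon)$ term after summing the geometric series in $T_l$ and substituting $\sigma_{nc}$ from Eq.~\eqref{eq:gaussian_noise}.
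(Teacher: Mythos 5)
Your proposal matches the paper's proof essentially step for step: the same triangle-inequality split into the non-private error (controlled by Theorem~\ref{thm:concentration_ineq} with failure probability $4\beta$) and the Gaussian perturbation term, the same feature-space computation bounding the quadratic form $\vk(\x,\A_{H_l})^{\top}(\K_{\A_{H_l}\A_{H_l}}+\lambda\W_{H_l}^{-1})^{-2}\vk(\x,\A_{H_l})$ by $2C^2\gamma_{T_l}$ via $\W_{H_l}\preceq T_l\I$, Lemma~\ref{lem:equivalent_representation}, and Lemma~\ref{lem:max_variance}, followed by the same $2\beta$ Gaussian tail and union bound yielding $1-6\beta$. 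The only nit is that your intermediate guess of $2C^2\gamma_T/\lambda$ for the coefficient-norm bound carries a spurious $1/\lambda$ (the correct chain is $Q\le T_l\Sigma_{H_l}^2(\x)/\lambda\le 2C^2\gamma_{T_l}$), but your final effective variance $\sigma_n^2=2C^2\gamma_T\sigma_{nc}^2$ agrees with the paper, so this does not affect the argument.
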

\begin{proof}
In this proof, we will show the following concentration inequality holds for any $\x\in \cD$
% With probability  $\geq 1-2\beta $, for any action $x\in \cD_l$
\begin{equation}
    \bP[|f(\x) - \Tilde{\mu}_l(\x)| \geq  \Tilde{w}_l(\x)] \leq 6\beta.
\end{equation}
Let $\bm{\rho} \triangleq (\rho_1, \dots, \rho_{H_l})$. Note that 
\begin{equation}
\begin{aligned}
\Tilde{\mu}_l(\x) &= \vk(\x, \A_{H_l})^{\top}(\K_{\A_{H_l}\A_{H_l}} +\lambda \W_{H_l}^{-1})^{-1}\Tilde{\y}_l \\
&= \vk(\x, \A_{H_l})^{\top}(\K_{\A_{H_l}\A_{H_l}} +\lambda \W_{H_l}^{-1})^{-1}(\bar{y}_l+\bm{\rho} ) %\quad \quad \quad (\bm{\rho} \triangleq (\rho_1, \dots, \rho_{H_l}))
\\  
&= \bar{\mu}_l(x) + \vk(\x, \A_{H_l})^{\top}(\K_{\A_{H_l}\A_{H_l}} +\lambda \W_{H_l}^{-1})^{-1}\bm{\rho} .
\end{aligned}
\end{equation}
Then, we have 
\begin{equation}
    |f(\x) - \Tilde{\mu}_l(\x)| \leq |f(\x) - \bar{\mu}_l(\x)| + | \vk(\x, \A_{H_l})^{\top}(\K_{\A_{H_l}\A_{H_l}} +\lambda \W_{H_l}^{-1})^{-1}\bm{\rho} |.
\end{equation}
For any $\x\in \cD$, we have
\begin{equation}
    \begin{aligned}
    &\bP\left[|f(\x) - \Tilde{\mu}_l(\x)|\geq  \Tilde{w}_l(\x)\right] \\
    \leq & \bP\left[\left|f(\x)- \bar{\mu}_l(x)\right| +  \left| \vk(\x, \A_{H_l})^{\top}(\K_{\A_{H_l}\A_{H_l}} +\lambda \W_{H_l}^{-1})^{-1}\bm{\rho} \right| \geq  w_l(\x) + 2C\sqrt{\gamma_{T} \sigma_n^2\log(1/\beta)}\right]\\
    \leq & \bP\left[\left|f(\x)- \bar{\mu}_l(x)\right| \geq  w_{l}(\x) \right] + \bP\left[  \left| \vk(\x, \A_{H_l})^{\top}(\K_{\A_{H_l}\A_{H_l}} +\lambda \W_{H_l}^{-1})^{-1}\bm{\rho} \right| \geq    2C\sqrt{\gamma_{T} \sigma_n^2\log(1/\beta)}\right]\\
    \leq & 4\beta + \bP\left[  \left|\vk(\x, \A_{H_l})^{\top}(\K_{\A_{H_l}\A_{H_l}} +\lambda \W_{H_l}^{-1})^{-1}\bm{\rho} \right| \geq    2C\sqrt{\gamma_{T} \sigma_n^2\log(1/\beta)}\right],
    \end{aligned} \label{eq:union_bound_dp}
\end{equation}
where the first inequality is due to $\Tilde{w}_l(x) = w_l(x) + \sqrt{{2 \sigma_n^2\log(1/\beta)}}$ from Eq.~\eqref{eq:confidence_width_dp}, the second inequality is from union bound, and the last one is from Theorem~\ref{thm:concentration_ineq}. Hence, it remains to bound the second probability in Eq.~\eqref{eq:union_bound_dp}. 

Recall that $\bm{\rho} =(\rho_1, \dots, \rho_{H_l})$ where $\rho_{j} \overset{\emph{i.i.d.}}{\sim} \cN(0, \sigma_{nc}^2)$. Then, $\vk(\x, \A_{H_l})^{\top}(\K_{\A_{H_l}\A_{H_l}} +\lambda \W_{H_l}^{-1})^{-1}\bm{\rho} $ is the sum of $H_l$ \emph{i.i.d.} Gaussian variables, and the total variance (denoted by $\sigma_{\text{sum}}^2$) is
\begin{equation}
 \sigma^2_{\mathrm{sum}}   = \vk(\x, \A_{H_l})^{\top}(\K_{\A_{H_l}\A_{H_l}} +\lambda \W_{H_l}^{-1})^{-1} (\K_{\A_{H_l}\A_{H_l}} +\lambda \W_{H_l}^{-1})^{-1} \vk(\x, \A_{H_l}) \sigma_{nc}^2. 
 \label{eq:sigma_sum}
\end{equation}
Notice that
\begin{equation}
    \begin{aligned}
   & \vk(\x, \A_{H_l})^{\top}(\K_{\A_{H_l}\A_{H_l}} +\lambda \W_{H_l}^{-1})^{-1} (\K_{\A_{H_l}\A_{H_l}} +\lambda \W_{H_l}^{-1})^{-1} \vk(\x, \A_{H_l}) \\
    =&  \varphi(\x)^{\top} \Phi_{H_l}^{\top} (\Phi_{H_l}\Phi_{H_l}^{\top}+\lambda\W_{H_l}^{-1})^{-1}(\Phi_{H_l}\Phi_{H_l}^{\top}+\lambda\W_{H_l}^{-1})^{-1}\Phi_{H_l}\varphi(\x)  \\
    =& \varphi(\x)^{\top} \Phi_{H_l}^{\top} \W_{H_l}^{1/2} (\W_{H_l}^{1/2}\Phi_{H_l}\Phi_{H_l}^{\top}\W_{H_l}^{1/2}+\lambda \I)^{-1}\W_{H_l}^{1/2} \cdot \W_{H_l}^{1/2} (\W_{H_l}^{1/2}\Phi_{H_l}\Phi_{H_l}^{\top}\W_{H_l}^{1/2}+\lambda \I)^{-1}\W_{H_l}^{1/2} \Phi_{H_l}\varphi(\x) \\
    =& \varphi(\x)^{\top}  (\Phi_{H_l}^{\top}\W_{H_l}^{1/2}\W_{H_l}^{1/2}\Phi_{H_l}+\lambda \I)^{-1}\Phi_{H_l}^{\top} \W_{H_l}^{1/2} \cdot  \W_{H_l} \cdot 
    \W_{H_l}^{1/2}\Phi_{H_l} (\Phi_{H_l}^{\top}\W_{H_l}^{1/2}\W_{H_l}^{1/2}\Phi_{H_l}+\lambda \I)^{-1} \varphi(\x) \\
    = & \varphi(\x)^{\top}  (\Phi_{H_l}^{\top}\W_{H_l}\Phi_{H_l}+\lambda \I)^{-1}\Phi_{H_l}^{\top}\W_{H_l}^{2}\Phi_{H_l} (\Phi_{H_l}^{\top}\W_{H_l}\Phi_{H_l}+\lambda \I)^{-1}\varphi(\x) \\
    \overset{(a)}{\leq }  & T_l \varphi(\x)^{\top}  (\Phi_{H_l}^{\top}\W_{H_l}\Phi_{H_l}+\lambda \I)^{-1}\Phi_{H_l}^{\top}\W_{H_l}\Phi_{H_l} (\Phi_{H_l}^{\top}\W_{H_l}\Phi_{H_l}+\lambda \I)^{-1}\varphi(\x) \\
    = & T_l \varphi(\x)^{\top}  (\Phi_{H_l}^{\top}\W_{H_l}\Phi_{H_l}+\lambda \I)^{-1}(\Phi_{H_l}^{\top}\W_{H_l}\Phi_{H_l} +\sigma_n^2\I) (\Phi_{H_l}^{\top}\W_{H_l}\Phi_{H_l}+\lambda \I)^{-1}\varphi(\x) \\
    & - \lambda T_l \varphi(\x)^{\top}  (\Phi_{H_l}^{\top}\W_{H_l}\Phi_{H_l}+\lambda \I)^{-1} (\Phi_{H_l}^{\top}\W_{H_l}\Phi_{H_l}+\lambda \I)^{-1}\varphi(\x) \\
    \leq & T_l \varphi(\x)^{\top}   (\Phi_{H_l}^{\top}\W_{H_l}\Phi_{H_l}+\lambda \I)^{-1}\varphi(\x) \\
    \overset{(b)}{=} & T_l \varphi(\x)^{\top}   (\Phi_{\tau_{H_l}}^{\top}\Phi_{\tau_{H_l}}+\lambda \I)^{-1}\varphi(\x) \\
    \overset{(c)}{=}&\frac{T_l\sigma_{\tau_{H_l}}^2(\x)}{\lambda} \\
    \overset{(d)}{=} &
    \frac{T_l\Sigma_{H_l}^2(\x)}{\lambda} =\frac{T_l\Sigma_{H_l}^2(\x)}{\sigma^2}  \leq 2C^2\gamma_{T_l},
    \end{aligned}
\end{equation}
where $(a)$ is from $\Phi_{H_l}^{\top}\W_{H_l}^2\Phi_{H_l}\leq \Phi_{H_l}^{\top}(T_l\I)\W_{H_l}\Phi_{H_l}=T_l\Phi_{H_l}^{\top}\W_{H_l}\Phi_{H_l}$ because each diagonal entry of $W_{H_l}$ satisfies $[W_{H_l}]_{hh} = T_l(\ca_h)\leq T_l$, $(b)$ is based on Eq.~\eqref{eq:useful_rs_1}, $(c)$ is from Eq.~\eqref{eq:sigma_feature_space}, and $(d)$ is according to the equivalence representation in Lemma~\ref{lem:equivalent_representation}. The last step is from the result in Lemma~\ref{lem:max_variance}.

Substituting the above result into Eq.~\eqref{eq:sigma_sum}, we have 
\begin{equation}
    \sigma_{\mathrm{sum}}^2 \leq 2C^2\gamma_{T_l}\sigma_{nc}^2 = \sigma_n^2.
\end{equation}
According to the tail bound of Gaussian variables, we have 
\begin{equation*}
    \bP\left[  \left|\vk(\x, \A_{H_l})^{\top}(\K_{\A_{H_l}\A_{H_l}} +\lambda \W_{H_l}^{-1})^{-1}\bm{\rho} \right| \geq    \sqrt{2\sigma_n^2 \log(1/\beta)}\right]\leq 2\exp \left\{-\frac{4C^2\gamma_T\sigma_{nc}^2\log(1/\beta)}{2\sigma_{\mathrm{sum}}^2}\right\} \leq 2\beta.
\end{equation*}
\end{proof}

\begin{proof}[Proof of Theorem~\ref{thm:regret_cdp}]
Similar to the proof of Theorem~\ref{thm:regret_upper_bound}, we, to prove Theorem~\ref{thm:regret_cdp}, first present three results when the concentration inequality in Theorem~\ref{thm:concentration_ineq_dp} holds, then obtain an upper bound for the regret incurred in a particular phase $l>2$ with high probability, and finally sum up the regret over all phases.

\smallskip
\textbf{1) Three observations when Eq.~\eqref{eq:concentration_ineq_dp} holds}

Define a ``good'' event when Eq.~\eqref{eq:concentration_ineq_dp} holds in the $l$-th phase as:
\begin{equation*}
    \Tilde{\cE}_l \triangleq \left\{\forall \x\in \cD_l,  \left|f(\x)-\tilde{\mu}_l(\x)\right|\leq \tilde{w}_l(\x)  \right\}.  
\end{equation*}
We have $\bP[\Tilde{\cE}_l] \geq 1-6|\cD|\beta$ via the union bound.
Then, similar to the non-private case, under event $\Tilde{\cE}_l$ in the $l$-th phase, we have the following three observations:
\begin{itemize}
    \item[\textbf{1.}] For any optimal action $\x^*\in \argmax_{\x\in \cD} f(\x)$, if $\x^*\in \cD_l$, then $\x^* \in \cD_{l+1}$. 
    \item[\textbf{2.}] Let $f^* = \max_{\x\in \cD} f(\x)$. Supposed that $x^*\in \cD_l$. For any $\x \in \cD_{l+1}$, its reward gap from the optimal reward is bounded by $4\max_{\x\in \cD_l}\tilde{w}_l(\x)$, i.e., 
    $$f^* - f(\x)\leq 4\max_{\x\in \cD_l}\tilde{w}_l(\x).$$ 
    \item[\textbf{3.}] The confidence width function in the private setting     satisfies 
    \begin{equation}
    \max_{\x\in\cD_l} \tilde{w}_l(\x) \leq  \max_{\x\in \cD_l} w_l(\x) + \frac{G_1\gamma_T\sqrt{2\log(1/\beta)}}{|U_l|},
    \end{equation}
    where $G_1\triangleq \frac{8C^2\sqrt{2(\kappa^2+\sigma^2)\sigma^2\log(1/\delta_1)\ln(1.25/\delta_2)}}{\epsilon \sqrt{C^2-1}}$.
\end{itemize}
The first two observations can be derived similar to the non-private case. Regarding the third observation, we have the confidence width function in the private setting $\tilde{w}_l(\x) = w_l(\x) + \sqrt{2\sigma_n^2\log(1/\beta)} $ and     \begin{equation*}
\begin{aligned}
& \sqrt{2\sigma_n^2\log(1/\beta)} \\
& = 2C\sqrt{\gamma_T\sigma_{nc}^2\log(1/\beta)}\\
&=  \frac{4C\sqrt{2(\kappa^2+\sigma^2)H_l\gamma_T\log(1/\delta_1)\ln(1.25/\delta_2)\log(1/\beta)}}{\epsilon |U_l|}\\
& \overset{(a)}{\leq} \frac{8C^2\gamma_T\sqrt{2(\kappa^2+\sigma^2)\sigma^2\log(1/\delta_1)\ln(1.25/\delta_2)\log(1/\beta)}}{\epsilon |U_l| \sqrt{C^2-1}} \\% \quad \quad \quad  \text{(Lemma~\ref{lem:bound_h})}\\
& \leq \underbrace{\frac{8C^2\sqrt{2(\kappa^2+\sigma^2)\sigma^2\log(1/\delta_1)\ln(1.25/\delta_2)}}{\epsilon \sqrt{C^2-1}}}_{G_1} \cdot \frac{\gamma_T\sqrt{2\log(1/\beta)}}{ |U_l|},
\end{aligned}
\end{equation*}
where $(a)$ is from Lemma~\ref{lem:bound_h}.

\smallskip 
\textbf{2) Regret in a specific phase $l>2$.}
 
Under event $\tilde{\cE}_{l-1}$, the regret incurred in the $l$-th phase is
\begin{equation*}
\begin{aligned}
 &\sum_{t \in \cT_l} f^* - f(\x_t)  \\
 \leq & \sum_{t\in\cT_l} 4\max_{\x\in \cD_{l-1}}\tilde{w}_{l-1}(\x)\\
 \leq &  4T_l\max_{\x\in \cD_{l-1}}w_{l-1}(\x)\\
 \overset{(a)}{\leq}& 4T_l\max_{\x\in \cD_{l-1}}w_{l-1}(\x) + 4T_l  \cdot \frac{G_1\gamma_T\sqrt{2\log(1/\beta)}}{|U_{l-1}|}  \\ %\quad \quad \quad \text{(Observation~3)}\\  
\leq & 4T_l \max_{\x\in \cD_{l-1}}w_{l-1}(\x) +4G_1\gamma_T\sqrt{2\log(1/\beta)}2^{(1-\alpha)(l-1)} \\
 \leq & 4\sqrt{2\kappa^2\log (1/\beta)}\sqrt{2^{(2-\alpha)(l-1)}} + 8\sigma C \sqrt{2\gamma_T\log(1/\beta)}\sqrt{2^{(1-\alpha)(l-1)}} + 8\sigma BC\sqrt{\gamma_T 2^{l-1}} \\
 &+ 4G_1\gamma_T\sqrt{2\log(1/\beta)}2^{(1-\alpha)(l-1)},
    % & \leq \underbrace{8\sqrt{2\kappa^2\log (1/\beta)}\sqrt{2^{(2-\alpha)(l-1)}}}_{\text{\textcircled{1}}} + \underbrace{8R \sqrt{2(1+2\lambda)\gamma_T(\ln(1/\beta) + \gamma_T)}\sqrt{2^{(1-\alpha)(l-1)}}}_{\text{\textcircled{2}}} + \underbrace{8B\sqrt{(1+2\lambda)\gamma_T}\sqrt{2^{l-1}}}_{\textcircled{3}}
\end{aligned}
\end{equation*}
where $(a)$ is from Observation~3 and the last step is from Eq.~\eqref{eq:simple_regret}.%\bo{Why do you put Observation 3 in the equation? Be consistent}

\smallskip
\textbf{3) Total regret.}

Define $\tilde{\cE}_g$ as the event where the ``good'' event occurs in every phase in the private setting, i.e., $\tilde{\cE}_g \triangleq \bigcap_{l=1}^L \tilde{\cE}_l$. It is not difficult to obtain  $\bP[\cE_g] \geq 1- 6|\cD|\beta L$ by applying union bound. At the same time, the total regret under event $\tilde{\cE}_g$ becomes
\begin{equation}
    \begin{aligned}
    R_g &= \sum_{l=1}^L \sum_{t \in \cT_l} (f^* - f(\x_t)  )\\
    & \leq 2B\kappa + \sum_{l=2}^L 4\sqrt{2\kappa^2\log (1/\beta)}\sqrt{2^{(2-\alpha)(l-1)}} \\
    &+ \sum_{l=2}^L 8\sigma C \sqrt{2\gamma_T\log(1/\beta)}\sqrt{2^{(1-\alpha)(l-1)}} \\
    &+ \sum_{l=2}^L 8\sigma  BC\sqrt{\gamma_T2^{l-1}}  + \sum_{l=2}^L 4G_1\gamma_T\sqrt{2\log(1/\beta)}2^{(1-\alpha)(l-1)}\\
    & \leq 2B\kappa + 4\sqrt{2\kappa^2\log (1/\beta)} \cdot 4 \sqrt{2^{(L-1)(2-\alpha)}} \\
    &+ 8\sigma C\sqrt{2\gamma_T\log(1/\beta)}\cdot C_1\sqrt{2^{(1-\alpha)(L-1)}}  \quad \quad  \quad \quad \quad \hfill \left(C_1 \triangleq \frac{\sqrt{2^{1-\alpha}}}{\sqrt{2^{1-\alpha}}-1}\right)\\  
    &+ 8\sigma BC\sqrt{\gamma_T}\cdot 4\sqrt{2^{L-1}} \\
    & + 4G_1\gamma_T\sqrt{2\log(1/\beta)}\cdot C_22^{(1-\alpha)(L-1)} \quad \quad \quad \quad \quad \quad \quad \left(C_2\triangleq \frac{2^{1-\alpha}}{2^{1-\alpha}-1}\right)\\
    &{\leq} 2B\kappa+  16\sqrt{2\kappa^2\log (1/\beta)}T^{1-\alpha/2}+8\sigma C_1C  \sqrt{2\gamma_T\log(1/\beta)T^{1-\alpha}} \\
    &+ 32\sigma BC\sqrt{\gamma_T T}+4C_2G_1\gamma_T\sqrt{2\log(1/\beta)}T^{1-\alpha},
    \end{aligned}
\end{equation}
where the last step is is due to $2^{L-1} \leq T$ and $L\leq \log(2T)$ since $\sum_{l=1}^{L-1} T_l +1 \leq T$. 
On the other hand, $R_b \leq 2B\kappa T$ %$R_b \leq T$ 
since $|\max_{\x\in\cD} f(\x)-f(\x)|\leq 2B\kappa$ for all $\x \in \cD$. 
Choose $\beta = 1/(|\cD|T)$ in Algorithm~\ref{alg:DP-DPBE}. Then, the expected regret is:
\begin{equation}
\begin{aligned}
    &\E[R(T)] = \bP[\tilde{\cE}_g]R_g + (1-\bP[\tilde{\cE}_g]) R_b \\
    & \leq R_g + 6|\cD|\beta L \cdot 2B\kappa T\\
    & \leq 2B\kappa+  16\sqrt{2\kappa^2\log (1/\beta)}T^{1-\alpha/2}+8\sigma C_1C  \sqrt{2\gamma_T\log(1/\beta)T^{1-\alpha}} + 32\sigma BC\sqrt{\gamma_T T} \\
    &+ 4C_2G_1\gamma_T\sqrt{2\log(1/\beta)}T^{1-\alpha}+ 12B\kappa|\cD|\beta LT\\
    & = 2B\kappa+  16T^{1-\alpha/2}\sqrt{2\kappa^2\log (|\cD|T)}+8\sigma C_1C  \sqrt{2\gamma_T T^{1-\alpha}\log (|\cD|T)} + 32\sigma BC\sqrt{\gamma_T T} \\
    & + 4C_2G_1\gamma_T\sqrt{2\log(|\cD|T)}T^{1-\alpha}+ 12B\kappa\log (2T)\\
    & = O(T^{1-\alpha/2}\sqrt{\log (|\cD|T)})+O(\sqrt{\gamma_T T^{1-\alpha}\log (|\cD)T)}
    +O(G_1\gamma_T T^{1-\alpha}\sqrt{\log(|\cD|T)})
    +O(\sqrt{\gamma_T T}).
\end{aligned}
\end{equation}
Finally, substituting $G_1$ with $\delta_1=\delta_2=\delta/2$, we have the total expected regret under the \texttt{DP-DPBE} with the central model  is
\begin{equation}
\begin{aligned}
    \E[R(T)] &= O(T^{1-\alpha/2}\sqrt{\log (|\cD|T)})  + O\left(\frac{\ln(1/\delta)\gamma_TT^{1-\alpha}\sqrt{\log(kT)}}{\epsilon}\right) \\
    &+O(\sqrt{\gamma_T T^{1-\alpha}\log (|\cD)T)} +O(\sqrt{\gamma_T T}).
\end{aligned}
\end{equation}
\end{proof}
 While the \texttt{DPBE} algorithm uses GP tools to define and manage the uncertainty in estimating the unknown function $f$, the analysis of \texttt{DPBE} algorithm does not rely on any \emph{Bayesian} assumption about $f$ being actually drawn from the prior $\cG\cP(0,k)$, and it only requires $f$ to be bounded in the kernel norm associated with the RKHS $\cH_k$. 
\section{Additional Numerical Results}\label{app:experiments}

% \bo{You need to add some texts here.}
\subsection{Evaluation of \texttt{DP-DPBE}}
In Section~\ref{sec:experiment}, we evaluated \texttt{DP-DPBE} on the synthetic function. In this subsection, we present additional numerical results for \texttt{DP-DPBE} on the standard benchmark functions and the function from real-world (light-sensor) data. \Hi{By considering the same setting as for the synthetic function, we run $T=10^6$ rounds and present how the cumulative regret at the end of $T$ varies with privacy budget $\epsilon \in \{5,10,15,20,25,30\}$ and $\delta=10^{-6}$ in Figure~\ref{fig:dp_functions_epsilon}. Then, by choosing privacy parameters $\delta=10^{-6}$ and $\epsilon=15$, 
we also compare the per-round regret of \texttt{DP-DPBE} and \texttt{DPBE} for the three benchmark functions and the real-world (light-sensor) data and present the results in Figure~\ref{fig:dp_functions_delta_106}.} %and compare \texttt{DP-DPBE} and \texttt{DPBE} when $\epsilon=15$ and $\delta=10^{-6}$ in Figure~\ref{fig:dp_functions}. 
We  perform $20$ runs for each simulation. From these results, we make similar observations to those for the synthetic function: the privacy-regret tradeoff and achieving privacy ``for free''. 
\begin{figure}[!t] 
\centering
	\subfigure[Sphere function]{
	\label{fig:sphere_dp_epsilon}
	\includegraphics[width=0.4\textwidth]{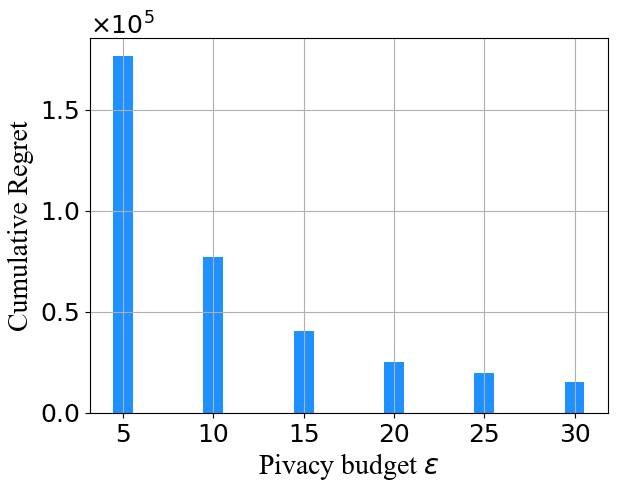}}
	\quad
	\subfigure[Six-Hump Camel function]{
	\label{fig:six-hump_dp_epsilon}
	\includegraphics[width=0.4\textwidth]{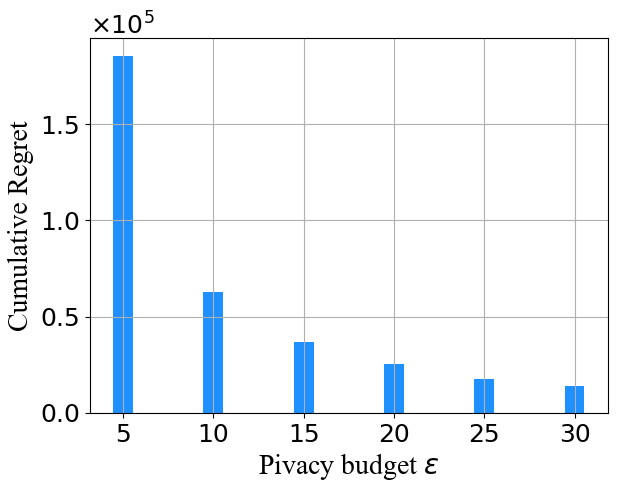}}
	%\quad
	\subfigure[Michalewicz function]{
	\label{fig:michalewicz_dp_epsilon}
	\includegraphics[width=0.4\textwidth]{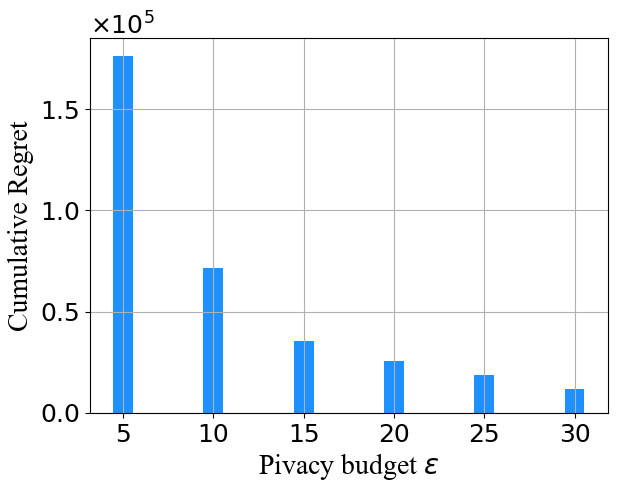}}
 \quad
	\subfigure[Function from light sensor data]{
	\label{fig:light_dp_epsilon}
	\includegraphics[width=0.4\textwidth]{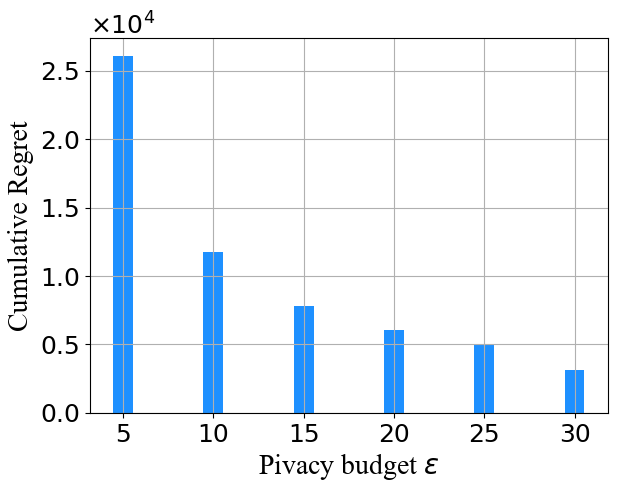}}
	\caption{Performance of \texttt{DP-DPBE}: Final cumulative regret vs. privacy budget $\epsilon$ with \Hi{$\delta=10^{-6}$}.} 
	\label{fig:dp_functions_epsilon}  
\end{figure}
\begin{figure*}[!t] 
\centering
	\subfigure[Sphere function]{
	\label{fig:sphere_dp_sigma}
	\includegraphics[width=0.4\textwidth]{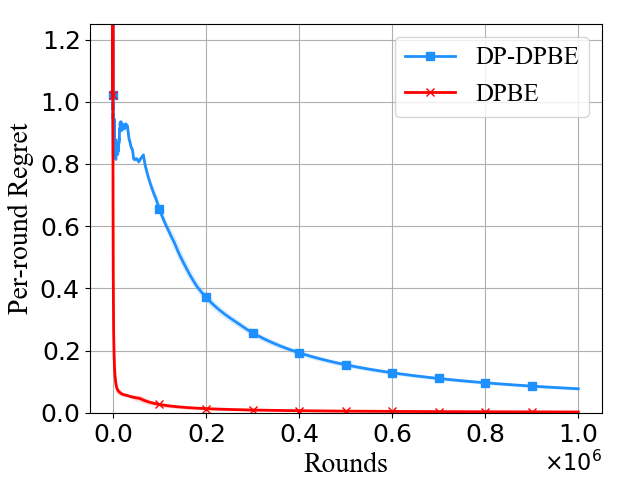}}
	%\quad
	\subfigure[Six-Hump Camel function]{
	\label{fig:six-hump_dp_sigma}
	\includegraphics[width=0.4\textwidth]{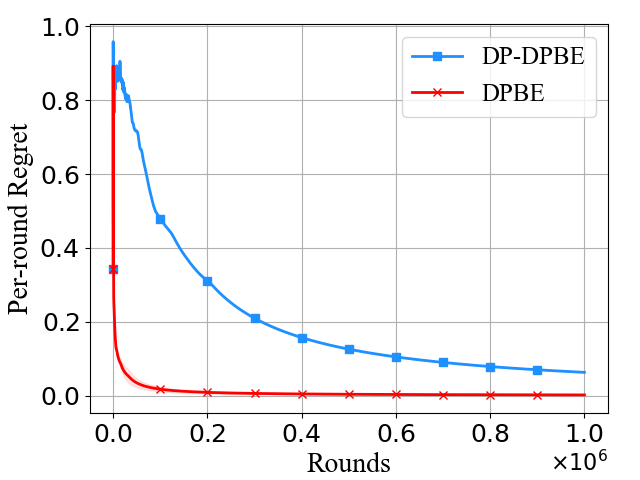}}
	%\quad
	\subfigure[Michalewicz function]{
	\label{fig:michalewicz_dp_sigma}
	\includegraphics[width=0.4\textwidth]{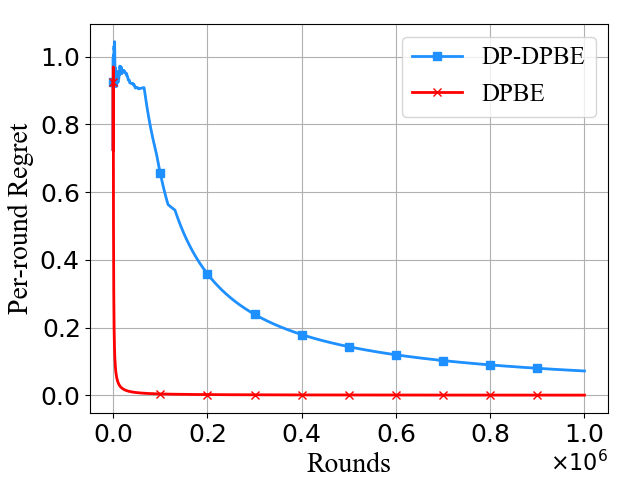}}
	\subfigure[Function from light sensor data]{
	\label{fig:light_dp_delta}
	\includegraphics[width=0.4\textwidth]{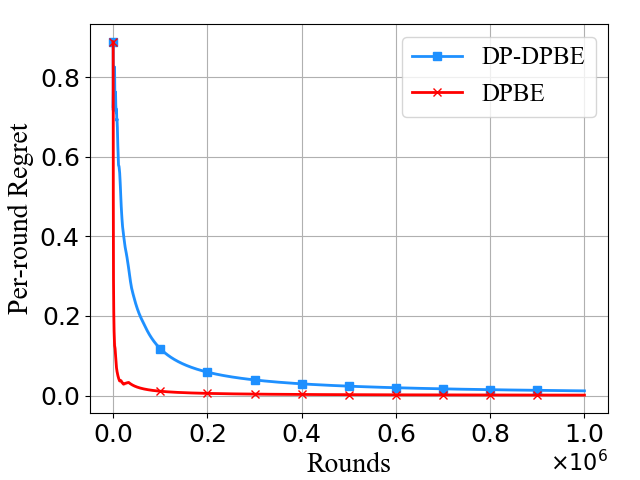}}
	\caption{Performance of \texttt{DP-DPBE}: \Hi{Per-round regret vs. time with parameters $\epsilon=15$ and $\delta=10^{-6}$.} %We set $T=10^6$, $|\cD|=100$, $v=0.1$, $\sigma=0.01$, and $l_{SE}=0.2$. 
% 	(a) Sphere function. Settings: $d=3,  C=1.6, \alpha=0.7$; (b) Six-Hump Camel function. Settings: $d=2, C=1.6, \alpha=0.7$; (c) Michalewicz function. Settings: $d=2, {C=1.6},  \alpha=0.7$; (d) Function from light sensor data. 	
}
\label{fig:dp_functions_delta_106} 
\end{figure*}
\subsection{Comparison with State-of-the-Art}
In Section~\ref{sec:compare_benchmarks}, we provide simulation results on the regret performance and running time of \texttt{GP-UCB}, \texttt{BPE}, and our algorithm \texttt{DPBE} with different values of $\alpha$ on the synthetic data generated in Section~\ref{sec:synthetic_func}
In this section, we add additional numerical results on three benchmark functions (Sphere, Six-hump Camel, Michalewicz) and one function from real-world data-- Light sensor data \cite{lightsensor}. The parameters of the problem setting and the algorithms are as follows: $T=4\times 10^4$, $|\cD|=100$, and $k=k_{SE}$ with $l_{SE}=0.2$; (a) Sphere function. Settings: $d=3,  C=1.5, \sigma=0.01, v^2=0.001, \lambda=\sigma^2/v^2$; (b) Six-Hump Camel function. Settings: $d=2, C=1.5, \sigma=0.01, v^2=0.01, \lambda=\sigma^2/v^2$; (c) Michalewicz function. Settings: $d=2, {C=1.5}, \sigma=0.01, v^2=0.01, \lambda=\sigma^2/v^2$; (d) Functions from real-world data. Settings: $d=2, C=1.42, \sigma=0.01, v^2=0.01, \lambda=\sigma^2/v^2$. We plot the cumulative regret for all the algorithms in Figure~\ref{fig:vs_gpucb_more} and present the running time in Table~\ref{tab:wallclock_functions}. 

\begin{figure*}[!t] 
\centering
	\subfigure[Sphere function]{
	\label{fig:sphere_sec8}
	\includegraphics[width=0.45\textwidth]{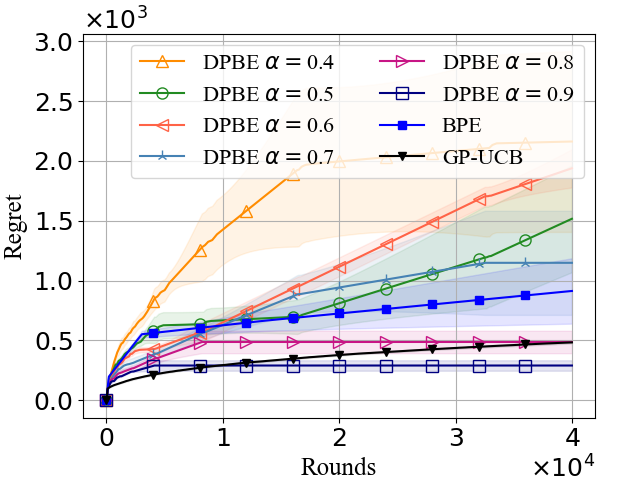}}
	%\quad
	\subfigure[Six-Hump Camel function]{
	\label{fig:six-hump_sec8}
	\includegraphics[width=0.45\textwidth]{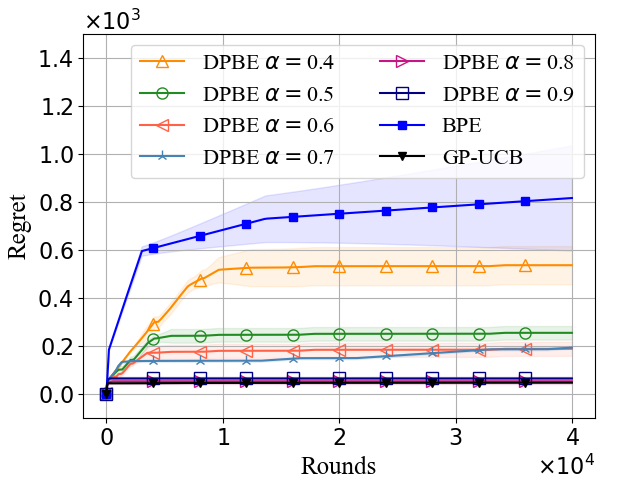}}
	%\quad
	\subfigure[Michalewicz function]{
	\label{fig:michalewicz_sec8}
	\includegraphics[width=0.45\textwidth]{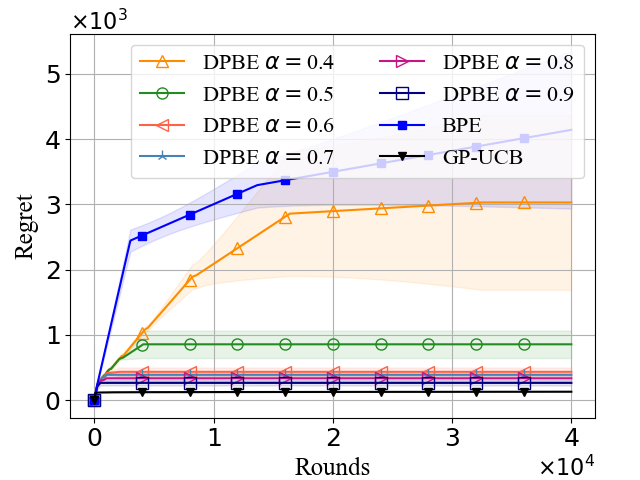}}
	\subfigure[Function from light sensor data]{
	\label{fig:light_sec8}
	\includegraphics[width=0.45\textwidth]{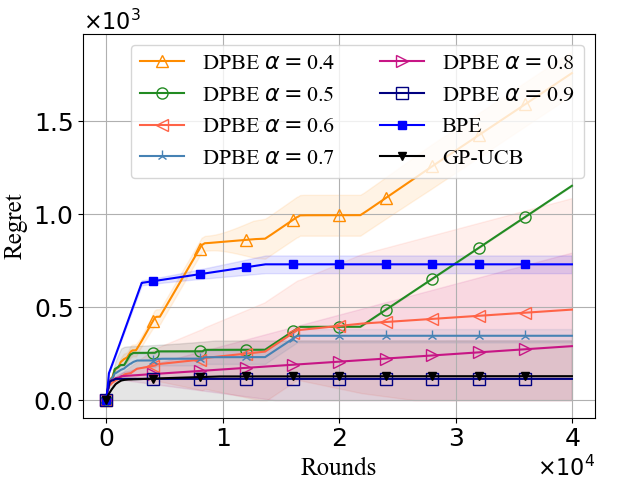}}
	\caption{Comparison of regret performance under \texttt{DPBE}, \texttt{GP-UCB}, and \texttt{BPE} on three benchmark functions and one function from real-world dataset. The shaded area represents the standard deviation. 
	}
	\label{fig:vs_gpucb_more}  
\end{figure*}

\begin{table}[!t]
\caption{Comparison of running time (seconds) under \texttt{GP-UCB}, \texttt{BPE}, and \texttt{DPBE} with different values of $\alpha$.}
\label{tab:wallclock_functions}
	\scalebox{0.9}{
\begin{tabular}{c|c|c|c|c|c|c|c|c} 
    \toprule
    \multirow{2}*{Algorithms}  & 
    \multicolumn{6}{c|}{\texttt{DPBE}}%\texttt{DPBE} & \texttt{DPBE} & \texttt{DPBE} & \texttt{DPBE} & \texttt{DPBE} & \texttt{DPBE} 
    & \multirow{2}*{\texttt{GP-UCB}} & \multirow{2}*{\texttt{BPE}}\\
    \cline{2-7}
    ~ & $\alpha=0.4$ & $\alpha=0.5$& $\alpha=0.6$& $\alpha=0.7$& $\alpha=0.8$ & $\alpha=0.9$&~&~ \\  
    \hline
     Sphere   &$0.08$  & $0.07$  & $0.07$& $0.07$ & $0.09$ & $0.13$  & $4.68$ & $37.87$\\
     Six-Hump Camel  & $0.04$  & $0.03$  & $0.04$& $0.03$ & $0.04$ & $0.04$  &  $4.79$ & $10.43$\\
     Michalewicz   & $0.04$ & $0.04$  & $0.05$ & $0.06$ & $0.07$ & $0.11$  & $4.95$ & $4.48$\\
     Light Sensor Data   & $0.04$ & $0.06$  & $0.07$ & $0.03$ & $0.06$  & $0.05$  & $3.22$ & $82.08$\\
      %\hline
      \bottomrule
\end{tabular}
}
\end{table}

\end{appendix}
\end{document}